\newtheorem{theorem}{Theorem}
\newtheorem{lemma}{Lemma}
\newtheorem{proposition}{Proposition}
\newtheorem{corollary}{Corollary}
\newtheorem{definition}{Definition}
\def\fro{{\scriptscriptstyle \mathrm{F}}}
\def\triple{{|\!|\!|}}
\def\mb{{\mathbf m}}
\def\xb{{\mathbf x}}
\def\zb{{\mathbf z}}
\def\bb{{\mathbf b}}
\def\wb{{\mathbf w}}
\def\db{{\mathbf d}}
\def\sb{{\mathbf s}}
\def\ub{{\mathbf u}}
\def\vb{{\mathbf v}}
\def\alphab{{\boldsymbol\alpha}}
\def\varepsilonb{{\boldsymbol\varepsilon}}
\def\oneb{{\mathbf 1}}
\def\zerob{{\mathbf 0}}
\def\Ub{{\mathbf U}}
\def\Xb{{\mathbf X}}
\def\Hb{{\mathbf H}}
\def\Wb{{\mathbf W}}
\def\Vb{{\mathbf V}}
\def\Ib{{\mathbf I}}
\def\Ab{{\mathbf A}}
\def\Bb{{\mathbf B}}
\def\Cb{{\mathbf C}}
\def\Db{{\mathbf D}}
\def\Rb{{\mathbf R}}
\def\Sb{{\mathbf S}}
\def\Mb{{\mathbf M}}
\def\Pb{{\mathbf P}}
\def\Tb{{\mathbf T}}
\def\PJb{\Pb_{\J}}
\def\PJib{\Pb_{\Ji}}
\def\RJb{\Rb_{\J}}
\def\thetab{{\boldsymbol\theta}}
\def\Thetab{{\boldsymbol\Theta}}
\def\ThetaJb{\Thetab_{\J}}
\def\ThetaJib{\Thetab_{\Ji}}
\def\Real{{\mathbb{R}}}
\newcommand{\SET}[1]{\llbracket 1; #1\rrbracket}
\def\Bcal{\mathcal{B}}
\def\Wcal{\mathcal{W}}
\def\Ncal{\mathcal{N}}
\def\Kcal{\mathcal{K}}
\def\Lcal{\mathcal{L}}
\def\Dcal{\mathcal{D}}
\def\Ecal{\mathcal{E}}
\def\Zcal{\mathcal{Z}}
\def\Scal{\mathcal{S}}
\def\Qcal{\mathcal{Q}}
\def\diag{{\mathrm{diag}}}
\def\Diag{{\mathrm{Diag}}}
\def\support{{\mathrm{support}}}
\def\sign{{\mathrm{sign}}}
\def\trace{{\mathrm{Tr}}}
\def\sym{{\mathrm{sym}}}
\font\dsrom=dsrom10 scaled 1200
\newcommand{\indicator}[1]{\textrm{\dsrom{1}}_{#1}}
\def\Exp{{\mathbb{E}}}
\def\Pr{{\mathrm{Pr}}}
\def\defin{\triangleq}
\def\alphabo{{\boldsymbol\alpha}_0}
\def\dbo{{\mathbf d}_0}
\def\sbo{{\mathbf s}_0}
\def\Abo{{\mathbf A}_0}
\def\Dbo{{\mathbf D}_0}
\def\Ji{{\J^i}}
\def\J{{\mathrm{J}}}
\def\loweralpha{\underline{\alpha}}
\def\upperalpha{\overline{\alpha}}
\title{Local stability and robustness of sparse dictionary learning in the presence of noise}
\author{
Rodolphe Jenatton$^{*,\star}$ \texttt{jenatton@cmap.polytechnique.fr}
\\
R\'emi Gribonval$^\dag$ \texttt{remi.gribonval@inria.fr}
\\
Francis Bach$^\circ$ \texttt{francis.bach@inria.fr}
}
\date{}
\begin{document}

\maketitle

\begin{abstract}
A popular approach within the signal processing and machine learning communities consists in modelling
signals as sparse linear combinations of atoms selected from a \emph{learned} dictionary.
While this paradigm has led to numerous empirical successes in various fields ranging from image to audio processing, 
there have only been a few theoretical arguments supporting these evidences.
In particular, sparse coding, or sparse dictionary learning, relies on a non-convex procedure whose local minima have not been fully analyzed yet.
In this paper, we consider a probabilistic model of sparse signals, and show that, with high probability, 
sparse coding admits a local minimum around the reference dictionary generating the signals. 
Our study takes into account the case of over-complete dictionaries and noisy signals, thus extending previous work limited to noiseless settings and/or under-complete dictionaries.
The analysis we conduct is non-asymptotic and makes it possible to understand how the key quantities of the problem, 
such as the coherence or the level of noise, can scale with respect to the dimension of the signals, the number of atoms, the sparsity and the number of observations.
\end{abstract}

\newcommand{\blfootnote}[1]{{\let\thefootnote\relax\footnotetext{#1}}}

\blfootnote{$^*$CMAP, Ecole Polytechnique (UMR CNRS 7641), 91128 Palaiseau, France.}
\blfootnote{$^\dag$INRIA Rennes, Campus de Beaulieu, 35042 Rennes, France.}
\blfootnote{$^\circ$INRIA - SIERRA project, LIENS (INRIA/ENS/CNRS UMR 8548), 23, avenue d'Italie 75214 Paris, France.}
\blfootnote{$^\star$Most of the work was done while affiliated with$^\circ$.}
\section{Introduction}
Modelling signals as sparse linear combinations of atoms selected from a dictionary has become
a popular paradigm in many fields, including signal processing, statistics, and machine learning.
This line of research has witnessed the development of several well-founded
theoretical frameworks~(see, e.g.,~\cite{Wainwright2009, Zhang2009}) 
and efficient algorithmic tools~(see, e.g.,~\cite{Bach2011} and references therein).

However, the performance of such approaches hinges on the representation of the signals, which makes the question of designing ``good'' dictionaries prominent.
A great deal of effort has been dedicated to come up with efficient \emph{predefined} dictionaries, e.g., the various types of wavelets~\citep{Mallat:2008aa}.
These representations have notably contributed to 
many successful image processing applications such as compression, denoising and deblurring.
More recently, the idea of simultaneously \emph{learning} the dictionary and the sparse decompositions of the signals---also known as \emph{sparse dictionary learning}, or simply, \emph{sparse coding}---has emerged as a powerful framework, 
with state-of-the-art performance in many tasks, 
including inpainting and image classification~(see, e.g.,~\cite{Mairal2010} and references therein).

Although  sparse dictionary learning can sometimes be formulated as convex~\citep{Bach2008c, Bradley2009a}, 
non-parametric Bayesian~\citep{Zhou2009} and submodular~\citep{Krause2010} problems,
the most popular and widely used definition of sparse coding brings into play a non-convex optimization problem.
Despite its empirical and practical success, there has only been little theoretical analysis of the properties 
of sparse dictionary learning.
For instance, \citet{Maurer2010,Vainsencher2010,Mehta2012} derive generalization bounds which quantify how much the \emph{expected} signal-reconstruction error differs from the \emph{empirical} one,
computed from a random and finite-size sample of signals. 
In particular, the bounds obtained by \cite{Maurer2010,Vainsencher2010} are non-asymptotic and uniform with respect to the whole class of dictionaries considered (e.g., those with normalized atoms).
As discussed later, the questions raised in this paper explore a different and complementary direction.

Another theoretical aspect of interest consists in characterizing the local minima of the optimization problem associated to sparse coding, in spite of the non-convexity of its formulation.
This problem is closely related to the question of \emph{identifiability}, that is, 
whether it is possible to recover a reference dictionary that is assumed to generate the observed signals.
Identifying such a dictionary is important when the interpretation of the learned atoms matters, e.g., 
in source localization~\citep{ComonJutten2010} or in topic modelling~\citep{Jenatton2010b}.
The authors of \cite{Gribonval2010} pioneered research in this direction by considering noiseless sparse signals, possibly corrupted by some outliers, in the case where the reference dictionary forms a basis.
Still in a noiseless setting, and without outliers, \cite{Geng2011} extended the analysis to \emph{over-complete} dictionaries, i.e., these composed of more atoms than the dimension of the signals.
To the best of our knowledge, comparable analysis have not been carried out yet for noisy signals. In particular, the structure of the proofs of \cite{Gribonval2010,Geng2011}
hinges on the absence of noise and cannot be straightforwardly transposed to take into account some noise;
this point will be discussed subsequently.

In this paper, we therefore analyze the local minima of sparse coding {\em in the presence of noise} and make the following contributions:
\begin{list}{\labelitemi}{\leftmargin=1.1em}\addtolength{\itemsep}{-.215\baselineskip}
\item[--]Within a probabilistic model of sparse signals, we derive a \emph{non-asymptotic} lower bound of the probability of finding
a local minimum in a neighborhood of the reference dictionary.

\item[--] Our work makes it possible to better understand 
(a) how small the neighborhood around the reference dictionary can be,
(b) how many signals are required to hope for identifiability, 
(c) what the impact of the degree of over-completeness is, and
(d) what level of noise appears as manageable.

\item[--]We show that under deterministic coherence-based assumptions, such a local minimum is guaranteed to exist with high probability.
\end{list}
\section{Problem statement}

We introduce in this section the material required to define our problem and state our results.

\paragraph{Notation.}
For any integer $p$, we define the set $\SET{p} \defin \{1,\dots,p\}$.
For all vectors $\vb \in \Real^p$, we denote by $\sign(\vb) \in \{ -1,0,1 \}^p$ the vector
such that its $j$-th entry $[\sign(\vb)]_j$ is equal to zero if $\vb_j=0$, and to one (respectively, minus one) if $\vb_j > 0$ (respectively, $\vb_j < 0$).  
We extensively manipulate matrix norms in the sequel.
For any matrix $\Ab \in \Real^{n\times p}$, we define its Frobenius norm by 
$\|\Ab\|_\fro\defin [\sum_{i=1}^n\sum_{j=1}^p \Ab_{ij}^2 ]^{1/2}$; 
similarly, we denote the spectral norm of $\Ab$ by
$\triple \Ab \triple_2 \defin \max_{ \|\xb\|_2\leq1 } \|\Ab\xb\|_2$, and refer to the operator $\ell_\infty$-norm as
$\triple \Ab \triple_\infty \defin \max_{ \|\xb\|_\infty\leq 1 } \|\Ab\xb\|_\infty = \max_{ i\in\SET{n} } \sum_{j=1}^p |\Ab_{ij}|$.

For any square matrix $\Bb \in \Real^{n\times n}$, we denote by $\diag(\Bb) \in \Real^n$ the vector formed by extracting the diagonal terms of $\Bb$, and conversely, for any $\bb \in \Real^n$, 
we use $\Diag(\bb) \in \Real^{n\times n}$ to represent the (square) diagonal matrix whose diagonal elements are built from the vector $\bb$. For any $m \times p$ matrix $\Ab$ and index set $\J \subset \SET{p}$ we denote by $\Ab_{\J}$ the matrix obtained by concatenating the columns of $\Ab$ indexed by $\J$.
Finally, the sphere in $\Real^p$ is denoted $\Scal^p\defin\{\vb\in\Real^p;\ \|\vb\|_2=1\}$ and $\Scal^p_{+}\defin \Scal^{p} \cap \Real_{+}^p$.

\subsection{Background material on sparse coding}\label{sec:sparsecoding}

Let us consider a set of $n$ signals $\Xb \defin [\xb^1,\dots,\xb^n]\! \in\! \Real^{m\times n}$ of dimension $m$,
along with a dictionary 
$\Db \defin [\db^1,\dots,\db^p]\! \in \Real^{m\times p}$ formed of $p$ atoms---also known as dictionary elements.
Sparse coding simultaneously learns $\Db$ and
a set of $n$ sparse $p$-dimensional vectors $\Ab \defin[\alphab^1,\dots,\alphab^n]\! \in\! \Real^{p\times n}$, such that each signal $\xb^i$ can be well approximated by 
$
\xb^i \approx \Db \alphab^i
$
for $i$ in $\SET{n}$.
By sparse, we mean that the vector $\alphab^i$ has $k \ll p$ non-zero coefficients, 
so that we aim at reconstructing $\xb^i$ from only a few atoms.
Before introducing the sparse coding formulation~\citep{Mairal2010, Olshausen1997}, 
we need some definitions:
\begin{definition} For any dictionary $\Db \in \Real^{m\times p}$ 
and signal $\xb \in \Real^m$, we define 
\begin{eqnarray}
\label{eq:Li}
\Lcal_{\xb}(\Db,\alphab) 
&\defin& \frac{1}{2} \|\xb-\Db\alphab\|_2^2 + \lambda \|\alphab\|_1\notag\\
\label{eq:fi}
f_\xb(\Db) 
&\defin& 
\min_{\alphab\in\Real^p} \Lcal_{\xb}(\Db,\alphab). 
\end{eqnarray}
Similarly for any set of $n$ signals $\Xb \defin [\xb^1,\dots,\xb^n] \in \Real^{m \times n}$, we introduce 
\begin{equation*}\label{eq:f}
F_n(\Db) \defin \frac{1}{n}\sum_{i=1}^n f_{\xb^i}(\Db). 
\end{equation*}
\end{definition}
Based on problem~(\ref{eq:fi}), refered to as 
Lasso in statistics~\citep{Tibshirani1996}, and 
basis pursuit in signal processing~\citep{Chen1998},
the standard approach to perform sparse coding~\citep{Olshausen1997,Mairal2010} solves the minimization problem 
\begin{equation}\label{eq:min_fn}
\min_{\Db \in \mathcal{D}} F_n(\Db),
\end{equation}
where the regularization parameter $\lambda$ in~(\ref{eq:fi}) controls the level of sparsity, 
while $\mathcal{D} \subseteq \Real^{m\times p}$ is a compact set; 
in this paper, $\mathcal{D}$ denotes the set of dictionaries with unit $\ell_2$-norm atoms, which is a natural choice in image processing~\citep{Mairal2010,Gribonval2010}.
Note however that other choices for the set $\mathcal{D}$ may also be relevant depending on the application 
at hand~(see, e.g.,~\cite{Jenatton2010b} where in the context of topic models, the atoms in $\mathcal{D}$ belong to the unit simplex).

\subsection{Main objectives}\label{sec:main_obj}
The goal of the paper is to characterize some local minima of the function $F_n$ under a generative model for the signals $\xb^i$.
Throughout the paper, we assume the observed signals are generated \emph{independently} according to a specified probabilistic model. 
The considered signals are typically drawn  as $\xb^{i} \defin \Dbo \alphabo^{i} + \varepsilonb^{i}$ where $\Dbo$ is a fixed reference dictionary, 
$\alphabo^{i}$ is a sparse coefficient vector, and $\varepsilonb^{i}$ is a noise term. 
The specifics of the underlying probabilistic model are given in Sec.~\ref{sec:gen_model}. 
Under this model, we can state more precisely our objective: we want to show that 
$$
\Pr\big(F_n\ \text{has a local minimum in a ``neighborhood'' of}\ \Dbo\big) \approx 1.
$$
We loosely refer to a certain ``neighborhood'' since 
in our regularized formulation, a local minimum cannot appear exactly at $\Dbo$.
The proper meaning of this neighborhood is the subject of Sec.~\ref{sec:manifold}.

\paragraph{Intrinsic ambiguities of sparse coding.} Importantly, we have so far referred to $\Dbo$ as \emph{the} reference dictionary generating the signals. 
However, and as already discussed in~\cite{Gribonval2010,Geng2011} and more generally the related literature on blind source separation 
and independent component analysis~\citep{ComonJutten2010}, 
it is known that the objective of~(\ref{eq:min_fn}) is invariant by sign flips and atoms permutations.
As a result, while solving~(\ref{eq:min_fn}),
we cannot hope to identify the specific $\Dbo$.
We focus instead on the local identifiability of the whole \emph{equivalence class} defined by the transformations described above.
From now on, we simply refer to $\Dbo$ to denote one element of this equivalence class. 
Also, since these transformations are \emph{discrete}, 
our local analysis is not affected by invariance issues, as soon as we are sufficiently close to some representant of $\Dbo$. 
\subsection{Local minima on the oblique manifold}\label{sec:manifold}
The minimization of $F_n$ is carried out over $\mathcal{D}$, which is the set of dictionaries with unit $\ell_2$-norm atoms.
This set turns out to be a manifold, known as the \emph{oblique manifold}~\citep{Absil2008}.
Since $\Dbo$ is assumed to belong to $\mathcal{D}$, it is therefore natural to consider the behavior of $F_n$ according to the 
geometry and topology of $\mathcal{D}$. To this end, we consider a specific (local) parametrization of $\mathcal{D}$.

\paragraph{Parametrization of the oblique manifold.}
Specifically, let us consider the set of matrices
$$
\mathcal{W}_{\Dbo} \defin \big\{ \Wb \in \Real^{m\times p};\ \diag(\Wb^\top\Dbo)=\zerob\ \mathrm{and}\ \diag(\Wb^\top\Wb)=\oneb  \big\}.
$$
In words, a matrix $\Wb \in \mathcal{W}_{\Dbo}$ has unit norm columns $\|\wb^{j}\|_{2}=1$ that are orthogonal to the corresponding columns of $\Dbo$: 
$[\wb^{j}]^{\top}\db^{j} = 0$, for any $j \in \SET{p}$.
Now, for any matrix $\Wb \in \mathcal{W}_{\Dbo}$, for any unit norm \emph{velocity} vector 
$\vb \in \Scal^p$,
and for all $t\in\Real$,
we introduce the parameterized~dictionary:
\begin{equation}\label{eq:Dt}
\Db(\Dbo,\Wb,\vb,t)\defin\Dbo\Diag[ \cos(\vb t) ]+\Wb\Diag[ \sin(\vb t) ],
\end{equation}
where $\Diag[ \cos(\vb t) ]$ and   $\Diag[ \sin(\vb t) ]\in \Real^{p \times p}$ 
stand for the diagonal matrices with diagonal terms equal to $\{\cos(\vb_{\! j} t)\}_{j\in\SET{p}}$ and $\{ {\sin(\vb_{\! j} t)} \}_{j\in\SET{p}} $ respectively. 
By construction, we have $\Db(\Dbo,\Wb,\vb,t) \in \mathcal{D}$ for all $t\in\Real$ and $\Db(\Dbo,\Wb,\vb,0)=\Dbo$. 
To ease notation, we will denote $\Db(\Wb,\vb,t)$, leaving the dependence on the reference dictionary $\Dbo$ implicit. Also, when it will be made clear from the context, we will drop the dependence on $\Wb,\vb$ in $\Db$. 
Note that the set of matrices given by $\Wb\Diag(\vb)$
corresponds to the tangent space of $\Dcal$ at $\Dbo$, intersected with the set of matrices in $\Real^{m\times p}$ with unit Frobenius norm 
(since we have $\|\Wb\Diag(\vb)\|_\fro=1$). 
\paragraph{Characterization of local minima on the oblique manifold.}
We can exploit the above parametrization of the manifold $\Dcal$ to characterize the existence of a local minimum as follows:
\begin{proposition}[Local minimum characterization]\label{prop:localmin}
Let $t > 0$ be some fixed scalar and define
\begin{equation}
\label{eq:DefDeltaFn}
\Delta F_{n}(\Wb,\vb,t) \defin F_{n}(\Db(\Wb,\vb,t)) - F_{n}(\Db_{0}).
\end{equation}
If we have
$$
\inf_{\Wb \in \Wcal_{\Dbo},\ \vb \in \Scal_{+}^{p}} \Delta F_{n}(\Wb,\vb,t) > 0,
$$
then $F_n: \Dcal \rightarrow \Real_+$ admits a local minimum in 
$
\big\{ \Db \in \Dcal;\ \|\Dbo-\Db\|_\fro < t  \big\}.
$
\end{proposition}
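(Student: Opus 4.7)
My plan is to build a compact subset of $\Dcal$ containing $\Dbo$ on which $F_n$ must attain a minimum, to use the hypothesis to exclude the parametric ``boundary sphere'' of that set, and finally to argue that the resulting interior minimizer is genuinely a local minimum of $F_n$ on all of $\Dcal$ and lies inside the desired Frobenius ball.

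First I would introduce the compact ``parametric ball''
$$
R_t \defin \bigl\{ \Db(\Wb,\vb,u) : \Wb \in \Wcal_{\Dbo},\ \vb \in \Scal^p_+,\ u \in [0,t] \bigr\},
$$
which is the continuous image of a compact set, hence compact. Writing $\db_0^j$ for the $j$-th column of $\Dbo$ and using $\wb^j \perp \db_0^j$, $\|\wb^j\|_2 = \|\db_0^j\|_2 = 1$, the elementary inequality $2(1-\cos x) \le x^2$, and $\|\vb\|_2 = 1$, one obtains
$$
\|\Db(\Wb,\vb,u) - \Dbo\|_\fro^2 \;=\; \sum_{j=1}^p 2\bigl(1 - \cos(\vb_j u)\bigr) \;\le\; \sum_{j=1}^p (\vb_j u)^2 \;=\; u^2 \;\le\; t^2,
$$
so $R_t$ is contained in the closed Frobenius ball $\bar B_t \defin \{\Db \in \Dcal : \|\Db - \Dbo\|_\fro \le t\}$.

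Next, continuity of $F_n$ and compactness of $R_t$ yield a minimizer $\Db^\star \in R_t$, and since $\Dbo = \Db(\Wb,\vb,0)$ belongs to $R_t$ we have $F_n(\Db^\star) \le F_n(\Dbo)$. The hypothesis $\Delta F_n(\Wb,\vb,t) > 0$ for every $(\Wb,\vb) \in \Wcal_{\Dbo} \times \Scal^p_+$ rules out any representation of $\Db^\star$ with parameter equal to $t$, since this would force $F_n(\Db^\star) > F_n(\Dbo)$. Hence some representation $\Db^\star = \Db(\Wb^\star,\vb^\star,u^\star)$ has $u^\star < t$, and by the bound above $\|\Db^\star - \Dbo\|_\fro \le u^\star < t$. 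To upgrade $\Db^\star$ to a local minimum of $F_n$ on the whole manifold $\Dcal$, I would introduce the continuous ``column-angle'' function $u(\Db) \defin \sqrt{\sum_{j=1}^p \theta_j(\Db)^2}$ on $\Dcal$, where $\theta_j(\Db) \in [0,\pi]$ denotes the angle between the $j$-th column of $\Db$ and $\db_0^j$. For $t \le \pi$ (the regime of interest) one checks that $R_t = \{\Db \in \Dcal : u(\Db) \le t\}$, so $\{u < t\}$ is an open neighborhood of $\Db^\star$ in $\Dcal$ contained in $R_t$; consequently $F_n(\Db) \ge F_n(\Db^\star)$ for every $\Db$ sufficiently close to $\Db^\star$, which is the required local minimum property.

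The main obstacle I anticipate is the mismatch between the ambient Frobenius ball, in whose interior we must produce a local minimum, and the parametric ``sphere'' $\{\Db(\Wb,\vb,t)\}$ on which the hypothesis is phrased. Bridging the two depends on the inequality $\|\Db(\Wb,\vb,u) - \Dbo\|_\fro \le u$ proved above, and one must also carefully check that $\Db^\star$ sits in the topological interior of $R_t$ inside $\Dcal$ so that the local minimum property actually transfers back from $R_t$ to the full manifold.
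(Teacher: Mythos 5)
Your proof is correct and follows essentially the same two-stage structure as the paper's: construct the compact parametric set (your $R_t$ is the paper's $\Zcal_t$; for $t\le\pi$ the constraint $t'\|\vb\|_\infty\le\pi$ in the paper's $\Zcal_t$ is automatic since $\|\vb\|_\infty\le\|\vb\|_2=1$), find a global minimizer $\Db^\star$ there, and use the hypothesis to exclude the parameter-$t$ boundary. Where you genuinely diverge is the final ``localization'' step: the paper invokes its Lemma 1 (the explicit parametrization of $\Dcal$ around $\Db^\star$) and carries out a trigonometric estimate to prove that an explicit Frobenius ball $\Bcal_h$ of radius $h < \tfrac{2}{\pi}(t-t^\star)$ around $\Db^\star$ lies inside $\Zcal_t$, whereas you replace this with the topological observation that the column-angle map $u(\Db)=\bigl(\sum_j \theta_j(\Db)^2\bigr)^{1/2}$ is continuous on $\Dcal$, that $R_t=\{u\le t\}$ when $t\le\pi$, and that $\{u< t\}$ is therefore an open neighborhood of $\Db^\star$ contained in $R_t$. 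Your version is cleaner and dispenses with the explicit radius computation, at the mild cost of having to verify the set identity $R_t=\{u\le t\}$ and the identity $u(\Db^\star)=u^\star$, both of which hold because $\vb_j u\in[0,\pi]$ in that regime so $\arccos(\cos(\vb_j u))=\vb_j u$. The paper's route, while heavier, produces a quantitative radius $h$, which is of no additional use here since the proposition only asserts existence. One small point: your sentence ``the hypothesis rules out any representation of $\Db^\star$ with parameter equal to $t$'' is fine, but the cleanest phrasing is to argue directly that $u(\Db^\star)<t$: if $u(\Db^\star)=t$ then $\Db^\star=\Db(\Wb,\vb,t)$ for some $(\Wb,\vb)\in\Wcal_{\Dbo}\times\Scal^p_+$, giving $F_n(\Db^\star)>F_n(\Dbo)\ge F_n(\Db^\star)$, a contradiction; this avoids juggling between the abstract parameter $u^\star$ and the intrinsic quantity $u(\Db^\star)$.
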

The detailed proof of this result is given in Sec.~A of the appendix. 
It relies on the continuity of $F_n$ and the fact that the curves $\Db(\Wb,\vb,t)$
define a surjective mapping onto $\Dcal$ (see Lemma~1 in the appendix).
We next describe some other ingredients required to state our results.
\subsection{Closed-form expression for $F_{n}$?}
Although the function $F_n$ is Lipschitz-continuous~\citep{Mairal2010}, 
its minimization is challenging since it is non-convex and subject to the non-linear constraints of $\mathcal{D}$.
Moreover, $F_n$ is defined through the minimization over the vectors $\Ab$, 
which, at first sight, does not lead to a simple and convenient expression.
However, it is known that $F_n$ has a simple closed-form in some favorable scenarios. 
\paragraph{Closed-form expression for $f_{\xb}$.} We leverage here a key property of the function $f_\xb$.
Denote by $\hat{\alphab}\in\Real^p$ a solution of problem~(\ref{eq:fi}), that is, the minimization defining $f_\xb$. 
By the convexity of the problem, there always exists such a solution such that, denoting $\J\defin\{ j\in\SET{p};\, \hat{\alphab}_j\neq 0\}$ its support, 
the dictionary $\Db_\J \in \Real^{m\times |\J|}$ restricted to the atoms indexed by $\J$ has linearly independent columns 
(hence $\Db_\J^\top\Db_\J$ is invertible). Denoting $ \hat{\sb} \in \{-1,0,1\}^p$ the sign of $\hat{\alphab}$ and $\J$ its support, $\hat{\alphab}$ has a closed-form expression 
in terms of $\Db_{\J}$, $\xb$ and $\hat{\sb}$~(see, e.g.,~\cite{Wainwright2009,Fuchs2005}). This property is appealing in that it makes it possible to obtain a closed-form expression for $f_\xb$  (and hence, $F_n$),  {\em provided that we can control the sign patterns of $\hat{\alphab}$}.
In light of this remark, it is natural to define:
\begin{definition}\label{def:phi}
Let $\sb \in \{-1,0,1\}^p$ be an arbitrary sign vector and $\J$ be its support. For $\xb \in \Real^m$ and $\Db \in \Real^{m\times p}$,
we define 
\begin{equation*}
\label{eq:defphi}
\phi_\xb(\Db|\sb) \defin   \inf_{\alphab\in\Real^p, \ \support(\alphab)=\J }\frac{1}{2}\|\xb - \Db\alphab\|_2^2+\lambda\sb^\top\alphab.
\end{equation*}
Whenever $\Db_\J^\top \Db_\J$ is invertible,  the minimum is achieved at $\tilde{\alphab} = \tilde{\alphab}(\Db,\xb,\sb)$ defined by
\begin{equation*}
\label{eq:ClosedFormMinimizer}
\tilde{\alphab}_\J = \big[ \Db_\J^\top\Db_\J \big]^{-1} \big[ \Db_\J^\top \xb -\lambda \sb_\J \big] \in \Real^{|\J|}\quad
\text{and}\quad 
\tilde{\alphab}_{\J^c} = \zerob,
\end{equation*}
and we have
\begin{equation}\label{eq:phi}
\phi_\xb(\Db|\sb) =  
\frac{1}{2} \big[ \|\xb\|_2^2 - (\Db_\J^\top \xb - \lambda \sb_\J)^\top 
(\Db_\J^\top \Db_\J)^{-1} 
(\Db_\J^\top \xb -\lambda \sb_\J ) \big].
\end{equation}
Moreover, if $\sign(\tilde{\alphab}) = \sb$, then
\begin{equation*}
\phi_\xb(\Db|\sb) =   \min_{\alphab\in\Real^p, \ \sign(\alphab)=\sb }\frac{1}{2}\|\xb - \Db\alphab\|_2^2+\lambda\sb^\top\alphab 
=  \min_{\alphab\in\Real^p,\ \sign(\alphab)=\sb } \Lcal_{\xb}(\Db,\alphab) = \Lcal_{\xb}(\Db,\tilde{\alphab}).
\end{equation*}
We define $\Phi_n(\Db|\Sb)$ analogously to $F_n(\Db)$, for a sign matrix $\Sb \in \{-1,0,1\}^{p\times n}$.
\end{definition}
Hence, with $\hat{\sb}$ the sign of the (unknown) minimizer $\hat{\alphab}$, we have
$f_\xb(\Db) = \Lcal_{\xb}(\Db,\hat{\alphab}) = \phi_\xb(\Db|\hat{\sb})$.

Showing that the function $F_n$ is accurately approximated by $\Phi_n(\cdot|\Sb)$ for a controlled $\Sb$ will be a key ingredient of our approach. This will exploit sign recovery properties of $\ell_1$-regularized least-squares problems, 
a topic which is already well-understood~(see, e.g.,~\cite{Wainwright2009,Fuchs2005} and references therein).
\subsection{Coherence assumption on the reference dictionary $\Dbo$}\label{sec:coherence}
We consider a standard sufficient support recovery condition referred to as the \emph{exact recovery condition} 
in signal processing~\citep{Fuchs2005,Tropp2004} or the \emph{irrepresentability condition} (IC) in the machine learning and statistics communities~\citep{Wainwright2009, Zhao2006}.
It is a key element to control the supports of the solutions of $\ell_1$-regularized least-squares problems.
To keep our analysis reasonably simple, we will impose the irrepresentability condition \emph{via} a condition on the \emph{mutual coherence} of the reference dictionary $\Dbo$, which is a stronger requirement~\cite{Van2009}.  
This quantity is defined 
(see, e.g.,~\cite{Fuchs2005,Donoho2001}) as 
$$
\mu_0 \defin \max_{i,j\in\SET{p}, i\neq j}| [\dbo^i]^\top [\dbo^j]| \in [0,1].
$$
The term $\mu_0$ gives a measure of the level of correlation between columns of $\Dbo$. It is for instance equal to zero in the case of an orthogonal dictionary, and to one if $\Dbo$ contains two colinear columns. 
Similarly, we introduce $\mu(\Wb,\vb,t)$ for the dictionary $\Db(\Wb,\vb,t)$ defined in~(\ref{eq:Dt}). 
For any $\Wb,\vb$, $t\geq 0$, we have the simple inequality:
\begin{equation}\label{eq:inequality_mu0}
\mu(\Wb,\vb,t)\defin \max_{i,j\in\SET{p}, i\neq j}|   [\db^i(\Wb,\vb,t)]^\top [\db^j(\Wb,\vb,t)]|  \leq \mu(t) \defin \mu_0 + 3t.
\end{equation}
In particular, we have $\mu(\Wb,\vb,0)=\mu_0$. 
For the theoretical analysis we conduct, 
we consider a deterministic coherence-based assumption, as considered for instance in the previous work on dictionary learning by~\cite{Geng2011}, such that the coherence $\mu_0$ and the level of sparsity $k$ of the coefficient vectors $\alphab^{i}$ 
should be inversely proportional, i.e., $k \mu_0 = O(1)$.
In  light of~(\ref{eq:inequality_mu0}), such an upper bound on $\mu_0$ will loosely transfer to $\mu(t)$ provided that $t$ is small enough. 
In fact, and as further developed in the appendix, most of the elements of our proofs work based on a restricted isometry property (RIP), 
which is known to be weaker than the coherence assumption~\citep{Van2009}. 
However, since we still face a problem related to~IC when using RIP, we keep the coherence in our analysis.
Unifying our proofs under a RIP criterion is the object of future work.

\subsection{Probabilistic model of sparse signals}\label{sec:gen_model}
Equipped with the main concepts, we now present our signal model.
Given a \emph{fixed} reference dictionary $\Dbo \in \mathcal{D}$, 
each noisy sparse signal $\xb \in \Real^m$ is built \emph{independently} from the following steps:

\noindent (1) {\bf Support generation:} Draw uniformly without replacement $k$ atoms out of the $p$ available in $\Dbo$.
This procedure thus defines a support $\J\defin\{j\in\SET{p};\ \delta(j)=1\}$ whose size is $|\J|=k$, 
and where $\delta(j)$ denotes the indicator function equal to one if the $j$-th atom is selected, zero otherwise,
so~that 
$$\textstyle
\Exp[\delta(j)]=\frac{k}{p},\ \text{and for}\ i\neq j,\ \text{we further have}\ 
\Exp[\delta(j)\delta(i)]=\frac{k(k-1)}{p(p-1)}.
$$
Our result holds for any support generation scheme yielding  the above expectations.

\noindent (2) {\bf Coefficient generation:} Define a sparse vector $\alphabo \in \Real^p$ supported on $\J$ whose entries in $\J$ are generated i.i.d.~according to a {\em sub-Gaussian distribution}: for $j$ not in $\J$, $[\alphabo]_j$ is set to zero; on the other hand, we assume there exists some $c > 0$ such that for $j \in \J$ we have, for all $t \in \Real$, $\Exp\{ \exp(t[\alphabo]_{j}) \} \leq \exp (c^{2}t^{2}/2)$ . 
We denote $\sigma_{\alpha}$ the smallest value of $c$ such that this property holds. For background about sub-Gaussian random variables, 
see, e.g.,~\cite{Buldygin2000}.
For simplicity of the analysis we restrict to the case where the distribution also has all its mass {\em bounded away from zero}.  
Formally, there exist $\loweralpha>0$ such that 
$
\Pr(|[\alphabo]_j| <  \loweralpha\ |\ j \in \J) = 0.
$ 

\noindent (3) {\bf Noise:} Eventually generate the signal $\xb=\Dbo \alphabo + \varepsilonb$, 
where the entries of the additive noise $\varepsilonb\in \Real^m$ are assumed i.i.d.~sub-Gaussian with parameter $\sigma$.

\section{Main results}\label{sec:main_results}
This section describes the main results of this paper which 
show that under appropriate scalings of the dimensions $(m,p)$, number of samples $n$, 
and model parameters $k,\loweralpha,\sigma_{\alpha},\sigma,\mu_0$, it is possible to prove that, with high probability,
the problem~(\ref{eq:min_fn}) admits a local minimum in a neighborhood of $\Dbo$ of controlled size, 
for appropriate choices of the regularization parameter $\lambda$.
The detailed proofs of the following results may be found in the appendix, 
but we provide their main outlines in Sec.~\ref{sec:building_blocks}.
\def\gammaDbo{\gamma_{\scriptscriptstyle \Dbo}}
\begin{theorem}[Local minimum of sparse coding]\label{thm:main_thm}
Let us consider our generative model of signals for some reference dictionary $\Dbo \in \Real^{m \times p}$ with coherence $\mu_0$,
and define $1/\gammaDbo \defin \triple \Dbo \triple_2 \cdot k \mu_0$, where $\triple \Dbo \triple_2$ refers to the spectral norm of $\Dbo$.
If the following conditions hold:
\begin{itemize}
 \item[]\textbf{(Coherence)} \hspace*{1.1cm}
$
\quad \Omega\big(\sqrt{\log(p)}\big) = \gammaDbo = O\big(\sqrt{\log(n)}\big),
$
 \item[]\textbf{(Sample complexity)}
$\displaystyle
\quad \frac{\log(n)}{n} =  O\Big(
\frac{\mu_0^2}{m\cdot p^3 \cdot \gammaDbo^2} \Big),
$
\end{itemize}
then, with probability exceeding $1 - [\frac{mpn}{9}]^{-\frac{mp}{2}} - e^{-4\sqrt{n}}$, 
problem~(\ref{eq:min_fn}) admits a local minimum in
$$\displaystyle
\bigg\{ \Db \in \Dcal;\ \|\Dbo-\Db\|_\fro = 
O\Big( 
\max\Big\{
p \cdot \gammaDbo \cdot \Big[  e^{-\frac{\gammaDbo^2}{2}} +  \sqrt{ m p \log(n)/n} \Big],\  
\frac{\sigma}{\sigma_\alpha}\cdot \sqrt{m}\Big\}
 \Big)\
\bigg\}.
$$
\end{theorem}
First, it is worth noting that this theorem is presented on purpose in a simplified form, 
in order to highlight its message. In particular, all quantities related to the distribution of $\alphabo$ (e.g., $\sigma_\alpha$) 
are assumed to be $O(1)$ and are therefore kept ``hidden'' in the big-O notation. 
A detailed statement of this theorem is however available in the appendix (see Theorem~\ref{th:minimum_sparsecoding}).

In words, the main message of Theorem~\ref{thm:main_thm} is that provided (a) the reference dictionary is incoherent enough, 
and (b) we observe enough signals, 
we can guarantee the existence of a local minimum for problem~(\ref{eq:min_fn}) in a ball centered at $\Dbo$.
We can see that the radius of this ball decomposes according to three different contributions:
(1) the coherence of $\Dbo$, via the term $\gammaDbo$, (2) the number of signals, and (3) the level of noise.
These three factors limit the possible resolution we can guarantee.

While a coherence condition scaling in $k \mu_0 = O(1)$ is standard for sparse models (see, e.g.,~\cite{Fuchs2005}), 
we impose a slightly more conservative constraint in $O(1/\sqrt{\log(p)})$. 
A typical example for which our result applies is the Hadamard-Dirac dictionary built 
as the concatenaton of a Hadamard matrix and the identity matrix. In this case, we have 
$p = 2m$, $\triple \Dbo \triple_2 = \sqrt{2}$, and $\mu_0 = 1/\sqrt{m}$ with $k=O(\sqrt{m/\log(2m)})$.
In Sec.~\ref{sec:exp}, 
we use such over-complete dictionaries for our simulations.
In addition, observe that because of the upperbound on $\gammaDbo$, Theorem~\ref{thm:main_thm} does not handle per se the case of orthogonal dictionary, 
which we remedy in Theorem~\ref{thm:main_thm_ortho}.

Perhaps surprisingly (and disappointingly), 
our result indicates that, even in a low-noise setting with sufficiently many signals (i.e., the asymptotic regime in $n$), 
we cannot arbitrarily lower the resolution of the local minimum because of the coherence $\mu_0$. In fact, 
the term $e^{-\gammaDbo^2/2}$ is a direct consequence of our proof technique which relies on exact recovery.
It is however worth noting that, since $e^{-\gammaDbo^2/2}$ decreases exponentially fast in $\gammaDbo$, 
the dependence on $\mu_0$ is quite mild 
(e.g., for a radius $\tau$, we have a constraint scaling in $\triple \Dbo \triple_2 \cdot k \mu_0 = O(1/\sqrt{\log(1/\tau)})$).
We next state a complementary theorem for orthogonal dictionaries 
where the radius is not constrained anymore by the coherence:
\paragraph{Local correctness of sparse coding with orthogonal dictionaries:} 
If we now assume that $\Dbo$ is orthogonal (i.e., $\mu_0=0$ and $p=m$ with $\triple \Dbo \triple_2=1$), 
we obtain the following result:
\begin{theorem}[Local minimum of sparse coding---Orthogonal dictionary]\label{thm:main_thm_ortho}
Let us consider our generative model of signals for some reference, orthogonal dictionary $\Dbo \in \Real^{m \times m}$.
If we have:
\begin{itemize}
\item[]\textbf{(Sample complexity)}
$\displaystyle
\quad \frac{\log^3(n)}{n} =  O\Big(
\frac{1}{k^2 \cdot m^4} \Big),
$
\end{itemize}
then, with probability exceeding $1 - [\frac{m^2n}{9}]^{-\frac{m^2}{2}} - e^{-4\sqrt{n}}$, 
problem~(\ref{eq:min_fn}) admits a local minimum in
$$
\bigg\{ \Db \in \Dcal;\ \|\Dbo-\Db\|_\fro = 
O\Big( 
\max\Big\{
m\cdot\log(n)\cdot (\sqrt{\log(n)} + m)/\sqrt{n},\  
\frac{\sigma}{\sigma_\alpha} \cdot \sqrt{m}\Big\}
 \Big)\
\bigg\}.
$$
\end{theorem}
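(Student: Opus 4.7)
The plan is to invoke Proposition~\ref{prop:localmin} at a radius $t^\star$ of the order announced in the theorem and to show that $\Delta F_n(\Wb,\vb,t^\star)>0$ uniformly over $(\Wb,\vb)\in \Wcal_{\Dbo}\times\Scal^p_+$ on an event of the stated probability. Because $\Dbo$ is orthogonal we have $\mu_0=0$ and $\triple\Dbo\triple_2=1$, so the obstructions driving the resolution in Theorem~\ref{thm:main_thm}---most visibly the $e^{-\gammaDbo^2/2}$ term coming from the exact-recovery constant---disappear, and the proof reduces to executing the same scheme with these simplifications.

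First I would replace $F_n$ by its closed-form surrogate $\Phi_n(\cdot\mid \Sbo)$, where $\Sbo\in\{-1,0,1\}^{p\times n}$ collects the true sign patterns $\sign(\alphabo^i)$. This requires an exact sign-recovery statement: with $\lambda$ chosen proportional to $\sigma\sqrt{\log(mn)}$ and using the lower bound $\loweralpha$ on nonzero coefficients, the Lasso minimizer $\hat{\alphab}^i(\Db)$ should satisfy $\sign(\hat{\alphab}^i)=\sign(\alphabo^i)$ for every $i\in\SET{n}$ and every $\Db$ in a Frobenius ball of radius $O(t^\star)$ around $\Dbo$. Orthogonality makes this step essentially free at $\Db=\Dbo$: the Lasso decouples into $m$ independent soft-thresholdings, the irrepresentability condition holds with margin $1$, and a union bound over the $n$ signals absorbs sub-Gaussian noise concentration. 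Persistence of sign recovery for $\Db(\Wb,\vb,t^\star)$ within the neighborhood follows from expanding $\Db^\top\Db$ and $\Db^\top\xb^i$ to first order in $t$, controlled through the bound $\mu(t)\leq 3t$ from~\eqref{eq:inequality_mu0}.

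Next, using the explicit formula~\eqref{eq:phi} with $\Db_\J^\top\Db_\J=\Ib_k$ at $t=0$, I would Taylor-expand $\phi_{\xb^i}(\Db(\Wb,\vb,t)\mid \sb^i)$ to second order in $t$. The first-order contribution averages to zero over the randomness of $(\alphabo^i,\varepsilonb^i)$ by the sign symmetry and zero-mean noise, and the Hessian term produces a strictly positive expected quadratic of the schematic form $\Exp[\Delta\Phi_n]\gtrsim c_1\sigma_\alpha^2\,(k/p)\,t^2-c_2\sigma^2 m$, where the negative term captures a noise-induced bias. The two pieces of the announced radius then appear by balancing this expected lower bound against (i) the sample-level fluctuation of $\Phi_n-\Exp\Phi_n$, producing the $m\log(n)(\sqrt{\log n}+m)/\sqrt{n}$ contribution via a Bernstein-type inequality for sums of squared sub-Gaussians, and (ii) the noise-bias term itself, which yields the $\sigma/\sigma_\alpha\cdot\sqrt{m}$ contribution.

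The main obstacle, in my view, is turning the above pointwise lower bound into a \emph{uniform} one on $\Wcal_{\Dbo}\times\Scal^p_+$ while keeping the sample complexity at the sharp $\log^3(n)/n=O(1/(k^2 m^4))$. I would cover $\Wcal_{\Dbo}\times\Scal^p_+$ by a $\delta$-net in the natural Frobenius/Euclidean norms, use Lipschitz continuity of $(\Wb,\vb)\mapsto \Phi_n(\Db(\Wb,\vb,t^\star)\mid\Sbo)$ to pass from the net to the continuum, and apply a union bound for the sub-Gaussian deviations on each net point. The net cardinality is roughly $(C/\delta)^{m^2}$, which is the source of the $[m^2 n/9]^{-m^2/2}$ factor in the failure probability and dictates the $m^4$ polynomial once $\delta$ is tuned so that the discretization slack is absorbed into the leading $t^2$ term. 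A secondary subtlety is propagating the sign-recovery event simultaneously to every $\Db$ in the neighborhood, which relies on continuity of the KKT residual $\triple\Db_{\J^c}^\top\Db_\J\triple_\infty$ in $\Db$ (trivially vanishing at the orthogonal $\Dbo$) together with a Lipschitz bound on the closed-form minimizer $\tilde{\alphab}(\Db,\xb,\sb)$ from Definition~\ref{def:phi}.
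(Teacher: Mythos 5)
Your overall scaffolding---replace $F_n$ by the closed-form surrogate $\Phi_n(\cdot\,|\,\Sbo)$, estimate its expectation, concentrate by $\epsilon$-net, and conclude by Proposition~\ref{prop:localmin}---does mirror the paper's Theorem~\ref{th:control_Fn}. However, two central steps in your plan are not the ones the paper uses, and the first of them would fail to deliver the stated result.

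First, you propose to enforce exact sign recovery \emph{simultaneously} for all $n$ signals and all $\Db$ in the ball, via a union bound with $\lambda\propto\sigma\sqrt{\log(mn)}$. The paper explicitly rejects exactly this route (``it is unrealistic and not possible to ensure exact recovery both simultaneously for the $n$ signals and with high probability'') and instead works with the residual
$r_n=\tfrac1n\sum_i\indicator{[\Ecal^{i}_{\mathrm{coincide}}(t)\cap\Ecal^{i}_{\mathrm{coincide}}(0)]^c}\cdot\{\Lcal_{\xb^i}(\Db,\alphabo^i)+\Lcal_{\xb^i}(\Dbo,\alphabo^i)\}$,
which it bounds with a Bernstein-type argument for the indicator sum (Lemma~\ref{lem:truncated_moments} / Corollary~\ref{cor:concentration_subgaussian} / Proposition~\ref{prop:control_rn}). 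The reason the union bound is fatal here is quantitative: the per-signal failure probability is $\Theta(e^{-\gamma^2})$ with $\gamma\propto\lambda/\sqrt{t^2\sigma_\alpha^2+m\sigma^2}$ and $\lambda\le\frac49\loweralpha$. Achieving the advertised failure probability $e^{-4\sqrt{n}}$ by a union bound over $n$ signals would require $\gamma^2\gtrsim\sqrt n$, hence $\lambda\gtrsim n^{1/4}\cdot t\,\sigma_\alpha$, which is incompatible with $\lambda=O(\loweralpha)$ and the required smallness of $\lambda$ (so that the $\lambda$-dependent bias in Lemma~\ref{lem:bias_expectation} stays dominated) once the sample-complexity scaling $n\gtrsim k^2m^4\log^3 n$ is imposed. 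The residual strategy is what lets the paper keep $\gamma^2\le\frac12\log n$ (so $\lambda$ small) yet still obtain probability $1-e^{-4\sqrt n}$, because $\exp(-4ne^{-\gamma^2})\le\exp(-4\sqrt n)$ is far better than any union-bound count $n\cdot 2e^{-\gamma^2}$.

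Second, your expected lower bound $\Exp[\Delta\Phi_n]\gtrsim c_1\sigma_\alpha^2(k/p)t^2-c_2\sigma^2 m$ posits a negative ``noise-induced bias'' in the \emph{expectation}. The paper shows there is none: in the expansion of Lemma~\ref{lem:delta_phi} the pure-noise term satisfies $\Exp\{\zeta_{\varepsilonb,\varepsilonb}(t)\}=\frac{\sigma^2}{2}\Exp_\J\{\trace(\PJb(0)-\PJb(t))\}=0$ because both are rank-$k$ orthogonal projectors, and the cross terms $\Exp\{\zeta_{\alphab,\varepsilonb}\}=\Exp\{\zeta_{\sb,\varepsilonb}\}=0$ by independence. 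The noise level enters only through the concentration/residual constants and through the hard constraint $m\sigma^2\le t^2\sigma_\alpha^2$ imposed in the proof of Theorem~\ref{th:minimum_sparsecoding}, which is precisely the origin of the $\sigma\sqrt m/\sigma_\alpha$ branch of the radius. Getting this bookkeeping right matters, because if the noise really did contribute a negative constant to the expected lower bound, the local minimum could fail to exist altogether at fixed $\sigma$. You would need to replace your second-order Taylor heuristic by the exact trace computations of Lemma~\ref{lem:bias_expectation}, where the only negative contributions are coherence-driven and vanish with $\mu_0=0$ up to $O(t^2\lambda)$ corrections.

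Your $\epsilon$-net argument and the reading of the net cardinality $(C/\delta)^{mp}$ (with $p=m$, giving the $[m^2n/9]^{-m^2/2}$ factor) is the right mechanism and matches the paper's Lemma~\ref{lem:cover}, so that part of the plan is sound.
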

Interestingly, we observe in this case that, given sufficiently many signals, 
we can localize arbitrarily well (up to the noise level) the local minimum around $\Dbo$.
We now discuss relations with previous work in the noiseless setting.
\paragraph{Local correctness of sparse coding without noise:}
If we remove the noise from our signal model, i.e., $\sigma = 0$, the result of Theorems~\ref{thm:main_thm}-\ref{thm:main_thm_ortho}
remains unchanged, except that the radius is not limited anymore by $\frac{\sigma}{\sigma_\alpha} \sqrt{m}$.
We mention that~\cite{Gribonval2010} obtain a sample complexity in $O(p^2\log(p))$ in the noiseless and \emph{square} dictionary setting,
while the result of~\cite{Geng2011} leads to a scaling in $O(p^3)$ (assuming both $k=O(1)$ and $\triple \Dbo \triple_2=O(1)$)
in the noiseless, over-complete case.
In comparison, our analysis suggests a sample complexity in $O(mp^3)$.

These discrepancies are due to the fact that we want to handle the noisy setting;
this has led us to consider a scheme of proof radically different from those proposed in the related work~\cite{Gribonval2010,Geng2011}.
In particular, our formulation in problem~(\ref{eq:min_fn}) differs from that of~\cite{Gribonval2010,Geng2011} 
where the $\ell_1$-norm of~$\Ab$ is minimized over the 
\emph{equality} constraint $\Db\Ab=\Xb$ and the dictionary normalization $\Db \in \Dcal$. 
Optimality is then characterized through the linearization of the equality constraint, 
a technique that could not be easily extended to the noisy case.
We next discuss the main building blocks of the results and give a high-level structure of the proof.
\section{Architecture of the proof of Theorem~\ref{thm:main_thm}}\label{sec:building_blocks}
Our proof strategy consists in using Proposition~\ref{prop:localmin}, that is, 
controlling the sign of $\Delta F_{n}(\Wb,\vb,t)$ defined in~(\ref{eq:DefDeltaFn}).
In fact, since we expect to have for many training samples the equality $f_\xb(\Db(\Wb,\vb,t)) = \phi_\xb(\Db(\Wb,\vb,t)|\sign(\alphabo))$ uniformly for all $(\Wb,\vb)$ , the main idea is to first concentrate on the study of the smooth function
\begin{equation}
\label{eq:DefDeltaPhin}
\Delta \Phi_{n}(\Wb,\vb,t) \defin 
\Phi_n( \Db(\Wb,\vb,t) | \sign(\Abo) ) - \Phi_n( \Dbo | \sign(\Abo) ),
\end{equation}
instead of the original function $\Delta F_{n}(\Wb,\vb,t)$. 

\paragraph{Control of $\Delta\Phi_{n}$:}
This first step consists in uniformly lower bounding $\Delta\Phi_{n}$ with high probability.
As opposed to $\Delta F_{n}$, the function $\Delta\Phi_{n}$ is available explicitly, see~(\ref{def:phi}) and (\ref{eq:DefDeltaPhin}), 
and corresponds to bilinear/quadratic forms in $(\alphabo,\sign(\alphabo),\varepsilonb)$ which we can concentrate around their expectations.
Finally, the uniformity with respect to $(\Wb,\vb)$ is obtained by a standard $\epsilon$-net argument.

\paragraph{Control of $\Delta F_{n}$ via $\Delta\Phi_{n}$:}
The second step consists in lower bounding $\Delta F_{n}$ in terms of $\Delta \Phi_{n}$ uniformly for all
parameters $(\Wb,\vb) \in \Wcal_{\Dbo} \times \Scal^p$. For a given $t \geq 0$, consider the independent events $\{\Ecal_{\mathrm{coincide}}^{i}(t)\}_{i\in\SET{n}}$ defined by
\[
\Ecal_{\mathrm{coincide}}^{i}(t) \defin
\left\{\omega\ \Big|\ 
f_{\xb^{i}(\omega)}(\Db(\Wb,\vb,t)) = \phi_{\xb^{i}(\omega)}(\Db(\Wb,\vb,t)|\sbo),
\quad \forall (\Wb,\vb) \in \Wcal_{\Dbo} \times \Scal^p 
\right\},
\] 
with $\sbo = \sign(\alphabo)$. In words, the event $\Ecal_{\mathrm{coincide}}^{i}(t)$ corresponds to the fact that target function $f_{\xb^{i}(\omega)}(\Db(\cdot,\cdot,t))$ coincides with the idealized one $\phi_{\xb^{i}(\omega)}(\Db(\cdot,\cdot,t)|\sbo)$ for the ``radius'' $t$. 

Importantly, the event $\Ecal_{\mathrm{coincide}}^{i}(t)$ only  involves a \emph{single} signal;  when we consider a collection of $n$ independent signals, 
we should instead study the event $\bigcap_{i=1}^n \Ecal_\mathrm{coincide}^i(t) $ to guarantee that $\Phi_n$ and~$F_n$ (and therefore, $\Delta\Phi_{n}$ and $\Delta F_{n}$) do coincide. However, as the number of observations $n$ becomes large, 
it is unrealistic and not possible to ensure exact recovery both \emph{simultaneously} for the $n$ signals and \emph{with high probability}. 
To get around this issue, we seek to prove that $\Delta F_{n}$ is well approximated by $\Delta\Phi_{n}$ (rather than equal to it) uniformly for all $(\Wb,\vb)$. 
We show that, when $f_{\xb^{i}}(\Db(t))$ and $\phi_{\xb^{i}}(\Db(t)|\sbo)$ \emph{do not} coincide, 
their difference can be bounded, and we obtain:
\begin{equation*}\label{eq:lowerbound_fn}
\Delta F_n(\Wb,\vb,t)  \geq \Delta \Phi_n(\Wb,\vb,t) - r_n.
\end{equation*}
where we detail the definition of the residual term 
\begin{equation*}\label{eq:residual_lowerbound_fn}
r_n(\omega) \defin \frac1n \sum_{i=1}^{n} \indicator{ [\Ecal^{i}_\mathrm{coincide}(t) \cap \Ecal^i_\mathrm{coincide}(0)]^c   }(\omega) 
\cdot \left\{\Lcal_{\xb^{i}}(\Db,\alphabo^{i}) + \Lcal_{\xb^{i}}(\Dbo,\alphabo^{i})\right\}.
\end{equation*}
In the appendix, we show that with high probability:
$
r_n = O([ t^{2} \cdot \sigma_{\alpha}^{2} + 2m \cdot \sigma^{2} + 2\lambda k\sigma_{\alpha}] \cdot (3-\log \kappa)\kappa)
$
with $\kappa \defin \max_{i\in\SET{n}}\Pr(\big[\Ecal^{i}_\mathrm{coincide}(t) \cap \Ecal^i_\mathrm{coincide}(0)\big]^{c})$. 
To bound the size of $r_{n}$, we now control $\kappa$.

\paragraph{Control of $\kappa$, exact sign recovery for \emph{perturbed} dictionaries:}
We need to determine sufficient conditions under which
$\phi_\xb(\Db(\Wb,\vb,t)|\sign(\alphabo))$ and $f_\xb(\Db(\Wb,\vb,t))$ coincide for all $(\Wb,\vb)$, and control the probability of this event.
As briefly exposed in Sec.~\ref{sec:sparsecoding}, 
it turns out that this question comes down to 
studying exact recovery for some $\ell_1$-regularized least-squares problems.
Exact sign recovery in the problem associated with 
$f_\xb(\Dbo)$ has already been well-studied~(see, e.g.,~\cite{ Wainwright2009,Fuchs2005, Zhao2006}).
However, in our context, we need the same conclusion to hold \emph{not only at the dictionary $\Dbo$, 
but also at $\Db(\Wb,\vb,t)\neq\Dbo$} \emph{uniformly} for all parameters $(\Wb,\vb)$. It turns out that going away from the reference dictionary $\Dbo$ acts as a second source of noise 
whose variance depends on the radius~$t$.
We make this statement precise in Propositions 2-3 in the supplementary material.
These results are in the same line as Theorem~1 in~\cite{Mehta2012}.
\paragraph{Discussing when the lower-bound on $\Delta F_{n}$ is positive:} With all the previous elements in place, we have 
a lower-bound for $\inf_{\Wb \in \Wcal_{\Dbo},\vb \in \Scal^{p}} \Delta F_{n}(\Wb,\vb,t)$, valid with high probability.
It finally suffices to discuss when it is stricly positive to conclude with Proposition~\ref{prop:localmin}.
\section{Experiments}\label{sec:exp}
We illustrate the results from Sec.~\ref{sec:main_results}.
Although we do not manage to highlight the exact scalings in $(p,m)$ which we proved in Theorems~\ref{thm:main_thm}-\ref{thm:main_thm_ortho}, 
our experiments still underline the main interesting trends put forward by our results, 
such as the dependencies with respect to $n$ and $\sigma$.

Throughout this section, the non-zero coefficients of $\alphabo$ are uniformly drawn with $|[\alphabo]_j| \in [0.1, 10]$
and the noise follows a standard Gaussian distribution with variance $\sigma$.
We detail two important aspects of the experiments, namely, the choice of $\lambda$, and how we deal with the invariance of problem~(\ref{eq:min_fn}) 
(see Sec.~\ref{sec:main_obj}).
Since our analysis relies on exact recovery, we first tune $\lambda$ over a logarithmic grid to match the oracle sparsity level. Note that this tuning step is performed over an auxiliary set of signals.  
On the other hand, we know that the dictionary $\hat{\Db}$ that we learn by minimizing problem~(\ref{eq:min_fn}) may differ from $\Dbo$ up to sign flips and atom permutations. 
Since both $\hat{\Db}$ and $\Dbo$ have normalized atoms, finding the closest dictionary (in Frobenius norm) up to these transformations 
is equivalent to an assignment problem based on the absolute correlation matrix $\hat{\Db}^\top \Dbo$, which can be efficiently solved using the Hungarian algorithm~\citep{Kuhn1955}.

To solve problem~(\ref{eq:min_fn}), we use the stochastic algorithm from~\cite{Mairal2010}\footnote{The code is available at~\texttt{http://www.di.ens.fr/willow/SPAMS/}.}
where the batch size is fixed to $512$, while the number of epochs is chosen so as to pass over each signal 25 times (on average). 
We consider two types of initialization, i.e., either from (1) a random dictionary, or (2) the correct $\Dbo$.

To begin with, we illustrate Theorem~\ref{thm:main_thm} with $\Dbo$ a Hadamard-Dirac (over-complete) dictionary.
The sparsity level is fixed such that $\triple \Dbo \triple_2 \cdot k\mu_0 = O(1/\sqrt{\log(p)})$, 
and we consider a small enough noise level, so that 
the radius is primarily limited by the number $n$ of signals.
The normalized error $\|\Dbo - \hat{\Db}\|_\fro/\sqrt{mp^3}$ versus $n$ is plotted in Fig.~\ref{fig:exp}.
We then focus on Theorem~\ref{thm:main_thm_ortho}, with $\Dbo$ a Hadamard (orthogonal) dictionary.
We consider sufficiently many signals ($n = 75,000$) so that the radius is only limited by $\sqrt{m}\cdot \sigma/\sigma_\alpha$.
The normalized error $\|\Dbo - \hat{\Db}\|_\fro/\sqrt{m}$ versus the level of noise is displayed in Fig.~\ref{fig:exp}.
\begin{figure}[!h]
\centering
\begin{tabular}{cc}
\hspace*{-0.2cm}\includegraphics[width=0.49\linewidth]{./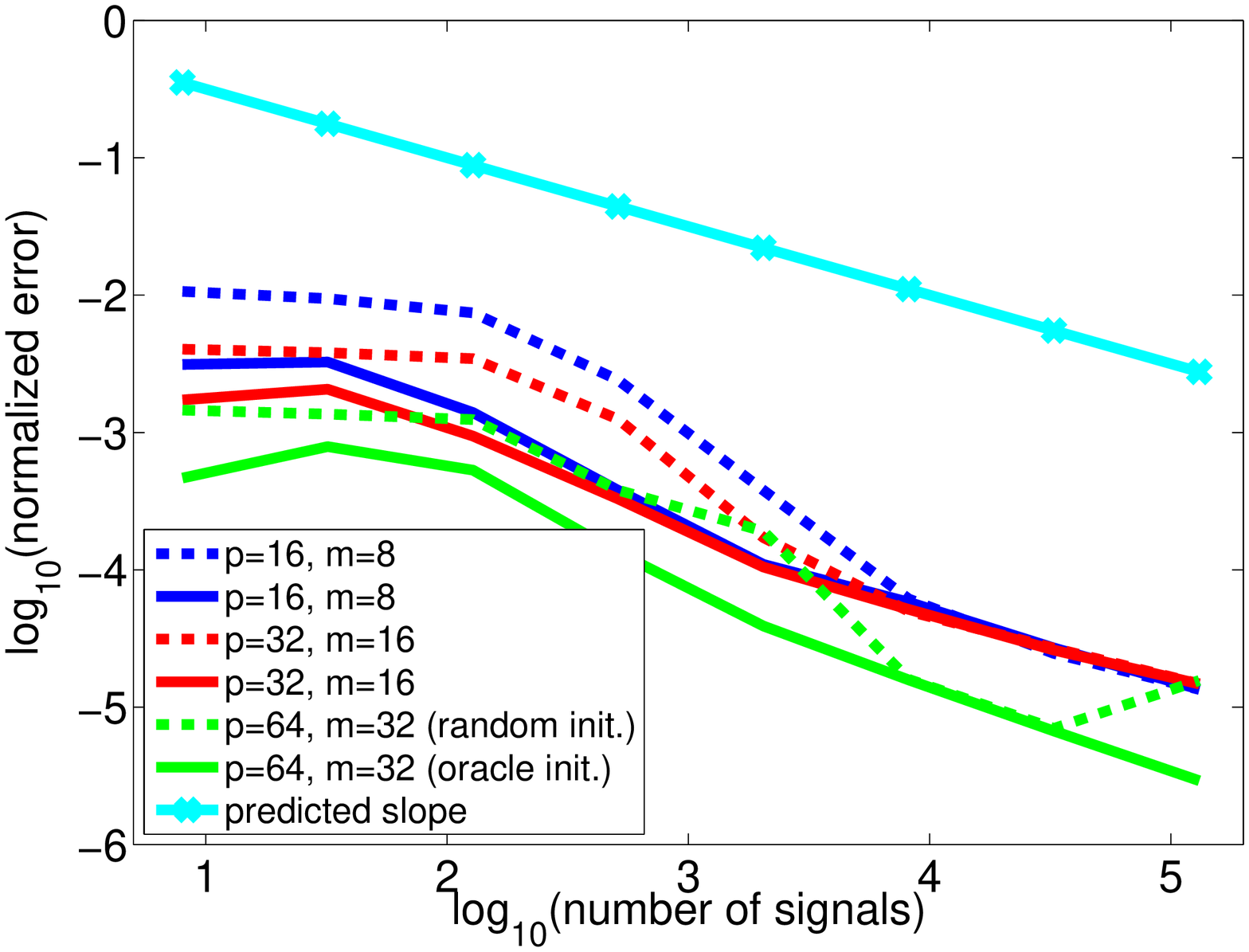} &
\includegraphics[width=0.49\linewidth]{./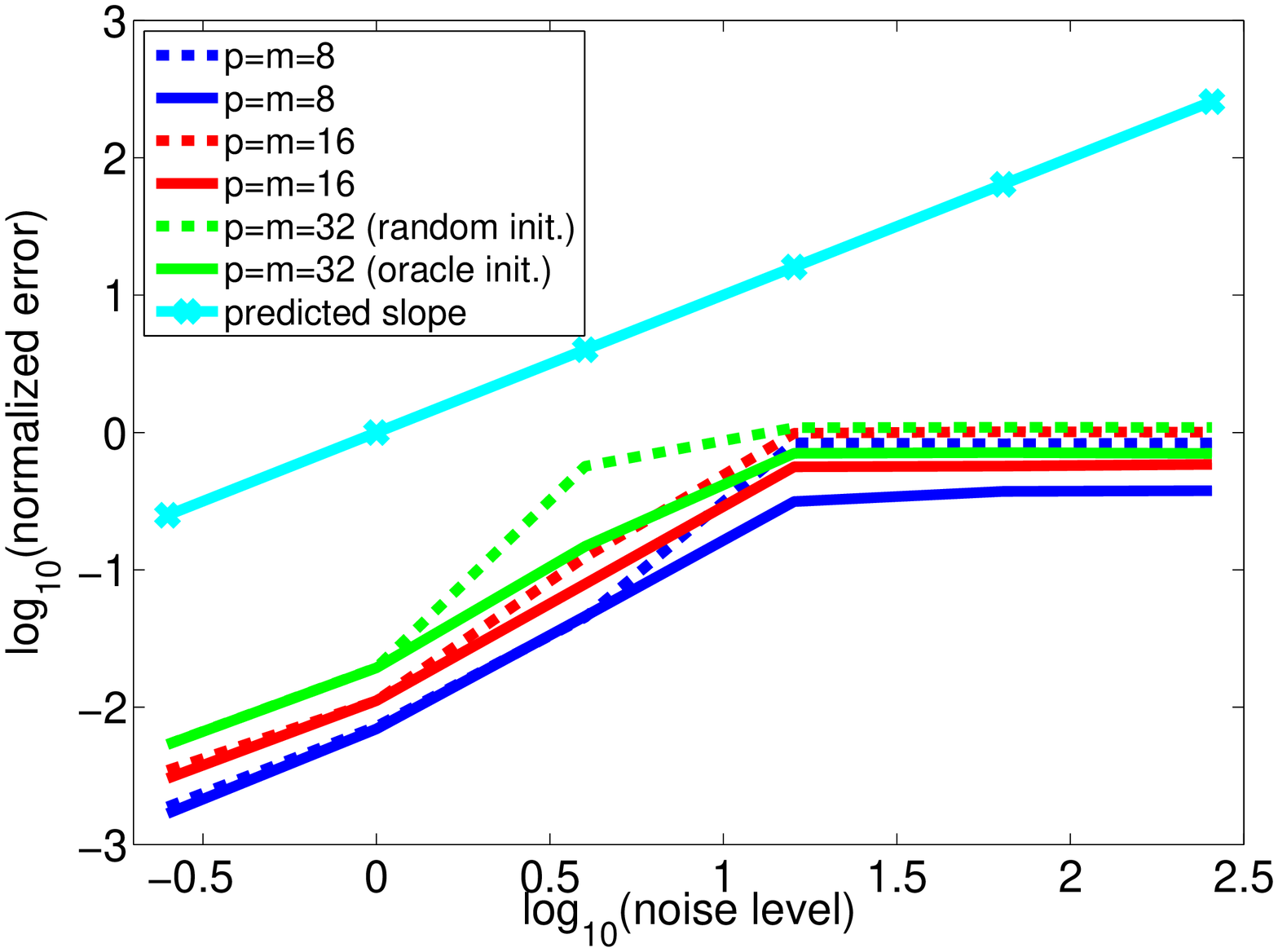} 
\end{tabular}
\vspace*{-0.3cm}
\caption{Normalized error between $\Dbo$ and the solution of problem~(\ref{eq:min_fn}), versus the number of signals (left) and the noise level (right).
The curves represent the median error based on 5 runs, for random and oracle initializations.
More details can be found in the text; best seen in color.}
\label{fig:exp}
\end{figure}

The curves represented in Fig.~\ref{fig:exp} do not perfectly superimposed, 
thus implying that our results do not capture the exact scalings in $(p,m)$ (our bounds appear in fact as too pessimistic).
However, our theory seems to account for the main dependencies with respect to $n$ and $\sigma$, 
as the good agreement with the predicted slopes proves it.  
Interestingly, while we would expect the curves in the left plot of Fig.~\ref{fig:exp} 
to tail off at some point because of the coherence (term $e^{-\gammaDbo^2/2}$ in the bound of the radius), 
it seems that there is in practice a much milder dependency with respect to the coherence.
Finally, we can observe that both the random and oracle initializations seem to lead to the same behavior, 
thus raising the questions of the potential \emph{global} characterization of these local minima. 
\section{Conclusion}
We have conducted a non-asymptotic analysis of the local minima of sparse coding in the presence of noise, 
thus extending prior work which focused on noiseless settings~\citep{Gribonval2010,Geng2011}.
Within a probabilistic model of sparse signals, we have shown that a local minimum exists with high probability around the reference dictionary. 

Our study can be further developed in multiple ways.
On the one hand, while we have assumed \emph{deterministic} coherence-based conditions scaling in $O(1/k)$,
it may interesting to consider non-deterministic assumptions~\citep{Candes2009}, 
which are likely to lead to improved scalings.
On the other hand, we may also use more realistic generative models for $\alphabo$, for instance, 
spike and slab models~\citep{Ishwaran2005}, 
or signals with compressible priors~\citep{Gribonval2011}.

Also, we believe that our approach can handle the presence of outliers, provided their total energy remains small enough;
we plan to make this argument formal in future work.

Finally, it remains challenging to extend our local properties to global ones due to the intrinsic non-convexity of the problem; 
an appropriate use of convex relaxation techniques~\citep{Bach2008c} may prove useful in this context.
\section*{Acknowledgements}
This work was supported by the European Research Council (SIERRA and SIPA Projects) and 
by the EU FP7, SMALL project, FET-Open grant number 225913.
\bibliographystyle{plainnat}
\bibliography{./main_bibliography}

\appendix

\section{Detailed Statements of the Main results}
We gather in this appendix the detailed statements and the proofs of the simplified results presented in the core of the paper.
In particular, we show in this section that under appropriate scalings of the problem dimensions $(m,p)$, number of training samples $n$, and model parameters $k,\loweralpha,\sigma_{\alpha},\sigma,\mu_0$, it is possible to prove that, with high probability,
the problem of sparse coding admits a local minimum in a certain neighborhood of $\Dbo$ of controlled size, for appropriate choices of the regularization parameter $\lambda$.
\subsection{Minimum local of sparse coding}\label{sec:minimum_sparsecoding}
We present here a complete and detailed version of our result upon which the theorems presented in the paper are built.
\begin{theorem}[Local minimum of sparse coding]
\label{th:minimum_sparsecoding}
Let us consider our generative model of signals for some reference dictionary $\Dbo \in \Real^{m \times p}$ with coherence $\mu_0$.
Introduce the parameters 
$
q_\alpha \defin \frac{\Exp[\alpha^2]}{\sigma_\alpha^2}
$
and
$
\Qcal_\alpha \defin \frac{\Exp[\alpha^2]}{\sigma_\alpha\cdot\Exp[|\alpha|]}
$
which depend on the distribution of $\alphabo$ only.
Consider the following quantities:
\begin{eqnarray*}
\tau = \tau(\Dbo,\alphabo) &\defin& \min\Big\{ \frac{\loweralpha}{ \sigma_\alpha}, 
\frac{1}{3c_0}\cdot \frac{\Qcal_\alpha}{k \triple \Dbo \triple_2}\Big\}\\
\gamma = \gamma(n,\Dbo,\alphabo) &\defin& 
\frac{1}{2} \min\Bigg\{\sqrt{2\log(n)},
\frac{1}{2\sqrt{2} c_0 c_\gamma} \cdot 
\frac{\Qcal_\alpha}{\triple \Dbo \triple_2 \cdot k \mu_0}
\Bigg\},
\end{eqnarray*}
and let us define the radius $t \in \Real_+$ by 
$$
t \defin 
\max\bigg\{
\frac{4\sqrt{2} c_\gamma}{q_\alpha} \cdot p \cdot\Big\{ c_1 \gamma^3 e^{-\gamma^2} + 2c_2\cdot \gamma\cdot \Big[ m p \frac{\log(n)}{n}\Big]^{1/2}\Big\},  
\frac{\sigma}{\sigma_\alpha}\cdot \sqrt{m}\bigg\}
$$
for some universal constants $c_{*}$.
Provided the following conditions are satisfied:
\begin{itemize}
 \item[]\textbf{(Coherence)} \hspace*{1.5cm}
$\displaystyle
\triple \Dbo \triple_2 \cdot k \mu_0
\leq 
\frac{1}{4\sqrt{2} c_0 c_\gamma}\cdot \frac{\Qcal_\alpha}
{\sqrt{\log( 69 c_1 c_\gamma^2 \cdot \frac{1}{q_\alpha} 
\cdot \frac{p}{\tau}  )}},
$
 \item[]\textbf{(Sample complexity)}
$\displaystyle\
 \frac{\log(n)}{n} \leq 
\frac{q_\alpha^2}{c_3} \cdot 
\frac{1}{m\cdot p^3} \cdot
\frac{\tau^2}{\gamma^4},
$
\end{itemize}
one can find a regularization parameter 
$
\lambda
$
proportional to
$ 
\gamma \cdot \sigma_{\alpha} \cdot  t
$, 
and with probability exceeding
$$
1 - \Big(\frac{mpn}{9}\Big)^{-\frac{mp}{2}} - e^{-4\sqrt{n}},
$$
there exists a local minimum in 
$
\Big\{ \Db \in \Dcal;\ \|\Dbo-\Db\|_\fro < t  \Big\}.
$ 
\end{theorem}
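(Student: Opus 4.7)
The plan is to invoke Proposition~\ref{prop:localmin}: it suffices to show that with the claimed probability one has $\inf_{\Wb \in \Wcal_{\Dbo},\, \vb \in \Scal^p_+}\Delta F_n(\Wb,\vb,t) > 0$ for the $t$ prescribed in the statement. Since $F_n$ has no workable closed form, I would replace it by the smooth idealized objective $\Phi_n(\cdot \mid \sign(\Abo))$ of Definition~\ref{def:phi}, derive a positive lower bound for $\Delta\Phi_n(\Wb,\vb,t)$ uniformly in $(\Wb,\vb)$, and then bound the defect $\Delta F_n - \Delta\Phi_n$ through the coincidence events $\Ecal^{i}_{\mathrm{coincide}}(t)\cap\Ecal^{i}_{\mathrm{coincide}}(0)$ of Section~\ref{sec:building_blocks}. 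Finally I would calibrate $\lambda \propto \gamma\,\sigma_\alpha\,t$ so that the positive ``signal'' term in $\Delta\Phi_n$ dominates all the error terms under the two hypotheses of the theorem.

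\emph{Step 1, controlling $\Delta\Phi_n$.} Using the closed form~(\ref{eq:phi}), $\Delta\Phi_n(\Wb,\vb,t)$ is an explicit bilinear/quadratic form in $(\alphabo^i,\sign(\alphabo^i),\varepsilonb^i)$. For fixed $(\Wb,\vb)$ I would Taylor-expand in $t$ around $0$: the coherence hypothesis $\triple\Dbo\triple_2\cdot k\mu_0 = 1/\gammaDbo$ lets one invert $\Db_\J(\Wb,\vb,t)^\top \Db_\J(\Wb,\vb,t)$ via a controlled Neumann series, and the moments $q_\alpha,\Qcal_\alpha$ quantify $\Exp[\Delta\Phi_n] \gtrsim c\,t^2\sigma_\alpha^2$ for the chosen $\lambda$. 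For the fluctuations I would apply sub-Gaussian concentration of bilinear/quadratic forms in $(\alphabo,\sign(\alphabo),\varepsilonb)$ to get pointwise deviation of order $\gamma\sqrt{mp\log(n)/n}$ per form, and then pass to a uniform bound over $(\Wb,\vb)$ via an $\epsilon$-net on $\Wcal_{\Dbo}\times\Scal^p_+$ of cardinality at most $(mpn)^{mp/2}$ (up to a numeric constant); the Lipschitzness of $\Db(\Wb,\vb,t)$ and of $\Phi_n$ in $(\Wb,\vb)$ is what enables this covering and is the source of the $1-(mpn/9)^{-mp/2}$ probability in the statement.

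\emph{Step 2, controlling $\kappa$ and hence $r_n$.} Sign recovery for the Lasso associated with a \emph{perturbed} dictionary $\Db(\Wb,\vb,t)$ follows the standard Fuchs--Wainwright irrepresentability route, but with the angular perturbation of size $t$ playing the role of an additional noise source on top of $\varepsilonb$; Propositions~2--3 in the appendix would give an explicit sufficient condition on $\lambda,\loweralpha,\mu(t)\leq \mu_0+3t$ and the effective noise $\Db_{\J^c}(\Wb,\vb,t)^\top(\xb-\Db_\J(\Wb,\vb,t)\tilde{\alphab}_\J)$ for $\tilde{\alphab}$ to be the unique minimizer with the correct sign. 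The definition of $\gamma$ and the coherence bound are calibrated to meet these thresholds, and a sub-Gaussian tail inequality then yields $\kappa \leq c\,e^{-\gamma^2}$ up to polynomial factors. Since the defect is an indicator sum weighted by $\Lcal_{\xb^i}(\Db,\alphabo^i)+\Lcal_{\xb^i}(\Dbo,\alphabo^i)$, a Bernstein-type bound combined with a sub-Gaussian tail bound on the losses gives $r_n \leq C\,\kappa(3-\log\kappa)\cdot[t^2\sigma_\alpha^2 + m\sigma^2 + \lambda k\sigma_\alpha]$ with probability at least $1-e^{-4\sqrt n}$.

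\emph{Step 3, positivity and the main obstacle.} Assembling the two bounds gives a lower bound of the form $\Delta F_n \gtrsim c\,t^2\sigma_\alpha^2 - C\,\gamma\,t\,\sqrt{mp\log(n)/n} - C'\,\kappa(3-\log\kappa)\,(t^2\sigma_\alpha^2+m\sigma^2)$, and the maximum-of-three definition of $t$ is precisely what guarantees strict positivity: its first component absorbs the stochastic fluctuation, its $p\,\gamma^3 e^{-\gamma^2}$ component absorbs the irreducible exact-recovery defect, and its $(\sigma/\sigma_\alpha)\sqrt m$ component absorbs the noise floor. The coherence and sample-complexity hypotheses are exactly those that make each of these three terms comparable to $t\,\sigma_\alpha$, as required. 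The hardest step, I expect, is the uniform sign-recovery control in Step~2: producing an irrepresentability margin that is robust simultaneously over the $mp$-dimensional manifold of perturbed dictionaries, and that collapses into a single Bernoulli probability $\kappa$ suitable for Bernstein, requires a delicate Lipschitz control of the dual certificate in $t$. The unavoidable $e^{-\gamma^2}$ dependence on the coherence is the price paid for routing a noisy argument through exact recovery; it is what limits the attainable resolution and prevents the noiseless arguments of~\cite{Gribonval2010,Geng2011} from transferring directly.
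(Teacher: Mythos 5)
Your proposal mirrors the paper's architecture faithfully: Proposition~\ref{prop:localmin} reduces the claim to uniform positivity of $\Delta F_n$, $\Delta F_n$ is lower-bounded through the surrogate $\Delta\Phi_n$, the latter is controlled in expectation and concentrated via an $\epsilon$-net on $\Wcal_{\Dbo}\times\Scal^p$, and the defect $\Delta F_n-\Delta\Phi_n$ is bounded through $\kappa = \max_i \Pr([\Ecal^i_{\mathrm{coincide}}(t)\cap\Ecal^i_{\mathrm{coincide}}(0)]^c)$ using perturbed exact-recovery and the truncated-moment/Bernstein argument. This is exactly the route through Theorems~\ref{th:control_Fn} and Propositions~\ref{prop:maindeltaphi}, \ref{prop:exact_recovery} and~\ref{prop:control_rn}.

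The one place where the sketch is materially thinner than the paper is Step 3. The assembled lower bound you display, $\Delta F_n \gtrsim c\,t^2\sigma_\alpha^2 - C\gamma t\sqrt{mp\log(n)/n} - C'\kappa(3-\log\kappa)(t^2\sigma_\alpha^2+m\sigma^2)$, omits the coherence-bias term of Theorem~\ref{th:control_Fn}, namely $-Q_t^2(\tfrac{16}{9}Q_t^2+3)\Exp\{|\alpha_0|\}\,t\,\tfrac{k}{p}\,\triple\Dbo\triple_2\,k\mu(t)\,\lambda$. Because $\mu(t)=\mu_0+3t$ is itself affine in $t$ and $\lambda\propto\gamma\sigma_\alpha t$, this bias is \emph{quadratic} in $t$ with the same order as the signal term, so positivity cannot be obtained by simply taking $t$ large enough to beat the other errors: one must control both ends of the admissible interval. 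The paper makes this precise by writing the bound as $t(-a_2t^2+a_1t-a_0)$, invoking Lemma~\ref{lem:poly} to get the window $t\in[2a_0/a_1,\ a_1/2a_2]$ under the discriminant condition $4a_0a_2<a_1^2$, and then proving non-emptiness of the induced range for $\gamma$ via the explicit $\gamma_{\min},\gamma_{\max}$ and Lemma~\ref{lem:ineq_lambert} (which translates the coherence condition into the stated $\sqrt{\log(\cdot)}$ form). Your "Step~1" sentence about the moments giving $\Exp[\Delta\Phi_n]\gtrsim c\,t^2\sigma_\alpha^2$ silently assumes this balance has already been carried out; spelling out the quadratic-in-$t$ structure and the two-sided constraint on $\gamma$ is where the actual content of Theorem~\ref{th:minimum_sparsecoding} (as opposed to the backbone Theorem~\ref{th:control_Fn}) lives.
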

As it will discussed at greater length in Sec.~\ref{sec:building_blocks}, 
we can see that the probability of success of Theorem~\ref{th:minimum_sparsecoding}
can be decomposed into the contributions of the concentration of the surrogate function and the residual term. 
We next present a second result which assumes a more constrained signal model:
\begin{theorem}[Local minimum of sparse coding with noiseless/bounded signals]\label{th:minimum_sparsecoding_simple}
Let us consider our generative model of signals for some reference dictionary $\Dbo \in \Real^{m \times p}$ with coherence $\mu_0$.
Further assume that $\alphabo$ is almost surely upper bounded by $\upperalpha$ and that there is no noise, that is, $\sigma =0$.
Introduce the parameters 
$
q_\alpha \defin \frac{\Exp[\alpha^2]}{\upperalpha\cdot\Exp[|\alpha|]}
$
and 
$
\Qcal_\alpha \defin \frac{\Exp[\alpha^2]}{\sigma_\alpha\cdot\Exp[|\alpha|]}
$
which depend on the distribution of $\alphabo$ only.
Consider the radius $t \in \Real_+$: 
$$
t \defin \frac{8 c_1 c_\lambda}{q_\alpha}
\bigg[ k m p^3\cdot \frac{\log(n)}{n} \bigg]^{1/2}
$$
for some universal constants $c_{*}$.
Provided the following conditions are satisfied:
\begin{itemize}
 \item[]\textbf{(Coherence)} \hspace*{1.5cm}
$\displaystyle\triple \Dbo \triple_2\cdot k^{3/2}\mu_0 
\leq 
\frac{1}{c_0 c_\lambda} \cdot q_\alpha$ 
 \item[]\textbf{(Sample complexity)}
$\displaystyle\
\frac{\log(n)}{n} \leq 
\frac{1}{k^2 m p^3} \cdot \bigg[\frac{\Exp[\alpha^2]}{\upperalpha^2} \cdot \frac{1}{9 c_1 c_\lambda^2}
\cdot  
\min\bigg\{ \frac{\loweralpha}{\sigma_\alpha}, \frac{1}{5c_0} \cdot \frac{\Qcal_\alpha}{k \cdot \triple \Dbo \triple_2} \bigg\} \bigg]^{2},
$
\end{itemize}
one can find a regularization parameter $\lambda$ proportional to $\sqrt{k}\cdot \upperalpha\cdot t$, 
and with probability exceeding
$$
1 - \Big(\frac{mpn}{9}\Big)^{-mp/2},
$$
there exists a local minimum in 
$
\Big\{ \Db \in \Dcal;\ \|\Dbo-\Db\|_\fro < t  \Big\}.
$
\end{theorem}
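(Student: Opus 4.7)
The plan is to follow the architecture laid out in Section~\ref{sec:building_blocks}, specialized to the noiseless bounded-coefficient regime in which several technical obstacles of Theorem~\ref{th:minimum_sparsecoding} disappear. First I would invoke Proposition~\ref{prop:localmin}: it suffices to exhibit a radius $t$ and to prove that, with the stated probability, $\inf_{\Wb\in\Wcal_{\Dbo},\,\vb\in\Scal^p_{+}}\Delta F_{n}(\Wb,\vb,t) > 0$. I would then use the decomposition $\Delta F_{n} \geq \Delta\Phi_{n} - r_{n}$ introduced in Section~\ref{sec:building_blocks}, with $\Delta\Phi_{n}$ the oracle-sign surrogate and $r_{n}$ the residual from samples on which $f_{\xb^{i}}$ and $\phi_{\xb^{i}}(\cdot|\sign(\alphabo^{i}))$ disagree.

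For the lower bound on $\Delta\Phi_{n}$, I would Taylor-expand the parametrization~(\ref{eq:Dt}) in $t$ and substitute into the closed form~(\ref{eq:phi}). With $\sigma = 0$ the formula reduces to a bilinear/quadratic form in $(\alphabo,\sign(\alphabo))$ alone. The dominant second-order term in $t$ is, up to constants depending on $\triple\Dbo\triple_2$ and $k\mu_0$, of order $q_\alpha\, t^{2}$; the cross term with $\lambda$ fixes the scale $\lambda \propto \sqrt{k}\,\upperalpha\, t$ that keeps the expectation coercive. Because $|\alphabo_{j}|\le\upperalpha$, empirical concentration around this expectation follows from Hoeffding-type inequalities rather than the more delicate sub-Gaussian arguments needed for Theorem~\ref{th:minimum_sparsecoding}, and in particular no $e^{-\gamma^{2}/2}$ tail contribution appears. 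Uniformity over the compact set $\Wcal_{\Dbo}\times\Scal^p_{+}$ is then achieved by a standard $\epsilon$-net argument, and it is this net that produces the probability factor $(mpn/9)^{-mp/2}$ and the $\sqrt{mp\log(n)/n}$ rate visible in $t$.

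Next I would bound the residual $r_{n}$. Under $\sigma=0$ and $|\alphabo_j|\le\upperalpha$, each summand $\Lcal_{\xb^{i}}(\Db,\alphabo^{i}) + \Lcal_{\xb^{i}}(\Dbo,\alphabo^{i})$ is bounded deterministically by a quantity of order $k\upperalpha^{2} + \lambda k\upperalpha$, so $r_{n}$ is controlled directly by the fraction of samples where the event $\Ecal^{i}_{\mathrm{coincide}}(t) \cap \Ecal^{i}_{\mathrm{coincide}}(0)$ fails. I would then invoke the perturbed sign-recovery statements (Propositions~2--3 of the appendix) which, under the coherence requirement $\triple\Dbo\triple_2\cdot k^{3/2}\mu_{0}\lesssim q_\alpha$, guarantee exact support/sign recovery at every $\Db(\Wb,\vb,t)$ \emph{uniformly} in $(\Wb,\vb)$ with a per-sample failure probability $\kappa$ so small that $n\kappa$ stays negligible. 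This is what removes the auxiliary $e^{-4\sqrt{n}}$ failure term that appears in the sub-Gaussian statement of Theorem~\ref{th:minimum_sparsecoding}.

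The final step is to verify, by direct comparison of the quadratic lower bound on $\Delta\Phi_n$ with the rate of $r_n$, that the prescribed choices of $t$ and $\lambda$ yield $\Delta\Phi_{n} - r_{n} > 0$, after which Proposition~\ref{prop:localmin} delivers the local minimum. The main obstacle, as in the general theorem, is the uniform-in-$(\Wb,\vb)$ perturbed irrepresentability step: moving from $\Dbo$ to $\Db(\Wb,\vb,t)$ acts as an effective extra ``noise'' of magnitude $\sim t$ on the IC at each sample, so one must simultaneously keep $t$ large enough to dominate the higher-order Taylor remainders in the expansion of $\Delta\Phi_n$ and small enough for the perturbed IC to survive uniformly over the tangent directions. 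This is exactly the balance that dictates the coherence scaling $\triple\Dbo\triple_2\cdot k^{3/2}\mu_0 = O(q_\alpha)$ and the sample-complexity rate $\log(n)/n = O(1/(k^{2}mp^{3}))$ appearing in the statement.
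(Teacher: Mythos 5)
Your plan correctly identifies the overall architecture (Proposition~\ref{prop:localmin}, the decomposition $\Delta F_n \ge \Delta\Phi_n - r_n$, concentration plus an $\epsilon$-net, then a second-order polynomial discussion in $t$), and you correctly observe that the noiseless/bounded regime lets the $e^{-\gamma^2/2}$ contribution disappear. But the mechanism you give for this is wrong in a way that matters for the probability bound.

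You write that the perturbed sign-recovery step gives ``a per-sample failure probability $\kappa$ so small that $n\kappa$ stays negligible,'' and that this is what removes the $e^{-4\sqrt n}$ term. That reasoning does not close the argument. If $\kappa>0$, however small, the residual $r_n$ is a random variable that is nonzero with positive probability, and bounding it would force you back into exactly the machinery of Proposition~\ref{prop:control_rn}: you would need to argue that $n\kappa$ is small and pay a corresponding failure probability of roughly $\exp(-n\kappa)$-type, which is precisely where the extra $e^{-4\sqrt n}$ came from in the general theorem. The actual point that the paper uses, and which your proposal misses, is Proposition~\ref{prop:simplified_exact_recovery}: when $\sigma=0$ and $|\alphabo_j|\le\upperalpha$, one shows that $\|\xb - \Db(t')\alphabo\|_2 \le \sqrt{k}\,\upperalpha\, t$ and $\|(\Ib-\PJb(t'))\xb\|_2 \le \sqrt{k}\,\upperalpha\, t$ \emph{deterministically}, so that for $\lambda$ in the window $\frac{\sqrt{k}\upperalpha}{2-Q_t^2}\,t < \lambda \le \frac49\loweralpha$ the event $\Ecal^i_{\mathrm{coincide}}(t')$ holds almost surely, i.e.\ $\kappa=0$. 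Then $r_n \equiv 0$: there is nothing to bound, the residual term vanishes identically, the $\gamma$-dependent terms in Theorem~\ref{th:control_Fn} disappear, and the only remaining failure probability is the $\epsilon$-net concentration term $(mpn/9)^{-mp/2}$ from Lemma~\ref{lem:epsnetmain}.

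Two smaller inaccuracies. First, the scale $\lambda\propto\sqrt{k}\,\upperalpha\, t$ is not ``fixed by the cross term'' of the Taylor expansion of $\Delta\Phi_n$; it is forced from below by the deterministic exact-recovery condition $\lambda > \frac{\sqrt{k}\upperalpha}{2-Q_t^2}t$ of Proposition~\ref{prop:simplified_exact_recovery} and from above by $\lambda \le \frac49\loweralpha$, and the paper takes $\lambda = \frac12 c_\lambda \upperalpha\sqrt k\, t$ inside this window. Second, the concentration the paper uses is not a fresh Hoeffding argument: it reuses the sub-Gaussian concentration and $\epsilon$-net of Lemmas~\ref{lem:concentrationmain}--\ref{lem:epsnetmain}, already established for Theorem~\ref{th:control_Fn}; boundedness is exploited solely for the almost-sure recovery step, not to replace the concentration machinery. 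Everything else in your outline (the coherence condition ensuring both the positivity of the leading quadratic coefficient and the validity of the irrepresentability margin, and the polynomial-in-$t$ sign discussion) is in line with the paper.
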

These two theorems, which are proved in Section~\ref{app:minimum_sparsecoding}, heavily relies on the following central result.
\subsection{The backbone of the analysis}\label{sec:main_result}
We concentrate on the result which constitutes the backbone of our analysis. Indeed, we next show how the difference
\begin{equation}
\label{eq:DefDeltaFn}
\Delta F_{n}(\Wb,\vb,t) \defin F_{n}(\Db(\Wb,\vb,t)) - F_{n}(\Db_{0}).
\end{equation}
is lower bounded with high probability and uniformly with respect to all possible choices of the parameters $(\Wb,\vb)$. 
The theorem and corollaries displayed in the core of the paper are consequences of this general theorem, 
discussing under which conditions/scalings this lower bound can be proved to be sufficient (i.e., strictly positive)
to exploit Proposition~1 and conclude to the existence of a local minimum for $t$ appropriately chosen. 
We define
\begin{equation}
\label{eq:DefCrho}
Q_t  \defin  \frac{1}{\sqrt{1-k\mu(t)}}\quad \text{and}\quad C_t  \defin  \frac{1}{\sqrt{1-\delta_k(\Dbo)}-t},
\end{equation}
where the quantity $\delta_k(\Dbo)$ is the RIP constant itself defined in Section~\ref{app:RIP}.
\begin{theorem}
\label{th:control_Fn}
Let $\loweralpha,\sigma_{\alpha}$ be the  parameters of the coefficient model.  
Consider $\Dbo$ a dictionary in $\Real^{m \times p}$ with $\mu_{0} < 1/2$ and let $k$, $t>0$ be such that
\begin{eqnarray}
\label{eq:AssumptionKT1}
k\mu(t) &<& 1/2\\
\frac{3t}{2-Q_{t}^{2}}  &<& 
\label{eq:AssumptionKT2}
\frac{4\loweralpha}{9\sigma_{\alpha}}
\end{eqnarray}
Then for small enough noise levels $\sigma$ one can find a regularization parameter $\lambda>0$ such that
\begin{equation}\label{eq:BoundsLambdaTh}
\frac{3}{2-Q_{t}^{2}}
\cdot \sqrt{t^{2} \sigma_{\alpha}^{2} + m\sigma^{2}}
\leq  \lambda \leq \frac 49 \loweralpha.
\end{equation}
Given $\sigma$ and $\lambda$ satisfying~(\ref{eq:BoundsLambdaTh}),  we define
\begin{equation}
\gamma \defin \frac{\lambda(2-Q_{t}^{2})}{\sqrt{5} \cdot \sqrt{t^{2} \sigma_{\alpha}^{2} + m\sigma^{2}}} \geq \sqrt{2\log2}.
\end{equation}
Let $\xb^{i} \in \Real^m$, $i \in \SET{n}$, where $n/\log n \geq mp$, be generated according to the signal model. 
Then, except with probability at most $\left(\frac{mpn}{9}\right)^{-mp/2} + \exp(-4n \cdot e^{-\gamma^{2}})$ we have
\begin{eqnarray}
\inf_{\Wb \in \Wcal_{\Dbo},\vb \in \Scal^{p}} \Delta F_{n}(\Wb,\vb,t)
&\geq &
(1 - \Kcal^2) \cdot \frac {\Exp[ \alpha_{0}^{2}]}{2} 
\cdot  
\frac{k}{p} \cdot t^{2}\notag\\
&&
-Q_t^2\left(\frac{16}{9} Q_{t}^2 + 3 \right)\cdot \Exp\{|\alpha_{0}|\} \cdot t \cdot \frac{k}{p} \cdot \triple \Dbo \triple_2 \cdot k\mu(t) \cdot \lambda \notag\\
&&
- A \cdot \gamma^{2} \cdot e^{-\gamma^{2}}\notag\\
&& -B \cdot \sqrt{mp \frac{\log n}{n}},
\end{eqnarray}
where
\begin{eqnarray}
\Kcal &\defin & C_{t} \cdot \big(\triple \Dbo \triple_2 \cdot \sqrt{k/p} + t\big)\\
A &\defin&
367 \cdot \left(t^{2}  \sigma_{\alpha}^{2} + 2m  \sigma^{2} + 2\lambda k\sigma_{\alpha}\right)\\
B &\defin &
3045 \left( k\sigma_{\alpha}^{2} \cdot t+2 m\sigma^{2} + 2\lambda k \sigma_{\alpha} \right).
\end{eqnarray}
\end{theorem}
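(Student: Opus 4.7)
The approach is to exploit the decomposition sketched in Section~\ref{sec:building_blocks}, namely
\[
\Delta F_n(\Wb,\vb,t) \geq \Delta \Phi_n(\Wb,\vb,t) - r_n,
\]
and then to uniformly lower bound the smooth surrogate $\Delta \Phi_n$ while uniformly upper bounding the residual $r_n$, both with high probability. The conclusion then follows by collecting the four sources of error: the deterministic quadratic/linear-in-$t$ contributions from $\Delta \Phi_n$, the concentration deviation in $\sqrt{mp \log n/n}$, and the residual term proportional to $\gamma^2 e^{-\gamma^2}$.

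\textbf{Step 1: lower bound on $\Delta \Phi_n$.} Thanks to the closed form~\eqref{eq:phi}, the mapping $t \mapsto \Phi_n(\Db(\Wb,\vb,t)|\Sb_0)$ is smooth, so I would Taylor expand around $t=0$ where $\Db(\Wb,\vb,0)=\Dbo$. The leading deterministic contribution comes from the Hessian in $t$: decomposing $\xb^i = \Dbo \alphabo^i + \varepsilonb^i$ and using that the Gramian $(\Db_{\J}^\top \Db_{\J})^{-1}$ is controlled by $C_t^2$ via the RIP constant $\delta_k(\Dbo)$ (see the quantity $\Kcal$), the quadratic term in expectation produces $(1-\Kcal^2)\cdot \Exp[\alpha_0^2]/2 \cdot (k/p) \cdot t^2$ after averaging over the uniform support. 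The cross terms between the perturbation $\Wb \Diag(\vb)$ and $\lambda \sb_{\J}$ account for the $t \cdot \lambda \cdot \triple \Dbo \triple_2 \cdot k\mu(t)$ contribution, with $Q_t^2$ entering via $(\Db_{\J}^\top \Db_{\J})^{-1}$ estimates on the perturbed dictionary. The remaining stochastic fluctuations are sub-Gaussian bilinear/quadratic forms in $(\alphabo^i, \sbo^i, \varepsilonb^i)$, which by Hanson--Wright/Bernstein-type inequalities concentrate at rate $\sqrt{\log n / n}$ pointwise in $(\Wb,\vb)$.

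\textbf{Step 2: uniformity over $(\Wb,\vb)$.} To promote the pointwise bound to a uniform one, I would cover the compact set $\Wcal_{\Dbo} \times \Scal^p$, whose intrinsic dimension is of order $mp$, by an $\epsilon$-net with $\epsilon \asymp 1/\sqrt{n}$, giving a net of cardinality $(n/\epsilon_0)^{O(mp)}$. A Lipschitz argument on $\Phi_n$ along the curves $\Db(\Wb,\vb,t)$, combined with a union bound over the net, yields uniform control at the cost of a $\log$-factor and a failure probability $(mpn/9)^{-mp/2}$. This is the source of the $B \sqrt{mp \log n / n}$ term.

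\textbf{Step 3: bounding the residual.} The residual $r_n$ is an empirical average of the losses $\Lcal_{\xb^i}(\Db,\alphabo^i) + \Lcal_{\xb^i}(\Dbo,\alphabo^i)$ weighted by indicators of the failure of exact sign recovery at both $\Dbo$ and $\Db(\Wb,\vb,t)$. Using the sign-recovery propositions for perturbed dictionaries (Propositions~2--3 announced in the paper), the failure probability $\kappa$ is uniformly controlled by $\exp(-\gamma^2)$, since the choice of $\lambda$ in~\eqref{eq:BoundsLambdaTh} is precisely tuned so that the Fuchs-type dual certificate at any $\Db(\Wb,\vb,t)$ remains strictly below one while absorbing both the additive noise $\varepsilonb$ and the perturbation-induced noise of variance $O(t^2 \sigma_\alpha^2)$. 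The sub-Gaussian tails of $\alphabo^i, \varepsilonb^i$ give $\Exp\{\Lcal_{\xb^i}(\Db,\alphabo^i) \mathbbm{1}_{\text{fail}}\} \lesssim (t^2 \sigma_\alpha^2 + m\sigma^2 + \lambda k \sigma_\alpha)(3-\log\kappa)\kappa$, and a Bernstein-type concentration of $r_n$ around this expectation then yields the $A \cdot \gamma^2 e^{-\gamma^2}$ term with the advertised exceptional probability $\exp(-4n e^{-\gamma^2}) = \exp(-4\sqrt{n})$ (the latter equality holding thanks to $\gamma \geq \sqrt{2\log 2}$ and the explicit shape of $\gamma$).

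\textbf{Main obstacle.} The delicate point is the uniform control of $\kappa$: exact sign recovery must hold not merely at $\Dbo$ but simultaneously for \emph{every} perturbed dictionary $\Db(\Wb,\vb,t)$, and this uniformity has to be compatible with the $\epsilon$-net bound used for $\Delta \Phi_n$. The assumptions~\eqref{eq:AssumptionKT1}--\eqref{eq:AssumptionKT2} are exactly what is required to ensure that the irrepresentability certificate on $\Db(\Wb,\vb,t)$ stays uniformly bounded away from one (through $Q_t$ and the $\loweralpha$-margin guaranteeing that a $\lambda \lesssim \loweralpha$ does not corrupt the sign pattern), while the window~\eqref{eq:BoundsLambdaTh} on $\lambda$ balances the competing requirements of crushing the dual-certificate deviations (forcing $\lambda$ large enough) and preserving recovery of the true signs (forcing $\lambda \lesssim \loweralpha$). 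Once these ingredients are combined, the four terms in the conclusion simply collect, in order, the deterministic quadratic gain, the deterministic linear interaction with $\lambda$, the residual from sign-recovery failure, and the empirical-process deviation.
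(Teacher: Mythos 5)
Your proposal reproduces the paper's overall architecture: the decomposition $\Delta F_n \geq \Delta\Phi_n - r_n$, a uniform lower bound on $\Delta\Phi_n$ via pointwise concentration plus an $\epsilon$-net argument over $\Wcal_{\Dbo}\times\Scal^p$, and control of the residual $r_n$ through the failure probability $\kappa$ of exact sign recovery on perturbed dictionaries. Two details, however, diverge from the paper's actual route and are worth flagging. First, the paper does \emph{not} Taylor-expand $t\mapsto\Phi_n(\Db(t)|\Sb_0)$ around $t=0$ and read off a Hessian; it expands $\Delta\phi_\xb(t)$ exactly into six bilinear/quadratic forms in $(\alphabo,\sign(\alphabo),\varepsilonb)$ (Lemma~\ref{lem:delta_phi}), and the dominant term $\Exp_{\J}\trace\big([\Dbo]_\J^\top(\Ib-\PJb(t))[\Dbo]_\J\big)$ is lower bounded at the full radius $t$ (not infinitesimally) via the geodesic identity $\Dbo = \Db(t)\Cb^{-1}-\Wb\Tb$, the projection decomposition $\|(\Ib-\PJb(t))[\Wb\Tb]_\J\|_\fro^2 = \|[\Wb\Tb]_\J\|_\fro^2 - \|\PJb(t)[\Wb\Tb]_\J\|_\fro^2$, and the elementary bound $\tan^2 u \geq u^2$. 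A Taylor expansion at $t=0$ would leave a remainder to control and would ultimately require the same operator-difference estimates as Lemma~\ref{lem:diff_norm_oper}, so the framing is not wrong but is misleadingly local: the theorem's bound is a genuine inequality valid whenever $k\mu(t)<1/2$, not a small-$t$ approximation. Second, the identity $\exp(-4n e^{-\gamma^2})=\exp(-4\sqrt{n})$ that you assert is \emph{not} part of Theorem~\ref{th:control_Fn}; it only appears downstream in the proof of Theorem~\ref{th:minimum_sparsecoding} under the specific choice $\gamma^2=\tfrac12\log n$. In Theorem~\ref{th:control_Fn} the value of $\gamma$ is pinned by $\lambda$ (and constrained only below by $\sqrt{2\log 2}$), so the exceptional probability must be left as $\exp(-4n e^{-\gamma^2})$. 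Neither point constitutes a gap in the argument; they are imprecisions in the sketch that would need to be repaired to obtain the stated constants.
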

Roughly speaking, the lower bound we obtain can be decomposed into three terms:
(1) the expected value of our surrogate function valid uniformly for all parameters $(\vb,\Wb)$,
(2) the contributions of the residual term (discussed in the next section) introducing the quantity $\gamma$,
and (3) the probabilisitc concentrations over the $n$ signals of the surrogate function and the residual term.

The proof of the theorem and its main building blocks are detailed in the next section.
\section{Architecture of the proof of Theorem~\ref{th:control_Fn}}\label{sec:building_blocks}
Since we expect to have for many training samples the equality $f_\xb(\Db(\Wb,\vb,t)) = \phi_\xb(\Db(\Wb,\vb,t)|\sign(\alphabo))$ uniformly for all $(\Wb,\vb)$ , the main idea is to first concentrate on the study of the smooth function
\begin{equation}
\label{eq:DefDeltaPhin}
\Delta \Phi_{n}(\Wb,\vb,t) \defin 
\Phi_n( \Db(\Wb,\vb,t) | \sign(\Abo) ) - \Phi_n( \Dbo | \sign(\Abo) ),
\end{equation}
instead of the original function $\Delta F_{n}(\Wb,\vb,t)$. 

\subsection{Control of $\Delta\Phi_{n}$}
The first step consists in uniformly lower bounding $\Delta\Phi_{n}$ with high probability.
\begin{proposition}\label{prop:maindeltaphi}
Assume that $k\mu(t)\leq1/2$ then for any $n$ such that 
\begin{equation}
\frac{n}{\log n} \geq mp,
\end{equation}
except with probability at most $\left(\frac{mpn}{9}\right)^{-mp/2}$, we have
\begin{eqnarray}
\label{eq:LowerBoundDeltaPhi}
\inf_{\Wb \in \Wcal_{\Dbo},\vb \in \Scal^{p}} \Delta\Phi_{n}(\Wb,\vb,t)
&\geq&
(1 - \Kcal^2) \cdot \frac {\Exp[ \alpha_{0}^{2}]}{2} 
\cdot  
\frac{k}{p} \cdot t^{2}\notag\\
&&
-Q_t^2 \cdot t \cdot \frac{k}{p} \cdot \triple \Dbo \triple_2 \cdot k\mu(t) \cdot \lambda \cdot \left(4 Q_{t}^2 \lambda + 3\Exp\{|\alpha_{0}|\} \right)
\notag\\
&& -B \cdot \sqrt{mp \frac{\log n}{n}},
 \end{eqnarray}
 where
\begin{eqnarray*}
\Kcal &\defin &
C_{t} \cdot \big(\triple \Dbo \triple_2 \cdot \sqrt{k/p} + t\big)\\
B &\defin& 
3045 \left( k\sigma_{\alpha}^{2} \cdot t+2 m\sigma^{2} + \lambda k \sigma_{\alpha} +\lambda^{2}k \cdot t \right).
\end{eqnarray*}
\end{proposition}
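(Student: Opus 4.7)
The plan is to exploit the closed form in Definition~\ref{def:phi} to rewrite $\Delta\Phi_n$ as an i.i.d.\ average of explicit polynomials in $(\alphabo^i,\sbo^i,\varepsilonb^i)$, and then combine a pointwise expectation/concentration analysis with an $\epsilon$-net argument over $(\Wb,\vb)\in\Wcal_{\Dbo}\times\Scal^p$. Setting $\Pib_{\Ji}(\Db)\defin \Db_{\Ji}(\Db_{\Ji}^\top\Db_{\Ji})^{-1}\Db_{\Ji}^\top$ and substituting $\xb^i=\Dbo\alphabo^i+\varepsilonb^i$ in~(\ref{eq:phi}), the $\tfrac12\|\xb^i\|_2^2$ piece cancels in $\phi_{\xb^i}(\Db(\Wb,\vb,t)|\sbo^i)-\phi_{\xb^i}(\Dbo|\sbo^i)$ and what remains splits naturally into three pieces: (a) a geometric piece $-\tfrac12\, {\xb^i}^\top[\Pib_{\Ji}(\Db(\Wb,\vb,t))-\Pib_{\Ji}(\Dbo)]\xb^i$, (b) a cross term of order $\lambda$ coupling $\sbo^i$ and $\xb^i$, and (c) a term of order $\lambda^2$ quadratic in $\sbo^i$. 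The hypothesis $k\mu(t)<1/2$, combined with the coherence/RIP link from Section~\ref{app:RIP}, ensures $\|(\Db_{\Ji}(\Wb,\vb,t)^\top\Db_{\Ji}(\Wb,\vb,t))^{-1}\|_2\leq Q_t^2$ uniformly in $(\Wb,\vb)$ and $\|(\Dbo_{\Ji}^\top\Dbo_{\Ji})^{-1}\|_2\leq C_t^2$, so every such quantity is well defined.

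\textbf{Pointwise expectation.} Conditioning on the (uniformly drawn) support $\Ji$, the sub-Gaussian centered coefficients satisfy $\Exp\{\alphabo_{\Ji}\alphabo_{\Ji}^\top\mid\Ji\}=\Exp[\alpha_0^2]\Ib$, independently of $\varepsilonb^i$, so piece (a) reduces to $\tfrac12\Exp[\alpha_0^2]\cdot\Exp_{\Ji}\{\trace(\Dbo_{\Ji}^\top(\Ib-\Pib_{\Ji}(\Db(\Wb,\vb,t)))\Dbo_{\Ji})\}$, which is $\geq 0$; what I need is a lower bound by $(1-\Kcal^2)\cdot\tfrac12\Exp[\alpha_0^2]\cdot(k/p)\cdot t^2$. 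The key observation is that $\dbo^j\perp\wb^j$, so $\dbo^j$ is at orthogonal distance exactly $|\sin(v_j t)|$ from $\db^j(\Wb,\vb,t)$ alone; summing $\sum_{j\in\Ji}\sin^2(v_j t)$ and averaging over the uniform support produces the raw $\Exp[\alpha_0^2]\cdot(k/p)\|\vb\|_2^2\, t^2=\Exp[\alpha_0^2]\cdot(k/p)\,t^2$ contribution, while the loss $\Kcal^2=C_t^2(\triple\Dbo\triple_2\sqrt{k/p}+t)^2$ absorbs the extra approximation provided by the other columns of $\Db_{\Ji}(t)$, bounded by splitting $\Pib_{\Ji}(\Db(t))\dbo^j$ into the diagonal contribution $\cos(v_j t)\db^j(t)$ and an off-diagonal term controlled by $\triple\Dbo\triple_2\sqrt{k/p}+t$ via the spectral norm. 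Pieces (b)--(c) collapse through $\Exp\{\varepsilonb\}=0$, and the residual $\Db_{\Ji}^\top\Dbo_{\Ji}-\Ib$ has off-diagonal entries of order $\mu(t)$, producing the second line of~(\ref{eq:LowerBoundDeltaPhi}) with the factors $\triple\Dbo\triple_2\cdot k\mu(t)$ and $Q_t^2$.

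\textbf{Concentration and uniformization.} For fixed $(\Wb,\vb)$, each summand in $\Delta\Phi_n$ is a polynomial of degree at most~$2$ in the sub-Gaussian variables $(\alphabo^i,\varepsilonb^i)$, hence sub-exponential with Orlicz parameter of order $k\sigma_\alpha^2\,t+2m\sigma^2+\lambda k\sigma_\alpha+\lambda^2 k\,t$; Bernstein's inequality then gives a deviation of order $B\sqrt{\log n/n}$ for $B$ of the form claimed. To upgrade to uniform control, I would construct an $\epsilon$-net of the compact set $\Wcal_{\Dbo}\times\Scal^p$, whose effective dimension is at most $mp$ and so admits a covering of cardinality at most $(9/\epsilon)^{mp}$, then exploit the Lipschitz regularity of $(\Wb,\vb)\mapsto\Delta\Phi_n(\Wb,\vb,t)$, again controlled by $Q_t^2$ thanks to $k\mu(t)<1/2$. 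Choosing $\epsilon\sim\sqrt{\log n/n}$ and a union bound produces the failure probability $(mpn/9)^{-mp/2}$ together with the universal constant $3045$ absorbing all Lipschitz/Bernstein factors.

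\textbf{Main obstacle.} The delicate step is extracting the clean $(1-\Kcal^2)\cdot\tfrac12\Exp[\alpha_0^2]\cdot(k/p)\cdot t^2$ lower bound for $\Exp_{\Ji}\{\trace(\Dbo_{\Ji}^\top(\Ib-\Pib_{\Ji}(\Db(\Wb,\vb,t)))\Dbo_{\Ji})\}$ uniformly in $(\Wb,\vb)$. The nonlinear dependence of $\Db(\Wb,\vb,t)$ on $t$ through $\cos,\sin$, the arbitrariness of $\Wb\in\Wcal_{\Dbo}$, and the need to trade the coherence $\mu(t)$ against $\triple\Dbo\triple_2$ and the RIP constant $\delta_k(\Dbo)$ all enter this single geometric estimate; once it is in place, the Bernstein step and the covering argument are routine.
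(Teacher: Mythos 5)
Your overall architecture matches the paper exactly: expand $\Delta\phi_\xb$ via the closed form~(\ref{eq:phi}) into a projection piece, a $\lambda$-cross piece, and a $\lambda^2$ piece, compute the expectation of each, concentrate around it with Bernstein, and uniformize over $\Wcal_{\Dbo}\times\Scal^p$ by an $\epsilon$-net combined with a Lipschitz bound. You also correctly identify the one step that carries the whole proposition.

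However, your sketch of that key step goes the wrong way. You write that $\dbo^j$ sits at orthogonal distance $|\sin(\vb_j t)|$ from $\db^j(t)$ and that summing $\sin^2(\vb_j t)$ yields the raw $(k/p)\,t^2$ term. But the projector $\PJb(t)$ projects onto the \emph{whole} span of $[\Db(t)]_\J$, which is larger than $\span(\db^j(t))$, so
$\trace\big([\Dbo]_\J^\top(\Ib-\PJb(t))[\Dbo]_\J\big) \leq \sum_{j\in\J}\sin^2(\vb_j t) \leq t^2\|\vb_\J\|_2^2$:
this chain gives an \emph{upper} bound, whereas Lemma~\ref{lem:bias_expectation} needs a \emph{lower} bound. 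Subtracting the $\Kcal^2$ fraction from something already $\leq (k/p)t^2$ would not reproduce the stated $(1-\Kcal^2)\cdot\tfrac12\Exp[\alpha_0^2]\cdot(k/p)\cdot t^2$ term. The paper's device is the reversed decomposition $\Dbo = \Db(t)\Cb^{-1} - \Wb\Tb$ with $\Cb=\Diag(\cos(\vb_j t))$, $\Tb=\Diag(\tan(\vb_j t))$. Because $\Db(t)\Cb^{-1}$ has columns in $\span([\Db(t)]_\J)$, the projector annihilates it cleanly and
$\trace\big([\Dbo]_\J^\top(\Ib-\PJb(t))[\Dbo]_\J\big) = \|[\Wb\Tb]_\J\|_\fro^2 - \|\PJb(t)[\Wb\Tb]_\J\|_\fro^2$.
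Now $\tan^2 u\geq u^2$ turns the first term into a genuine lower bound $\Exp_\J\|[\Wb\Tb]_\J\|_\fro^2 \geq (k/p)t^2$, and the second term is the one absorbed into $\Kcal^2$ via the RIP and spectral-norm controls of Lemma~\ref{lem:RIPBounds}. You do flag this as the ``main obstacle,'' which is fair, but the mechanism you describe would fail; the $\cos^{-1}/\tan$ reparameterization is the missing idea.
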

The proof of this proposition is given in Section~\ref{sec:proof_control_difference_phi}.

\subsection{Control of $\Delta F_{n}$ in terms of $\Delta\Phi_{n}$}\label{sec:difference}
The second step consists in lower bounding $\Delta F_{n}$ in terms of $\Delta \Phi_{n}$ uniformly for all $(\Wb,\vb) \in \Wcal_{\Dbo} \times \Scal^p$. For a given $t \geq 0$, consider the independent events $\{\Ecal_{\mathrm{coincide}}^{i}(t)\}_{i\in\SET{n}}$ defined by
\[
\Ecal_{\mathrm{coincide}}^{i}(t) \defin
\left\{\omega\ \Big|\ 
f_{\xb^{i}(\omega)}(\Db(\Wb,\vb,t)) = \phi_{\xb^{i}(\omega)}(\Db(\Wb,\vb,t)|\sbo),
\quad \forall (\Wb,\vb) \in \Wcal_{\Dbo} \times \Scal^p 
\right\},
\] 
with $\sbo = \sign(\alphabo)$. In words, the event $\Ecal_{\mathrm{coincide}}^{i}(t)$ corresponds to the fact that target function $f_{\xb^{i}(\omega)}(\Db(\cdot,\cdot,t))$ coincides with the idealized one $\phi_{\xb^{i}(\omega)}(\Db(\cdot,\cdot,t)|\sbo)$ for the ``radius'' $t$. 

Importantly, the event $\Ecal_{\mathrm{coincide}}^{i}(t)$ only  involves a \emph{single} signal;  when we consider a collection of $n$ independent signals, 
we should instead study the event $\bigcap_{i=1}^n \Ecal_\mathrm{coincide}^i(t) $ to guarantee that $\Phi_n$ and $F_n$ (and therefore, $\Delta\Phi_{n}$ and $\Delta F_{n}$) do coincide. However, as the number of observations $n$ becomes large, 
it is unrealistic and not possible to ensure exact recovery both \emph{simultaneously} for the $n$ signals and \emph{with high probability}. 

To get around this issue, we will relax our expectations and seek to prove that $\Delta F_{n}$ is well approximated by $\Delta\Phi_{n}$ (rather than equal to it) uniformly for all $(\Wb,\vb)$. This will be achieved by showing that, when $f_{\xb^{i}}(\Db(t))$ and $\phi_{\xb^{i}}(\Db(t)|\sbo)$ \emph{do not} coincide, their difference can be bounded. For any $\Db \in \Real^{m\times p}$, 
we have by the very definition~\eqref{eq:fi}, $0 \leq f_\xb(\Db) \leq \Lcal_{\xb}(\Db,\alphabo)$. We have as well by the definition~\eqref{eq:defphi}:
\[
0 \leq \phi_{\xb}(\Db|\sign(\alphabo))
\leq
\min_{\alphab\in\Real^p, \ \sign(\alphab)=\sign(\alphabo) }\frac{1}{2}\|\xb - \Db\alphab\|_2^2+\lambda\cdot\sign(\alphabo)^\top\alphab
\leq \Lcal_{\xb}(\Db,\alphabo).
\]
It follows that for all $(\Wb,\vb) \in \Wcal_{\Dbo} \times \Scal^p$ we have, with $\Db = \Db(\Wb,\vb,t)$, 
\begin{eqnarray*}
\phi_{\xb}(\Dbo|\sbo) - \phi_{\xb}(\Db|\sbo) + f_\xb(\Db) - f_\xb(\Dbo) 
&\geq& 
- \phi_{\xb}(\Db|\sbo) - f_{\xb}(\Dbo)
\geq
-\left\{\Lcal_{\xb}(\Db,\alphabo) + \Lcal_{\xb}(\Dbo,\alphabo)\right\}.
\end{eqnarray*}
When both functions coincide uniformly at radius $t$ (the event $\Ecal_{\mathrm{coincide}}(t)$ holds) \emph{and} at radius zero ($\phi_{\xb}(\Dbo|\sbo) = f_\xb(\Dbo)$, i.e., the event $\Ecal_{\mathrm{coincide}}(0)$ holds), the left hand side is indeed zero. As a result we have, uniformly for all $(\Wb,\vb) \in \Wcal_{\Dbo} \times \Scal^p$:
\begin{eqnarray*}
f_{\xb^{i}}(\Db) - f_{\xb^{i}}(\Dbo) 
&\geq& 
\phi_{\xb}(\Db|\sbo) - \phi_{\xb}(\Dbo|\sbo)  - r_{\xb^{i}},\\
\text{with}\ r_{\xb^{i}} &\defin&  \indicator{ \big[\Ecal^i_\mathrm{coincide}(t) \cap \Ecal^i_\mathrm{coincide}(0)\big]^{c}   }(\omega)  
\cdot \left\{\Lcal_{\xb^{i}}(\Db,\alphabo^{i}) + \Lcal_{\xb^{i}}(\Dbo,\alphabo^{i})\right\}.
\end{eqnarray*}
Averaging the above inequality over a set of $n$ signals, we obtain a similar uniform lower bound for $\Delta F_n$:
\begin{equation}\label{eq:lowerbound_fn}
\Delta F_n(\Wb,\vb,t)  \geq \Delta \Phi_n(\Wb,\vb,t) - r_n.
\end{equation}
where we detail the definition
\begin{equation}\label{eq:residual_lowerbound_fn}
r_n(\omega) \defin \frac1n \sum_{i=1}^{n} \indicator{ [\Ecal^{i}_\mathrm{coincide}(t) \cap \Ecal^i_\mathrm{coincide}(0)]^c   }(\omega) 
\cdot \left\{\Lcal_{\xb^{i}}(\Db,\alphabo^{i}) + \Lcal_{\xb^{i}}(\Dbo,\alphabo^{i})\right\}.
\end{equation}
Using Lemma~\ref{lem:suprxi} and Corollary~\ref{cor:concentration_subgaussian} in the Appendix, one can show that with high probability:
\begin{eqnarray*}
r_n \leq 
25 \left( t^{2} \cdot \sigma_{\alpha}^{2} + 2m \cdot \sigma^{2} + 2\lambda k\sigma_{\alpha}\right)(1+\log 2) \cdot (3-\log \kappa)\kappa
\end{eqnarray*}
with $\kappa \defin \max_{i\in\SET{n}}\Pr(\big[\Ecal^{i}_\mathrm{coincide}(t) \cap \Ecal^i_\mathrm{coincide}(0)\big]^{c})$. 
To bound the size of the residual $r_{n}$, we now control $\kappa$.

\subsubsection{Control of $\kappa$: exact sign recovery for \emph{perturbed} dictionaries}\label{sec:recovery}
The objective of this section is to determine sufficient conditions under which
$\phi_\xb(\Db(\Wb,\vb,t)|\sign(\alphabo))$ and $f_\xb(\Db(\Wb,\vb,t))$ coincide for all $(\Wb,\vb)$, and control the probability of this event.
We make this statement precise in the following proposition, proved in Appendix~\ref{app:robustsignrecovery}. 
\begin{proposition}[Exact recovery for perturbed dictionaries and one training sample]\label{prop:exact_recovery}
Condider $\Dbo$ a dictionary in $\Real^{m \times p}$ and let $k,t$ such that $k\mu(t)<1/2$. 
Let $\loweralpha,\sigma_{\alpha},\sigma$ be the remaining parameters of our signal model, 
and let $\xb \in \Real^m$ be generated according to this model. Assume that the regularization parameter $\lambda$ satisfies
\begin{eqnarray*}
0 < \lambda \leq \frac 49 \loweralpha.
\end{eqnarray*}
Consider $0 \leq t' \leq t$. Except with probability at most
\[
\Pr(\Ecal^{c}_{\mathrm{coincide}}(t')) \leq 
2 \cdot
\exp\left(-\frac{\lambda^{2} (2-Q_{t}^{2})^{2}}{5 (t'^{2} \cdot \sigma_{\alpha}^{2} + m\sigma^{2})}\right)
\]
we have, uniformly for all $(\Wb,\vb) \in \Wcal_{\Dbo} \times \Scal^p$, the vector $\hat{\alphab}(t')\in\Real^p$ defined by
\[
\hat{\alphab}(t')=\binom{ \big[ [\Db(t')]_\J^\top[\Db(t')]_\J \big]^{-1} \big[ [\Db(t')]_\J^\top \xb -\lambda \sign([\alphabo]_\J) \big] }{\zerob},
\]
is the unique solution of $\ \min_{\alphab \in \Real^p} [\frac{1}{2}\|\xb-\Db(t')\alphab\|_2^2+\lambda\|\alphab\|_1]$,
and $\sign( \hat{\alphab}(t') ) = \sign(\alphabo )$. 
\end{proposition}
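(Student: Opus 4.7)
The plan is to adapt the Fuchs--Tropp dual-certificate argument for sign recovery to the perturbed dictionary $\Db(\Wb,\vb,t')$, uniformly over $(\Wb,\vb)\in\Wcal_{\Dbo}\times\Scal^p$. The assumption $k\mu(t)<1/2$ combined with $t'\le t$ lets me invoke Gershgorin to conclude that, for every $(\Wb,\vb)$, the sub-Gram matrix $[\Db(t')]_\J^\top[\Db(t')]_\J$ is invertible with $\triple ([\Db(t')]_\J^\top[\Db(t')]_\J)^{-1}\triple_\infty\le Q_t^2$; in particular, $\hat\alphab(t')$ is well defined uniformly. By the standard LASSO optimality conditions it then suffices to verify (i) the sign on support, $\sign(\hat\alphab(t')_\J)=\sign([\alphabo]_\J)$, and (ii) strict dual feasibility off support, $\|[\Db(t')]_{\J^c}^\top r(t')\|_\infty<\lambda$, where $r(t')\defin\xb-\Db(t')\hat\alphab(t')$.

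Substituting $\xb=[\Db(t')]_\J[\alphabo]_\J+\xib(t')$ with the effective noise $\xib(t')\defin\varepsilonb-\Delta_\J[\alphabo]_\J$ and $\Delta_\J\defin[\Db(t')]_\J-[\Dbo]_\J$, both quantities can be written explicitly. The off-support expression splits into a \emph{deterministic} irrepresentability term, whose $\ell_\infty$-norm is bounded uniformly by $\lambda Q_t^2 k\mu(t)=\lambda\bigl(1-(2-Q_t^2)\bigr)$ via coherence and the algebraic identity $2-Q_t^2=(1-2k\mu(t))/(1-k\mu(t))$, plus a \emph{stochastic} term $[\Db(t')]_{\J^c}^\top(I-P_\J(t'))\xib(t')$; Cauchy--Schwarz (unit-norm columns of $\Db(t')$, $I-P_\J(t')$ a contraction) bounds the latter in $\ell_\infty$ by $\|\xib(t')\|_2$. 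Hence (ii) reduces to the scalar inequality $\|\xib(t')\|_2<\lambda(2-Q_t^2)$, and a parallel Cauchy--Schwarz bound on $\hat\alphab(t')_\J-[\alphabo]_\J$, combined with $\lambda\le 4\loweralpha/9$ and $Q_t^2<2$, reduces (i) to the same scalar inequality.

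The whole proof thus comes down to establishing
\[
\Pr\bigl(\|\xib(t')\|_2>\lambda(2-Q_t^2)\bigr)\le 2\exp\!\Bigl(-\tfrac{\lambda^2(2-Q_t^2)^2}{5(t'^2\sigma_\alpha^2+m\sigma^2)}\Bigr)
\]
\emph{uniformly in} $(\Wb,\vb)$, and this is where the main difficulty lies: $\xib(t')$ itself depends on $(\Wb,\vb)$ through $\Delta_\J$, so a pointwise sub-Gaussian bound followed by a supremum over $(\Wb,\vb)$ is not enough. The plan is to exploit the explicit parametrization~(\ref{eq:Dt}) to prove the \emph{deterministic} bound $\|\Delta_\J\|_\fro\le t'$ for every admissible $(\Wb,\vb)$. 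Then, conditionally on the random support $\J$, $\xib(t')=\varepsilonb-\Delta_\J[\alphabo]_\J$ is the sum of two independent sub-Gaussian vectors whose coordinate-wise parameters have squares summing to at most $m\sigma^2+t'^2\sigma_\alpha^2$, and a Hanson--Wright-type tail inequality for the squared norm of sub-Gaussian vectors delivers the displayed bound. A crucial subtlety is to concentrate $\|\xib(t')\|_2^2$ \emph{as a whole} rather than splitting via $\|\Delta_\J\|_{\mathrm{op}}\|[\alphabo]_\J\|_2$, in order to avoid a spurious factor of $k$ in the effective noise proxy.
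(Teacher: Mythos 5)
Your proposal follows essentially the same route as the paper's proof: a Fuchs--Tropp dual-certificate argument applied to the perturbed dictionary $\Db(t')$, where the irrepresentability term is bounded deterministically via $Q_t^2-1$ (Corollary~\ref{cor:coherence_prop2}), and both the on-support sign condition and the off-support strict feasibility are reduced to $\ell_2$-norm conditions, then concentrated via the Hsu--Kakade--Zhang chaos tail (Lemmas~\ref{lem:onesided_quadratictail} and~\ref{lem:l2norm_signal}). Two remarks are worth making.

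First, a minor tightening: you observe that $(\Ib-\PJb(t'))\xb = (\Ib-\PJb(t'))\xib(t')$ (since $\Db(t')\alphabo$ lies in the range of $\PJb(t')$) and that $\Ib-\PJb(t')$ is a contraction, so \emph{both} conditions reduce to the single scalar inequality $\|\xib(t')\|_2 < \lambda(2-Q_t^2)$. The paper instead concentrates two distinct quantities, $\|\xb-\Db(t')\alphabo\|_2$ and $\|(\Ib-\PJb(t'))\xb\|_2$, and applies a union bound, which is where its factor of $2$ comes from. Your argument shows the factor is superfluous, though this is cosmetic.

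Second, the uniformity subtlety you flag is real but your proposed resolution does not actually close it: a Hanson--Wright-type tail for $\|\xib(t')\|_2^2$, together with the \emph{deterministic, uniform} bound $\|\Delta_\J\|_\fro \le t'$, yields a tail estimate for each fixed $(\Wb,\vb)$, not for $\sup_{\Wb,\vb}\|\xib(t')\|_2$. A bound whose \emph{rate} is uniform over $(\Wb,\vb)$ is not the same as a uniform bound on the random supremum. Passing to the uniform statement requires either (a) a deterministic dominating random quantity such as $\|\xib(t')\|_2 \le \|\varepsilonb\|_2 + t'\|[\alphabo]_\J\|_2$ --- which, as you rightly note, introduces a spurious $\sqrt{k}$ --- or (b) an $\epsilon$-net/chaining argument over $\Wcal_{\Dbo}\times\Scal^p$, which would add an $mp\log(\cdot)$ term to the exponent. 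You should be aware that the paper's own proof of this proposition applies Lemma~\ref{lem:l2norm_signal} at a fixed $(\Wb,\vb)$ and asserts the uniform conclusion without further elaboration, so your proposal is on the same footing as the published argument on this point; but if you were asked to write out the proof in full rather than sketch a plan, this step would need either the quantitative loss of (a)/(b) or a dedicated justification.
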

We also need a modified version of this proposition to handle a simplified, noiseless setting where the coefficients $\alphabo$ are 
almost surely upper bounded.  Its proof can be found in Section~\ref{app:robustsignrecovery} as well.
\begin{proposition}[Exact recovery for perturbed dictionaries and one training sample; noiseless and bounded $\alphabo$]\label{prop:simplified_exact_recovery}
Condider $\Dbo$ a dictionary in $\Real^{m \times p}$ and let $k,t$ such that $k\mu(t)<1/2$. 
Consider our signal model with the following additional assumptions:
\begin{eqnarray*}
&\sigma = 0\ &(\textbf{\text{no noise}})\\
&\Pr(|[\alphabo]_j| >  \upperalpha | j \in J) = 0,\quad \text{for some}\ \upperalpha\geq\loweralpha>0 &(\textbf{\text{signal boundedness}}).
\end{eqnarray*}
Let $\xb \in \Real^m$ be generated according to this model. Assume that the regularization parameter $\lambda$ satisfies
\begin{eqnarray*}
\frac{\sqrt{k} \upperalpha}{2-Q_{t}^{2}} t < \lambda \leq \frac 49 \loweralpha.
\end{eqnarray*}
Consider $0 \leq t' \leq t$. Almost surely, 
we have, uniformly for all $(\Wb,\vb) \in \Wcal_{\Dbo} \times \Scal^p$, the vector $\hat{\alphab}(t')\in\Real^p$ defined by
\[
\hat{\alphab}(t')=\binom{ \big[ [\Db(t')]_\J^\top[\Db(t')]_\J \big]^{-1} \big[ [\Db(t')]_\J^\top \xb -\lambda \sign([\alphabo]_\J) \big] }{\zerob},
\]
is the unique solution of $\ \min_{\alphab \in \Real^p} [\frac{1}{2}\|\xb-\Db(t')\alphab\|_2^2+\lambda\|\alphab\|_1]$,
and $\sign( \hat{\alphab}(t') ) = \sign(\alphabo )$.
In other words,  it holds that $\Pr(\Ecal^{c}_{\mathrm{coincide}}(t'))=0$.
\end{proposition}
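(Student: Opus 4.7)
The plan is to adapt the proof architecture of Proposition~\ref{prop:exact_recovery} to the simplified setting where probabilistic concentration of the noise/coefficient terms is replaced by deterministic almost-sure bounds. The starting point is the classical sufficient condition for exact sign recovery of the Lasso: the candidate $\hat{\alphab}(t')$ given by the closed form is the unique minimizer of the $\ell_{1}$-regularized least squares problem with sign pattern matching $\sbo$ as soon as (i) $\|\hat{\alphab}_\J(t')-[\alphabo]_\J\|_\infty<\loweralpha$, which via $|[\alphabo]_j|\ge\loweralpha$ yields $\sign(\hat{\alphab}_\J(t'))=\sign([\alphabo]_\J)$, and (ii) the strict dual feasibility condition $\|[\Db(t')]_{\J^c}^\top (\xb-\Db(t')\hat{\alphab}(t'))\|_\infty<\lambda$ holds. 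Both conditions must be verified \emph{uniformly} in $(\Wb,\vb)\in\Wcal_{\Dbo}\times\Scal^p$.

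The key simplification over Proposition~\ref{prop:exact_recovery} is that the generic ``noise'' of the problem, i.e.\ the residual $\xb-\Db(t')[\alphabo]_\J$, reduces in the present setting to the purely deterministic perturbation term $(\Dbo-\Db(t'))[\alphabo]_\J$: the additive noise $\varepsilonb$ vanishes, and $\|[\alphabo]_\J\|_{2}$ is deterministically bounded by $\sqrt{k}\,\upperalpha$. Using standard estimates on the parametrization~(\ref{eq:Dt}), $\|\Dbo-\Db(\Wb,\vb,t')\|$ in the relevant spectral/Frobenius sense is controlled uniformly by a quantity of order $t$, so the ``perturbation noise'' is bounded above by $\sqrt{k}\,\upperalpha\,t$ almost surely. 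I would then plug this bound into the same expansion used in Proposition~\ref{prop:exact_recovery}: writing
\[
\hat{\alphab}_\J(t')-[\alphabo]_\J = \bigl[[\Db(t')]_\J^\top[\Db(t')]_\J\bigr]^{-1}\bigl([\Db(t')]_\J^\top(\Dbo-\Db(t'))[\alphabo]_\J -\lambda\sbo_\J\bigr),
\]
and using $k\mu(t)<1/2$ together with the quantity $Q_t$ from~(\ref{eq:DefCrho}) to control the inverse Gram matrix, one derives an $\ell_\infty$-bound of the form $\|\hat{\alphab}_\J(t')-[\alphabo]_\J\|_\infty\le Q_t^{2}(\sqrt{k}\,\upperalpha\,t+\lambda)$. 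The upper bound $\lambda\le\frac{4}{9}\loweralpha$ combined with the assumption $k\mu(t)<1/2$ (so $Q_t^2<2$) will give condition~(i).

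For condition~(ii), a similar decomposition of $\xb-\Db(t')\hat{\alphab}(t')$ into a perturbation term and a term involving $\lambda\sbo_\J$ shows that $\|[\Db(t')]_{\J^c}^\top (\xb-\Db(t')\hat{\alphab}(t'))\|_\infty$ is dominated by a factor proportional to $\sqrt{k}\,\upperalpha\,t$ plus a term of order $\mu(t)\cdot k\cdot\lambda$; the former is exactly the reason for introducing the lower bound $\lambda>\sqrt{k}\,\upperalpha\,t/(2-Q_t^{2})$, which is precisely calibrated so that the resulting bound is strictly less than $\lambda$. Since all intermediate estimates are uniform in $(\Wb,\vb)$ (the perturbation bound comes only through $t$, and the coherence/RIP bounds used to invert the Gram matrix hold for every $\Db(t')$ satisfying $k\mu(t)<1/2$), the conclusion holds almost surely and uniformly, yielding $\Pr(\Ecal^c_{\mathrm{coincide}}(t'))=0$. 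The main subtlety, and likely the only nontrivial bookkeeping, is to verify that the factor $2-Q_t^{2}$ appearing in the hypothesis exactly matches the constant produced by the chain of inequalities above; this essentially amounts to redoing the chain of bounds in Proposition~\ref{prop:exact_recovery} while dropping the stochastic terms, and the precise algebraic identity should drop out naturally.
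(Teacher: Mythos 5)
Your proposal follows essentially the same route as the paper: replace the stochastic concentration bounds of Proposition~\ref{prop:exact_recovery} by deterministic almost-sure ones, observe that the residual $\xb-\Db(t')\alphabo=(\Dbo-\Db(t'))_{\J}[\alphabo]_\J$ has $\ell_2$-norm bounded by $\sqrt{k}\,\upperalpha\,t$, and then invoke the standard two-condition Lasso exact recovery criterion (sign consistency on $\J$, strict dual feasibility off $\J$), uniformly over $(\Wb,\vb)$. The paper packages this into Corollary~\ref{cor:robustlassomin}, which requires $\|[\Db(t')]_{\J}^{\top}(\xb-\Db(t')\alphabo)\|_\infty < \lambda(2-Q_t^2)$ and $\|[\Db(t')]_{\J^c}^\top(\Ib-\PJb(t'))\xb\|_\infty < \lambda(2-Q_t^2)$ and is applied once both quantities are bounded by $\sqrt{k}\,\upperalpha\,t < \lambda(2-Q_t^2)$.

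One point in your sketch is imprecise, and if taken literally it would leave a gap. You write that condition~(i) (the $\ell_\infty$ bound $\|\hat{\alphab}_\J(t')-[\alphabo]_\J\|_\infty < \loweralpha$) follows from the upper bound $\lambda\le\tfrac49\loweralpha$ together with $Q_t^2<2$, while you attribute the lower bound $\lambda>\sqrt{k}\,\upperalpha\,t/(2-Q_t^2)$ only to condition~(ii). In fact the lower bound is needed for both. Your own estimate is $\|\hat{\alphab}_\J(t')-[\alphabo]_\J\|_\infty\le Q_t^2(\sqrt{k}\,\upperalpha\,t+\lambda)$, and without the lower bound on $\lambda$ the first term $Q_t^2\sqrt{k}\,\upperalpha\,t$ is unconstrained, so this cannot be controlled by $\loweralpha$ using $Q_t^2<2$ and $\lambda\le\tfrac49\loweralpha$ alone. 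The correct chain first uses $\sqrt{k}\,\upperalpha\,t<\lambda(2-Q_t^2)$ to get $\sqrt{k}\,\upperalpha\,t+\lambda<\lambda(3-Q_t^2)$, then $Q_t^2(3-Q_t^2)\le\tfrac94$ gives $\|\hat{\alphab}_\J(t')-[\alphabo]_\J\|_\infty<\tfrac94\lambda\le\loweralpha$. This is exactly what Corollary~\ref{cor:robustlassomin} does, feeding the same residual bound into both conditions. So the strategy is right, but the bookkeeping you flagged as ``should drop out naturally'' contains the one nontrivial observation, namely that both conditions rely on the strict lower bound on $\lambda$.
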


\subsubsection{Control of the residual}

The last step of the proof of Theorem~\ref{th:control_Fn} consists in controlling the residual term~(\ref{eq:residual_lowerbound_fn}).
Its proof can be found in Section~\ref{sec:proof_control_residual}.

\begin{proposition}
\label{prop:control_rn}
Let $\loweralpha,\sigma_{\alpha}$ be the parameters of the coefficient model.  
Consider $\Dbo$ a dictionary in $\Real^{m \times p}$ with $\mu_{0} < 1/2$ and let $k,t$ be such that 
\begin{eqnarray}
\label{eq:AssumptionKT1}
k\mu(t) &<& 1/2\\
\frac{3t}{2-Q_{t}^{2}}  &< & 
\label{eq:AssumptionKT2}
\frac{4\loweralpha}{9\sigma_{\alpha}}
\end{eqnarray}
Then for small enough noise levels $\sigma$ one can find a regularization parameter $\lambda>0$ such that
\begin{equation}
\label{eq:BoundsLambda}
\frac{3}{2-Q_{t}^{2}}
\cdot \sqrt{t^{2} \sigma_{\alpha}^{2} + m\sigma^{2}}
\leq  \lambda \leq \frac 49 \loweralpha.
\end{equation}
Given $\sigma$ and $\lambda$ satisfying~\eqref{eq:BoundsLambda},  we define
\begin{equation}
\gamma \defin \frac{\lambda(2-Q_{t}^{2})}{\sqrt{5} \cdot \sqrt{t^{2}  \sigma_{\alpha}^{2} + m \sigma^{2}}} \geq \sqrt{2\log2}.
\end{equation}
Let $\xb^{i} \in \Real^m$, $i \in \SET{n}$ be generated according to our noisy signal model. Then, 
\begin{equation}
r_{n} \leq
\left(t^{2} \sigma_{\alpha}^{2} + 2m \sigma^{2} + 2\lambda k\sigma_{\alpha}\right)
\cdot 367 \cdot \gamma^{2} \cdot e^{-\gamma^{2}}.
\end{equation}
except with probability at most $\exp(-4n \cdot e^{-\gamma^{2}})$.
\end{proposition}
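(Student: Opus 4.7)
The plan is to assemble the claimed control of $r_{n}$ from two independent ingredients: (i) a bound on $\kappa \defin \max_{i \in \SET{n}} \Pr(A_{i})$, where $A_{i} \defin [\Ecal^{i}_{\mathrm{coincide}}(t) \cap \Ecal^{i}_{\mathrm{coincide}}(0)]^{c}$, obtained from Proposition~\ref{prop:exact_recovery}; and (ii) the concentration inequality announced in the discussion of~\eqref{eq:residual_lowerbound_fn}, namely
\[
r_{n} \leq 25 \left(t^{2} \sigma_{\alpha}^{2}+2m\sigma^{2}+2\lambda k \sigma_{\alpha}\right)(1+\log 2)(3-\log \kappa)\kappa
\]
with failure probability decaying in $n\kappa$. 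Given these, the rest is arithmetic coming from the lower bound $\gamma^{2} \geq 2\log 2$.

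\textbf{Bounding $\kappa$.} Conditions~\eqref{eq:AssumptionKT1}--\eqref{eq:AssumptionKT2} together with the range~\eqref{eq:BoundsLambda} for $\lambda$ put us exactly in the regime where Proposition~\ref{prop:exact_recovery} applies. Invoking it at the perturbation radius $t' = t$ gives, by the very definition of $\gamma$, $\Pr(\Ecal^{i,c}_{\mathrm{coincide}}(t)) \leq 2 e^{-\gamma^{2}}$; at $t' = 0$ the exponent only increases, since dropping $t^{2}\sigma_{\alpha}^{2}$ from the denominator enlarges $\lambda^{2}(2-Q_{t}^{2})^{2}/(5(t'^{2}\sigma_{\alpha}^{2}+m\sigma^{2}))$. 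A union bound then yields $\kappa \leq 4 e^{-\gamma^{2}}$; the hypothesis $\gamma^{2} \geq 2 \log 2 > \log 4$ guarantees $\kappa \leq 1$.

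\textbf{Concentration of $r_{n}$.} Each summand is a product $\indicator{A_{i}} \cdot L_{i}$ with $L_{i} \defin \Lcal_{\xb^{i}}(\Db,\alphabo^{i}) + \Lcal_{\xb^{i}}(\Dbo,\alphabo^{i})$. The random variable $L_{i}$ is a bilinear/quadratic form in the sub-Gaussian vectors $\alphabo^{i}$ and $\varepsilonb^{i}$ plus a linear term, so Lemma~\ref{lem:suprxi} controls its tails with typical scale $t^{2}\sigma_{\alpha}^{2}+2m\sigma^{2}+2\lambda k \sigma_{\alpha}$. Because $\indicator{A_{i}}$ and $L_{i}$ are dependent, the clean route is to truncate $L_{i}$ at a level proportional to $(1-\log \kappa)$ times its sub-Gaussian scale: the tail mass beyond this level is negligible by sub-Gaussian concentration, and Corollary~\ref{cor:concentration_subgaussian} applied to the truncated, bounded sum produces the asserted inequality. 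The factor $(3 - \log \kappa)$ is precisely the truncation threshold normalized by the sub-Gaussian scale, while the factor $\kappa$ is the Bernoulli mean of the indicators. The failure probability scales like $\exp(-c n \kappa)$; stating the concentration with the deterministic parameter $4 e^{-\gamma^{2}}$ as the upper envelope for $\kappa$ yields the announced $\exp(-4n e^{-\gamma^{2}})$.

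\textbf{Final simplification and main obstacle.} Substituting $\kappa \leq 4 e^{-\gamma^{2}}$ and using that $\kappa \mapsto (3-\log\kappa)\kappa$ is monotonically increasing on $(0,e^{2})$ gives $(3-\log\kappa)\kappa \leq 4(3 - \log 4 + \gamma^{2}) e^{-\gamma^{2}}$. The bound $\gamma^{2} \geq 2\log 2$ then allows to absorb the additive constant, since $3 - \log 4 + \gamma^{2} \leq \bigl(1 + (3-\log 4)/(2\log 2)\bigr) \gamma^{2}$, collapsing the bracket to a constant multiple of $\gamma^{2} e^{-\gamma^{2}}$; multiplying by $25(1+\log 2) \cdot 4$ yields exactly the announced prefactor $367$. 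The hard part is step (ii): correlation between the indicator of a rare event and the heavy-tailed magnitude $L_{i}$ rules out a naive Hoeffding or sub-Gaussian bound, and the truncation argument just described is what produces the characteristic $(3-\log\kappa)\kappa$ prefactor in the intermediate inequality and, after simplification, the $\gamma^{2} e^{-\gamma^{2}}$ rate that the proposition claims.
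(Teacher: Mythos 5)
Your outline is correct and follows essentially the same route as the paper: (i) Proposition~\ref{prop:exact_recovery} at $t'=t$ and $t'=0$ with a union bound gives $\kappa\leq 4e^{-\gamma^{2}}\leq 1$; (ii) Lemma~\ref{lem:suprxi} verifies the exponential-tail hypothesis on $L_{i}$ with scale $A_{r}$, and Corollary~\ref{cor:concentration_subgaussian} (applied with $p=1$ and $\tau=\sqrt{n\kappa}$) yields $r_{n}\leq 10A_{r}(3-\log\kappa)\kappa$ except with probability $\exp(-n\kappa)=\exp(-4ne^{-\gamma^{2}})$; (iii) the arithmetic with $\gamma^{2}\geq\log 4$ gives the constant $367$. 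Two small remarks: the explicit monotonicity discussion for $\kappa\mapsto(3-\log\kappa)\kappa$ is unnecessary, because Corollary~\ref{cor:concentration_subgaussian} is already stated with $\kappa$ as any upper bound on $\max_{i}\Pr(\Ecal^{i})$, so one simply takes $\kappa\defin 4e^{-\gamma^{2}}$; and the truncation you describe is already encapsulated inside Lemma~\ref{lem:truncated_moments} (via $u=3-\log\kappa$), so it need not be introduced as a separate step.
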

We have stated the main results and showed how they are structured in key propositions, 
which we now prove.
\appendix

\section{Proof of Proposition~1}\label{app:pflocalmin}
The topology we consider on $\Dcal$ is the one induced by its natural embedding in $\Real^{m \times p}$: the open sets are the intersection of open sets of $\Real^{m \times p}$ with $\Dcal$. Recall that all norms are equivalent on $\Real^{m \times p}$ and induce the same topology. For convenience we will consider the balls associated to the Froebenius norm.
To prove the existence of a local minimum for $F_n$, say at $\Db^\star$,
we will show the existence of a ball centered at $\Db^\star$, 
\(
\Bcal_h \defin \big\{  \Db \in \Dcal;\ \|\Db^\star - \Db\|_\fro \leq h  \big\}
\)
such that for any $\Db \in \Bcal_{h}$, we have $F_n(\Db^\star) \leq F_n(\Db)$.

\paragraph{First step:} We recall the notation $\Scal_+^p \defin \Scal^p \cap \Real_+^p$ for the sphere intersected with 
the positive orthant. Moreover, we introduce 
$$
\Zcal_t \defin \Big\{  \Db(\Wb,\vb,t');\, \Wb\in \Wcal_{\Dbo}, \vb \in \Scal^p_+, t'\in[0,t],\ 
\text{and}\ t'\|\vb\|_\infty \leq \pi   \Big\}.
$$
The set $\Zcal_t$ is compact as the image of a compact set by the continuous function 
$(\Wb,\vb,t') \mapsto \Db(\Wb,\vb,t')$.
As a result, the continuous function $F_n$ admits a global minimum in $\Zcal_t$ which we denote by 
$
\Db^\star = \Db(\Wb^\star,\vb^\star,t^\star).
$
Moreover, and according to the assumption of Proposition~1, we  have $t^{\star} < t$.

\paragraph{Second step:} 
We will now  prove the existence of $h >0 $ such that $\Bcal_h \subseteq \Zcal_t$. This will imply that $F_{n}(\Db^{\star}) \leq F_{n}(\Db)$ for $\Db \in \Bcal_{h}$, hence that $\Db^{\star}$ is a local mimimum. 
First, we formalize the following lemma.
\begin{lemma}
\label{lem:paramonto} Given any matrix $\Db_{1} \in \Dcal$, any matrix $\Db_{2} \in \Dcal$ can be described as $\Db_{2} = \Db(\Db_{1},\Wb,\vb,\tau)$, with $\Wb \in \Wcal_{\Db_{1}}$, $\vb \in \Scal_{+}^{p}$ and $\tau \geq 0$ such that $\tau \|\vb\|_{\infty} \leq \pi$. Moreover, we have 
\begin{eqnarray}
\frac{2}{\pi}\tau \vb_{j} & \leq & \|\db^{j}_{2}-\db^{j}_{1}\|_2 = 2 \sin \left(\frac{\tau \vb_{j}}{2}\right) \leq \tau \vb_{j},\quad \forall j,\\
\frac{2}{\pi}\tau & \leq & \|\Db_{2}-\Db_{1}\|_{\fro} \leq \tau. 
\end{eqnarray}
Vice-versa, $\Db_{1} = \Db(\Db_{2},\Wb',\vb',\tau')$ for some $\Wb' \in \Wcal_{\Db_{2}}$, and \emph{with the same} $\vb'=\vb \in \Scal_{+}^{p}$, $\tau'=\tau \geq 0$.
\end{lemma}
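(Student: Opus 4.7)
\medskip

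\noindent\textbf{Proof proposal for Lemma~\ref{lem:paramonto}.}

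The plan is to realize the deformation $t \mapsto \Db(\Db_1,\Wb,\vb,t)$ as a ``geodesic'' on the oblique manifold that moves each atom of $\Db_1$ along a great circle of the sphere $\Scal^m$ toward the corresponding atom of $\Db_2$. Since every pair of unit vectors lies in a common $2$-plane, this is a per-column construction. For each $j \in \SET{p}$, since $\db_1^j,\db_2^j \in \Scal^m$, set $\theta_j \defin \arccos\bigl((\db_1^j)^\top \db_2^j\bigr) \in [0,\pi]$. Then define
\[
\tau \defin \|\thetab\|_2, \qquad \vb \defin \tau^{-1}\,\thetab \text{ (arbitrary in }\Scal_+^p \text{ if } \tau=0),
\]
so that $\vb \in \Scal_+^p$ and $\tau\,\vb_j = \theta_j \leq \pi$, which gives the constraint $\tau\|\vb\|_\infty \leq \pi$.

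Next, I build $\Wb \in \Wcal_{\Db_1}$ column by column. When $\theta_j \in (0,\pi)$, set $\wb^j \defin (\db_2^j - \cos(\theta_j)\,\db_1^j)/\sin(\theta_j)$; a direct computation gives $(\wb^j)^\top \db_1^j = 0$ and $\|\wb^j\|_2 = 1$, using $\|\db_1^j\|_2 = \|\db_2^j\|_2 = 1$ and $\cos(\theta_j) = (\db_1^j)^\top \db_2^j$. In the degenerate cases $\theta_j \in \{0,\pi\}$ (for which $\sin(\theta_j) = 0$, so the value of $\wb^j$ does not influence $\db_2^j$), I simply choose an arbitrary unit vector orthogonal to $\db_1^j$, which exists since $m \geq 2$. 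By construction $\diag(\Wb^\top \Db_1) = 0$ and $\diag(\Wb^\top \Wb) = \oneb$, so $\Wb \in \Wcal_{\Db_1}$. Then
\[
\db_1^j \cos(\tau\vb_j) + \wb^j \sin(\tau\vb_j) = \db_1^j \cos(\theta_j) + \wb^j \sin(\theta_j) = \db_2^j,
\]
proving $\Db_2 = \Db(\Db_1,\Wb,\vb,\tau)$.

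For the metric estimates, I compute $\|\db_2^j - \db_1^j\|_2^2 = 2 - 2\cos(\theta_j) = 4\sin^2(\theta_j/2)$, so $\|\db_2^j - \db_1^j\|_2 = 2\sin(\tau\vb_j/2)$. The upper bound $2\sin(\tau\vb_j/2) \leq \tau\vb_j$ follows from $\sin x \leq x$ for $x \geq 0$; the lower bound $2\sin(\tau\vb_j/2) \geq (2/\pi)\tau\vb_j$ is Jordan's inequality $\sin x \geq (2/\pi)x$ on $[0,\pi/2]$, valid because $\tau\vb_j/2 \leq \pi/2$. Summing over $j$ and using $\|\vb\|_2 = 1$ yields
\[
(2/\pi)^2 \tau^2 \,\leq\, \|\Db_2 - \Db_1\|_\fro^2 \,=\, \sum_j 4\sin^2(\tau\vb_j/2) \,\leq\, \tau^2,
\]
giving the stated bounds.

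For the vice-versa statement, the symmetry of the angle ($\theta_j$ is unchanged when the roles of $\db_1^j$ and $\db_2^j$ are swapped) suggests taking $\vb' = \vb$ and $\tau' = \tau$, and defining $(\wb')^j \defin (\db_1^j - \cos(\theta_j)\,\db_2^j)/\sin(\theta_j)$ when $\theta_j \in (0,\pi)$, otherwise any unit vector orthogonal to $\db_2^j$. The same algebra as above shows $(\wb')^j \in \Scal^m$ with $((\wb')^j)^\top \db_2^j = 0$, hence $\Wb' \in \Wcal_{\Db_2}$, and that $\db_2^j \cos(\theta_j) + (\wb')^j \sin(\theta_j) = \db_1^j$.

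I expect no serious obstacle here: the argument is really a per-atom exercise in plane trigonometry on the sphere. The only delicate point is bookkeeping the degenerate angles $\theta_j \in \{0,\pi\}$, where $\wb^j$ is not uniquely determined but is freely choosable because its coefficient $\sin(\theta_j)$ vanishes; making this choice consistently is what allows surjectivity of the parametrization.
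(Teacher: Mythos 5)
Your proof is correct and follows essentially the same construction as the paper's: define per-column angles $\theta_j=\arccos((\db_1^j)^\top\db_2^j)$, set $\tau=\|\thetab\|_2$, $\vb=\thetab/\tau$, build $\Wb$ as the (normalized) orthogonal component of $\db_2^j$ relative to $\db_1^j$, and apply Jordan's inequality for the metric bounds. You are a bit more explicit than the paper — notably in writing out the closed form for $\wb^j$, checking orthogonality and normalization by hand, and flagging that both $\theta_j=0$ and $\theta_j=\pi$ are degenerate (the paper only mentions $\theta_j=0$) — but the argument is the same.
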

\begin{proof}
The result is trivial if $\Db_{2} = \Db_{1}$, hence we focus on the case $\Db_{2} \neq \Db_{1}$. Each column $\db^j_{2}$ of $\Db_{2}$ can be uniquely expressed as
\[
\db^j_{2} = \ub + \zb,\ \text{with}\ \ub \in \text{span}(\db^j_{1})\ \text{and}\ \ub^\top \zb = 0.
\]
Since $\|\db^j_{2}\|_2=1$, the previous relation can be rewritten as 
\[
\db^j_{2} = \cos(\theta_j)\db^j_{1} + \sin(\theta_j) \wb^j, 
\]
for some $\theta_j \in [0,\pi]$ and some unit vector $\wb^j$ orthogonal to $\db^{j}_{1}$ (except for the case $\theta_j=0$, the vector $\wb^j$ is unique). The sign indetermination in $\wb^j$ is handled thanks to the convention $\sin(\theta_j)\geq 0$. One can define a matrix $\Wb \in \Wcal_{\Db_{1}}$ which $j$-th column is $\wb^{j}$. Denote $\thetab \defin (\theta_1,\dots,\theta_p)$ and $\tau \defin \|\thetab\|_2$. Since $\Db_{2} \neq \Db_{1}$ we have $\tau>0$ 
and we can define $\vb \in \Scal_+^p$ with coordinates
\[
\vb_j \defin \frac{\theta_j}{\tau}.
\]
Next we notice that $\tau \|\vb\|_\infty = \|\thetab\|_{\infty} \leq \pi$ and
\begin{eqnarray*}
\|\db^{j}_{2} - \db^{j}_{1}\|_2^{2}  &=&  \| (1-\cos(\vb_j \tau))\db^j - \sin(\vb_j \tau)\wb^{j} \|_2^2
= ( 1 -  \cos(\vb_j \tau) )^{2} + \sin^{2}(\vb_j \tau)\\
&=& 2(1-\cos (\vb_{j} \tau)) = 4 \sin^{2} (\vb_{j}\tau/2).
\end{eqnarray*}
We conclude using the inequalities $\frac{2}{\pi} \leq \frac{\sin u}{u} \leq 1$ for $0 \leq u \leq \pi/2$ and the fact that $\|\vb\|_{2}=1$. The reciprocal $\Db_{1} = \Db(\Db_{2},\Wb',\vb',\tau')$ is obvious, and the fact that $\vb' = \vb$, $\tau'=\tau$ follows from the equality $\|\db_{1}^{j}-\db_{2}^{j}\|_{2} = 2\sin \vb_{j}\tau = 2 \sin \vb'_{j} \tau'$ for all $j$.
\end{proof}
Using the parameterization built in Lemma~\ref{lem:paramonto} for 
$\Db \in \Bcal_{h}$, there remains to prove that $\Db = \Db(\Db_{0},\Wb,\vb,\tau)$ belongs to $\Zcal_t$ provided that $h$ is small enough.
For that, we need to show that $\tau < t$ (we will need of course to assume that $h$ is small enough).
To this end, notice that 
\begin{eqnarray*}
\|\Db^\star - \Db\|_\fro^2  &=& 
\sum_{j=1}^p \| (\cos(\vb_j^\star t^\star)-\cos(\vb_j \tau))\dbo^j + 
\sin(\vb_j^\star t^\star)\wb^{\star,j} - \sin(\vb_j \tau)\wb^{j} \|_2^2\\
&=& 2\sum_{j=1}^p ( 1 -  \cos(\vb_j^\star t^\star) \cos(\vb_j \tau) -
\sin(\vb_j^\star t^\star)\sin(\vb_j \tau) [\wb^{j}]^\top \wb^{\star,j} )\\
\end{eqnarray*}
where the simplifications in the second equality come from the fact that both $\Wb$ and $\Wb^\star$
have their columns normalized and orthogonal to the corresponding columns of $\Dbo$.
Since $t^\star \|\vb^\star\|_\infty \leq \pi$ and $\tau \|\vb \|_\infty \leq \pi$, 
the product of sine terms is positive, so that with $| [\wb^{j}]^\top \wb^{\star,j}  | \leq 1$, we obtain
\[
\|\Db^\star - \Db\|_\fro^2 \geq 2\sum_{j=1}^p ( 1 -  \cos(\vb_j^\star t^\star) \cos(\vb_j \tau) -
\sin(\vb_j^\star t^\star)\sin(\vb_j \tau)  ) = 2\sum_{j=1}^p ( 1 - \cos(\Delta_j))=4\sum_{j=1}^p \sin^2(\Delta_j/2)
\]
where $\Delta_j \defin \vb_j^\star t^\star - \vb_j \tau$. Now, since $0 \leq t^{\star}\vb^{\star}_{j},\tau \vb_{j} \leq \pi$, we have $\Delta_{j}/2 \in [-\pi/2\ ,\ \pi/2]$, hence using that $\sin^2(u) \geq \frac{4}{\pi^2} u^2$ for $|u| \leq \frac{\pi}{2}$, we finally have
\begin{eqnarray*}
h^2 \geq \|\Db^\star - \Db\|_\fro^2 &\geq& \frac{4}{\pi^2} \sum_{j=1}^p \Delta_j^2
= \frac{4}{\pi^2} \sum_{j=1}^p (  [\vb_j^\star t^\star]^2 + [\vb_j \tau]^2 - 2 t^\star \tau  \vb^\star_j \vb_j) \geq \frac{4}{\pi^2} ( t^\star -\tau)^2,
\end{eqnarray*}
where we have exploited that both $\vb^{\star}$ and $\vb$ are normalized.
As a consequence, we have $\tau \leq t^{\star}+\frac{\pi}{2} h$ hence for $h < \frac{2}{\pi}(t-t^{\star})$ we guarantee  $\tau < t$, so that $\Db \in \Zcal_t$. We conclude that $\Bcal_h \subseteq \Zcal_t$ for $h < \frac{2}{\pi}(t-t^{\star})$.

\paragraph{Third and last step:}
To recapitulate, we have shown that there exists a ball $\Bcal_h$ in $\Dcal$, 
such that $\Bcal_h \subseteq \Zcal_t$ and for any $\Db \in \Bcal_h$, we have
$$
F_n(\Db) \geq F_n(\Db^\star),
$$
since the previous inequality is true over the entire set $\Zcal_t$.
We can finally observe using Lemma~\ref{lem:paramonto} that 
\begin{eqnarray*}
\|\Dbo - \Db^\star\|_\fro^2 = 2\sum_{j=1}^p \|\db^{\star,j}-\dbo\|_{2}^{2} \leq \sum_{j=1}^p [\vb_j^\star t^\star]^2 \leq [t^\star]^2 < t^2,
\end{eqnarray*}
which leads to the advertised conclusion.
\section{Proof of Theorem~\ref{th:minimum_sparsecoding} and~\ref{th:minimum_sparsecoding_simple} }\label{app:minimum_sparsecoding}
We start with the more general theorem:
\subsection{Proof of Theorem~\ref{th:minimum_sparsecoding} }

We recall that we assume in Theorem~\ref{th:control_Fn} that
$
c_\lambda\cdot t  <
\frac{4\loweralpha}{9\sigma_{\alpha}}
$
and for small enough noise levels $\sigma$ one can find a regularization parameter $\lambda>0$ such that
\begin{equation*}
c_\lambda
\cdot \sqrt{t^{2} \sigma_{\alpha}^{2} + m\sigma^{2}}
\leq  \lambda \leq \frac 49 \loweralpha.
\end{equation*}
Given such $\sigma$ and $\lambda$, we define
$$
\gamma \defin  \frac{\lambda}{ c_\gamma  \sqrt{t^2 \sigma_\alpha^2 + m\sigma^2 } } \geq \sqrt{2\log(2)}.
$$
Here, $c_\lambda$ and $c_\gamma \defin \frac{\sqrt{5}}{3} c_\lambda$ stand for some universal constants which can be made explicit thanks to Theorem~\ref{th:control_Fn}.
\paragraph{Goal:} 
To determine when the lower bound proved in Theorem~\ref{th:control_Fn} is stricly positive, it is sufficient to consider when it holds that 
\begin{eqnarray*}
\Exp[\alpha^2]\cdot \frac{k}{p} \cdot t^2 &-& 
c_0 \lambda \cdot \Exp[|\alpha|]\cdot \frac{k}{p} \cdot t \cdot \triple \Dbo \triple_2 \cdot k \mu(t)\\
&-& c_1 (t^2\sigma_\alpha^2 + 2m\sigma^2 + 2 \lambda k \sigma_\alpha^2 )\cdot\gamma^2 e^{-\gamma^2}\\
&-& c_2 (t k \sigma_\alpha^2 + 2m\sigma^2 + 2\lambda k \sigma_\alpha^2 ) \cdot \Lambda_n\quad\text{with}\quad
\Lambda_n \defin \Big[ mp \frac{\log(n)}{n} \Big]^{\frac{1}{2}}\\
&\geq& t\cdot (-a_2 t^2 + a_1 t - a_0) > 0,
\end{eqnarray*}
for some universal constants $c_j$ which we can make explicit based on Theorem~\ref{th:control_Fn}, but which we keep hidden for clarity.

\paragraph{Probability of success:} The probability of success of Theorem~\ref{th:control_Fn} is given by
$$
1 - \Big(\frac{mpn}{9}\Big)^{-mp/2} - \exp(-4ne^{-\gamma^2}).
$$
This induces a first condition over $\gamma$ (a upperbound), namely 
\begin{equation*}
ne^{-\gamma^2} \geq \epsilon_n\ \Rightarrow\ \gamma^2 \leq \log(n) - \log(\epsilon_n),\ \text{for some}\ 
\epsilon_n \rightarrow \infty.
\end{equation*}
From now on, we make the choice 
$\epsilon_n = \sqrt{n}$, so that $\exp(-4ne^{-\gamma^2}) \leq \exp(-4\sqrt{n})$, 
along with the condition 
\begin{equation}\label{eq:gamma_n}
 \gamma^2 \leq \frac{1}{2}\log(n).
\end{equation}.

\paragraph{Noiseless/low-noise regime:}
Even though they are conceptually two different regimes, 
the treatment of the noisy and noiseless regimes follow the very same reasoning. From now on, we therefore assume that 
\begin{equation}\label{eq:noise}
m \sigma^2 \leq t^2 \sigma_\alpha^2,
\end{equation}
which determines the upper level of noise we will be able to handle. 

\paragraph{Second-order polynomial function in $t$:}
By simply using~(\ref{eq:noise}), $\lambda \leq \sqrt{2} c_\gamma \cdot \gamma \cdot \sigma_\alpha t$ and 
$3+2\sqrt{2} c_\gamma \leq 4\sqrt{2} c_\gamma$,
we now make explicit the $a_j$, $j \in \{0,1,2\}$, which define the second-order polynomial function in $t$:
\begin{eqnarray*}
a_2 & \defin & 3\sqrt{2} c_0 c_\gamma \cdot \sigma_\alpha\Exp[|\alpha|] \cdot \frac{k}{p} \cdot \triple \Dbo \triple_2 \cdot k \cdot \gamma\\
a_1 & \defin & \Exp[\alpha^2]\cdot \frac{k}{p} 
- \sqrt{2} c_0 c_\gamma \cdot \sigma_\alpha \Exp[|\alpha|]\cdot \frac{k}{p} \cdot \triple \Dbo \triple_2 \cdot k\mu_0 \cdot \gamma 
- 3c_1 \sigma_\alpha^2 \cdot \gamma^2 e^{-\gamma^2}\\
a_0 &\defin& 2\sqrt{2} c_\gamma \cdot k \sigma_\alpha^2\cdot[ c_1 \gamma^3 e^{-\gamma^2} + 2c_2\cdot \gamma\cdot \Lambda_n].
\end{eqnarray*}
We will make use of the following simple lemma to discuss the sign of this polynomial function:
\begin{lemma}\label{lem:poly}
Let $(a_0,a_1,a_2) \in \Real_+^3$.
If $4 a_0 a_2 < a_1^2$, and
$
t \in  \big[ \frac{2a_0}{a_1}, \frac{a_1}{2a_2} \big],
$
then 
$-a_2 t^2 + a_1 t - a_0 > 0$.
\end{lemma}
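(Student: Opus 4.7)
My plan is to exploit the concavity of $p(t) \defin -a_2 t^2 + a_1 t - a_0$ together with direct evaluation at the two endpoints of the prescribed interval.

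First, I would observe that the hypothesis $4 a_0 a_2 < a_1^2$, combined with $a_j \geq 0$, forces $a_1 > 0$ (otherwise the left-hand side would be nonnegative while the right-hand side vanishes, contradicting the strict inequality), and whenever $a_2 > 0$ the same hypothesis immediately yields $2 a_0 / a_1 < a_1 / (2 a_2)$, so the prescribed interval is well-defined and non-degenerate. The edge case $a_2 = 0$ collapses $p$ to the affine function $a_1 t - a_0$ with $a_1 > 0$, which is strictly positive on $[2 a_0 / a_1, +\infty)$ as soon as $a_0 > 0$ and can be disposed of in one line.

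Next I would compute $p$ at the two endpoints. A direct calculation yields
\[
p\bigl(2 a_0 / a_1\bigr) \;=\; \frac{a_0}{a_1^2}\bigl(a_1^2 - 4 a_0 a_2\bigr),
\qquad
p\bigl(a_1 / (2 a_2)\bigr) \;=\; \frac{1}{4 a_2}\bigl(a_1^2 - 4 a_0 a_2\bigr),
\]
both of which are strictly positive by the hypothesis (taking $a_0 > 0$, the case relevant to the application in the preceding section).

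Finally, since $p''(t) = -2 a_2 \leq 0$, the quadratic $p$ is concave on $\Real$ and attains its global maximum exactly at $t^\star = a_1/(2 a_2)$; in particular $p$ is nondecreasing on $(-\infty, t^\star]$, which contains the whole interval $[2 a_0/a_1,\, a_1/(2 a_2)]$. Consequently $p(t) \geq p(2 a_0/a_1) > 0$ on this interval, which is the claim. This is essentially a bookkeeping lemma and I do not anticipate any real obstacle; the only mild point is the degenerate boundary $a_0 = 0$, which would cause failure at the left endpoint $t = 0$ but is not relevant to the application, where $a_0$ is manifestly strictly positive.
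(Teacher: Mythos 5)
Your proof is correct. The paper states Lemma~2 without any proof, so there is no authorial argument to compare against; your concavity-plus-endpoint computation is the natural and complete argument for this elementary fact. Your remark that the lemma as literally stated fails at $a_0=0$ (the left endpoint becomes $t=0$, where $p(0)=-a_0=0$ rather than strictly positive) is an accurate observation of a small imprecision in the paper's statement, and you correctly note that the application in the proof of Theorem~4 always supplies $a_0>0$, so the gap is immaterial.
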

\paragraph{Some key definitions:}
Let $\theta$ be defined as 
$$
\theta \defin \min\bigg\{ \frac{\loweralpha}{\sigma_\alpha}, \frac{1}{3c_0} \cdot \frac{\Qcal_\alpha}{k \cdot \triple \Dbo \triple_2} \bigg\}.
$$
We also define $\gamma_{\min} > 1$ the unique number such that 
\begin{equation}\label{eq:gamma_min}
\gamma_{\min}^4 e^{-\gamma_{\min}^2} \defin \frac{q_\alpha}{69 c_1 c_\gamma^2} 
\cdot \frac{1}{p} \cdot \theta,  
\end{equation}
and
\begin{equation}\label{eq:gamma_max}
\gamma_{\max} \defin \frac{1}{2}
\min\bigg\{\sqrt{2\log(n)},
\frac{1}{2\sqrt{2} c_0 c_\gamma} \cdot
\frac{\Qcal_\alpha}{ k \mu_0\cdot \triple \Dbo \triple_2}\bigg\}.
\end{equation}
Moreover, we consider
\begin{equation}\label{eq:Lambda_max}
\Lambda_{n,\max} \defin \frac{q_\alpha}{138 c_2 c_\gamma^2} 
\cdot \frac{1}{p \cdot \gamma^2} \cdot \theta,  
\end{equation}

\paragraph{First step, non-emptiness of $[\gamma_{\min},\gamma_{\max}]$:}
We first check that the interval $[\gamma_{\min},\gamma_{\max}]$ is not empty.
On the one hand, if the value of $\gamma_{\max}$ is obtained by $\sqrt{1/2 \log(n)}$, 
we use the fact that $\gamma_{\min} < \gamma_{\max}$ is equivalent to $\gamma_{\min}^4 e^{-\gamma_{\min}^2} >  \gamma_{\max}^4 e^{-\gamma_{\max}^2}$.
In particular, we have
$$
 \gamma_{\max}^4 e^{-\gamma_{\max}^2} = \frac{\log^2(n)}{4\sqrt{n}} < \gamma_{\min}^4 e^{-\gamma_{\min}^2}, 
$$
a condition that will be implied by the more stringent condition $\Lambda_n \leq \Lambda_{n,\max}$.

On the other hand, and in the second scenario for $\gamma_{\max}$, we conclude based on the following lemma:
\begin{lemma}\label{lem:ineq_lambert}
Let $a>1$ and $b \in (0,1/5]$.
If 
$
a^4 e^{-a^2} = b 
$,
then 
$ 
\sqrt{\log(1/b)} \leq a \leq 2\sqrt{\log(1/b)}.
$
\end{lemma}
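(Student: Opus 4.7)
The plan is to take logarithms of the defining equation $a^4 e^{-a^2} = b$, which converts it into $a^2 - 4\log(a) = L$ with $L \defin \log(1/b)$. The sought-after bounds $\sqrt{L} \leq a \leq 2\sqrt{L}$ are equivalent to the sandwich $L \leq a^2 \leq 4L$, which I attack directly using this identity.

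For the lower bound $a \geq \sqrt{L}$, the argument is immediate: since $a > 1$, one has $\log(a) > 0$, whence $a^2 = L + 4\log(a) > L$.

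For the upper bound $a \leq 2\sqrt{L}$, substituting the identity shows it is equivalent to the scalar inequality $16\log(a) \leq 3 a^2$, i.e.~$\log(a) \leq (3/16)\, a^2$. I would establish this by minimizing $h(a) \defin (3/16)\, a^2 - \log(a)$ on $(0,\infty)$: the first-order condition gives $a_\star = \sqrt{8/3}$, at which $h(a_\star) = (1/2)\bigl(1 - \log(8/3)\bigr)$. Since $8/3 < e$, we have $\log(8/3) < 1$ and therefore $h > 0$ on $(0,\infty)$, yielding $a^2 < 4L$.

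The main subtlety is the tightness of this last calculus step: the constant $3/16 = 0.1875$ only barely exceeds $\max_{a>0} \log(a)/a^2 = 1/(2e) \approx 0.184$, so the factor $2$ in the upper bound is essentially sharp for this method. The hypothesis $b \in (0, 1/5]$ does not enter the two bounds themselves; rather, combined with $a > 1$, it forces $a$ to lie on the decreasing branch of $a \mapsto a^4 e^{-a^2}$ beyond its maximum at $a = \sqrt{2}$ (since $1/5 < 1/e$), so that the preimage $a$ of $b$ under $a \mapsto a^4 e^{-a^2}$ is uniquely determined and the statement is unambiguous.
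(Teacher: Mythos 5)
Your proof is correct and complete. The paper states Lemma~\ref{lem:ineq_lambert} without giving an explicit proof, so there is no paper argument to compare against; your log-transform identity $a^2 - 4\log(a) = L$ with $L = \log(1/b)$ cleanly splits the two-sided bound into two elementary scalar inequalities, each handled correctly. The lower bound follows from $\log(a) > 0$ for $a > 1$, and the upper bound reduces to $\log(a) \leq \tfrac{3}{16}a^2$, which holds globally on $(0,\infty)$ since the minimum of $h(a) = \tfrac{3}{16}a^2 - \log(a)$ is $\tfrac{1}{2}\bigl(1 - \log(8/3)\bigr) > 0$ (as $8/3 < e$); your remark that $3/16$ barely clears $\sup_{a>0}\log(a)/a^2 = 1/(2e)$ is a useful sanity check on why the factor $2$ is essentially optimal for this route. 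You are also right that the hypothesis $b \in (0,1/5]$ is not used in either bound; it enters only to place $a$ on the decreasing branch past $\sqrt{2}$ (since $1/5 < e^{-1}$), which is what the paper needs when it invokes the lemma to define $\gamma_{\min}$ as the \emph{unique} solution.
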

The sufficient condition which stems form this lemma reads
$$
k \mu_0\cdot \triple \Dbo \triple_2 \leq \frac{1}{4\sqrt{2} c_0 c_\gamma}\cdot \frac{\Qcal_\alpha}
{\sqrt{\log\big( 69 c_1 c_\gamma^2 \cdot \frac{1}{q_\alpha} 
\cdot \frac{p}{\theta}  \big)}}.
$$
\paragraph{Second step, lower bound on $a_1$:}
For any $\gamma \in [\gamma_{\min},\gamma_{\max}]$, it is first easy to check that 
$$
\sqrt{2} c_0 c_\gamma \cdot \sigma_\alpha \Exp[|\alpha|]\cdot \frac{k}{p} \cdot \triple \Dbo \triple_2 \cdot k\mu_0 \cdot \gamma 
\leq \frac{1}{4} \Exp[\alpha^2]\cdot \frac{k}{p}.
$$
Moreover, since $\gamma^2 e^{-\gamma^2} \leq \gamma^4 e^{-\gamma^2}$ and 
$
\frac{1}{12 c_1} q_\alpha \cdot \frac{k}{p} > \gamma_{\min}^4 e^{-\gamma_{\min}^2} , 
$
we therefore obtain that 
$$
a_1 \geq \Exp[\alpha^2]\cdot \frac{k}{p} - \frac{1}{4}\cdot \Exp[\alpha^2]\cdot \frac{k}{p} - \frac{1}{4}\cdot \Exp[\alpha^2]\cdot \frac{k}{p}
\geq \frac{1}{2} \cdot \Exp[\alpha^2]\cdot \frac{k}{p}.
$$
\paragraph{Third step, the condition $4a_0a_2 < a_1^2$:}
Since we have $a_1 > \frac{1}{2} \cdot \Exp[\alpha^2]\cdot \frac{k}{p}$, and 
$$
a_2 \leq 4\sqrt{2} c_\gamma \cdot k \sigma_\alpha^2\cdot \max\Big\{ c_1 \gamma^3 e^{-\gamma^2}, 2c_2\cdot \gamma\cdot \Lambda_n\Big\},
$$
simple computations show that 
$
\gamma \geq \gamma_{\min}
$
and 
$
\Lambda_n \leq \Lambda_{n, \max}
$, as defined in~(\ref{eq:gamma_min}) and (\ref{eq:Lambda_max}), lead to $4a_0a_2 < a_1^2$.
\paragraph{Conclusions:}
We have proved that for $\gamma \in [\gamma_{\min},\gamma_{\max}]$, 
$
\Lambda_n \leq \Lambda_{n, \max}
$,
and
$$
k \mu_0\cdot \triple \Dbo \triple_2 \leq \frac{1}{4\sqrt{2} c_0 c_\gamma}\cdot \frac{\Qcal_\alpha}
{\sqrt{\log\big( 69 c_1 c_\gamma^2 \cdot \frac{1}{q_\alpha} 
\cdot \frac{p}{\theta}  \big)}},
$$
the lower bound provided by Theorem~\ref{th:control_Fn} is stricly positive 
for a radius $t \in \big[ \frac{2a_0}{a_1}, \frac{a_1}{2a_2} \big]$ (see Lemma~\ref{lem:poly})
and a noise $\sigma \leq \sigma_\alpha \sqrt{m} t$.
Taking the smallest allowed radius (i.e., $t=\frac{2a_0}{a_1}$ with $\gamma = \gamma_{\max}$) leads to the displayed result.
\subsection{Proof of Theorem~\ref{th:minimum_sparsecoding_simple} }
We now discuss the version of Theorem~\ref{th:minimum_sparsecoding} in the simpler setting where there is no noise (i.e., $\sigma=0$)
and $\alphabo$ is almost surely bounded by $\upperalpha \geq \loweralpha > 0$.
The main consequence of these simplifying assumptions is that there is no residual term to consider anymore and our surrogate function 
coincide almost surely with the true sparse coding function, provided the radius $t$ is small enough,
as proved in Proposition~\ref{prop:simplified_exact_recovery}.
As a result, the terms depending on $\gamma$ in Theorem~\ref{th:control_Fn} disappear, and the probability of success simplifies to
$$
1 - \Big(\frac{mpn}{9}\Big)^{-mp/2}.
$$
Moreover, in light of Proposition~\ref{prop:simplified_exact_recovery}, we now ask for
\begin{equation*}
\frac{1}{3} c_\lambda \sqrt{k} \upperalpha t \leq \lambda \leq \frac 49 \loweralpha. 
\end{equation*}
The backbone of the proof remains identical, we
adapt the discussion about the polynomial function in $t$.
\paragraph{Goal:} 
To determine when the lower bound proved in Theorem~\ref{th:control_Fn} is stricly positive, it is sufficient to consider when it holds that 
\begin{eqnarray*}
\Exp[\alpha^2]\cdot \frac{k}{p} \cdot t^2 &-& 
c_0 \lambda \cdot \Exp[|\alpha|]\cdot \frac{k}{p} \cdot t \cdot \triple \Dbo \triple_2 \cdot k \mu(t)\\
&-& c_1 (t k \sigma_\alpha^2 + 2\lambda k \sigma_\alpha^2 ) \cdot \Lambda_n\quad\text{with}\quad
\Lambda_n \defin \Big[ mp \frac{\log(n)}{n} \Big]^{\frac{1}{2}}\\
&\geq& t\cdot (-a_2 t^2 + a_1 t - a_0) > 0,
\end{eqnarray*}
for some universal constants $c_j$ which we can make explicit based on Theorem~\ref{th:control_Fn}, but which we keep hidden for clarity.
\paragraph{Second-order polynomial function in $t$:}
By making the choice $\lambda \defin \frac{1}{2} c_\lambda \cdot \upperalpha \cdot \sqrt{k} \cdot t$,
we now make explicit the $a_j$, $j \in \{0,1,2\}$, which define the second-order polynomial function in $t$:
\begin{eqnarray*}
a_2 & \defin & \frac{3}{2} c_0 c_\lambda \cdot \upperalpha\cdot \Exp[|\alpha|] \cdot \frac{k}{p} \cdot \triple \Dbo \triple_2 \cdot k^{3/2} \\
a_1 & \defin & \Exp[\alpha^2]\cdot \frac{k}{p} 
- \frac{1}{2} c_0 c_\lambda \cdot \upperalpha\cdot \Exp[|\alpha|]\cdot \frac{k}{p} \cdot \triple \Dbo \triple_2 \cdot k^{3/2} \mu_0 \\
a_0 &\defin& 2 c_1 c_\lambda \cdot k^{3/2} \sigma_\alpha \upperalpha \cdot \Lambda_n.
\end{eqnarray*}
\paragraph{Conclusions:}
Consider the condition
$$
\triple \Dbo \triple_2\cdot k^{3/2}\mu_0 
\leq 
\frac{1}{c_0 c_\lambda} \frac{\Exp[\alpha^2]}{\upperalpha\cdot \sigma_\alpha},
$$
so that $a_1 \geq \frac{1}{2} \cdot \Exp[\alpha^2]\cdot \frac{k}{p}$.
By using again Lemma~\ref{lem:poly}, and by defining 
$$
\Lambda_{n,\max} \defin \frac{1}{9 c_1 c_\lambda^2} \cdot \frac{\Exp[\alpha^2]}{\upperalpha^2}
\cdot \frac{1}{k p} \cdot 
\min\bigg\{ \frac{\loweralpha}{\sigma_\alpha}, \frac{1}{5c_0} \cdot \frac{\Qcal_\alpha}{k \cdot \triple \Dbo \triple_2} \bigg\},
$$
it is easy to check that $\Lambda_n \leq \Lambda_{n,\max} $ implies that 
$4 a_0 a_2 < a_1^2$ along with 
$$
2 \frac{a_0}{a_1} < \frac{8}{9 c_\lambda} \frac{\loweralpha}{\upperalpha} \cdot \frac{1}{\sqrt{k}},
$$
as required by our choice of $\lambda$ and the fact that $\lambda \leq \frac{4}{9} \loweralpha$.
\section{Uniform restricted isometry and coherence properties}\label{app:RIP} 

First, we introduce $\PJb(t) \in \Real^{ m \times m  }$ the orthogonal projector which projects onto the span of $[\Db(t)]_\J$ and establish a result that holds without any assumption on $\Dbo$.
\begin{lemma}\label{lem:orthoprojcomplement}
For any $\Wb \in \Wcal_{\Dbo},\vb \in \Scal^{p}$, $t \geq 0$ and $\J$, 
\begin{eqnarray}
\label{eq:BoundDeltaDJ}
\triple [\Db(t)-\Dbo]_\J \triple_{2}^2
\leq 
\| [\Db(t)-\Dbo]_\J \|_\fro^2 
&\leq& t^2 \cdot \|\vb_{\J}\|_{2}^{2}\\
\label{eq:IdMinusPJDbo}
\triple  (\Ib - \PJb(t)) [\Dbo]_\J \triple_2^{2} 
\leq
\|  (\Ib - \PJb(t)) [\Dbo]_\J \|_\fro^{2} 
&\leq& t^{2} \cdot \|\vb_{\J}\|_{2}^{2}.
\end{eqnarray}
\end{lemma}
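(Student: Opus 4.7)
The plan is to exploit the explicit column-by-column parametrization of $\Db(t)$ from~\eqref{eq:Dt}, and then use the fact that $\Ib-\PJb(t)$ is an orthogonal projector (hence non-expansive) together with the trivial identity $(\Ib-\PJb(t))[\Db(t)]_\J = \mathbf{0}$.

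First, I would focus on the Frobenius bound in~\eqref{eq:BoundDeltaDJ}. By the definition of $\Db(t)$, the $j$-th column of $\Db(t)-\Dbo$ is
$$
\db^j(t)-\dbo^j = (\cos(\vb_j t)-1)\dbo^j + \sin(\vb_j t)\wb^j.
$$
Since $\Wb \in \Wcal_{\Dbo}$, the vectors $\dbo^j$ and $\wb^j$ are orthogonal and both of unit norm; hence
$$
\|\db^j(t)-\dbo^j\|_2^2 = (\cos(\vb_j t)-1)^2 + \sin^2(\vb_j t) = 2-2\cos(\vb_j t) = 4\sin^2(\vb_j t/2) \leq (\vb_j t)^2,
$$
using $|\sin u| \leq |u|$. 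Summing over $j \in \J$ gives $\|[\Db(t)-\Dbo]_\J\|_\fro^2 \leq t^2\|\vb_\J\|_2^2$, and the spectral bound follows from the standard inequality $\triple \Ab \triple_2 \leq \|\Ab\|_\fro$.

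For the second bound~\eqref{eq:IdMinusPJDbo}, the observation I would use is that $\PJb(t)$ is the orthogonal projector onto the range of $[\Db(t)]_\J$, so $(\Ib-\PJb(t))[\Db(t)]_\J = \mathbf{0}$. Writing $[\Dbo]_\J = [\Db(t)]_\J - [\Db(t)-\Dbo]_\J$ therefore yields
$$
(\Ib-\PJb(t))[\Dbo]_\J = -(\Ib-\PJb(t))\,[\Db(t)-\Dbo]_\J.
$$
Since $\Ib-\PJb(t)$ is an orthogonal projector, it is a (column-wise) contraction in both Frobenius and spectral norm, so
$$
\|(\Ib-\PJb(t))[\Dbo]_\J\|_\fro \leq \|[\Db(t)-\Dbo]_\J\|_\fro \leq t\,\|\vb_\J\|_2,
$$
and analogously in spectral norm, finishing the proof.

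There is no real obstacle here; the only thing to be careful about is that the bound is stated without any assumption on $\Dbo$ (in particular, no RIP or coherence), which is why we cannot invoke anything about $[\Db(t)]_\J^\top [\Db(t)]_\J$ being invertible. The argument above only uses the orthogonality structure built into $\Wcal_{\Dbo}$ and the non-expansiveness of the orthogonal projector, so it goes through unconditionally.
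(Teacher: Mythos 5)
Your proof is correct. For the first inequality~\eqref{eq:BoundDeltaDJ} it is essentially the paper's argument: expand column-wise, use orthogonality of $\dbo^j$ and $\wb^j$, and bound $4\sin^2(\vb_j t/2) \leq (\vb_j t)^2$.

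For the second inequality~\eqref{eq:IdMinusPJDbo} you take a genuinely lighter route than the paper. The paper invokes Lemma~\ref{lem:paramonto} (the ``reverse'' parametrization) to write $[\Dbo]_\J = [\Db(t)\Cb]_\J + [\Wb'\Sb]_\J$ for some $\Wb' \in \Wcal_{\Db(t)}$, observes that the first term is annihilated by $\Ib-\PJb(t)$, and bounds $\|[\Wb'\Sb]_\J\|_\fro^2 = \sum_{j\in\J}\sin^2(\vb_j t)$. You instead simply decompose $[\Dbo]_\J = [\Db(t)]_\J - [\Db(t)-\Dbo]_\J$, use $(\Ib-\PJb(t))[\Db(t)]_\J = \mathbf{0}$, and appeal directly to the already-proved bound~\eqref{eq:BoundDeltaDJ} together with non-expansiveness of the orthogonal projector. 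The paper's re-parametrization yields the slightly tighter intermediate quantity $\sum_{j\in\J}\sin^2(\vb_j t)$ versus your $4\sum_{j\in\J}\sin^2(\vb_j t/2)$, but both are $\leq t^2\|\vb_\J\|_2^2$ so the stated bound is identical. Your version avoids the dependence on Lemma~\ref{lem:paramonto} entirely, which is a modest but real simplification, and your closing remark that no RIP or invertibility assumption is needed (since $\PJb(t)$ is a projector onto a span, well-defined regardless of rank) is exactly right and worth being explicit about.
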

\begin{proof}
For the first result we observe
\[
\| [\Db(t)-\Dbo]_\J \|_\fro^2 = \sum_{j\in\J} \|\dbo^j(t)-\dbo^{j}\|^2_2 = 4\sum_{j\in\J}\sin^{2}(\vb_j t/2) \leq 4\sum_{j\in\J} \vb_j^2 \frac{t^2}{4} \leq t^2 \cdot \|\vb_{\J}\|_{2}^{2}.
\]
For the second one, using Lemma~\ref{lem:paramonto} with $\Db_{1} = \Db(t) = \Db(\Dbo,\Wb,\vb,t)$, $\Db_{2} = \Db_{0}$, there exists $\Wb' \in \Wcal_{\Db(t)}$ such that for each $j$, $\dbo^{j} = \db^{j}(t) \cos (\vb_{j}t)+\wb'^{j}\sin(\vb_{j}t)$. Hence, denoting $\Cb = \Diag(\cos(\vb_{j} t))$ and $\Sb = \Diag(\sin(\vb_{j} t))$ we have 
$[\Dbo]_{\J} = [\Db(t)\Cb]_{\J} + [\Wb'\Sb]_{\J}$. 
Each column of $[\Db(t)\Cb]_\J$ belongs to the span of the columns of $[\Db(t)]_\J$, so that 
\begin{equation}
\label{eq:WTrick}
(\Ib - \PJb(t)) [\Dbo]_\J = (\Ib - \PJb(t)) [\Wb'\Sb]_\J.
\end{equation}
As a result,
\begin{eqnarray*}
\|  (\Ib - \PJb(t)) [\Dbo]_\J \|_\fro^{2} 
&=& \| (\Ib - \PJb(t)) [\Wb'\Sb]_\J \|_\fro^{2}
\leq \| [\Wb'\Sb]_\J \|_\fro^{2} = \sum_{j \in \J}\sin^{2}(\vb_{j}t) \leq \|\vb_{\J}\|_{2}^{2} \cdot t^{2}.
\end{eqnarray*}
\end{proof}

Next, we control the norms of $\ThetaJb(t') \defin \big[ \Db_\J^\top(t') \Db_\J(t') \big]^{-1}$ when this is a well-defined matrix. For that, we first recall the definition of the restricted isometry constant of order $k$ of a dictionary $\Db$, $\delta_{k}(\Db)$, as the smallest number $\delta_{k}$ such that for any support set $\J$ of size $|\J| = k$ and $\zb \in \Real^k$,
\begin{equation}
\label{eq:DefRICk}
\left(1-\delta_{k}\right) \|\zb\|_{2}^{2} \leq \|\Db\zb\|_{2}^{2} \leq \left(1+\delta_{k}\right) \|\zb\|_{2}^{2}.
\end{equation}
\begin{lemma}
\label{lem:RIPBounds}
Let $\Dbo \in \Real^{m\times p}$ be a dictionary and $k$ such that $\delta_{k}(\Dbo)<1$. For any $t < \sqrt{1-\delta_{k}(\Dbo)}$ define
\begin{equation}
C_{t} \defin \frac{1}{\sqrt{1-\delta_{k}(\Dbo)}-t}.
\end{equation}
For any $\Wb \in \Wcal_{\Dbo}$, $\vb \in \Scal^{p}$, $0 \leq t' \leq t$ and $\J$ of size $k$, the $\J \times \J$ matrix
\begin{equation}
\label{eq:DefThetaJ}
\ThetaJb(t') \defin \big[ \Db_\J^\top(t') \Db_\J(t') \big]^{-1}
\end{equation}
is well defined and we have 
\begin{eqnarray}
\label{eq:BoundDJ}
\triple \Db_{\J}(t')\triple_{2}  = \triple \Db_{\J}^{\top}(t')\triple_{2}
& \leq & C_{t}\\
\label{eq:BoundThetaJ}
\triple\ThetaJb(t')\triple_{2} 
& \leq & C_{t}^{2}\\
\label{eq:BoundDThetaJ}
\triple \Db_{\J}(t')\ThetaJb(t') \triple_2 
&\leq& C_{t}.
\end{eqnarray}
\end{lemma}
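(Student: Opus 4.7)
The plan is to view $\Db_\J(t')$ as a perturbation of $[\Dbo]_\J$ and propagate the RIP of $\Dbo$ via elementary singular-value perturbation inequalities. First, I would invoke~(\ref{eq:BoundDeltaDJ}) from Lemma~\ref{lem:orthoprojcomplement} (applied with $t'$ in place of $t$) to obtain the operator bound $\triple [\Db(t')-\Dbo]_\J \triple_2 \leq t' \|\vb_\J\|_2 \leq t$, using that $\|\vb_\J\|_2 \leq \|\vb\|_2 = 1$ and $t' \leq t$.

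Next, I would use Weyl's inequality for singular values together with the RIP definition~(\ref{eq:DefRICk}) to conclude
\[
\sigma_{\min}(\Db_\J(t')) \geq \sigma_{\min}([\Dbo]_\J) - \triple [\Db(t')-\Dbo]_\J\triple_2 \geq \sqrt{1-\delta_k(\Dbo)} - t = 1/C_t > 0.
\]
This immediately certifies that $\Db_\J^\top(t')\Db_\J(t')$ is invertible, so $\ThetaJb(t')$ is well defined, and its spectral norm equals $1/\sigma_{\min}^2(\Db_\J(t')) \leq C_t^2$, giving~(\ref{eq:BoundThetaJ}). Moreover, the product $\Db_\J(t') \ThetaJb(t')$ is the Moore--Penrose pseudo-inverse of $\Db_\J^\top(t')$, whose spectral norm equals $1/\sigma_{\min}(\Db_\J(t')) \leq C_t$, yielding~(\ref{eq:BoundDThetaJ}).

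The remaining inequality~(\ref{eq:BoundDJ}) is proved similarly by the triangle inequality: $\triple \Db_\J(t') \triple_2 \leq \triple [\Dbo]_\J \triple_2 + t \leq \sqrt{1+\delta_k(\Dbo)} + t$. To conclude that this is bounded by $C_t$, I would verify the algebraic identity $(\sqrt{1+\delta_k(\Dbo)} + t)(\sqrt{1-\delta_k(\Dbo)} - t) \leq 1$, which expands to $\sqrt{1-\delta_k(\Dbo)^2} + t(\sqrt{1-\delta_k(\Dbo)} - \sqrt{1+\delta_k(\Dbo)}) - t^2 \leq 1$; the first term is at most $1$ and the remaining two contributions are non-positive since $\delta_k(\Dbo) \geq 0$ and $t \geq 0$. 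I do not anticipate any genuine obstacle here: the only subtlety is the small algebraic check for~(\ref{eq:BoundDJ}), and everything else reduces to standard perturbation-of-singular-values reasoning combined with Lemma~\ref{lem:orthoprojcomplement}.
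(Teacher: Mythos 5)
Your proposal is correct and follows essentially the same route as the paper: both arguments bound $\triple[\Db(t')-\Dbo]_\J\triple_2 \leq t$ via~(\ref{eq:BoundDeltaDJ}), deduce $\sigma_{\min}(\Db_\J(t')) \geq \sqrt{1-\delta_k(\Dbo)} - t$ (the paper by the triangle inequality on $\|\Db_\J(t')\zb\|_2$, you via Weyl's inequality, which is the same fact), and then read off~(\ref{eq:BoundThetaJ}) and~(\ref{eq:BoundDThetaJ}) from the resulting bound on $\sigma_{\min}$. For~(\ref{eq:BoundDJ}) the paper also uses $\triple\Db_\J(t')\triple_2 \leq \sqrt{1+\delta_k(\Dbo)}+t$ and silently invokes $\sqrt{1+\delta_k(\Dbo)}+t \leq C_t$; you make that small algebraic verification explicit, which is the only real difference.
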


\begin{proof}
By the triangle inequality and Lemma~\ref{lem:orthoprojcomplement}-Equation~\eqref{eq:BoundDeltaDJ}, for any $\J$ of size $k$ and $\zb \in \Real^{k}$ we have
\begin{eqnarray*}
\|\Db_{\J}(t')\zb\|_{2} 
& \geq & \|[\Dbo]_{\J} \zb\|_{2} - \|[\Db(t')-\Dbo]_{\J}\zb\|_{2} 
\geq \big(\sqrt{1-\delta_{k}(\Dbo)} - t' \|\vb_{\J}\|_{2}\big) \cdot \|\zb\|_{2}
\geq \big(\sqrt{1-\delta_{k}(\Dbo)} - t\big) \cdot \|\zb\|_{2}\\
\|\Db_{\J}(t)\zb\|_{2} & \leq & \|[\Dbo]_{\J} \zb\|_{2} + \|[\Db(t)-\Dbo]_{\J}\zb\|_{2}
\leq \big(\sqrt{1+\delta_{k}(\Dbo)} + t' \|\vb_{\J}\|_{2}\big) \cdot \|\zb\|_{2}
\leq \big(\sqrt{1+\delta_{k}(\Dbo)} + t\big) \cdot \|\zb\|_{2}.
\end{eqnarray*}
Hence, in the sense of symmetric positive definite matrices
\[
\big(\sqrt{1-\delta_{k}(\Dbo)}-t\big)^{2} \cdot \Ib \preceq \Db_{\J}^{\top}(t')\Db_{\J}(t') \preceq \big(\sqrt{1+\delta_{k}(\Dbo)} + t\big)^{2} \cdot \Ib.
\]
As a result, $\Db_{\J}^{\top}(t')\Db_{\J}(t')$ is invertible so $\ThetaJb(t')$ is indeed well defined, and
\begin{eqnarray*}
\triple \Db_{\J}(t')\triple_{2} 
&=& \triple \Db_{\J}^{\top}(t')\triple_{2} =  \sqrt{\triple\Db_{\J}^\top(t') \Db_{\J}(t') \triple_{2}}
\leq \sqrt{1+\delta_{k}(\Dbo)}+ t \leq \frac{1}{\sqrt{1-\delta_{k}(\Dbo)}-t}\\
\triple\ThetaJb(t')\triple_{2} 
&=&
\triple(\Db_{\J}^\top(t') \Db_{\J}(t'))^{-1} \triple_{2} 
\leq \frac{1}{\big(\sqrt{1-\delta_{k}(\Dbo)}-t\big)^{2}}\\
\triple \Db_{\J}(t')\ThetaJb(t') \triple_2 
&=&  \sqrt{ \triple \ThetaJb(t') \Db_{\J}^\top(t') \Db_{\J}(t')\ThetaJb(t') \triple_2 }
= \sqrt{ \triple \ThetaJb(t')  \triple_2 } \leq \frac{1}{\sqrt{1-\delta_{k}(\Dbo)}-t}.
\end{eqnarray*}
\end{proof}
To continue, we control certain norms of the dictionary when it has low coherence:
\begin{lemma}\label{lem:CoherenceBounds}
Let $\Dbo \in \Real^{m\times p}$ be a dictionary with coherence $\mu$ and normalized columns (i.e., with unit $\ell_2$-norm).
For any $\J \subseteq \SET{p}$ with $|\J| \leq k$,
We have
$$
\triple \Db_{\J}^{\top}\Db_{\J}-\Ib\triple_2 \leq \|\Db_{\J}^{\top}\Db_{\J}-\Ib\|_\fro \leq k \mu,
$$
along with 
$$
\triple\Db_\J\Db_\J^\top \triple_2 =\triple\Db_\J^\top \Db_\J \triple_2 \leq 1+k\mu\quad 
\text{and}\quad \delta_{k}(\Db) \leq k \mu.
$$ 
Similarly, it holds 
$$
\triple\Db_\J^\top \Db_\J \triple_\infty \leq 1+k\mu\quad \text{and}\quad \triple\Db_{\J^c}^\top \Db_\J \triple_\infty \leq k\mu.
$$
Moreover, introduce for any $\Ab \in \Real^{k\times k}$ the matrix norm 
$$
N(\Ab) \defin k\cdot \max_{i,j \in \SET{k}} | \Ab_{i,j} |
$$
and consider
$$
\ThetaJb \defin \big[ \Db_\J^\top \Db_\J \big]^{-1}.
$$
If we further assume $k\mu<1$, then $\ThetaJb$ is well-defined and
$$
\max\Big\{ \triple \ThetaJb-\Ib \triple_\infty, \triple \ThetaJb-\Ib \triple_2, \| \ThetaJb-\Ib \|_\fro , N(\ThetaJb-\Ib) \Big\} \leq \frac{k\mu}{1-k\mu},
$$
along with
$$
\max\Big\{ \triple \ThetaJb \triple_\infty, \triple \ThetaJb \triple_2  \Big\} \leq \frac{1}{1-k\mu}.
$$

\end{lemma}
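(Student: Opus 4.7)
My plan is to set $\Mb \defin \Db_\J^\top \Db_\J - \Ib$ and derive every stated bound from a handful of entrywise estimates on $\Mb$ itself. The unit-norm assumption kills the diagonal of $\Mb$, and the coherence assumption gives $|\Mb_{ij}| \leq \mu$ for $i \neq j$; counting the at-most $|\J|(|\J|-1) \leq k(k-1)$ nonzero entries yields $\|\Mb\|_\fro \leq k\mu$, and summing the at-most $|\J|-1 \leq k-1$ nonzero entries in each row gives $\triple \Mb \triple_\infty \leq k\mu$. The spectral bound is free since $\triple \Mb \triple_2 \leq \|\Mb\|_\fro$. From here the first four displayed bounds fall out by the triangle inequality, using $\Db_\J^\top \Db_\J = \Ib + \Mb$ for the spectral and $\ell_\infty$-operator bounds, and using the fact that the eigenvalues of $\Db_\J^\top \Db_\J$ must lie in $[1-\triple \Mb \triple_2,\, 1+\triple \Mb \triple_2]$ to read off $\delta_k(\Db) \leq k\mu$. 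The bound on $\triple \Db_{\J^c}^\top \Db_\J \triple_\infty$ comes from the observation that each of its rows contains $|\J|$ inner products of distinct atoms of $\Dbo$, hence a row sum at most $k\mu$.

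Once $k\mu < 1$, I would obtain the $\ThetaJb$ bounds via a Neumann series. Since $\triple \Mb \triple_2 \leq k\mu < 1$, the expansion $\ThetaJb = (\Ib + \Mb)^{-1} = \sum_{n \geq 0}(-\Mb)^n$ converges and $\ThetaJb - \Ib = \sum_{n \geq 1}(-\Mb)^n$. Submultiplicativity of the spectral and operator-$\ell_\infty$ norms immediately gives $\triple \ThetaJb - \Ib \triple_\bullet \leq \sum_{n \geq 1} \triple \Mb \triple_\bullet^n \leq k\mu/(1-k\mu)$ for $\bullet \in \{2,\infty\}$, and one further triangle inequality delivers $\triple \ThetaJb \triple_\bullet \leq 1/(1-k\mu)$. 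For the Frobenius bound I would use the identity $\ThetaJb - \Ib = -\Mb\,\ThetaJb$ together with $\|\Ab\Bb\|_\fro \leq \|\Ab\|_\fro \cdot \triple \Bb \triple_2$, chaining the already-proved $\|\Mb\|_\fro \leq k\mu$ and $\triple \ThetaJb \triple_2 \leq 1/(1-k\mu)$ into the desired $k\mu/(1-k\mu)$.

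The one mildly delicate step, and the step I would approach with the most care, is the bound on $N(\ThetaJb - \Ib) = k \cdot \max_{i,j}|(\ThetaJb - \Ib)_{ij}|$: neither operator bound controls individual entries tightly enough, since $\max_{ij}|\Ab_{ij}| \leq \triple \Ab \triple_\infty$ alone would cost an extra factor of $k$. The key observation is that $\Mb$ has zero diagonal, so the identity $(\Mb^n)_{ij} = \sum_l (\Mb^{n-1})_{il}\,\Mb_{lj}$ combined with the uniform entrywise bound $|\Mb_{lj}| \leq \mu$ and the row-sum bound $\sum_l |(\Mb^{n-1})_{il}| \leq \triple \Mb \triple_\infty^{n-1}$ yields $\max_{ij}|(\Mb^n)_{ij}| \leq \mu \cdot \triple \Mb \triple_\infty^{n-1}$ for every $n \geq 1$. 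Summing this geometric series inside the Neumann expansion gives $\max_{ij}|(\ThetaJb - \Ib)_{ij}| \leq \mu/(1-k\mu)$, hence $N(\ThetaJb - \Ib) \leq k\mu/(1-k\mu)$, completing the collection of bounds.
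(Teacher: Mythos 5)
Your proposal is correct and follows essentially the same strategy as the paper: entrywise estimates on $\Mb = \Db_\J^\top\Db_\J - \Ib$ (zero diagonal, off-diagonals bounded by $\mu$) followed by a Neumann series expansion of $\ThetaJb = (\Ib+\Mb)^{-1}$. There are two minor differences worth noting. For $\|\ThetaJb - \Ib\|_\fro$, the paper simply observes that the Frobenius norm is sub-multiplicative, so the Neumann tail $\sum_{n\geq 1}\|\Mb\|_\fro^n$ closes directly; you instead use the identity $\ThetaJb - \Ib = -\Mb\ThetaJb$ and the mixed inequality $\|\Ab\Bb\|_\fro \leq \|\Ab\|_\fro\triple\Bb\triple_2$, which is equally valid. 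For the norm $N$, the paper quietly treats $N$ as a sub-multiplicative matrix norm (which it is: $N(\Ab\Bb) = k\max_{ij}|\sum_\ell \Ab_{i\ell}\Bb_{\ell j}| \leq N(\Ab)N(\Bb)$) with $N(\Mb)\leq k\mu$, and applies the same Neumann argument verbatim. Your worry that the operator-$\ell_\infty$ bound alone would cost an extra factor of $k$ is correct, and your resolution---a hybrid entrywise/row-sum estimate $\max_{ij}|(\Mb^n)_{ij}| \leq \mu\,\triple\Mb\triple_\infty^{n-1}$---reaches the same bound by a slightly more hands-on computation that avoids having to verify sub-multiplicativity of $N$ explicitly. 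Both routes are sound; the paper's is more uniform across the four norms, yours is more transparent about where the factor of $k$ in $N$ gets absorbed.
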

\begin{proof}
These properties are already well-known \citep[see, e.g.][]{Tropp2004,Fuchs2005}. We briefly prove them.
First, we introduce $\Hb=\Db_\J^\top \Db_\J-\Ib$.
A straightforward elementwise upper bound leads to 
$$
\triple \Hb\triple_2 \leq \|\Hb\|_\fro = \sum_{i\in \J} \sum_{j \in \J\backslash\{i\}} ([\db^{i}]^{\top}\db^{j})^{2} \leq k(k-1) \mu^{2} \leq k^{2} \mu^{2}.
$$ 
This proves that in the sense of positive definite matrices, 
$(1-k\mu)\Ib \preceq \Db_{\J}^{\top}\Db_{\J} \preceq (1+k\mu) \Ib$, 
which shows in turn the bound on $\delta_{k}(\Db)$.
Moreover, and since $\triple \Ib \triple_2 = 1$ with $\triple \Ab^\top \Ab \triple_2 = \triple \Ab \Ab^\top \triple_2$ for any matrix $\Ab$, we have
$$
\triple \Db_\J^\top \Db_\J\triple_2 = \triple\Db_\J \Db_\J^\top \triple_2 \leq  1+k\mu.
$$
By definition of $\triple.\triple_\infty$, we also have 
$$
\triple\Db_\J^\top \Db_\J \triple_\infty \leq 1+\triple \Hb \triple_\infty = 1+\max_{i\in\J}\sum_{j\in\J,j\neq i}|[\db^i]^\top\db^j| \leq 1+k\mu.
$$
Note that for $\triple\Db_{\J^c}^\top \Db_\J \triple_\infty$, there are no diagonal terms to take into account.

Now, if $k\mu<1$ holds, then we have 
$\max\{ \triple \Hb \triple_\infty, \triple \Hb \triple_2, \|\Hb \|_\fro, N(\Hb) \} \leq k\mu < 1$
and there are convergent series expansion of $[\Ib+\Hb]^{-1}$ in each of these norms~\citep{Horn1990}.
By sub-multiplicativity, we obtain
\[
\| \ThetaJb-\Ib \| = \| \sum_{t=1}^\infty (-1)^t \Hb^t \|\leq k\mu/(1-k\mu)
\]
where $\|.\|$ stands for one the four aforementioned matrix norms.
The last result lies in the fact that for the norms  $\triple \cdot \triple_\infty$, $\triple \cdot \triple_2$, we have $\triple \Ib\triple = 1$ and
$$
\triple \ThetaJb \triple \leq \triple \ThetaJb-\Ib \triple + \triple \Ib\triple  \leq 1+ k\mu/(1-k\mu) = 1/(1-k\mu).
$$
\end{proof}
We now derive a simple corollary which will be useful for the computation of expectations:
\begin{corollary}
\label{cor:coherence_elementwise}
Let $\Db \in \Real^{m\times p}$ be a dictionary with normalized columns and coherence $\mu$.  
With the notation from Lemma~\ref{lem:CoherenceBounds}, if $k\mu < 1$, we have 
for any $a \in \{1,2\}$ and for any $\J \subseteq \SET{p}$ with $|\J| \leq k$,
$$
\max_{i,j \in \SET{k}, i\neq j}| [\ThetaJb^a]_{i,j} | \leq \frac{a \mu}{(1-k\mu)^a}.
$$ 
\end{corollary}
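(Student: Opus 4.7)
The plan is to combine the Neumann series for $\ThetaJb = (\Ib+\Hb)^{-1}$, where $\Hb \defin \Db_\J^\top\Db_\J - \Ib$, with a careful elementwise (not spectral) bound on the powers $\Hb^t$. Since $\Db$ has normalized columns and coherence $\mu$, the matrix $\Hb$ has zero diagonal and off-diagonal entries of magnitude at most $\mu$, so $\|\Hb\|_{\max} \leq \mu$. Lemma~\ref{lem:CoherenceBounds} already tells us $\|\Hb\|_2 \leq k\mu < 1$, hence both Neumann expansions $\ThetaJb = \sum_{t\geq 0}(-\Hb)^t$ and $\ThetaJb^2 = \sum_{t\geq 0}(t+1)(-\Hb)^t$ converge absolutely.

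The crucial intermediate step will be the elementwise submultiplicativity claim $\|\Hb^t\|_{\max} \leq k^{t-1}\mu^t$ for every $t\geq 1$. I would prove this by induction: the base $t=1$ is $\|\Hb\|_{\max}\leq \mu$, and for the step, I would bound
\[
\bigl|[\Hb^t]_{ij}\bigr| = \Bigl|\sum_{\ell}[\Hb^{t-1}]_{i,\ell}\,\Hb_{\ell,j}\Bigr| \leq k\,\|\Hb^{t-1}\|_{\max}\,\|\Hb\|_{\max},
\]
and plug in the inductive hypothesis. This is where the only subtlety lies: the reflexive choice would be to use $|[\Hb^t]_{ij}|\leq \|\Hb^t\|_2 \leq (k\mu)^t$, but that loses one factor of $k$ and the ensuing geometric sum does not reach the target bound; using the $\|\cdot\|_{\max}$ recursion instead keeps the base $\mu$ attached to a geometric series in $k\mu$.

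Finally I would conclude by reading off the off-diagonal entries. For $i\neq j$ the $t=0$ contribution to each Neumann series vanishes (the identity is diagonal), so for $a=1$,
\[
\bigl|[\ThetaJb]_{ij}\bigr| \leq \sum_{t\geq 1}\bigl|[\Hb^t]_{ij}\bigr| \leq \sum_{t\geq 1}k^{t-1}\mu^t = \frac{\mu}{1-k\mu},
\]
and for $a=2$,
\[
\bigl|[\ThetaJb^2]_{ij}\bigr| \leq \sum_{t\geq 1}(t+1)\,k^{t-1}\mu^t = \mu\cdot\frac{2-k\mu}{(1-k\mu)^2} \leq \frac{2\mu}{(1-k\mu)^2},
\]
where the closed form for the second sum comes from differentiating $\sum_s x^s = (1-x)^{-1}$ to identify $\sum_{s\geq 0}(s+2)x^s = 2/(1-x) + x/(1-x)^2$ with $x = k\mu$. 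Both computations are routine once the elementwise $\|\Hb^t\|_{\max}$ bound is in hand, which is the only real content of the argument.
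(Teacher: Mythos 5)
Your proof is correct, and it rests on the same core mechanism as the paper's: Neumann expansion of $\ThetaJb=(\Ib+\Hb)^{-1}$ combined with an elementwise (rather than spectral) bound on powers of $\Hb$ that exploits the coherence. Your inductive claim $\|\Hb^t\|_{\max}\le k^{t-1}\mu^{t}$ is precisely the submultiplicativity of the scaled entrywise norm $N(\Ab)=k\max_{i,j}|\Ab_{i,j}|$ that Lemma~\ref{lem:CoherenceBounds} introduces, so for $a=1$ the two arguments are essentially identical. For $a=2$ you diverge from the paper: you square the Neumann series directly, $\ThetaJb^{2}=\sum_{t\ge 0}(t+1)(-\Hb)^{t}$, and resum, whereas the paper avoids re-expanding the series by exploiting the algebraic identity $\ThetaJb^{2}-\Ib=[\ThetaJb-\Ib]^{2}+2(\ThetaJb-\Ib)$, then applies submultiplicativity of $N$ and the already-established bound $N(\ThetaJb-\Ib)\le k\mu/(1-k\mu)$. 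Both routes land on the same intermediate quantity $\mu(2-k\mu)/(1-k\mu)^{2}\le 2\mu/(1-k\mu)^{2}$, so neither buys a sharper constant; the paper's identity-based argument is a touch shorter and reuses Lemma~\ref{lem:CoherenceBounds} more directly, while your series computation is more self-contained and makes the role of the coherence at each order $t$ explicit.
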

\begin{proof}
We first make use of Lemma~\ref{lem:CoherenceBounds} which gives 
$$
N(\ThetaJb-\Ib) = k \cdot \max_{i,j \in \SET{k}}| [\ThetaJb - \Ib]_{i,j} | \leq \frac{k\mu}{1-k\mu},
$$
which notably implies that 
$$
\max_{i,j \in \SET{k}, i\neq j}| [\ThetaJb]_{i,j} | \leq \frac{\mu}{(1-k\mu)}.
$$
We continue by noticing that $[\ThetaJb-\Ib]^2 = \ThetaJb^2 - \Ib + 2( \Ib - \ThetaJb )$ and
by sub-multiplicativity of $N$
$$
N([\ThetaJb-\Ib]^2) \leq [N(\ThetaJb-\Ib)]^2 \leq \frac{(k\mu)^2}{(1-k\mu)^2}.
$$
Applying the triangle inequality, we obtain
$$
N(\ThetaJb^2 - \Ib) \leq 2N(\ThetaJb-\Ib) + \frac{(k\mu)^2}{(1-k\mu)^2} 
\leq \frac{ 2k\mu(1-k\mu)+(k\mu)^2}{(1-k\mu)^2} = \frac{ 2k\mu-(k\mu)^2}{(1-k\mu)^2} \leq \frac{ 2k\mu}{(1-k\mu)^2}.
$$
As a result, we finally get
$$
\max_{i,j \in \SET{k}, i\neq j}| [\ThetaJb^2]_{i,j} | \leq \frac{ 2\mu}{(1-k\mu)^2},
$$
hence the advertised conclusion.
\end{proof}
\begin{corollary}
\label{cor:coherence_prop2}
Let $\Dbo \in \Real^{m\times p}$ be a dictionary with normalized columns. If $k \mu(t) <1/2$ then, 
for any $\Wb \in \Wcal_{\Dbo}$, $\vb \in \Scal^{p}$ and $0 \leq t' \leq t$ we have 
\begin{equation}
\label{eq:BoundCondIrrep}
 \triple \big[\Db_{\J^c}^\top(t') \Db_{\J}(t')\big] \big[ \Db_\J^\top(t') \Db_\J(t') \big]^{-1} \triple_\infty 
\leq \frac{k\mu(t)}{1-k\mu(t)} = k\mu(t)Q_{t}^{2} = Q_{t}^{2}-1 <1,
\end{equation}
where we introduce
$$
Q_{t} \defin \frac{1}{\sqrt{1-k\mu(t)}} \geq C_t.
$$
\end{corollary}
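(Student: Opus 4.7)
The plan is to split the product into two factors by sub-multiplicativity of the operator $\ell_\infty$-norm, and then to invoke Lemma~\ref{lem:CoherenceBounds} applied to the \emph{perturbed} dictionary $\Db(t')$ rather than to $\Dbo$.

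First I would establish that the coherence of $\Db(t')$ is controlled by $\mu(\Wb,\vb,t')\leq \mu(t)$: the inequality~\eqref{eq:inequality_mu0} gives $\mu(\Wb,\vb,t')\leq \mu(t')$, and the monotonicity of $s\mapsto \mu_0+3s$ extends this from $t'$ to $t$. The hypothesis $k\mu(t)<1/2$ then guarantees $k\mu(\Wb,\vb,t')<1$, so Lemma~\ref{lem:CoherenceBounds} applies to $\Db(t')$ with coherence $\mu(\Wb,\vb,t')$ and sparsity~$k$. It simultaneously yields invertibility of $\Db_\J^\top(t')\Db_\J(t')$, the off-diagonal bound $\triple \Db_{\J^c}^\top(t')\Db_\J(t')\triple_\infty \leq k\mu(\Wb,\vb,t')\leq k\mu(t)$, and the inverse bound $\triple [\Db_\J^\top(t')\Db_\J(t')]^{-1}\triple_\infty \leq \tfrac{1}{1-k\mu(\Wb,\vb,t')}\leq \tfrac{1}{1-k\mu(t)}$.

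Second, sub-multiplicativity of $\triple\cdot\triple_\infty$ combines these two into
\[
\triple \big[\Db_{\J^c}^\top(t')\Db_\J(t')\big]\big[\Db_\J^\top(t')\Db_\J(t')\big]^{-1}\triple_\infty \;\leq\; \frac{k\mu(t)}{1-k\mu(t)}.
\]
The three expressions in~\eqref{eq:BoundCondIrrep} are then algebraic identities: with $u\defin k\mu(t)$, the definition $Q_t^2 = 1/(1-u)$ yields $u Q_t^2 = u/(1-u)$ and $Q_t^2-1 = 1/(1-u)-1 = u/(1-u)$. The strict inequality $u/(1-u)<1$ is equivalent to $u<1/2$, which is the hypothesis. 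Finally, the side comparison $Q_t\geq C_t$ amounts, after inverting and squaring, to $2\sqrt{1-\delta_k(\Dbo)}\leq t+3k$, which follows from $\delta_k(\Dbo)\leq 1$ together with $k\geq 1$.

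No step should pose a real obstacle: this corollary is essentially bookkeeping on top of Lemma~\ref{lem:CoherenceBounds}. The only care required is to apply that lemma not to $\Dbo$ but to the perturbed dictionary $\Db(t')$, passing through the coherence transfer~\eqref{eq:inequality_mu0} so that the final bounds are uniform in $(\Wb,\vb)$ and monotone in $t'\in[0,t]$.
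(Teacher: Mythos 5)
Your proof is correct and takes essentially the route the paper intends: apply Lemma~\ref{lem:CoherenceBounds} to the perturbed (but still unit-norm-column) dictionary $\Db(t')$, whose coherence is bounded by $\mu(t)$ via~\eqref{eq:inequality_mu0}, then combine the off-diagonal bound and the inverse-Gram bound by sub-multiplicativity of $\triple\cdot\triple_\infty$. The paper states this corollary without proof, treating it as immediate from Lemma~\ref{lem:CoherenceBounds}, and your write-up fills in exactly those steps, including the correct verification that $Q_t \geq C_t$ (using $\mu(t)-\mu_0 = 3t$, $\delta_k(\Dbo)\leq k\mu_0$, $\delta_k(\Dbo)\leq 1$, and $k\geq 1$).
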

\section{Expectation over $\J$}
\begin{lemma}\label{lem:ExpectOverJ}
Let $\Dbo \in \Real^{m\times p}$ be any dictionary and $\J$ a random support. Denoting by $\delta(i) \defin \indicator{\J}(i)$ the indicator function of $\J$, we assume that for all $i \neq j \in \SET{p}$
\begin{eqnarray*}
\Exp\{\delta(i)\} &=& \frac{k}{p}\\
\Exp\{\delta(i)\delta(j)\} &=& \frac{k(k-1)}{p(p-1)}.
\end{eqnarray*}
Then we have for any $\vb \in \Scal^{p}$ and $0 \leq t' \leq t$,
\begin{eqnarray}
\label{eq:ExpectOverJRIPBoundGramDJOffdiag}
\Exp \{\|[\Dbo]_{\J}^{\top}[\Dbo]_{\J}-\Ib\|_{\fro}^{2}\}
&=& \|\Dbo^{\top}\Dbo-\Ib\|_{\fro}^{2} \cdot \frac{k(k-1)}{p(p-1)}\\
\label{eq:ExpectOverJNormvJ}
\Exp \{\|\vb_{\J}\|_{2}^{2}\}
&=&
\frac{k}{p}\\
\label{eq:ExpectCrossOffDiagV}
\Exp\{\| \Db_{\J}^{\top}(t')\Db_{\J}(t')-\Ib\|_{\fro} \cdot \|\vb_{\J}\|_{2}\}
& \leq &
\left(\| \Dbo^{\top}\Dbo-\Ib\|_{\fro} \cdot \sqrt{\frac{k-1}{p-1}}\right) \cdot \frac{k}{p} + 2 \cdot C_{t} \cdot t \cdot \frac{k}{p}.
\end{eqnarray}
\end{lemma}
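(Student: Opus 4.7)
The plan is to prove the three parts in order, with part~\eqref{eq:ExpectCrossOffDiagV} reducing via a triangle inequality and a Cauchy--Schwarz step to the first two parts, combined with a deterministic perturbation bound extracted from Lemma~\ref{lem:orthoprojcomplement} and Lemma~\ref{lem:RIPBounds}.

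For \eqref{eq:ExpectOverJRIPBoundGramDJOffdiag}, I would expand the squared Frobenius norm over off-diagonal entries only, writing
\[
\|[\Dbo]_\J^\top[\Dbo]_\J - \Ib\|_\fro^2 = \sum_{i \neq j} \delta(i)\delta(j)\, \big([\dbo^i]^\top \dbo^j\big)^2,
\]
and then take expectation using the hypothesis $\Exp\{\delta(i)\delta(j)\} = k(k-1)/(p(p-1))$ for $i \neq j$. Equation~\eqref{eq:ExpectOverJNormvJ} is more direct: $\|\vb_\J\|_2^2 = \sum_i \delta(i) \vb_i^2$, so linearity of expectation together with $\Exp\{\delta(i)\} = k/p$ and $\|\vb\|_2 = 1$ gives the result.

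For the cross term \eqref{eq:ExpectCrossOffDiagV}, the main idea is to split
\[
\Db_\J^\top(t')\Db_\J(t') - \Ib = \big([\Dbo]_\J^\top[\Dbo]_\J - \Ib\big) + \big(\Db_\J^\top(t')\Db_\J(t') - [\Dbo]_\J^\top[\Dbo]_\J\big)
\]
and handle the two pieces separately by the triangle inequality applied inside the expectation. For the unperturbed piece, Cauchy--Schwarz combined with \eqref{eq:ExpectOverJRIPBoundGramDJOffdiag} and \eqref{eq:ExpectOverJNormvJ} produces exactly the first term of the stated bound, with the factor $\tfrac{k}{p}\sqrt{(k-1)/(p-1)}$ arising from $\sqrt{\tfrac{k(k-1)}{p(p-1)} \cdot \tfrac{k}{p}}$. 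For the perturbation piece, I would write
\[
\Db_\J^\top(t')\Db_\J(t') - [\Dbo]_\J^\top[\Dbo]_\J = \Db_\J^\top(t')\big(\Db_\J(t') - [\Dbo]_\J\big) + \big(\Db_\J(t') - [\Dbo]_\J\big)^\top [\Dbo]_\J,
\]
apply submultiplicativity $\|AB\|_\fro \leq \triple A \triple_2 \|B\|_\fro$, use $\triple \Db_\J(t') \triple_2 \leq C_t$ and $\triple [\Dbo]_\J \triple_2 \leq C_t$ from Lemma~\ref{lem:RIPBounds} (applied at $t'$ and at $0$, the latter being smaller), and invoke the deterministic bound $\|[\Db(t')-\Dbo]_\J\|_\fro \leq t\, \|\vb_\J\|_2$ from Lemma~\ref{lem:orthoprojcomplement}. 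This yields the pointwise (in $\J$) estimate $\|\Db_\J^\top(t')\Db_\J(t') - [\Dbo]_\J^\top[\Dbo]_\J\|_\fro \leq 2 C_t\, t\, \|\vb_\J\|_2$.

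There is essentially no obstacle beyond bookkeeping. The only subtlety worth flagging is that the perturbation estimate above should be used \emph{pointwise} in $\omega$ (i.e., before averaging over $\J$) so that the residual expectation becomes $\Exp\{2 C_t t \|\vb_\J\|_2 \cdot \|\vb_\J\|_2\} = 2 C_t t \cdot \Exp\{\|\vb_\J\|_2^2\} = 2 C_t t \cdot k/p$ by \eqref{eq:ExpectOverJNormvJ}. Invoking Cauchy--Schwarz at this stage would produce the slower $\sqrt{k/p}$ rate and lose the clean $k/p$ factor demanded by the statement.
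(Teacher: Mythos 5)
Your proposal is correct and follows essentially the same route as the paper: the first two identities by direct expansion and linearity, and the third by the pointwise triangle split $\|\Db_\J^\top(t')\Db_\J(t')-\Ib\|_\fro \leq \|[\Dbo]_\J^\top[\Dbo]_\J-\Ib\|_\fro + 2C_t t\|\vb_\J\|_2$ (via Lemmas~\ref{lem:orthoprojcomplement} and~\ref{lem:RIPBounds}), followed by Cauchy--Schwarz only on the unperturbed term. Your flag about applying the perturbation estimate pointwise rather than via Cauchy--Schwarz is exactly the move the paper makes to retain the $k/p$ factor.
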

\begin{proof}
To obtain~\eqref{eq:ExpectOverJRIPBoundGramDJOffdiag} and~\eqref{eq:ExpectOverJNormvJ} we simply expand
\begin{eqnarray*}
\Exp \{\|[\Dbo]_{\J}^{\top}[\Dbo]_{\J}-\Ib\|_{\fro}^{2} \}
&=&
\Exp\Big\{\sum_{i \in \SET{p}} \sum_{j \in \SET{p}, j \neq i} \delta(i)\delta(j) \cdot [\dbo^{i}]^{\top}\dbo^{j}\Big\}
=
\sum_{i \in \SET{p}} \sum_{j \in \SET{p}, j \neq i} \frac{k(k-1)}{p(p-1)} \cdot [\dbo^{i}]^{\top}\dbo^{j}\\
\Exp\{\|\vb_{\J}\|_{2}^{2}\}
&=&
\Exp\Big\{\sum_{i \in \SET{p}} \delta(i) \cdot \vb_{i}^{2}\Big\}
=
\sum_{i \in \SET{p}} \frac{k}{p} \vb_{i}^{2} = \frac{k}{p} \cdot \|\vb\|_{2}^{2} = \frac{k}{p}.
\end{eqnarray*}
Now, by Lemma~\ref{lem:RIPBounds} and the Cauchy-Schwartz inequality for random variables
\begin{eqnarray*}
\Exp\{\| \Db_{\J}^{\top}(t')\Db_{\J}(t')-\Ib\|_{\fro} \cdot \|\vb_{\J}\|_{2}\}
& \leq &
\Exp\{ \| [\Dbo]_{\J}^{\top}[\Dbo]_{\J}-\Ib\|_{\fro} \cdot \|\vb_{\J}\|_{2}\}+ 2 \cdot C_{t} \cdot t \cdot \Exp\{\|\vb_{\J}\|_{2}^{2}\}\\
& \leq &
\sqrt{\Exp\{ \| [\Dbo]_{\J}^{\top}[\Dbo]_{\J}-\Ib\|_{\fro}^{2}\}} \cdot \sqrt{\Exp\{\|\vb_{\J}\|_{2}^{2}\}} + 2 \cdot C_{t} \cdot t \cdot \frac{k}{p}\\
& \leq &
\| \Dbo^{\top}\Dbo-\Ib\|_{\fro} \cdot \sqrt{\frac{k(k-1)}{p(p-1)}} \cdot \sqrt{\frac{k}{p}} + 2 \cdot C_{t} \cdot t \cdot \frac{k}{p}
\end{eqnarray*}
\end{proof}
\section{Proof of Proposition~\ref{prop:maindeltaphi}}~\label{sec:proof_control_difference_phi}
In this section, we establish the results required to lower bound $\Delta\Phi_n(\Wb,\vb,t)$. 
We denote
\begin{equation}
\label{eq:DefDeltaPhiXi}
\Delta \phi_{\xb^{i}}(\Wb,\vb,t) \defin \phi_{\xb^{i}}(\Db(\Wb,\vb,t)|\sbo^{i})-\phi_{\xb^{i}}(\Dbo|\sbo^{i}).
\end{equation} 
The overall approach consists of the following steps:

\begin{enumerate}
\item {\bf Concentration around the expectation:} 
\begin{lemma}\label{lem:concentrationmain}
Under our signal model, for any $\Wb \in \Wcal_{\Dbo}$, $\vb \in \Scal^{p}$, $\tau \in [0,\sqrt{n}]$, we have
\begin{equation}\label{eq:concentrationmain}
\Pr\Big( 
\Delta\Phi_n(\Wb,\vb,t) <
\Exp\{\Delta\phi_\xb(\Wb,\vb,t)\}
-c(t) \frac{\tau}{\sqrt{n}}
\Big) \leq 
2 \cdot \exp(-\tau^{2})
\end{equation}
with 
\begin{equation} 
\label{eq:concentrationmain_cst}
c(t) \defin
102 \cdot \left(t^{2}  \sigma_{\alpha}^{2} + 2m  \sigma^{2}+ 2\lambda k\sigma_{\alpha}\right)
\end{equation}
\end{lemma}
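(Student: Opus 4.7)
The plan is to apply a sub-exponential concentration inequality to the i.i.d.\ sum
\[
\Delta\Phi_n(\Wb,\vb,t) = \tfrac{1}{n}\sum_{i=1}^{n}\Delta\phi_{\xb^{i}}(\Wb,\vb,t),
\]
after establishing a sharp bound on the sub-exponential scale of the single-sample term. First, I expand $\Delta\phi_\xb$ using the closed form from Definition~\ref{def:phi}: since $\|\xb\|_{2}^{2}$ cancels in the difference, one obtains
\[
\Delta\phi_\xb(\Wb,\vb,t) = \tfrac{1}{2}\bigl[\zb_{0}^{\top}\ThetaJb(0)\zb_{0} - \zb_{t}^{\top}\ThetaJb(t)\zb_{t}\bigr], \qquad \zb_{s} \defin \Db_{\J}(s)^{\top}(\Dbo\alphabo + \varepsilonb) - \lambda \sbo_{\J},
\]
where $\sbo = \sign(\alphabo)$ and $\xb = \Dbo\alphabo + \varepsilonb$. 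This rewrites as a sum of quadratic and bilinear forms in the three independent blocks $(\alphabo_{\J},\varepsilonb,\lambda\sbo_{\J})$ with coefficient matrices built from $\Db_{\J}(s)^{\top}\Dbo$, $\Db_{\J}(s)^{\top}$, and $\ThetaJb(s)$.

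Second, I bound the relevant operator and Frobenius norms of those matrices. Under the standing assumption $k\mu(t) \leq 1/2$, Lemmas~\ref{lem:RIPBounds} and~\ref{lem:CoherenceBounds} give $\triple\Db_{\J}(s)\triple_{2} \leq C_{t}$ and $\triple\ThetaJb(s)\triple_{2} \leq C_{t}^{2}$, both $O(1)$. The crucial point is that, by Lemma~\ref{lem:orthoprojcomplement}, the \emph{difference} $\Db_{\J}(t) - \Db_{\J}(0)$ has Frobenius norm $\leq t\|\vb_{\J}\|_{2}$; writing
\[
\zb_{0}^{\top}\ThetaJb(0)\zb_{0} - \zb_{t}^{\top}\ThetaJb(t)\zb_{t} = \zb_{0}^{\top}\bigl(\ThetaJb(0)-\ThetaJb(t)\bigr)\zb_{0} + (\zb_{0}-\zb_{t})^{\top}\ThetaJb(t)(\zb_{0}+\zb_{t}),
\]
each piece carries one explicit factor of $t$ or a matrix of norm $O(t)$. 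Consequently, the sub-exponential scale of $\Delta\phi_\xb$ involves $t^{2}\sigma_\alpha^{2}$ (from the pure $\alphabo$ quadratic form), $m\sigma^{2}$ (from the $\varepsilonb$ quadratic form), and $\lambda k\sigma_\alpha$ (from the bilinear cross-terms with $\lambda\sbo_{\J}$), rather than a much coarser, $t$-independent parameter of order $\|\xb\|_{2}^{2}$.

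Third, I invoke the Hanson--Wright inequality for the quadratic-in-$\alphabo$ and quadratic-in-$\varepsilonb$ pieces, together with standard sub-Gaussian tail bounds for the bilinear and linear pieces, to conclude that the centered variable $\Delta\phi_\xb - \Exp\{\Delta\phi_\xb\}$ is sub-exponential with both variance proxy $\nu^{2}$ and sub-exponential norm $b$ controlled by a constant multiple of $A(t) \defin t^{2}\sigma_\alpha^{2} + 2m\sigma^{2} + 2\lambda k\sigma_\alpha$. Bernstein's inequality for the i.i.d.\ empirical mean then yields, with $u = c(t)\tau/\sqrt{n}$,
\[
\Pr\bigl(\Delta\Phi_n - \Exp\{\Delta\phi_\xb\} < -u\bigr) \leq 2\exp\!\Bigl(-\tfrac{n}{2}\min\bigl(u^{2}/\nu^{2},\, u/b\bigr)\Bigr).
\]
The restriction $\tau \leq \sqrt{n}$ ensures the sub-Gaussian regime $u^{2}/\nu^{2} \leq u/b$ is active, and choosing $c(t)$ as a sufficiently large constant multiple of $A(t)$ (the prefactor $102$ being mere bookkeeping from combining the handful of sub-exponential contributions) delivers the announced bound $2\exp(-\tau^{2})$.

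The main obstacle is the matrix-norm bookkeeping in steps two and three: any decomposition that does not explicitly surface the cancellation $\Db_{\J}(t) - \Db_{\J}(0) = O(t)$ would yield a sub-exponential parameter of order $\triple\Dbo\triple_{2}^{2}\sigma_\alpha^{2} + m\sigma^{2}$, independent of $t$, which is far too crude to produce the sharp $t^{2}\sigma_\alpha^{2}$ dependence subsequently required in Theorem~\ref{th:control_Fn}. Once the difference-of-quadratic-forms expansion above is adopted, the remaining work is routine: apply Hanson--Wright to the \emph{difference} matrices (which now carry the extra factor of $t$) rather than to the full Gram matrices.
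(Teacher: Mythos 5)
Your plan differs substantively from the paper's proof, and has a genuine gap. The paper makes no use of the bilinear/quadratic expansion of $\Delta\phi_\xb$ and invokes no Hanson--Wright-type inequality. Instead it denotes $y^i(s) \defin \phi_{\xb^i}(\Db(\Wb,\vb,s)\,|\,\sbo^i)$, observes that $y^i(s) \leq \Lcal_{\xb^i}(\Db(\Wb,\vb,s),\alphabo^i)$ by definition of $\phi$, uses Lemma~\ref{lem:suprxi} to show the right-hand side has sub-exponential tail parameter $A_\Lcal(s) \propto s^2\sigma_\alpha^2 + m\sigma^2 + \lambda k\sigma_\alpha$ (the $s^2$ factor arising directly from $\|\xb - \Db(s)\alphabo\|_2^2 = \|(\Dbo - \Db(s))\alphabo + \varepsilonb\|_2^2$ with $\|\Dbo-\Db(s)\|_\fro \leq s$ --- no decomposition of the difference $y(t)-y(0)$ is needed to capture this), and then applies Corollary~\ref{cor:concentration_subgaussian} (with $\kappa=1$) separately to $\{y^i(t)\}_i$ and $\{y^i(0)\}_i$, finishing with a union bound. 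So the ``main obstacle'' you identify is sidestepped by the paper in a much more elementary way.

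The gap in your route: you treat $(\alphabo_\J, \varepsilonb, \lambda\sbo_\J)$ as ``three independent blocks,'' but $\sbo = \sign(\alphabo)$ is a deterministic function of $\alphabo$, so $\alphabo_\J$ and $\sbo_\J$ are not independent. The bilinear pieces $\zeta_{\sb,\alphab}$ and $\zeta_{\sb,\varepsilonb}$ appearing in the expansion of $\Delta\phi_\xb$ (cf.~Lemma~\ref{lem:delta_phi}) are therefore not covered by vanilla Hanson--Wright nor by standard tail bounds for bilinear forms in independent sub-Gaussian vectors; you would need a decoupling step or a diagonal/off-diagonal split exploiting $\sbo_j\alphabo_j = |\alphabo_j|$, and you supply neither. (The paper does handle this dependence correctly, but only in the \emph{expectation} computation of Lemma~\ref{lem:expectationmain}, which is a separate step.) Independently, your two-term decomposition $\zb_0^\top(\Theta_\J(0)-\Theta_\J(t))\zb_0 + (\zb_0-\zb_t)^\top\Theta_\J(t)(\zb_0+\zb_t)$ does not produce the claimed scale $t^2\sigma_\alpha^2 + 2m\sigma^2 + 2\lambda k\sigma_\alpha$: each term carries exactly one factor of $t$ multiplying quantities of order $\|\zb_s\|_2^2 \sim k\sigma_\alpha^2 + m\sigma^2 + \lambda^2 k$, which is a genuinely different scaling; the attribution of the summands of $c(t)$ to specific pieces reads as back-fitted to the target constant rather than derived from the decomposition you wrote.
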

\item {\bf Control of the Lipschitz constant:} the second step consists in showing that $(\Wb,\vb) \mapsto \Delta\Phi_n(\Wb,\vb,t)$ is Lipschitz with controlled constant with respect to the metric
\begin{equation}
\label{eq:DefMetric}
d\big((\Wb,\vb),(\Wb',\vb')\big) 
\defin 
\max\left\{  \max_{j\in \SET{p}}\| \wb^{j} -\wb'^{j} \|_2\ ,\  \| \vb -\vb' \|_2\right\}.
\end{equation}
\begin{lemma}\label{lem:lipschitzmain}
Assume that $t < \sqrt{1-\delta_k(\Dbo)}$. Under our signal model we have for any $\tau \in [0, \sqrt{n}]$, except with probability at most $2\exp(-\tau^{2})$: for all $(\Wb,\vb)$ and $(\Wb',\vb')$
\begin{equation*}
\label{eq:lipschitzmain}
\big|\Delta\Phi_{n}(\Wb,\vb,t) - \Delta\Phi_{n}(\Wb',\vb',t)\big|
\leq
L \cdot \left(1+\frac{4\tau}{\sqrt{n}}\right) \cdot d\big((\Wb,\vb),(\Wb',\vb')\big).
\end{equation*}
where 
\begin{equation}
\label{eq:lipschitzmain_cst}
L \defin 30 C_{t}^{3} \cdot t  \cdot \left[5(k\sigma_{\alpha}^{2}+m\sigma^{2})+\lambda^{2}k\right] 
\end{equation}
\end{lemma}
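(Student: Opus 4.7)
The plan is to control the Lipschitz constant of $(\Wb,\vb) \mapsto \Delta\Phi_{n}(\Wb,\vb,t)$ via a uniform bound on the partial derivatives of $\Phi_{n}(\Db(\Wb,\vb,t) \mid \sign(\Abo))$ --- only the first term in~\eqref{eq:DefDeltaPhin} depends on $(\Wb,\vb)$ --- and then integrate along a straight-line path in the ambient parameter space. Starting from the closed form~\eqref{eq:phi} applied to $\Db=\Db(\Wb,\vb,t)$ with support $\Ji=\support(\sbo^{i})$, I would differentiate using the basic ingredients
$$
\partial_{\wb^{j}} \db^{j}(\Wb,\vb,t) = \sin(\vb_{j}t)\,\Ib, \qquad \partial_{\vb_{j}} \db^{j}(\Wb,\vb,t) = t\bigl[-\sin(\vb_{j}t)\,\dbo^{j} + \cos(\vb_{j}t)\,\wb^{j}\bigr],
$$
combined with $\partial(\Ab^{-1}) = -\Ab^{-1}(\partial\Ab)\Ab^{-1}$ applied to $\ThetaJib(t)$. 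In both cases a factor $t$ emerges, either directly or through $|\sin(\vb_{j}t)| \leq |\vb_{j}|\,t \leq t$; this produces the linear factor $t$ in $L$.

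\paragraph{Pointwise norm bounds.} After the computation, each partial derivative is built from $\Db_{\Ji}(t)$, $\ThetaJib(t)$, $\xb^{i}$ and $[\sbo^{i}]_{\Ji}$. I would bound them via Lemma~\ref{lem:RIPBounds} ($\triple \Db_{\Ji}(t)\triple_{2} \leq C_{t}$, $\triple \ThetaJib(t)\triple_{2} \leq C_{t}^{2}$) together with the signal-model estimates
$$
\|\xb^{i}\|_{2} \leq \sqrt{1+\delta_{k}(\Dbo)}\,\|\alphabo^{i}\|_{2} + \|\varepsilonb^{i}\|_{2}, \quad \|\Db_{\Ji}(t)^{\top}\xb^{i}\|_{2} \leq C_{t}\bigl(\sqrt{1+\delta_{k}(\Dbo)}\,\|\alphabo^{i}\|_{2} + \|\varepsilonb^{i}\|_{2}\bigr),
$$
which use crucially that $\support(\alphabo^{i})=\Ji$. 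Aggregating contributions of all $j \in \Ji$ in the norms induced by $d$ (summing $\|\partial_{\wb^{j}}\phi\|_{2}$, and taking the Euclidean norm of $(\partial_{\vb_{j}}\phi)_{j}$) and applying Young's inequality to merge cross terms involving $\lambda$, I expect an estimate of the form
$$
\sum_{j \in \Ji} \|\partial_{\wb^{j}} \phi_{\xb^{i}}\|_{2} + \Bigl(\sum_{j \in \Ji} (\partial_{\vb_{j}} \phi_{\xb^{i}})^{2}\Bigr)^{1/2} \leq c \cdot t \cdot C_{t}^{3} \cdot \bigl(\|\alphabo^{i}\|_{2}^{2} + \|\varepsilonb^{i}\|_{2}^{2} + \lambda^{2}k\bigr),
$$
for a universal constant $c$. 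Integrating along a straight path then yields a local Lipschitz estimate for each $\Delta\phi_{\xb^{i}}$ in the metric $d$.

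\paragraph{Concentration and conclusion.} Averaging over $i \in \SET{n}$, the empirical Lipschitz constant is controlled by $\tfrac{1}{n}\sum_{i=1}^{n}\bigl(\|\alphabo^{i}\|_{2}^{2} + \|\varepsilonb^{i}\|_{2}^{2}\bigr)$. Since the entries of $\alphabo^{i}$ and $\varepsilonb^{i}$ are sub-Gaussian with parameters $\sigma_{\alpha}$ and $\sigma$, the squared norms are sub-exponential with expectations at most $k\sigma_{\alpha}^{2}$ and $m\sigma^{2}$. A Bernstein-type concentration inequality then gives, except with probability at most $2\exp(-\tau^{2})$ and for $\tau \in [0,\sqrt{n}]$,
$$
\tfrac{1}{n}\sum_{i=1}^{n}\|\alphabo^{i}\|_{2}^{2} \leq k\sigma_{\alpha}^{2}\bigl(1 + c'\tau/\sqrt{n}\bigr), \qquad \tfrac{1}{n}\sum_{i=1}^{n}\|\varepsilonb^{i}\|_{2}^{2} \leq m\sigma^{2}\bigl(1 + c'\tau/\sqrt{n}\bigr).
$$
Combining this with the previous step yields the announced bound with $L = 30\,C_{t}^{3}\cdot t \cdot \left[5(k\sigma_{\alpha}^{2}+m\sigma^{2}) + \lambda^{2}k\right]$, after absorbing the universal constants into the factor $4\tau/\sqrt{n}$.

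\paragraph{Main obstacle.} The main technical difficulty is the careful algebraic bookkeeping of the gradient computation, in particular tracking the terms arising from differentiating $\ThetaJib(t) = [\Db_{\Ji}(t)^{\top}\Db_{\Ji}(t)]^{-1}$, and matching the explicit constants (the $30$ and the coefficient $5$) in the statement. By contrast, uniformity over $(\Wb,\vb)$ is essentially free here: the pointwise gradient bound depends only on $t, k, \lambda, C_{t}$ and the random quantities $\|\alphabo^{i}\|_{2}, \|\varepsilonb^{i}\|_{2}$, so no $\epsilon$-net argument is required, in contrast with Lemma~\ref{lem:concentrationmain}.
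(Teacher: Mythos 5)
Your approach is close in spirit to the paper's --- both are derivative/Taylor arguments --- but organized differently, and one step needs care. The paper does not differentiate $\phi$ in $(\Wb,\vb)$ directly. It instead establishes \emph{operator-level} Lipschitz bounds $\triple \PJb-\PJb'\triple_2$, $\triple\ThetaJb\Db_\J^\top - \ThetaJb'\Db_\J'^\top\triple_2$, $\triple\ThetaJb-\ThetaJb'\triple_2 \lesssim t\,C_t^{3}\, d\big((\Wb,\vb),(\Wb',\vb')\big)$ (Lemma~\ref{lem:BoundDeltaPJThetaJ}), and then plugs them into the explicit expansion~\eqref{eq:DevelDeltaPhi1} of $\Delta\phi$, so that the $n$-averaged prefactors are exactly $\frac1n\sum\|\xb^i\|_2^2$, $\frac1n\sum\|\xb^i\|_2$ and $\lambda^2 k$. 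Crucially, Lemma~\ref{lem:BoundDeltaPJThetaJ} is proven by re-parameterizing $\Db_2 = \Db(\Db_1,\Wb,\vb,t')$ along a \emph{geodesic} (Lemma~\ref{lem:paramonto}), which keeps every intermediate dictionary inside $\Dcal$ at distance at most $t$ from $\Dbo$, so the bounds of Lemma~\ref{lem:RIPBounds} apply uniformly along the path. Your plan is to integrate the ambient gradient along a \emph{straight line} in $\Real^{m\times p}\times\Real^p$, while invoking Lemma~\ref{lem:RIPBounds} for the pointwise bounds. That lemma is only stated for $(\Wb,\vb)\in\Wcal_{\Dbo}\times\Scal^p$, a non-convex set: on the straight segment you have $\|\wb^j(s)\|_2\le 1$, $\wb^j(s)\perp\dbo^j$, $\|\vb(s)\|_2\le 1$ but not equality. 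It so happens that the crucial estimate $\|[\Db(\Wb,\vb,t)-\Dbo]_\J\|_\fro\le t\|\vb_\J\|_2$ (and hence the $C_t$-bounds) only \emph{improves} under those relaxed constraints, so the argument can be repaired --- but this has to be checked explicitly; citing Lemma~\ref{lem:RIPBounds} off-manifold as stated is a gap. Finally, the concentration step is essentially the same: the paper applies Lemma~\ref{lem:l2norm_signal} plus Corollary~\ref{cor:concentration_subgaussian} (with $\kappa=1$) to $\|\xb^i\|_2^2$ and $\|\xb^i\|_2$, union-bounding to get $2\exp(-\tau^2)$, whereas you fold the cross term via Young's inequality and concentrate $\|\alphabo^i\|_2^2+\|\varepsilonb^i\|_2^2$ --- equivalent information, and your remark that no $\epsilon$-net is needed here is correct (the paper defers the net to Lemma~\ref{lem:epsnetmain}).
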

\item {\bf $\epsilon$-net argument:} combining Lemmata~\ref{lem:concentrationmain}-\ref{lem:lipschitzmain} together with an estimate of the size of an $\epsilon$-net of $\Wcal \times \Scal^{p}$ with respect to the considered metric, we obtain
\begin{lemma}\label{lem:epsnetmain}
Assume that $t < \sqrt{1-\delta_{k}(\Dbo)}$ and that $C_{t} \leq 1.5$. Under our signal model, and assuming that 
\begin{equation*}
\label{eq:epsnet_sizen}
\frac{n}{\log n} \geq mp
\end{equation*}
we have, except with probability at most 
$\left(\frac{mpn}{9}\right)^{-mp/2}$,
\begin{equation*}\label{eq:epsnetmain}
\inf_{\Wb \in \Wcal_{\Dbo},\vb \in \Scal^{p}}
\Delta\Phi_n(\Wb,\vb,t) \geq
\inf_{\Wb \in \Wcal_{\Dbo},\vb \in \Scal^{p}} \Exp\{\Delta\phi_\xb(\Wb,\vb,t)\}
-B \cdot \sqrt{mp \frac{\log n}{n}}.
\end{equation*}
with 
\begin{equation} 
\label{eq:epsnetmain_cst}
B \defin
3045 \left( k\sigma_{\alpha}^{2} \cdot t+2 m\sigma^{2} + \lambda k \sigma_{\alpha} +\lambda^{2}k \cdot t \right).
\end{equation}
\end{lemma}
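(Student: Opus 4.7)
The plan is a classical $\epsilon$-net discretization that splices together the pointwise concentration of Lemma~\ref{lem:concentrationmain} and the uniform Lipschitz estimate of Lemma~\ref{lem:lipschitzmain}. The three steps---covering, union bound on the net, transfer to the whole parameter space---are each routine; the actual work is balancing the scale $\epsilon$ of the net and the deviation parameter $\tau$ so that the concentration term dominates while the failure probability can still be absorbed into $(mpn/9)^{-mp/2}$.

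First I would bound the covering number of $\Wcal_{\Dbo}\times\Scal^p$ in the product metric $d$ from~(\ref{eq:DefMetric}). Each column $\wb^j$ of $\Wb \in \Wcal_{\Dbo}$ lives on the unit sphere of the hyperplane $[\dbo^j]^\perp \subset \Real^m$, which is $(m-1)$-dimensional, and $\vb$ lives on the $(p-1)$-dimensional sphere $\Scal^p$. A standard volumetric argument, applied component-wise in each of the $p$ factors of $\Wcal_{\Dbo}$ and once more for $\Scal^p$, then aggregated through the product structure of $d$, produces an $\epsilon$-net $\Ncal_\epsilon$ with $|\Ncal_\epsilon| \leq (3/\epsilon)^{mp}$ (the bound on the ambient dimension is $p(m-1)+(p-1) \leq mp$).

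Second, I would apply Lemma~\ref{lem:concentrationmain} at every $(\Wb',\vb') \in \Ncal_\epsilon$ with a common deviation parameter $\tau$, take a union bound, and intersect with the uniform Lipschitz event of Lemma~\ref{lem:lipschitzmain} (also with parameter $\tau$). Except on an event of probability at most $2(|\Ncal_\epsilon|+1)\exp(-\tau^2)$, the pointwise lower bound on $\Delta\Phi_n$ holds at every net point and the global Lipschitz estimate holds with constant $L(1+4\tau/\sqrt n)$. An analogous but deterministic computation (a variant of the proof of Lemma~\ref{lem:lipschitzmain} taken under the expectation) bounds the oscillation of $\Exp\{\Delta\phi_\xb\}$ between $\epsilon$-close parameters by the same type of constant times $\epsilon$. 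For an arbitrary $(\Wb,\vb)$, picking a nearest net point $(\Wb',\vb')$ and chaining the three estimates then yields
\[
\Delta\Phi_n(\Wb,\vb,t) \geq \Exp\{\Delta\phi_\xb(\Wb,\vb,t)\} - c(t)\,\tau/\sqrt n - 2L(1+4\tau/\sqrt n)\,\epsilon.
\]

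Finally, I would calibrate $\epsilon \asymp \sqrt{9/(mpn)}$ and $\tau^2 \asymp mp \log(mpn/9)$; these choices make $(3/\epsilon)^{mp} \exp(-\tau^2) \leq (mpn/9)^{-mp/2}$, and the hypothesis $n/\log n \geq mp$ forces $\tau/\sqrt n \lesssim 1$ so that $1+4\tau/\sqrt n = O(1)$. Both residual terms are then of order $\sqrt{mp \log n/n}$ times an expression in $(k,m,\sigma,\sigma_\alpha,\lambda,t)$, and the assumption $C_t \leq 3/2$ absorbs the $C_t^3$ factor appearing in $L$ into a harmless universal constant. The main obstacle, beyond producing a clean covering of the product of spheres in varying hyperplanes (a cleaner route is to cover the ambient unit Frobenius ball and project, at the cost of a factor already absorbed in the constant $3$), is the numerical bookkeeping required to match $c(t)\tau/\sqrt n + 2L(1+4\tau/\sqrt n)\epsilon$ against the explicit constant $3045$ advertised in~(\ref{eq:epsnetmain_cst}); in particular one must verify that the Lipschitz-in-expectation term is absorbed with the same prefactor, which amounts to redoing the calculation of Lemma~\ref{lem:lipschitzmain} with deterministic averages replacing empirical ones.
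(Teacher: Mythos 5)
Your plan follows the paper's own $\epsilon$-net strategy almost exactly: cover $\Wcal_{\Dbo}\times\Scal^{p}$ with a net of size $(3/\epsilon)^{mp}$, apply Lemma~\ref{lem:concentrationmain} at each net point with a union bound, intersect with the Lipschitz event of Lemma~\ref{lem:lipschitzmain}, and calibrate $\epsilon$ and $\tau$ so the failure probability collapses to $(mpn/9)^{-mp/2}$. There is, however, one superfluous detour in your chaining: you insist on the pointwise bound $\Delta\Phi_n(\Wb,\vb,t)\geq\Exp\{\Delta\phi_\xb(\Wb,\vb,t)\}-\mathrm{err}$, which forces you to bound the oscillation of $\Exp\{\Delta\phi_\xb\}$ (the ``Lipschitz-in-expectation'' step you flag as unfinished). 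The paper never needs this: since the target is only $\inf\Delta\Phi_n \geq \inf\Exp\{\Delta\phi_\xb\}-\mathrm{err}$, once you write $\Delta\Phi_n(\Wb,\vb,t)\geq\Delta\Phi_n(\Wb',\vb',t)-L(1+4\tau/\sqrt n)\epsilon$ and $\Delta\Phi_n(\Wb',\vb',t)\geq\Exp\{\Delta\phi_\xb(\Wb',\vb',t)\}-c(t)\tau/\sqrt n$ at the nearest net point $(\Wb',\vb')$, you may simply replace $\Exp\{\Delta\phi_\xb(\Wb',\vb',t)\}$ by $\inf_{\Wb,\vb}\Exp\{\Delta\phi_\xb(\Wb,\vb,t)\}$, sidestepping the second Lipschitz argument and the extra factor $2$ entirely. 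The paper also calibrates more cleanly by taking $\tau=\sqrt{mp\log n}$ and $\epsilon=\tau/\sqrt n$ (so that both error terms are multiples of the \emph{same} $\epsilon$, and $\epsilon\leq1$, $\tau\leq\sqrt n$ follow directly from $n/\log n\geq mp$), whereas your $\epsilon\asymp\sqrt{9/(mpn)}$ and $\tau^{2}\asymp mp\log(mpn/9)$ give the right order but need extra care to keep $\tau\leq\sqrt n$ and will not reproduce the explicit prefactor $3045 = 5\cdot 609$ that comes from $c(t)+L\leq B/5$ and $c(t)\tau/\sqrt n + L(1+4\tau/\sqrt n)\epsilon \leq 5(c(t)+L)\epsilon$ when $\epsilon\leq1$.
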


\item {\bf Control in expectation:} 
\begin{lemma}\label{lem:expectationmain}
Assume that $k\mu(t) < 1/2$. Under our signal model, we have
\begin{eqnarray}
\label{eq:expectationmain}
\inf_{\Wb \in \Wcal_{\Dbo},\vb \in \Scal^{p}}
\Exp \{\Delta\phi_\xb(\Wb,\vb,t)\} 
& \geq &
(1 - \Kcal^2) \cdot \frac {\Exp[ \alpha_{0}^{2}]}{2} 
\cdot  
\frac{k}{p} \cdot t^{2}\notag\\
&&
-Q_t^2 \cdot t \cdot \frac{k}{p} \cdot \triple \Dbo \triple_2 \cdot k\mu(t) \cdot \lambda \cdot \left(4 Q_{t}^2 \lambda + 3\Exp\{|\alpha_{0}|\} \right)\notag.
\end{eqnarray}
with $\Kcal \defin C_{t} \cdot (\triple \Dbo \triple_2 \cdot \sqrt{k/p} + t)$.
\end{lemma}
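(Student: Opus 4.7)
The plan is to expand $\phi_\xb$ using the closed form~(\ref{eq:phi}), compute the conditional expectation of $\Delta\phi_\xb$ under the signal model, identify three deterministic pieces depending on $(\Wb,\vb,\J)$, bound each separately, and only then take the infimum. Introducing the orthogonal projector $\PJb(t)\defin\Db_\J(t)\ThetaJb(t)\Db_\J(t)^\top$, one first rewrites
\[
\phi_\xb(\Db(t)|\sbo)=\tfrac12\|\xb\|_2^2-\tfrac12\xb^\top\PJb(t)\xb+\lambda\,\sbo_\J^\top\ThetaJb(t)\Db_\J(t)^\top\xb-\tfrac{\lambda^2}{2}\sbo_\J^\top\ThetaJb(t)\sbo_\J.
\]
Taking the conditional expectation over $(\alphabo,\varepsilonb)$ given $\J$, the moment identities of the signal model ($\Exp[\alphabo_\J\alphabo_\J^\top]=\Exp[\alpha_0^2]\Ib_k$, $\Exp[\varepsilonb\varepsilonb^\top]=\sigma^2\Ib_m$, $\Exp[\sbo_\J\sbo_\J^\top]=\Ib_k$, $\Exp[\sbo_i\alphabo_j]=\delta_{ij}\Exp[|\alpha_0|]$) collapse cross terms and cancel the $\sigma^2$ contribution to the quadratic piece since $\trace\PJb(0)=\trace\PJb(t)=k$, leaving the clean expression
\[
\Exp[\Delta\phi_\xb|\J]=\tfrac{\Exp[\alpha_0^2]}{2}\|(\Ib-\PJb(t))\Dbo_\J\|_\fro^2+\lambda\Exp[|\alpha_0|]\trace\bigl(\ThetaJb(t)\Db_\J(t)^\top(\Dbo_\J-\Db_\J(t))\bigr)-\tfrac{\lambda^2}{2}\trace\bigl(\ThetaJb(t)-\ThetaJb(0)\bigr).
\]

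\textbf{Main positive term.} The key geometric identity is the rotation decomposition $\dbo^j=\cos(\vb_j t)\db^j(t)-\sin(\vb_j t)\wb'^j$ with $\wb'^j\perp\db^j(t)$ of unit norm (as obtained inside the proof of Lemma~\ref{lem:paramonto}), which yields the clean formula $\|(\Ib-\PJb(t))\dbo^j\|_2^2=\sin^2(\vb_j t)[1-\|\PJb(t)\wb'^j\|_2^2]$. Since $[\db^j(t)]^\top\wb'^j=0$, Lemma~\ref{lem:RIPBounds} gives $\|\PJb(t)\wb'^j\|_2^2\leq C_t^2\|\Db_\J(t)^\top\wb'^j\|_2^2$; averaging over $\J$ via Lemma~\ref{lem:ExpectOverJ} and using $\triple\Db(t)\triple_2\leq\triple\Dbo\triple_2+t$ (a consequence of Lemma~\ref{lem:orthoprojcomplement}) leads to $\Exp_\J[\indicator{j\in\J}\|\PJb(t)\wb'^j\|_2^2]\leq(k/p)\Kcal^2$. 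This is the step where the $\sqrt{k/p}$ inside $\Kcal$ surfaces, through $\Pr(i\in\J|j\in\J)=(k-1)/(p-1)\leq k/p$. Combining with the Taylor-type inequality $\sum_j\sin^2(\vb_j t)\geq t^2(1-O(t^2))$ valid for $\|\vb\|_2=1$ (the lower-order correction being absorbed into $\Kcal^2$) delivers the announced $(1-\Kcal^2)\cdot(\Exp[\alpha_0^2]/2)\cdot(k/p)\cdot t^2$ contribution.

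\textbf{Cross and $\lambda^2$ terms.} For the cross term, the identity $\ThetaJb(t)\Db_\J(t)^\top\Db_\J(t)=\Ib_k$ reduces the trace to $\trace(\ThetaJb(t)\Db_\J(t)^\top\Dbo_\J)-k$, into which one substitutes $\Dbo_\J=\Db_\J(t)\Cb_\J-\Wb'_\J\Sb_\J$ (with diagonal matrices $\Cb_\J=\Diag(\cos(\vb_j t))$, $\Sb_\J=\Diag(\sin(\vb_j t))$); Cauchy--Schwarz on the resulting off-diagonal trace $\trace(\ThetaJb(t)\Db_\J(t)^\top\Wb'_\J\Sb_\J)$ combined with~(\ref{eq:inequality_mu0}) and the off-diagonal coherence estimates of Lemma~\ref{lem:CoherenceBounds} bounds it by a multiple of $Q_t^2\cdot t\cdot\triple\Dbo\triple_2\cdot k\mu(t)\cdot\|\vb_\J\|_2$, yielding the $3\Exp[|\alpha_0|]$ piece of the target after $\J$-averaging. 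For the $\lambda^2$ term, the perturbation identity $\ThetaJb(t)-\ThetaJb(0)=\ThetaJb(t)[\Db_\J(0)^\top\Db_\J(0)-\Db_\J(t)^\top\Db_\J(t)]\ThetaJb(0)$ reduces matters to a triple-product trace whose middle factor has zero diagonal and off-diagonal entries of order $\mu(t)(t|\vb_i|+t|\vb_j|)+O(t^2)$ (from the sine/cosine expansion of $\Db_\J(t)^\top\Db_\J(t)$); together with $\triple\ThetaJb(\cdot)\triple_2\leq Q_t^2$, this gives the $4Q_t^4\lambda^2$ piece after $\J$-averaging.

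\textbf{Obstacles.} Since each bound is uniform in $(\Wb,\vb)\in\Wcal_{\Dbo}\times\Scal^p$, taking the infimum costs nothing. The subtle point is the emergence of the $\sqrt{k/p}$ factor inside $\Kcal$: no pointwise estimate of $\|\PJb(t)\wb'^j\|_2$ can produce it (the naive bound is only $\leq 1$), and it only surfaces through the $(k-1)/(p-1)$ factor supplied by Lemma~\ref{lem:ExpectOverJ}. A secondary difficulty lies in the careful bookkeeping of off-diagonal interactions in the cross and $\lambda^2$ terms, so as to obtain the refined $\triple\Dbo\triple_2\cdot k\mu(t)$ dependence rather than the cruder $k\mu(t)$ that a naive spectral-norm trace inequality would yield.
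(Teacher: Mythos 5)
Your setup — reducing to traces, killing the noise quadratic via $\trace(\PJb(0)-\PJb(t))=0$, and the expectations yielding $\Exp[\alpha_0^2]$, $\Exp[|\alpha_0|]$ and $\lambda^2$ pieces — matches the paper. For the leading positive term, your $\sin$-based decomposition $\dbo^j=\cos(\vb_jt)\db^j(t)-\sin(\vb_jt)\wb'^j$ is equivalent to the paper's $\Dbo=\Db(t)\Cb^{-1}-\Wb\Tb$ up to a $\Cb^{-1}$ factor; the paper's choice of $\Wb\Tb$ (tangents) is slightly cleaner because it yields $\sum_j\tan^2(\vb_jt)\geq t^2$ directly, whereas $\sum_j\sin^2(\vb_jt)$ is only $\geq t^2(1-t^2/3)$, and this correction cannot be ``absorbed into $\Kcal^2$'' since $\Kcal$ is fixed as $C_t(\triple\Dbo\triple_2\sqrt{k/p}+t)$. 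This is fixable but you need to say how.

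The genuine gap is in the cross term and the $\lambda^2$ term. Your strategy is to bound the per-$\J$ trace $\trace(\ThetaJb(t)\Db_\J(t)^\top\Wb'_\J\Sb_\J)$ by entrywise Cauchy--Schwarz plus the coherence estimates, obtaining something of the form $Q_t^2\cdot t\cdot\triple\Dbo\triple_2\cdot k\mu(t)\cdot\|\vb_\J\|_2$, and then $\J$-average. Two things fail. First, $\Exp_\J[\|\vb_\J\|_2]\leq\sqrt{k/p}$, so this route can at best give a final factor $\sqrt{k/p}$, not the $k/p$ the lemma asserts; you lose a factor $\sqrt{p/k}$. Second, the entrywise approach doesn't produce a $\triple\Dbo\triple_2$ factor at all — bounding $|[\Db_\J(t)^\top\Wb'_\J]_{ji}|\leq 1$ and $|[\ThetaJb(t)]_{ij}|\leq\mu(t)Q_t^2$ per off-diagonal entry and summing gives extra factors of $k$ or $\sqrt{k}$, not $\triple\Dbo\triple_2$. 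The paper's key device, which your sketch lacks, is to move the $\J$-average \emph{inside} the trace before estimating: using $\Exp_\J[\trace(\ThetaJb(t)\Db_\J(t)^\top\Mb_\J)]=\trace(\Ub(t)\Db(t)^\top\Mb)$ with the $p\times p$ matrix $\Ub(t)\defin\Exp_\J[\Pib_\J\ThetaJb(t)\Pib_\J^\top]$, then applying the self-adjoint column-projection operator $\Gamma_{\Db(t)}$ (which is the identity on matrices $\Mb$ whose columns are orthogonal to the corresponding atoms, and which kills the diagonal of $\Ub$ when applied to $\Db(t)\Ub$). The off-diagonal entries of $\Ub(t)$ then scale as $\frac{k(k-1)}{p(p-1)}\cdot\mu(t)Q_t^2$ — the averaging supplies an \emph{extra} $k/p$ factor relative to a per-$\J$ argument — giving $\|\Ub_{\mathrm{off}}\|_\fro\leq\frac{k}{p}k\mu(t)Q_t^2$; the $\triple\Db(t)\triple_2$ factor enters via $\|\Gamma_{\Db(t)}(\Db(t)\Ub)\|_\fro\leq\triple\Db(t)\triple_2\|\Ub_{\mathrm{off}}\|_\fro$. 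Your claimed ``off-diagonal entries of order $\mu(t)(t|\vb_i|+t|\vb_j|)+O(t^2)$'' for the $\lambda^2$ term is also incorrect: the cross terms $\cos(\vb_it)\sin(\vb_jt)\,\dbo^i\!\cdot\!\wb^j$ in the expansion of $\Db_\J(t)^\top\Db_\J(t)-\Ib$ are $O(t)$ \emph{without} a $\mu(t)$ factor, so this expansion does not directly produce the $k\mu(t)$ dependence; the paper instead uses the integral of $\nabla_t\ThetaJb$ combined with the same $\Gamma$-operator/averaging argument.
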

\end{enumerate}
We obtain Proposition~\ref{prop:maindeltaphi} by combining Lemmata~\ref{lem:epsnetmain}-\ref{lem:expectationmain}.
We now proceed to the proof of these lemmata.

\subsection{Expansion of $\Delta\phi_{\xb}$} 
We expand $\Delta\phi_{\xb}$ into the sum of six terms.
\begin{lemma}\label{lem:delta_phi}
We have 
\begin{eqnarray}
\Delta\phi_{\xb}(t) 
&\defin& 
\phi_\xb(\Db(t)|\sb) - \phi_\xb(\Dbo|\sb)\notag\\
&=& 
\frac{1}{2} \xb^\top \big[ \PJb(0)  -  \PJb(t) \big] \xb 
 - \lambda \sb_{\J}^\top \big[ \ThetaJb(0) [\Dbo]_{\J}^\top  - \ThetaJb(t) [\Db(t)]_{\J}^\top \big] \xb \notag\\
&& +\frac{\lambda^2}{2} \sb_{\J}^\top \big[  \ThetaJb(0)   - \ThetaJb(t)  \big]  \sb_{\J}
 \label{eq:DevelDeltaPhi1}\\
 &=& 
\zeta_{\alphab,\alphab}(t) 
+ \zeta_{\alphab,\varepsilonb}(t) 
+\zeta_{\varepsilonb,\varepsilonb}(t) 
+\zeta_{\sb,\alphab}(t) 
+\zeta_{\sb,\varepsilonb}(t) 
+\zeta_{\sb,\sb}(t)\label{eq:DevelDeltaPhi2}
\end{eqnarray}
where
\begin{eqnarray}
\zeta_{\alphab,\alphab}(t) 
&\defin &\frac{1}{2} \alphabo^\top  \Dbo^\top  
\Big[\PJb(0) - \PJb(t)\Big] \Dbo \alphabo \label{eq:diff_alphaalpha}\\
\zeta_{\alphab,\varepsilonb}(t) 
&\defin& \alphabo^\top \Dbo^\top  
\Big[\PJb(0) - \PJb(t)\Big]\varepsilonb\label{eq:diff_alphaeps} \\
\zeta_{\varepsilonb,\varepsilonb}(t) 
&\defin&\frac{1}{2} \varepsilonb^\top \Big[\PJb(0) - \PJb(t)\Big]\varepsilonb\label{eq:diff_epseps}\\
\zeta_{\sb,\alphab}(t) 
&\defin&-\lambda [\sign(\alphabo)]_{\J}^\top 
\Big[ \ThetaJb(0) [\Dbo]_{\J}^\top - \ThetaJb(t) [\Db(t)]_{\J}^\top \Big] \Dbo \alphabo\label{eq:diff_signalpha}\\
\zeta_{\sb,\varepsilonb}(t) 
&\defin&-\lambda [\sign(\alphabo)]_{\J}^\top 
\Big[ \ThetaJb(0) [\Dbo]_{\J}^\top - \ThetaJb(t) [\Db(t)]_{\J}^\top \Big] \varepsilonb \label{eq:diff_signeps}\\
\zeta_{\sb,\sb}(t) 
&\defin&\frac{\lambda^2}{2} [\sign(\alphabo)]_{\J}^\top 
\Big[  \ThetaJb(0)  -  \ThetaJb(t) \Big] \sign(\alphabo)_{\J}\label{eq:diff_signsign}.
\end{eqnarray}
\end{lemma}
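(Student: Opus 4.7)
The statement is an algebraic identity, so the plan is a direct expansion from the closed-form expression for $\phi_\xb(\Db|\sb)$ given in Definition~\ref{def:phi}. Our assumptions implicitly require both $[\Dbo]_{\J}^\top [\Dbo]_{\J}$ and $[\Db(t)]_{\J}^\top [\Db(t)]_{\J}$ to be invertible, so that $\ThetaJb(0)$ and $\ThetaJb(t)$ are well defined; this will hold under the coherence/RIP assumptions invoked in Sec.~\ref{app:RIP} (Lemma~\ref{lem:RIPBounds}), but for the proof of the algebraic identity itself we simply take invertibility as a hypothesis.

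The first step is to apply~\eqref{eq:phi} at $\Db = \Db(t)$ and at $\Db = \Dbo$ and subtract. The term $\tfrac12 \|\xb\|_2^2$ cancels, leaving
\[
\Delta\phi_{\xb}(t) = \tfrac12 \left[ Q_{0}(\xb) - Q_{t}(\xb) \right],
\]
where $Q_{t'}(\xb)\defin ([\Db(t')]_{\J}^\top \xb - \lambda\sb_{\J})^\top \ThetaJb(t') ([\Db(t')]_{\J}^\top \xb - \lambda\sb_{\J})$ for $t' \in \{0,t\}$. Expanding each quadratic form and using the identity $[\Db(t')]_{\J}\, \ThetaJb(t')\, [\Db(t')]_{\J}^\top = \PJb(t')$ (the orthogonal projector onto $\mathrm{span}[\Db(t')]_{\J}$), I get
\[
Q_{t'}(\xb) = \xb^\top \PJb(t') \xb - 2\lambda \sb_{\J}^\top \ThetaJb(t') [\Db(t')]_{\J}^\top \xb + \lambda^{2} \sb_{\J}^\top \ThetaJb(t') \sb_{\J}.
\]
Substituting for $t' \in \{0,t\}$ and collecting terms reproduces exactly identity~\eqref{eq:DevelDeltaPhi1}.

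The second step is to plug the signal model $\xb = \Dbo \alphabo + \varepsilonb$ into~\eqref{eq:DevelDeltaPhi1} and split the result according to the bilinear structure. The symmetric quadratic-in-$\xb$ term $\tfrac12\xb^\top[\PJb(0)-\PJb(t)]\xb$ splits via $(\Dbo\alphabo + \varepsilonb)^\top M (\Dbo\alphabo + \varepsilonb) = \alphabo^\top\Dbo^\top M \Dbo\alphabo + 2\alphabo^\top\Dbo^\top M \varepsilonb + \varepsilonb^\top M \varepsilonb$ (with $M = \PJb(0)-\PJb(t)$ symmetric) into the three contributions $\zeta_{\alphab,\alphab}(t)$, $\zeta_{\alphab,\varepsilonb}(t)$ and $\zeta_{\varepsilonb,\varepsilonb}(t)$ defined in~\eqref{eq:diff_alphaalpha}--\eqref{eq:diff_epseps}. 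The linear-in-$\xb$ term $-\lambda\sb_{\J}^\top[\ThetaJb(0)[\Dbo]_{\J}^\top - \ThetaJb(t)[\Db(t)]_{\J}^\top]\xb$ similarly splits into $\zeta_{\sb,\alphab}(t) + \zeta_{\sb,\varepsilonb}(t)$ of~\eqref{eq:diff_signalpha}--\eqref{eq:diff_signeps}, and the pure $\sb_{\J}$ term is $\zeta_{\sb,\sb}(t)$ of~\eqref{eq:diff_signsign}. Summing these six pieces gives~\eqref{eq:DevelDeltaPhi2}.

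There is no genuine obstacle here, the only thing to be careful about is the sign bookkeeping and the order in which the subtraction at $t' = 0$ versus $t' = t$ is performed (so that the six $\zeta$'s appear with the correct signs matching~\eqref{eq:diff_alphaalpha}--\eqref{eq:diff_signsign}). The identity $\Db_{\J}\ThetaJb\Db_{\J}^\top = \PJb$ used in step~one is the single nontrivial ingredient and is immediate once $\Db_{\J}^\top\Db_{\J}$ is invertible.
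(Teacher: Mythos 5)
Your proposal is correct and follows essentially the same route as the paper's proof: expand the closed-form $\phi_\xb(\cdot|\sb)$ from Eq.~\eqref{eq:phi} at $t$ and at $0$, use $\Db_{\J}\ThetaJb\Db_{\J}^\top = \PJb$ to write the difference as~\eqref{eq:DevelDeltaPhi1}, then substitute $\xb = \Dbo\alphabo + \varepsilonb$ to split into the six bilinear pieces. The only cosmetic difference is that you make the invertibility hypothesis explicit; the paper leaves it implicit via the RIP/coherence assumptions discussed elsewhere.
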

\begin{proof}
Denoting $\sb = \sign(\alphabo)$ and $\J \subseteq \SET{p}$ the support of $\alphabo$, we have by definition (see Equation~\eqref{eq:phi}):
\begin{eqnarray}
\phi_\xb(\Db(t)|\sb) &=& 
\frac{1}{2} \big[ \|\xb\|_2^2 - ( [\Db(t)]_\J^\top \xb - \lambda \sb_\J)^\top 
([\Db(t)]_\J^\top [\Db(t)]_\J)^{-1} 
([\Db(t)]_\J^\top \xb -\lambda \sb_\J ) \big] \notag\\
&=& \frac{1}{2} \|\xb\|_2^2 - \frac{1}{2} \xb^\top \PJb(t) \xb + \lambda \sb_{\J}^\top  \ThetaJb(t) [\Db(t)]_{\J}^\top \xb 
- \frac{\lambda^2}{2} \sb_{\J}^\top \ThetaJb(t) \sb_{\J}.\label{eq:PhiExpansion}
\end{eqnarray}
This yields~\eqref{eq:DevelDeltaPhi1} and we conclude thanks to $\xb = \Dbo \alphabo + \varepsilonb = [\Dbo]_{\J} [\alphabo]_{\J} + \varepsilonb$.
\end{proof}

\subsection{Proof of Lemma~\ref{lem:concentrationmain}}\label{sec:concentration}
Fix $\Wb$ and $\vb$ and denote $y^{i}(t) \defin \phi_{\xb^{i}}(\Db(\Wb,\vb,t)|\sbo^{i})$. By definition of $\phi_{\xb}$ we have $y^{i}(t) \leq \Lcal_{\xb^{i}}(\Db(\Wb,\vb,t),\alphabo^{i})$ hence, using Lemma~\ref{lem:suprxi} we have for any $\tau \geq 1$ 
\[
\Pr(y^{i}(t) \geq A_\Lcal(t) \cdot \tau) \leq e^{-\tau}
\]
where
\[
A_\Lcal(t) \defin \frac{5(1+\log 2)}{2} \cdot \left(t^{2} \sigma_{\alpha}^{2} + m  \sigma^{2}+ \lambda k\sigma_{\alpha}\right).
\]
Hence, exploiting Corollary~\ref{cor:concentration_subgaussian} with $\kappa=1$ and $0\leq \tau \leq \sqrt{n}$, we obtain, 
\[
\Pr\left(\Big|\frac{1}{n} \sum_{i=1}^{n} \left( y^{i}(t) -\Exp\{ y^{i}(t)\}\right)\Big|
\geq 
24 A_\Lcal(t) \cdot \frac{\tau}{\sqrt{n}}
\right) 
\leq \exp(-\tau^{2}).
\]
Observing that $\Delta\Phi_{n}(\Wb,\vb,t) = \frac{1}{n} \sum_{i=1}^{n} (y^{i}(t)-y^{i}(0))$ we obtain, by a union bound,
\[
\Pr\left(
\Big|\Delta\Phi_{n}(\Wb,\vb,t) -\Exp\{ \Delta\Phi_{n}(\Wb,\vb,t)\}\Big|
\geq 
24 (A_\Lcal(t) +A_\Lcal(0))\cdot \frac{\tau}{\sqrt{n}}
\right) 
\leq 2\exp(-\tau^{2}).
\]
We conclude by expliciting
\[
24(A_\Lcal(t) +A_\Lcal(0)) = 60(1+\log 2)\left(t^{2}  \sigma_{\alpha}^{2} + 2m  \sigma^{2}+ 2\lambda k\sigma_{\alpha}\right) \leq c(t).
\]

\subsection{Proof of Lemma~\ref{lem:lipschitzmain}}
Given the expansion~\eqref{eq:DevelDeltaPhi1}, using the shorthands $\PJb = \PJb(\Wb,\vb,t)$ and $\PJb' = \PJb(\Wb',\vb',t)$, 
as well as for other similar quantities, 
and averaging over $n$,  we obtain
\begin{eqnarray*}
\Big|\Delta\Phi_{n}-\Delta\Phi_{n}'\Big|
& \leq &
\frac{1}{2n} \sum_{i=1}^{n} \|\xb^{i}\|_{2}^{2} 
\cdot 
\max_{i\in\SET{n}} \triple \PJib-\PJib'\triple_{2}
+
\frac{\lambda \sqrt{k}}{n} \sum_{i=1}^{n} \|\xb^{i}\|_{2}
\cdot
\max_{i\in\SET{n}} \triple \ThetaJib\Db-\ThetaJib'\Db'\triple_{2}\\
&&+
 \frac{\lambda^{2} k}{2} \cdot \max_{i\in\SET{n}} \triple \ThetaJib-\ThetaJib'\triple_{2} 
\end{eqnarray*}
Using Lemma~\ref{lem:BoundDeltaPJThetaJ} this yields the Lipschitz bound $\Big|\Delta\Phi_{n}-\Delta\Phi_{n}'\Big| \leq L_{n} \cdot d\big((\Wb,\vb),(\Wb',\vb')\big)$ with
\begin{eqnarray}
\label{eq:LipschitzBound1}
L_{n}
& \leq &
\frac{5}{2} \cdot t \cdot C_{t}^{3} \cdot
\left\{
\frac{1}{n} \sum_{i=1}^{n} \|\xb^{i}\|_{2}^{2} + 2\lambda\sqrt{k} \cdot \frac{1}{n} \sum_{i=1}^{n} \|\xb^{i}\|_{2} + \lambda^{2}k
\right\}
\end{eqnarray}
Using Lemma~\ref{lem:l2norm_signal} we check that $y^{i}=\|\xb^{i}\|_{2}^{2}$  satisfies the hypothesis (see Eq.~\eqref{eq:DefExponentialDecay}) of Lemma~\ref{lem:truncated_moments}  with $A = 5(k\sigma_{\alpha}^{2}+m\sigma^{2})$. Hence, exploiting Corollary~\ref{cor:concentration_subgaussian} with $\kappa=1$ and $0\leq \tau \leq \sqrt{n}$, we obtain, except with probability at most $2\exp(-\tau^{2})$
\begin{eqnarray*}
\frac{1}{n} \sum_{i=1}^{n} \|\xb^{i}\|_{2}^{2} 
&\leq& 
6 \cdot [5\cdot(k\sigma_{\alpha}^{2}+m\sigma^{2})] \cdot \left(1+\frac{4\tau}{\sqrt{n}}\right)\\
\frac{1}{n} \sum_{i=1}^{n} \|\xb^{i}\|_{2} 
&\leq& 
6 \cdot \sqrt{5 \cdot (k\sigma_{\alpha}^{2}+m\sigma^{2})} \cdot \left(1+\frac{4\tau}{\sqrt{n}}\right).
\end{eqnarray*}
Inserting the above estimates into~\eqref{eq:LipschitzBound1} yields, except with probability at most $2\exp(-\tau^{2})$, 
\begin{eqnarray*}
\label{eq:LipschitzBound2}
L_{n} &\leq &L' \cdot \left(1+\frac{4\tau}{\sqrt{n}}\right)\\
L'
& = &
\frac{30}{2} \cdot C_{t}^{3} \cdot t  \cdot
\left[\sqrt{5\cdot (k\sigma_{\alpha}^{2}+m\sigma^{2})}+\lambda\sqrt{k}\right]^{2}  
\leq 
30 C_{t}^{3} \cdot t  \cdot
\left[5(k\sigma_{\alpha}^{2}+m\sigma^{2})+\lambda^{2}k\right] \defin L\notag.
\end{eqnarray*}

\subsection{Proof of Lemma~\ref{lem:epsnetmain}}

The proof of Lemma~\ref{lem:epsnetmain} exploits the covering number $\Ncal$ of $\Wcal_{\Dbo} \times \Scal^p$ with respect to the metric~
\eqref{eq:DefMetric}. For background about covering numbers, we refer the reader to~\citet{Cucker2002} and references therein.
\begin{lemma}[$\epsilon$-nets for $\Wcal_{\Dbo} \times \Scal^p$]\label{lem:cover}
For the Euclidean metric, and for any $\epsilon >0$, we have 
$$
\Ncal(\Scal^p,\epsilon) \leq \Big(1+\frac{2}{\epsilon}\Big)^p.
$$
Moreover, define on $\Real^{m\times p}$ the norm $\Omega(\Mb) \defin \max_{j\in\SET{p}}\|\mb^j\|_2$.
For the metric induced by $\Omega$, and for any $\epsilon > 0$, we have 
$$
\Ncal(\Wcal_{\Dbo},\epsilon) \leq \Big(1+\frac{2}{\epsilon}\Big)^{p(m-1)}.
$$
\end{lemma}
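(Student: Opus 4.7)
The plan is to prove both covering bounds by the classical volume argument, then assemble the second bound from the first via a product-space observation.

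For the first bound, I would construct a maximal $\epsilon$-separated subset $\{\vb^1,\dots,\vb^N\}\subset \Scal^p$; such a subset is automatically an $\epsilon$-net by maximality. The balls $\vb^i+(\epsilon/2)\Bcal^p$ are then pairwise disjoint and all contained in $(1+\epsilon/2)\Bcal^p$, where $\Bcal^p$ denotes the closed unit Euclidean ball in $\Real^p$. Comparing Lebesgue volumes in $\Real^p$ yields
$$
N \cdot (\epsilon/2)^p \cdot \mathrm{vol}(\Bcal^p) \leq (1+\epsilon/2)^p \cdot \mathrm{vol}(\Bcal^p),
$$
so $N \leq (1+2/\epsilon)^p$, which is the desired bound on $\Ncal(\Scal^p,\epsilon)$.

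For the second bound, the key observation is that the constraints defining $\Wcal_{\Dbo}$ decouple across columns: a matrix $\Wb=[\wb^1,\dots,\wb^p]$ belongs to $\Wcal_{\Dbo}$ iff each $\wb^j$ lies in the unit sphere of the hyperplane $(\dbo^j)^\perp\subset \Real^m$, which is isometric to $\Scal^{m-1}$ since $\|\dbo^j\|_2=1$. Hence
$$
\Wcal_{\Dbo} \;\cong\; \underbrace{\Scal^{m-1}\times\cdots\times \Scal^{m-1}}_{p\ \text{times}}.
$$
Moreover, the metric induced by $\Omega$ is precisely the $\ell_\infty$ product of the $p$ Euclidean metrics on these spheres. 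Consequently, if $\Ncal_j \subset \Scal^{m-1}$ is an $\epsilon$-net for each $j\in\SET{p}$, then the Cartesian product $\Ncal_1\times\cdots\times\Ncal_p$ is an $\epsilon$-net of $\Wcal_{\Dbo}$ for $\Omega$: given any $\Wb\in\Wcal_{\Dbo}$, pick a column-wise approximant $\tilde{\wb}^j\in\Ncal_j$ with $\|\wb^j-\tilde{\wb}^j\|_2\leq\epsilon$, and then $\Omega(\Wb-\tilde{\Wb})=\max_j\|\wb^j-\tilde{\wb}^j\|_2\leq\epsilon$.

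Applying the first bound with $p$ replaced by $m-1$ gives $|\Ncal_j|\leq (1+2/\epsilon)^{m-1}$ for each $j$, and multiplying over the $p$ columns yields $\Ncal(\Wcal_{\Dbo},\epsilon)\leq (1+2/\epsilon)^{p(m-1)}$. There is no real obstacle here: the argument is entirely standard, and the only point that requires a line of verification is the isometric identification of $\Scal^p\cap(\dbo^j)^\perp$ with $\Scal^{m-1}$, which follows by choosing any orthonormal basis of $(\dbo^j)^\perp$.
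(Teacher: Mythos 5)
Your proposal is correct and follows essentially the same route as the paper: for the first bound, the paper simply cites Lemma 2 of Vershynin (2010), which is exactly the volume/packing argument you spell out, and for the second bound the paper also views $\Wcal_{\Dbo}$ as a $p$-fold product of $(m-1)$-dimensional spheres and forms a product net under the $\Omega$-induced $\ell_\infty$ metric. (One tiny inaccuracy: the isometric identification of the unit sphere of $(\dbo^j)^\perp$ with $\Scal^{m-1}$ uses only that $\dbo^j \neq 0$, not that $\|\dbo^j\|_2 = 1$.)
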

\begin{proof}
We resort to Lemma 2 in \citet{Vershynin2010}, which gives the first conclusion for the sphere in $\Real^p$.
As for the second result,
remember that the set $\Wcal_{\Dbo}$ is defined as a product of spheres in spaces of dimension $m-1$.
Indeed, we have for any $\Wb \in \Wcal_{\Dbo}$ and for any $j \in \SET{p}$, $\|\wb^j\|_2=1$ along with 
the constraint $[\dbo^j]^\top \wb^j=0$, 
which implies that $\wb^j$ belongs to the orthogonal space of $\text{span}(\dbo^j)$ of dimension $m-1$.
Considering a product of $p$ nets such as that used for $\Scal^p$, 
the second conclusion follows from the definition of the metric based on $\Omega$.
\end{proof}

From Lemma ~\ref{lem:cover} we know that for any $0<\epsilon\leq1$ there exists $\epsilon$-net of $\Wcal \times \Scal^{p}$ with respect to the
metric~\eqref{eq:DefMetric} with at most $(3/\epsilon)^{mp}$ elements. Combining this with Lemmata~\ref{lem:concentrationmain}-\ref{lem:lipschitzmain}, we have for any $0 \leq \tau \leq \sqrt{n}$: except with probability at most $(3/\epsilon)^{mp} \cdot 2 \exp(-\tau^{2}) + 2\exp(-\tau^{2}) \leq 4 \cdot (3/\epsilon)^{mp} \cdot \exp(-\tau^{2})$
\begin{eqnarray*}
\inf_{\Wb,\vb} \Delta\Phi_{n}(\Wb,\vb,t)
&\geq&
\inf_{\Wb,\vb} \Exp\{\Delta\Phi_{n}(\Wb,\vb,t)\} - \left(c(t) \cdot \frac{\tau}{\sqrt{n}} +L \cdot \left(1+\frac{4\tau}{\sqrt{n}}\right) \cdot \epsilon\right)
\end{eqnarray*}
Now we set $\tau \defin \sqrt{mp \log n}$, and $\epsilon \defin \frac{\tau}{\sqrt{n}} = \sqrt{mp \frac{\log n}{n}}$. Under the assumption that
\[
\frac{n}{\log n} \geq mp
\]
we check that $\tau \leq \sqrt{n}$, $\epsilon\leq1$, hence, the probability bound holds. We estimate the probability bound with :
\begin{eqnarray*}
\log \frac{3}{\epsilon} 
& = & 
\log \frac{3}{\sqrt{mp}} + \log \sqrt{\frac{n}{\log n}} 
\leq 
\frac{1}{2} \log \frac{9}{mp} + \frac 12 \log n\\
mp \log \frac{3}{\epsilon} -\tau^{2} 
& \leq & \frac{mp}{2} \log \frac{9}{mp} + \frac {mp}{2} \log n -mp \log n 
=
\frac{mp}{2} \log \frac{9}{mp} -\frac{mp}{2} \log n\\
(3/\epsilon)^{mp} \exp(-\tau^{2}) & \leq & \left(\frac{mpn}{9}\right)^{-mp/2}.
\end{eqnarray*}
Finally, recalling that 
\begin{eqnarray*}
L & \defin & 30 C_{t}^{3} \cdot t  \cdot \left[5(k\sigma_{\alpha}^{2}+m\sigma^{2})+\lambda^{2}k\right] \\
c(t) & \defin & 102 \cdot \left(t^{2}  \sigma_{\alpha}^{2} + 2m  \sigma^{2}+ 2\lambda k\sigma_{\alpha}\right),
\end{eqnarray*}
and since the assumption $C_{t} \leq 1.5$ implies $150 C_{t}^{3} \leq 507$, we obtain
\begin{eqnarray*}
c(t)+L &\leq &
609 \left( k\sigma_{\alpha}^{2} \cdot t+2 m\sigma^{2} + \lambda k \sigma_{\alpha} +\lambda^{2}k \cdot t \right) \defin B/5\\
c(t) \cdot \frac{\tau}{\sqrt{n}} +L   \cdot \left(1+\frac{4\tau}{\sqrt{n}}\right)\cdot \epsilon
&\leq&
(c(t)+L) \cdot \epsilon + 4 (c(t)+L) \epsilon^{2} = (c(t)+L) 5 \epsilon  \leq B \epsilon.
\end{eqnarray*}

\subsection{Proof of Lemma~\ref{lem:expectationmain}}\label{sec:expectation}
First, we observe that by the statistical independence between $\alphab$ and $\varepsilonb$ we have
\[
\Exp\{\zeta_{\alphab,\varepsilonb}(t) \}
=
\Exp\{\zeta_{\sb,\varepsilonb}(t) \}
=
0.
\]
Moreover, we can rewrite 
\begin{eqnarray*}
\zeta_{\alphab,\alphab}(t)
&=&
\frac12 \cdot \trace\left( [\alphabo]_{\J} [\alphabo]_{\J}^\top \cdot [\Dbo]_{\J}^{\top}  ( \PJb(0) - \PJb(t) ) [\Dbo]_\J \right)\\
\zeta_{\varepsilonb,\varepsilonb}(t) 
&=& \frac{1}{2} \cdot \trace \left(\varepsilonb\varepsilonb^{\top} \cdot (\PJb(0)-\PJb(t))\right)\\
 \zeta_{\sb,\alphab}(t) 
 &=& 
 -\lambda \cdot \trace \left( [\alphabo]_{\J}\, \sign(\alphabo)_{\J}^\top \cdot 
\big[ \ThetaJb(0) [\Dbo]_{\J}^\top-\ThetaJb(t) [\Db(t)]_{\J}^\top \big] [\Dbo]_\J 
\right)\\
\zeta_{\sb,\sb}(t)
&=& 
\frac{\lambda^{2}}{2} \cdot 
\trace\left(\ThetaJb(0)-\ThetaJb(t)\right).
\end{eqnarray*} 
Since the coefficients $\alphab,\varepsilonb$ are independent from the support $\J$ we obtain
\begin{eqnarray}
\Exp\{\zeta_{\alphab,\alphab}(t)\}
&=&
\frac{\Exp\{\alpha^{2}\}}{2} \cdot \Exp_{J}\left\{\trace\left(  [\Dbo]_{\J}^{\top}  ( \Ib - \PJb(t) ) [\Dbo]_\J \right)\right\}\\
\Exp\{\zeta_{\varepsilonb,\varepsilonb}(t) \}
&=& 
\frac{\Exp\{\varepsilon^{2}\}}{2} \cdot  \Exp_{J}\left\{\trace \left( \PJb(0)-\PJb(t)\right)\right\} = 0\\
\Exp\{ \zeta_{\sb,\alphab}(t) \}
 &=& 
- \lambda \cdot \Exp\{|\alpha|\} \cdot \Exp_{J}\left\{ \trace \left( \big[ \ThetaJb(0) [\Dbo]_{\J}^\top-\ThetaJb(t) [\Db(t)]_{\J}^\top \big] [\Dbo]_\J 
\right)\right\}\\
\Exp\{\zeta_{\sb,\sb}(t)\}
&=& 
\frac{\lambda^{2}}{2} \cdot 
\Exp_{J}\left\{ \trace\left(\ThetaJb(0)-\ThetaJb(t)\right)\right\}
\end{eqnarray}
where we used the fact that: (a) $\PJb(0) \Dbo = \Dbo$; (b) since $\PJb(t)$ is an orthogonal projector onto a subspace of dimension $k$, $\trace(\PJb(0)-\PJb(t)) = k-k = 0$.

The lemma below provide estimates of the remaining non-vanishing expectations which come up in the quadratic forms~\eqref{eq:diff_alphaalpha} and~\eqref{eq:diff_signsign} and the bilinear form~\eqref{eq:diff_signalpha}. They directly provide Lemma~\ref{lem:expectationmain} as a corollary. 
\begin{lemma}\label{lem:bias_expectation}
If $k\mu(t) < 1/2$ then for any $\Wb \in \Wcal_{\Dbo}$, $\vb \in \Scal^{p}$  we have 
\begin{eqnarray}
\label{eq:leading_expectation}
\Exp_{\J}\left\{\trace\left(  [\Dbo]_{\J}^{\top}  ( \Ib - \PJb(t) ) [\Dbo]_\J \right)\right\}
&\geq& 
(1-\Kcal^2) \cdot \frac{k}{p} t^{2}\\
\label{eq:bias_expectation}
\left|\Exp_{\J}\left\{ \trace \left( \big[ \ThetaJb(0) [\Dbo]_{\J}^\top - \ThetaJb(t) [\Db(t)_{\J}]^\top\big] [\Dbo]_\J 
\right)\right\}\right|
&\leq& 
3 Q_t^2 \cdot t \cdot \frac{k}{p} \cdot \triple \Dbo\triple_2 \cdot k\mu(t)\\
\label{eq:bias_expectation_signsign}
\left|\Exp_{\J}\left\{ \trace\left(\ThetaJb(0)-\ThetaJb(t)\right)\right\}\right|
&\leq& 
8 Q_t^4 \cdot t \cdot \frac{k}{p} \cdot \triple \Dbo \triple_2 \cdot k\mu(t).
\end{eqnarray}
with $\Kcal \defin C_{t} \cdot (\triple \Dbo \triple_2 \cdot \sqrt{k/p} + t)$.
\end{lemma}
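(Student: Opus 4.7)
\bigskip
\noindent\textbf{Proof proposal.} My plan is to derive each of the three estimates by exploiting the explicit parameterization of Lemma~\ref{lem:paramonto}: for every $\Wb \in \Wcal_{\Dbo}$, $\vb \in \Scal^p$, there exists $\Wb' \in \Wcal_{\Db(t)}$ (with the same $\vb$) so that, writing $\Cb \defin \Diag[\cos(\vb t)]$ and $\Sb \defin \Diag[\sin(\vb t)]$, we have simultaneously $\Db(t) = \Dbo\Cb + \Wb\Sb$ and $\Dbo = \Db(t)\Cb + \Wb'\Sb$. The crucial bookkeeping points will be: (i) $\diag(\Dbo^\top\Wb) = \diag(\Db(t)^\top\Wb') = 0$ and the columns of $\Dbo,\Db(t),\Wb,\Wb'$ are unit norm; (ii) $\PJb(0)[\Dbo]_\J = [\Dbo]_\J$ and $\ThetaJb(0)[\Dbo]_\J^\top[\Dbo]_\J = \Ib_k$; (iii) the coherence control of Lemma~\ref{lem:CoherenceBounds} and Corollary~\ref{cor:coherence_elementwise} applied to $\Db(t)$ with the improved coherence $\mu(t)$.

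\textbf{First estimate.} Using $\PJb(0)[\Dbo]_\J = [\Dbo]_\J$,
\[
\trace\bigl([\Dbo]_\J^\top(\Ib-\PJb(t))[\Dbo]_\J\bigr) = \|(\Ib-\PJb(t))[\Dbo]_\J\|_\fro^2.
\]
Substituting $[\Dbo]_\J = [\Db(t)]_\J\Cb_\J + [\Wb']_\J\Sb_\J$ and using $(\Ib-\PJb(t))[\Db(t)]_\J = 0$ yields
$\|(\Ib-\PJb(t))[\Dbo]_\J\|_\fro^2 = \|[\Wb'\Sb]_\J\|_\fro^2 - \|\PJb(t)[\Wb'\Sb]_\J\|_\fro^2$, whose first term equals $\sum_{j\in\J}\sin^2(\vb_j t)$. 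The second term, rewritten as $\|[\Db(t)]_\J\ThetaJb(t)[\Db(t)]_\J^\top[\Wb'\Sb]_\J\|_\fro^2$, is bounded via Lemma~\ref{lem:RIPBounds} and the simple estimate $\|[\Db(t)]_\J^\top[\Wb']_\J\Sb_\J\|_\fro \leq \triple [\Db(t)]_\J\triple_2\cdot\|[\Wb'\Sb]_\J\|_\fro$ by $C_t^2(\triple \Dbo\triple_2^2 \cdot\|\vb_\J\|_2^2 + t^2\|\vb_\J\|_2^2)$ after further splitting $[\Db(t)]_\J = [\Dbo]_\J + ([\Db(t)]_\J-[\Dbo]_\J)$ and invoking~\eqref{eq:BoundDeltaDJ}. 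Finally the elementary bound $\sin^2(x)\geq x^2 - x^4/3$ combined with $\Exp_\J\|\vb_\J\|_2^2 = k/p$ (Lemma~\ref{lem:ExpectOverJ}) gives the $(1-\Kcal^2)\cdot(k/p)\cdot t^2$ lower bound.

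\textbf{Second estimate.} Using $\ThetaJb(0)[\Dbo]_\J^\top[\Dbo]_\J=\Ib$, the trace equals $k - \trace(\ThetaJb(t)[\Db(t)]_\J^\top[\Dbo]_\J)$. Writing $[\Db(t)]_\J^\top[\Dbo]_\J = \ThetaJb(t)^{-1}\Cb_\J + [\Db(t)]_\J^\top[\Wb']_\J\Sb_\J$ and applying $\trace$ produces the clean decomposition
\[
\sum_{j\in\J}(1-\cos(\vb_j t)) \;-\; \trace\bigl(\ThetaJb(t)\cdot[\Db(t)]_\J^\top[\Wb']_\J\Sb_\J\bigr).
\]
The first summand is $O(t^2\cdot k/p)$ in expectation and controllable. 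For the second, the key observation is that the diagonal of $[\Db(t)]_\J^\top[\Wb']_\J$ vanishes because $\Wb'\in\Wcal_{\Db(t)}$. Hence only the off-diagonal entries of $\ThetaJb(t)$ contribute, and Corollary~\ref{cor:coherence_elementwise} bounds these by $\mu(t)/(1-k\mu(t)) = Q_t^2\mu(t)$; a careful $\ell_\infty$-$\ell_1$ pairing then gives the leading factor $Q_t^2\cdot k\mu(t)\cdot\triple\Dbo\triple_2$, the $\triple\Dbo\triple_2$ entering through $\triple [\Db(t)]_\J\triple_2 \leq \triple\Dbo\triple_2\sqrt{k/p}+t$ in expectation.

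\textbf{Third estimate.} Apply the matrix identity $\ThetaJb(0)-\ThetaJb(t) = \ThetaJb(0)\bigl([\Db(t)]_\J^\top[\Db(t)]_\J - [\Dbo]_\J^\top[\Dbo]_\J\bigr)\ThetaJb(t)$, then expand $[\Db(t)]_\J^\top[\Db(t)]_\J$ via $\Db(t)=\Dbo\Cb+\Wb\Sb$. The crucial cancellation is that both matrices carry unit entries on the diagonal, so the difference is \emph{off-diagonal only} and each of its entries is $O(t\mu(t))$. Trace-ing against $\ThetaJb(t)\ThetaJb(0)$ and using Corollary~\ref{cor:coherence_elementwise} (now for $\ThetaJb^2$-type quantities) yields the prefactor $Q_t^4$, and taking expectation over $\J$ with Lemma~\ref{lem:ExpectOverJ} produces the final $\frac{k}{p}\cdot\triple\Dbo\triple_2\cdot k\mu(t)\cdot t$.

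\textbf{Main obstacle.} The hard part will not be the algebraic expansions themselves but keeping track of the \emph{leading-order cancellations}: in estimates (2) and (3), naive bounds give a term of size $t\cdot k/p$ that is an order of magnitude too large, and one must verify that the diagonal contributions vanish exactly (from $\Wb'\perp\Db(t)$ columnwise, and from both gram matrices having unit diagonal), so that only cross terms weighted by the coherence $\mu(t)$ survive. Getting the constants $3$ and $8$ will require the inequality $\sin x\leq x$, the Cauchy--Schwarz step on expectations from Lemma~\ref{lem:ExpectOverJ} (Equation~\eqref{eq:ExpectCrossOffDiagV}), and a careful bookkeeping of the two decompositions $\Db(t)=\Dbo\Cb+\Wb\Sb$ versus $\Dbo=\Db(t)\Cb+\Wb'\Sb$, which are used interchangeably depending on whether the natural identity available involves $\PJb(0)$ or $\ThetaJb(0)$.
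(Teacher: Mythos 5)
Your expansion of each trace and your general plan (use the arc parametrization of Lemma~1, exploit diagonal cancellations, and invoke the coherence bounds on $\ThetaJb$) are aligned with the paper's proof, but there are three concrete gaps that would prevent the argument from going through as written.

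For \eqref{eq:leading_expectation}: the paper writes $\Dbo = \Db(t)\Cb^{-1} - \Wb\,\Diag(\tan(\vb_j t))$, so the leading term after killing the $\Db(t)\Cb^{-1}$ part is $\sum_{j\in\J}\tan^2(\vb_j t)$, which is $\geq\|\vb_\J\|_2^2 t^2$ because $\tan^2 u\geq u^2$. You instead use $\Dbo = \Db(t)\Cb + \Wb'\Sb$, so your leading term is $\sum_{j\in\J}\sin^2(\vb_j t)$, which goes the \emph{wrong} way ($\sin^2 u\leq u^2$); patching this with $\sin^2 x\geq x^2-x^4/3$ leaves a residual that is \emph{not} present in the claimed bound and that you do not absorb anywhere. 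This is a small but real defect of the chosen decomposition.

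For \eqref{eq:bias_expectation}: the hardest part of the estimate is not noticing that $\diag([\Db(t)]_\J^\top[\Wb']_\J)=\zerob$ (which you do), it is obtaining the $k/p$ factor in front. A per-$\J$ bound of the kind you sketch, $|\trace(\ThetaJb(t)[\Db(t)]_\J^\top[\Wb'\Sb]_\J)|\leq k\,\mu(t)Q_t^2\cdot t\|\vb_\J\|_2$ followed by $\Exp_\J\|\vb_\J\|_2\leq\sqrt{k/p}$, gives only $\sqrt{k/p}$, which is a factor $\sqrt{p/k}$ too large. The paper takes the expectation over $\J$ \emph{first}: it forms $\Ub(t)\defin\Exp_\J[\Pi_\J\ThetaJb(t)\Pi_\J^\top]\in\Real^{p\times p}$, uses the self-adjoint columnwise-projector $\Gamma_{\Db(t)}$ (which fixes $\Wb'\Sb$ and kills $\diag(\Ub)$) to reduce to $t\triple\Db(t)\triple_2\|\Ub_{\mathrm{off}}\|_\fro$, and then bounds the \emph{averaged} off-diagonal entries by $\frac{k(k-1)}{p(p-1)}\cdot\frac{\mu(t)}{1-k\mu(t)}$ --- the extra $\frac{k(k-1)}{p(p-1)}$ is the joint-inclusion probability. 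Your ``$\ell_\infty$-$\ell_1$ pairing'' does not recover this gain, so the conclusion as written does not follow.

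For \eqref{eq:bias_expectation_signsign}: the resolvent identity $\ThetaJb(0)-\ThetaJb(t)=\ThetaJb(0)\big([\Db(t)]_\J^\top[\Db(t)]_\J-[\Dbo]_\J^\top[\Dbo]_\J\big)\ThetaJb(t)$ is a reasonable starting point, but your key claim that the off-diagonal entries of the middle difference are $O(t\mu(t))$ is false. Expanding $[\db^i(t)]^\top\db^j(t)-[\dbo^i]^\top\dbo^j$ with $\db^j(t)=\cos(\vb_j t)\dbo^j+\sin(\vb_j t)\wb^j$, the cross terms $\cos(\vb_i t)\sin(\vb_j t)\,[\dbo^i]^\top\wb^j$ and its symmetric counterpart are $O(t)$ --- there is no reason for $[\dbo^i]^\top\wb^j$ to be small of order $\mu(t)$, since $\wb^j$ is only orthogonal to $\dbo^j$. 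So the difference is $O(t)$ per entry, not $O(t\mu(t))$, and the route as proposed loses the entire coherence gain. The paper instead differentiates $\ThetaJb(t)$ in $t$ (Lemma~\ref{lem:derivative}), notices that $\diag([\nabla_t\Db(t)]^\top\Db(t))=\zerob$ and $\|\nabla_t\Db(t)\|_\fro=1$, and applies the same $\Gamma$-operator-plus-expectation argument as above but with $\ThetaJb^2$ in place of $\ThetaJb$, which is where the $Q_t^4$ and the factor $8$ come from.
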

\begin{proof}[Proof of Lemma~\ref{lem:bias_expectation} -  Equation~\eqref{eq:leading_expectation}]
Since $k\mu(t) < 1/2$, we have $t < 1/(6k) \leq 1/6$ and $\delta_k(\Dbo) \leq k\mu_0  < 1/2$, so that $t < \sqrt{1 - \delta_k(\Dbo)}$.
In particular, we have $t < \pi/2$ 
and the matrix $\Cb = \Diag(\cos(\vb_{j}t))$ is invertible. From the equality $\Db(t) = \Dbo \Cb + \Wb \Sb$ with $\Sb = \Diag(\cos(\vb_{j}t))$ we deduce $\Dbo = \Db(t)\Cb^{-1} - \Wb \Tb$ with $\Tb = \Diag(\tan(\vb_{j}t))$. Since the columns of $[\Db(t)\Cb^{-1}]_{\J}$ belong to the span of $[\Db(t)]_{\J}$ we obtain
\begin{eqnarray*}
\trace \big( [\Dbo]_\J^{\top} (\Ib - \PJb(t)) [\Dbo]_\J  \big)
&=&
\|(\Ib - \PJb(t)) [\Dbo]_\J\|_{\fro}^{2} = \|(\Ib-\PJb(t))[\Wb\Tb]_{\J}\|_{\fro}^{2}\\
&=&
\|[\Wb\Tb]_\J\|_{\fro}^{2}-\| \PJb(t) [\Wb\Tb]_\J\|_{\fro}^{2}.
\end{eqnarray*}
For the first term, since $\|\wb^{j}\|_{2}=1$, we have
\[
\|[\Wb\Tb]_\J\|_{\fro}^{2} 
= \sum_{j=1}^{p} \delta(j) \cdot \|\wb^{j}\|_{2}^{2} \cdot \tan^{2}(\vb_{j}t)
= \sum_{j=1}^{p} \delta(j) \cdot \tan^{2}(\vb_{j}t)
\]
hence, since $\|\vb\|_{2}=1$, and $\tan^2(u) \geq u^2$ for  $|u|\leq 1$ we have
\[
\Exp\ \|[\Wb \Tb]_\J\|_{\fro}^{2} = \frac{k}{p} \cdot  \|\Wb \Tb\|_{\fro}^{2} 
= 
\frac{k}{p} \cdot \sum_{j=1}^{p} \tan^{2} (\vb_{j}t)^{2} \geq \frac{k}{p} \cdot \sum_{j=1}^p t^2 \vb_j^2  =  \frac{k}{p} \cdot t^2.
\] 
For the second term, since $\PJb(t) = \Db_{\J}(t)\ThetaJb(t)\Db_{\J}^{\top}(t)$, using Lemma~\ref{lem:RIPBounds}, we have the bound
\begin{eqnarray*}
\| \PJb(t) [\Wb\Tb]_\J \|_{\fro}^{2}  
&\leq &
C_{t}^{2} \cdot \| \Db_{\J}^\top(t) [\Wb\Tb]_\J \|_\fro^2,
\end{eqnarray*}
Moreover, by Lemma~\ref{lem:orthoprojcomplement}, using the Cauchy-Schwarz inequality for random variables
\begin{eqnarray*}
\| \Db_{\J}^\top(t) [\Wb\Tb]_\J \|_\fro 
&\leq &
\| [\Dbo]_{\J}^\top [\Wb\Tb]_\J \|_\fro + \| [\Db(t)-\Dbo]_{\J}^\top [\Wb\Tb]_\J \|_\fro
\leq
\| [\Dbo]_{\J}^\top [\Wb\Tb]_\J \|_\fro + t \cdot \|[\Wb\Tb]_\J \|_\fro,\\
\\
\Exp\{\| \Db_{\J}^\top(t) [\Wb\Tb]_\J \|_\fro^2\}
& \leq &
\Exp\left\{\| [\Dbo]_{\J}^\top [\Wb\Tb]_\J \|_\fro^2\right\} 
+2 t \cdot \Exp\left\{\| [\Dbo]_{\J}^\top [\Wb\Tb]_\J \|_\fro \cdot \|[\Wb\Tb]_\J \|_\fro\right\} +t^{2} \cdot \Exp\left\{\|[\Wb\Tb]_\J \|_\fro^{2}\right\}\\
& \leq &
\Exp\left\{\| [\Dbo]_{\J}^\top [\Wb\Tb]_\J \|_\fro^2\right\} 
+2 t \cdot \sqrt{\Exp\left\{\| [\Dbo]_{\J}^\top [\Wb\Tb]_\J \|_\fro^{2}\right\}}  \cdot \sqrt{\Exp\left\{\|[\Wb\Tb]_\J \|_\fro^{2}\right\}}\\
&& +t^{2} \cdot \Exp\left\{\|[\Wb\Tb]_\J \|_\fro^{2}\right\}\\
&\leq&
\left(\sqrt{\Exp\left\{\| [\Dbo]_{\J}^\top [\Wb\Tb]_\J \|_\fro^{2}\right\}} + t \cdot \sqrt{\frac{k}{p}} \cdot \|\Wb\Tb \|_\fro\right)^{2}
\end{eqnarray*}
Now, proceeding as in Lemma~\ref{lem:ExpectOverJ}, we compute 
\begin{eqnarray*}
\Exp\Big[ \| [\Dbo]_{\J}^\top [\Wb \Tb]_\J \|_\fro^2  \Big] 
&=& 
\frac{k(k-1)}{p(p-1)}  \cdot \| \Dbo^\top \Wb \Tb \|_\fro^2 
\leq
\frac{k(k-1)}{p(p-1)} \cdot \triple \Dbo^{\top} \triple_{2}^{2} \cdot \|\Wb \Tb \|_\fro^2
\end{eqnarray*}
hence
\begin{eqnarray*}
\| \Db_{\J}^\top(t) [\Wb\Tb]_\J \|_\fro 
\leq \frac{k}{p} \cdot \|\Wb \Tb \|_\fro^2 \cdot \left(\sqrt{\frac{k-1}{p-1}} \cdot \triple \Dbo^{\top} \triple_{2} + t\right)^{2}
\leq \frac{k}{p} \cdot \|\Wb \Tb \|_\fro^2 \cdot \left(\sqrt{\frac{k}{p}} \cdot \triple \Dbo \triple_2 + t\right)^{2}.
\end{eqnarray*}
Putting the pieces together, we obtain the lower bound
\begin{eqnarray*}
\trace \big( [\Dbo]_\J^{\top} (\Ib - \PJb(t)) [\Dbo]_\J  \big)
&\geq&
\frac kp \cdot \|\Wb \Tb\|_\fro^2 \cdot \left(1- \left\{ C_{t} \cdot \bigg(\triple \Dbo \triple_2 \cdot \sqrt{\frac{k}{p}} + t\bigg) \right\}^2 \right)\\
&=&
\frac kp \cdot \|\Wb \Tb\|_\fro^2 \cdot \left(1- \Kcal^2 \right).
\end{eqnarray*}
\end{proof}

\begin{proof}[Proof of Lemma~\ref{lem:bias_expectation} - Equation~\eqref{eq:bias_expectation}]

We first develop Equation~\eqref{eq:bias_expectation} and use that $\ThetaJb(0) [\Dbo]_{\J}^\top[\Dbo]_\J =\Ib$
in order to obtain
$$
\trace \left( \big[ \ThetaJb(0) [\Dbo]_{\J}^\top - \ThetaJb(t) \Db(t)_{\J}^\top(t)\big] [\Dbo]_\J \right)
= k - \trace \left( \ThetaJb(t) [\Db(t)]_\J^\top [\Dbo]_\J \right).
$$
Appyling Lemma~\ref{lem:paramonto}, we know there exists $\Wb_t \in \Wcal_{\Db(t)}$ such that 
$$
\Dbo = \Db(t) \Diag(\cos(\vb_j t)) +\Wb_t \Diag(\sin(\vb_j t)), 
$$
and the trace above further simplifies as
\begin{eqnarray*}
\trace \left( \big[ \ThetaJb(0) [\Dbo]_{\J}^\top - \ThetaJb(t) [\Db(t)]_\J^\top\big] [\Dbo]_\J \right) &=& 
k - \sum_{j \in \J} \cos(\vb_j t) - \trace \left( \ThetaJb(t) [\Db(t)]_\J^\top [\Wb_t \Sb(t)]_\J \right),\\
&=& \sum_{j\in\J} ( 1 - \cos(\vb_j t) ) - \trace \left( \ThetaJb(t) [\Db(t)]_{\J}^\top [\Wb_t \Sb(t)]_\J \right),
\end{eqnarray*}
where for short, we refer to $\Diag(\sin(\vb_j t))$ as $\Sb(t)$.

The first term is simple to handle since we have 
$$
\Exp_\J\big[ \sum_{j\in\J} ( 1 - \cos(\vb_j t) ) \big] \leq \frac{t^2}{2} \Exp_\J[ \|\vb_\J\|_2^2 ] = \frac{t^2}{2} \frac{k}{p}.
$$
We now turn to the second term whose control is more involved.
Following~\citet{Geng2011}, we introduce the self-adjoint operator $\Gamma_{\Db(t)}$ 
defined for any $\Mb \in \Real^{m \times p}$ by
$$
\Gamma_{\Db(t)}(\Mb) \defin \Big[ \Gamma_1(t)\, \mb^1,\dots, \Gamma_p(t)\, \mb^p  \Big],\quad\text{with}\
\Gamma_j(t) \defin \Ib - \db(t)^j [\db(t)^j]^\top.
$$
In words, $\Gamma_{\Db(t)}(\Mb)$ projects each column of $\Mb$ onto the orthogonal complement of the corresponding column of
the dictionary $\Db(t)$. In particular, note that for any $\Mb \in \Wcal_{\Db(t)}$, we therefore have $\Gamma_{\Db(t)}(\Mb)=\Mb$.
Considering the symmetric matrix $\Ub(t) = \Exp_\J \big[ \Pi_\J \ThetaJb(t) \Pi_\J^\top \big]$, we next obtain 
\begin{eqnarray*}
\Exp_\J \big[ \trace \left( \ThetaJb(t) [\Db(t)]_\J^\top [\Wb_t \Sb(t)]_\J \right)  \big] &=&
\Exp_\J \big[ \trace \left( \Pi_\J \ThetaJb(t) \Pi_\J^\top [\Db(t)]^\top \Wb_t \Sb(t) \right)  \big] \\
&=&  \trace \left( \Ub(t) [\Db(t)]^\top \Wb_t \Sb(t) \right)  \\
&=&  \trace \left( (\Db(t) \Ub(t))^\top \Gamma_{\Db(t)}(\Wb_t \Sb(t)) \right)  \\
&=&  \trace \left( \Gamma_{\Db(t)}(\Db(t) \Ub(t)) \Wb_t \Sb(t) \right) \\
&\leq& t \|\Gamma_{\Db(t)}(\Db(t) \Ub(t))\|_\fro,
\end{eqnarray*}
where we have successively used the fact that $\Gamma_{\Db(t)}$ is self-adjoint and that for any 
$\Wb \in \Wcal_{\Db(t)}$, the norm $\| \Wb_t \Sb(t)  \|_\fro$ is upper bounded by $t$.

Observe that the $j$-th column of the matrix 
$\Gamma_j(t)\, \Db$ is equal to zero. As a consequence, 
we have
$$
 \|\Gamma_{\Db(t)}(\Db(t) \Ub(t))\|_\fro = \|\Gamma_{\Db(t)}(\Db(t) \Ub_\text{off}(t))\|_\fro,
$$
where $\Ub_\text{off}(t)$ denotes the matrix $\Ub(t)$ with its diagonal terms set to zero.
This leads to 
\begin{eqnarray*}
\|\Gamma_{\Db(t)}(\Db(t) \Ub(t))\|_\fro^2 &=& \|\Gamma_{\Db(t)}(\Db(t) \Ub_\text{off}(t))\|_\fro^2
=  \sum_{j=1}^p \| \Gamma_j(t)\, \Db(t) \ub_\text{off}^j\|_2^2\\
&\leq& \triple \Db(t) \triple_2^2 \sum_{j=1}^p \|\ub_\text{off}^j\|_2^2
= \triple \Db(t) \triple_2^2 \| \Exp_\J \big[ \Pi_\J \ThetaJb(t) \Pi_\J^\top \big]_\text{off}\|_\fro^2,
\end{eqnarray*}
where we have exploited the fact that projectors have their spectral norms bounded by one.
Using Corollary~\ref{cor:coherence_elementwise}, we have for $i\neq j$ with $i,j\in\SET{p}$
$$
| \big[ \Pi_\J \ThetaJb(t) \Pi_\J^\top \big]_{i,j} | \leq \delta(i)\delta(j) \frac{\mu(t)}{1-k\mu(t)}
$$
and 
$$
| \Exp_\J\big[ (\Pi_\J \ThetaJb(t) \Pi_\J^\top)_{i,j} \big] | \leq \frac{k(k-1)}{p(p-1)} \frac{\mu(t)}{1-k\mu(t)},
$$
hence
$$
 \| \Exp_\J \big[ \Pi_\J \ThetaJb(t) \Pi_\J^\top \big]_\text{off}\|_\fro^2 \leq 
 p(p-1) \Big(\frac{k(k-1)}{p(p-1)} \frac{\mu(t)}{1-k\mu(t)} \Big)^2 \leq \frac{k^2}{p^2} \frac{(k\mu(t))^2}{( 1-k\mu(t) )^2}.
$$
To recapitulate and putting all the pieces together, we obtain the following upper bound
\begin{eqnarray*}
\left|\Exp_{\J}\left\{ \trace \left( \big[ \ThetaJb(0) [\Dbo]_{\J}^\top - \ThetaJb(t) [\Db(t)_{\J}]^\top\big] [\Dbo]_\J 
\right)\right\}\right| &\leq& \frac{t^2}{2}\frac{k}{p} + t \triple \Db(t)\triple_2 \frac{k}{p} \frac{k\mu(t)}{1-k\mu(t)}\\
&\leq& t\frac{k}{p} \Big[  \frac{t}{2} + \triple \Db(t)\triple_2 \frac{k\mu(t)}{1-k\mu(t)} \Big].
\end{eqnarray*}
To conclude, we use Lemma~\ref{lem:orthoprojcomplement} to get $\triple \Db(t)\triple_2 \leq 2\triple \Dbo\triple_2$, 
and the fact that $\triple \Dbo\triple_2 \geq 1$.
\end{proof}
\begin{proof}[Proof of Lemma~\ref{lem:bias_expectation} - Equation~\eqref{eq:bias_expectation_signsign}]

We start by writting Equation~\eqref{eq:bias_expectation_signsign}
in the following integral form 
$$
\trace( \ThetaJb(t) - \ThetaJb(0) ) = \int_0^t \trace(\nabla_t\ThetaJb(\tau))d\tau,
$$
where the derivative is computed in Lemma~\ref{lem:derivative}, namely,
$$
\trace(\nabla_t\ThetaJb(t)) = 
-2\trace\Big(  \ThetaJb(t) [\nabla_t\Db(t)]_{\J}^\top \Db_{\J}(t) \ThetaJb(t)  \Big).
$$
Introducing the symmetric matrix $\Ub(t) = \Exp_\J \big[ \Pi_\J [\ThetaJb(t)]^2 \Pi_\J^\top \big]$, we next obtain
by linearity of the trace and the integral
$$
\Exp_\J[ \trace( \ThetaJb(t) - \ThetaJb(0) ) ] = -2\int_0^t \trace\Big( \Db(\tau) \Ub(\tau) [\nabla_t\Db(\tau)]^\top \Big) d\tau
\leq 2 t \max_{\tau\in[0,t]} \Big|\trace\Big( \Db(\tau) \Ub(\tau) [\nabla_t\Db(\tau)]^\top  \Big)\Big|.
$$

Noticing that we are (almost) in the same setting as that of the previous proof, 
we are going to make use again of the operator $\Gamma_{\Db(t)}$ in order to control the off-diagonal terms of $\Ub(t)$.
More precisely, since $\diag( [\nabla_t\Db(\tau)]^\top \Db(\tau) )=\zerob$ and $\|\nabla_t\Db(\tau)\|_\fro = 1$, 
the same reasoning as that followed in the previous proof leads to 
$$
\Exp_\J[ \trace( \ThetaJb(t) - \ThetaJb(0) ) ] \leq 2t \cdot  \max_{\tau\in[0,t]} \triple \Db(\tau) \triple_2 \cdot 
\|\Exp_\J \big[ \Pi_\J [\ThetaJb(\tau)]^2 \Pi_\J^\top \big]_\text{off} \|_\fro.
$$
Invoking Corollary~\ref{cor:coherence_elementwise}, we have for $i\neq j$ with $i,j\in\SET{p}$
$$
| \Exp_\J\big[ (\Pi_\J [\ThetaJb(\tau)]^2 \Pi_\J^\top)_{i,j} \big] | \leq \frac{k(k-1)}{p(p-1)} \frac{2\mu(t)}{(1-k\mu(t))^2},
$$
hence
$$
 \| \Exp_\J \big[ \Pi_\J [\ThetaJb(\tau)]^2 \Pi_\J^\top \big]_\text{off}\|_\fro^2 \leq 
 p(p-1) \Big(\frac{k(k-1)}{p(p-1)} \frac{2\mu(t)}{(1-k\mu(t))^2} \Big)^2 \leq \frac{k^2}{p^2} \frac{(2k\mu(t))^2}{( 1-k\mu(t) )^4},
$$
which gives the advertised conclusion.
\end{proof}
\section{Proof of Proposition~\ref{prop:exact_recovery}}\label{app:robustsignrecovery}

We begin by a few lemmata related to the considered optimization problem.
\begin{lemma}\label{lem:alphabound} 
Let $\J \subseteq \SET{p}$ and $\sb \in \{-1,0,1\}^{|\J|}$. 
Consider a dictionary $\Db \in \Real^{m\times p}$ such that $\Db_\J^\top\Db_\J$ is invertible.
Consider also the vector $\alphab \in \Real^p$ defined by
\[
\alphab=\binom{ [\Db_\J^\top\Db_\J]^{-1}[\Db_\J^\top \xb -\lambda \sb] }{\zerob_{\J^{c}}},
\]
with $\xb \in \Real^m$ and $\lambda$ a nonnegative scalar.
If $\xb=[\Dbo]_\J [{\alphabo}]_\J + \varepsilonb$ for some $(\Dbo,\alphabo,\varepsilonb)\in \Real^{m\times p} \times \Real^p \times \Real^m$, 
then we have
\[
\|[\alphab-\alphabo]_\J\|_\infty \leq \triple [\Db_\J^\top\Db_\J]^{-1} \triple_\infty 
\Big[ \lambda + \| \Db_\J^\top\left(\xb-\Db \alphabo\right) \|_\infty \Big].
\]
\end{lemma}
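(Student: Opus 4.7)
The plan is to carry out a direct algebraic manipulation on the definition of $\alphab$, reducing the claim to an application of the operator $\ell_\infty$-norm inequality $\|\Ab \yb\|_\infty \leq \triple \Ab \triple_\infty \cdot \|\yb\|_\infty$ followed by the triangle inequality. No probabilistic argument or fine structural property of $\Db$ or $\Dbo$ is needed; the only standing hypothesis is the invertibility of $\Db_\J^\top \Db_\J$.

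First I would write out $[\alphab - \alphabo]_\J$ using the closed-form definition of $\alphab$. Since $[\alphab]_\J = [\Db_\J^\top\Db_\J]^{-1}[\Db_\J^\top \xb - \lambda \sb]$, I insert the identity $[\alphabo]_\J = [\Db_\J^\top\Db_\J]^{-1} (\Db_\J^\top\Db_\J) [\alphabo]_\J$ to factor out $[\Db_\J^\top\Db_\J]^{-1}$, obtaining
\[
[\alphab - \alphabo]_\J = [\Db_\J^\top\Db_\J]^{-1} \Bigl[ \Db_\J^\top\xb - \Db_\J^\top \Db_\J [\alphabo]_\J - \lambda \sb \Bigr]
= [\Db_\J^\top\Db_\J]^{-1} \Bigl[ \Db_\J^\top (\xb - \Db \alphabo) - \lambda \sb \Bigr],
\]
where in the last step I use that $\alphabo$ is effectively supported on $\J$ (so that $\Db_\J [\alphabo]_\J = \Db \alphabo$), which is consistent with the signal model $\xb = [\Dbo]_\J [\alphabo]_\J + \varepsilonb$.

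Next, I apply the $\ell_\infty$-operator-norm inequality to the factored expression:
\[
\|[\alphab - \alphabo]_\J\|_\infty
\leq
\triple [\Db_\J^\top\Db_\J]^{-1} \triple_\infty
\cdot \bigl\| \Db_\J^\top (\xb - \Db \alphabo) - \lambda \sb \bigr\|_\infty.
\]
Finally I invoke the triangle inequality together with $\|\sb\|_\infty \leq 1$ (which holds because $\sb \in \{-1,0,1\}^{|\J|}$) to split the right-hand factor into $\|\Db_\J^\top(\xb - \Db \alphabo)\|_\infty + \lambda$, yielding the announced bound.

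There is no real obstacle here; the only point requiring mild care is the reinterpretation $\Db_\J [\alphabo]_\J = \Db \alphabo$, which is implicit in the hypothesis that $\alphabo$ lives on the support $\J$ (as supplied by the generative model). Everything else is a one-line chain of inequalities.
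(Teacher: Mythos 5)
Your proof is correct and takes essentially the same route as the paper's (which is a one-line sketch: plug in, triangle inequality, sub-multiplicativity of $\triple\cdot\triple_\infty$). You also usefully flag that the identity $\Db_\J[\alphabo]_\J = \Db\alphabo$ requires $\alphabo$ to be supported on $\J$, which is implicit in the paper's usage.
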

\begin{proof}
The proof consists of simple algebraic manipulations. We plug the expression of $\xb$ into that of $\alphab$, then use the triangle inequality for $\|.\|_\infty$, along with the definition and the sub-multiplicativity of $\triple.\triple_\infty$.
\end{proof}

\begin{lemma}\label{lem:uniquelasso}
Let $\xb \in \Real^m$ be a signal. Consider $\J \subseteq \SET{p}$ and a dictionary $\Db \in \Real^{m\times p}$ such that $\Db_\J^\top\Db_\J$ is invertible.
Consider also a sign vector $\sb \in \{-1,1\}^{|\J|}$ and define $\hat{\alphab} \in \Real^p$ by
\[
\hat{\alphab}=\binom{ [\Db_\J^\top\Db_\J]^{-1}[\Db_\J^\top \xb -\lambda \sb] }{\zerob_{\J^{c}}},
\]
for some regularization parameter $\lambda\geq0$.
If the following two conditions hold
\[
\begin{cases}
\sign\Big( [\Db_\J^\top\Db_\J]^{-1}[\Db_\J^\top \xb -\lambda \sb]  \Big) = \sb,\\
\| \Db_{\J^c}^\top (\Ib - \Pb_\J) \xb \|_\infty + \lambda \triple \Db_{\J^c}^\top \Db_\J [\Db_\J^\top\Db_\J]^{-1} \triple_\infty < \lambda,
\end{cases}
\]
then
$\hat{\alphab}$ is the unique solution of $\min_{\alphab \in \Real^p} [\frac{1}{2}\|\xb-\Db\alphab\|_2^2+\lambda\|\alphab\|_1]$
and we have $\sign(\hat{\alphab}_\J)=\sb$.
\end{lemma}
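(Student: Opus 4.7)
The plan is to verify the Karush--Kuhn--Tucker (KKT) optimality conditions of the Lasso problem at the candidate $\hat{\alphab}$ and then upgrade the result to uniqueness via the strict dual feasibility on $\J^c$ that the hypotheses provide. Since the map $\alphab \mapsto \frac{1}{2}\|\xb-\Db\alphab\|_2^2 + \lambda \|\alphab\|_1$ is convex, a vector $\alphab$ is a minimizer if and only if there exists $\zb \in \partial \|\alphab\|_1$ with $\Db^\top(\Db\alphab - \xb) + \lambda \zb = \zerob$; equivalently, $\zb_j = \sign(\alphab_j)$ when $\alphab_j \neq 0$ and $|\zb_j| \leq 1$ otherwise. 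I would use this subdifferential criterion as the sole analytic tool.

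First, I would verify the stationarity condition on $\J$: by construction $\Db_\J^\top \Db_\J \hat{\alphab}_\J = \Db_\J^\top \xb - \lambda \sb$, and the first hypothesis gives $\sign(\hat{\alphab}_\J) = \sb$, so $\sb$ is a valid subgradient on that block. Second, I would establish strict dual feasibility on $\J^c$: substituting $\Db_\J \hat{\alphab}_\J = \Pb_\J \xb - \lambda \Db_\J [\Db_\J^\top \Db_\J]^{-1} \sb$ yields
\[
\Db_{\J^c}^\top(\xb - \Db\hat{\alphab}) = \Db_{\J^c}^\top (\Ib - \Pb_\J)\xb + \lambda\, \Db_{\J^c}^\top \Db_\J [\Db_\J^\top \Db_\J]^{-1} \sb,
\]
and applying the triangle inequality together with $\|\Ab \sb\|_\infty \leq \triple \Ab \triple_\infty \cdot \|\sb\|_\infty \leq \triple \Ab \triple_\infty$ turns the second hypothesis into the strict bound $\|\Db_{\J^c}^\top(\xb - \Db\hat{\alphab})\|_\infty < \lambda$. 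Together the two verifications show that $\hat{\alphab}$ is a minimizer and that $\sign(\hat{\alphab}_\J) = \sb$.

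The main obstacle is uniqueness. Here I would rely on the classical fact that the fitted vector $\Db\alphab^{\star}$ is invariant across the set of Lasso minimizers: the quadratic $\frac{1}{2}\|\xb-\Db\alphab\|_2^2$ is strictly convex as a function of $\Db\alphab$ while the $\ell_1$ penalty is convex, so along any segment joining $\hat{\alphab}$ and another minimizer $\alphab^{\star}$, the quadratic part must be affine in the parameter, which forces $\Db\alphab^{\star} = \Db\hat{\alphab}$. Consequently the gradient $\Db^\top(\xb - \Db\alphab^{\star})$ is the \emph{same} vector for every minimizer, and in particular continues to satisfy $\|[\Db^\top(\xb - \Db\alphab^{\star})]_{\J^c}\|_\infty < \lambda$. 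Any KKT-admissible subgradient $\zb$ therefore obeys $|\zb_j| < 1$ for each $j \in \J^c$, which forces $\alphab^{\star}_{\J^c} = \zerob$. Restricting the problem to coordinates in $\J$, the objective becomes $\frac{1}{2}\|\xb - \Db_\J \alphab_\J\|_2^2 + \lambda \|\alphab_\J\|_1$, which is strictly convex in $\alphab_\J$ because $\Db_\J^\top \Db_\J$ is positive definite; hence its minimizer is unique and must coincide with $\hat{\alphab}_\J$, completing the proof.
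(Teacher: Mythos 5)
Your proof is correct and follows essentially the same KKT/subgradient verification as the paper: take $\zb_\J = \sb$ using the first hypothesis to certify stationarity on $\J$, then show the second hypothesis yields strict dual feasibility $\|\zb_{\J^c}\|_\infty < 1$. The only departure is that the paper dispatches uniqueness by citing Lemma~1 of Wainwright (2009), whereas you spell that argument out in full (strict convexity of the quadratic in $\Db\alphab$ forces all minimizers to share the fitted vector, hence the gradient; the strict bound on $\J^c$ then forces every minimizer to vanish off $\J$; and invertibility of $\Db_\J^\top\Db_\J$ makes the restricted problem strictly convex). Since this is precisely the content of the cited lemma, the two proofs are the same in substance, with yours being the more self-contained version.
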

\begin{proof}
We first check that $\hat{\alphab}$ is a solution of the Lasso program.
It is well-known \citep[e.g., see][]{Fuchs2005,Wainwright2009} that this statement is equivalent to the existence of a subgradient $\zb \in \partial \|\hat{\alphab}\|_1$
such that $-\Db^\top(\xb-\Db\hat{\alphab})+\lambda\zb=0$, where $\zb_j=\sign(\hat{\alphab}_j)$ if $\hat{\alphab}_j \neq 0$, and $|\zb_j|\leq 1$ otherwise.

We now build from $\sb$ such a subgradient. Given the definition of $\hat{\alphab}$ and the assumption made on its sign, we can take $\zb_\J\defin\sb$.
It now remains to find a subgradient on $\J^c$ that agrees with the fact that $\hat{\alphab}_{\J^c}=\zerob$.
More precisely, we define $\zb_{\J^c}$ by
$$
\lambda \zb_{\J^c} \defin \Db_{\J^c}^\top(\xb-\Db\hat{\alphab}) = 
\Db_{\J^c}^\top (\Ib - \Pb_\J) \xb + \lambda \Db_{\J^c}^\top \Db_\J [\Db_\J^\top\Db_\J]^{-1} \sb.
$$
Using our assumption, we have $\|\zb_{\J^c}\|_\infty < 1$.
We have therefore proved that $\hat{\alphab}$ is a solution of the Lasso program. The uniqueness comes from Lemma~1 in \citet{Wainwright2009}.
\end{proof}

\begin{corollary}\label{cor:robustlassomin}
Assume that $k\mu(t) \leq 1/2$, $0 \leq t' \leq t$,  $\frac{9}{4}\lambda \leq \loweralpha \leq \min_{j \in \J}|[\alphabo]_{j}|$, and that 
\begin{eqnarray}
\label{eq:Cor3Bound1}
\|[\Db(t')]_{\J}^{\top} \left(\xb-\Db(t') \alphabo \right)\|_{\infty} 
&< &
\lambda(2-Q_{t}^{2})\\
\label{eq:Cor3Bound2}
\| [\Db(t')]_{\J^c}^\top (\Ib - \PJb(t')) \xb \|_\infty 
&<&  \lambda(2-Q_{t}^{2})
\end{eqnarray}
Then $\hat{\alphab}(t')$ is the unique solution of $\min_{\alphab \in \Real^p} [\frac{1}{2}\|\xb-\Db(t')\alphab\|_2^2+\lambda\|\alphab\|_1]$
\end{corollary}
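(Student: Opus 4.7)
The plan is to reduce the statement to the sufficient optimality conditions given in Lemma~\ref{lem:uniquelasso}, instantiated at $\Db=\Db(t')$ and $\sb=\sign([\alphabo]_\J)$. Thus I need to verify (i) that $\hat{\alphab}(t')$ has the prescribed sign on $\J$ and (ii) that the ``off-support'' irrepresentability-type inequality
\[
\|[\Db(t')]_{\J^c}^\top(\Ib-\PJb(t'))\xb\|_\infty + \lambda \triple [\Db(t')]_{\J^c}^\top [\Db(t')]_\J \ThetaJb(t')\triple_\infty < \lambda
\]
holds. The key quantitative inputs are the two norm bounds $\triple\ThetaJb(t')\triple_\infty\le 1/(1-k\mu(t))=Q_t^2$ from Lemma~\ref{lem:CoherenceBounds} and $\triple [\Db(t')]_{\J^c}^\top [\Db(t')]_\J \ThetaJb(t')\triple_\infty \le Q_t^2-1$ from Corollary~\ref{cor:coherence_prop2}; both are available because $k\mu(t')\le k\mu(t)\le 1/2$.

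For the sign check, I would plug the hypothesis~\eqref{eq:Cor3Bound1} into Lemma~\ref{lem:alphabound} applied to $\Db(t')$, obtaining
\[
\|[\hat{\alphab}(t')-\alphabo]_\J\|_\infty \le Q_t^2\bigl(\lambda + \lambda(2-Q_t^2)\bigr) = Q_t^2(3-Q_t^2)\,\lambda.
\]
The elementary inequality $Q_t^2(3-Q_t^2)\le 9/4$ on $[1,2]$ (maximum at $Q_t^2=3/2$), together with the standing assumption $\tfrac94\lambda\le\loweralpha\le\min_{j\in\J}|[\alphabo]_j|$, forces $\|[\hat{\alphab}(t')-\alphabo]_\J\|_\infty<\min_{j\in\J}|[\alphabo]_j|$, so each coordinate of $\hat{\alphab}(t')_\J$ keeps the sign of the corresponding coordinate of $[\alphabo]_\J$. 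This is condition~(i).

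For condition~(ii), I would directly combine hypothesis~\eqref{eq:Cor3Bound2} with the irrepresentability bound from Corollary~\ref{cor:coherence_prop2}:
\[
\|[\Db(t')]_{\J^c}^\top(\Ib-\PJb(t'))\xb\|_\infty + \lambda\triple\cdot\triple_\infty < \lambda(2-Q_t^2) + \lambda(Q_t^2-1) = \lambda,
\]
which is exactly what Lemma~\ref{lem:uniquelasso} requires. Applying that lemma then yields both the uniqueness of $\hat{\alphab}(t')$ as the minimizer of the $\ell_1$-regularized least-squares problem with dictionary $\Db(t')$ and the sign equality $\sign(\hat{\alphab}(t'))=\sign(\alphabo)$.

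The proof is essentially a bookkeeping exercise: no concentration, no probabilistic argument. The only mildly delicate point is the algebraic identity $Q_t^2-1=k\mu(t)\,Q_t^2$ that makes the two slacks $\lambda(2-Q_t^2)$ in the hypotheses perfectly cancel against the irrepresentability budget $\lambda-\lambda(Q_t^2-1)$, and the one-line calculus fact $\max_{Q^2\in[1,2]}Q^2(3-Q^2)=9/4$ that justifies the scaling $\tfrac94\lambda\le\loweralpha$ chosen in the hypothesis. Once these are observed, the two inequalities fall out immediately and the corollary follows with no further work.
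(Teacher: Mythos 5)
Your proof is correct and follows essentially the same route as the paper's: reduce to Lemma~\ref{lem:uniquelasso}, verify the sign condition via Lemma~\ref{lem:alphabound} together with the bound $\triple\ThetaJb(t')\triple_\infty\le Q_t^2$ and the elementary fact $u(3-u)\le 9/4$, and verify the off-support condition by combining~\eqref{eq:Cor3Bound2} with the irrepresentability bound $Q_t^2-1$ from Corollary~\ref{cor:coherence_prop2}. The only cosmetic difference is that you cite Lemma~\ref{lem:CoherenceBounds} (applied to the normalized dictionary $\Db(t')$, whose coherence is $\le\mu(t)$) for the $\triple\ThetaJb(t')\triple_\infty$ bound, whereas the paper attributes it to Corollary~\ref{cor:coherence_prop2}; both are legitimate, and the underlying estimate is identical.
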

\begin{proof}
Since $k\mu(t)\leq1/2$, we have $Q_{t}^{2} \leq 2$, and by Corollary~\ref{cor:coherence_prop2} we have, uniformly for all $(\Wb,\vb)$ and $0 \leq t' \leq t$
\begin{eqnarray*}
\triple \big[[\Db(t')]_\J^\top [\Db(t')]_\J \big]^{-1} \triple_\infty
&\leq &
Q_{t}^{2}
\\
 \triple [\Db(t')]_{\J^c}^\top [\Db(t')]_\J \big( [\Db(t')]_\J^\top[\Db(t')]_\J \big)^{-1} \triple_\infty 
&\leq& Q_{t}^{2}-1 \leq 1
\end{eqnarray*}
Exploiting Lemma~\ref{lem:alphabound} and the bound~\eqref{eq:Cor3Bound1} we have 
\begin{eqnarray*}
\|[ \hat{\alphab}(t') - \alphabo ]_\J\|_\infty &\leq&    
\triple \big[[\Db(t')]_\J^\top [\Db(t')]_\J \big]^{-1} \triple_\infty 
\Big[ \lambda + \|[\Db(t')]_\J^\top(\xb-\Db(t')\alphabo) \|_\infty\Big] \\
& < &
Q_{t}^{2} \cdot \lambda \cdot \left[1+(2-Q_{t}^{2})\right] 
= 
\lambda \cdot Q_{t}^{2} \cdot (3-Q_{t}^{2})
 \leq  \frac{9}{4} \lambda  \leq \loweralpha \leq \min_{j\in\J} \big|[\alphabo]_j\big|,
\end{eqnarray*}
where we used that $u(3-u) \leq 9/4$ for all $u \in \Real$. We conclude that $\sign( \hat{\alphab}(t') ) = \sign(\alphabo )$. 

It remains to prove that $\hat{\alphab}(t')$ is the unique solution of the Lasso program.
To this end, we take advantage of Lemma~\ref{lem:uniquelasso}. We recall the quantity which needs to be smaller than $\lambda$
\[
\| [\Db(t')]_{\J^c}^\top (\Ib - \PJb(t')) \xb \|_\infty + \lambda \triple [\Db(t')_{\J^c}]^\top [\Db(t')]_\J \big( [\Db(t')]_\J^\top [\Db(t')]_\J\big)^{-1} \triple_\infty.
\]
The quantity above is first upper bounded by
\[
\| [\Db(t')]_{\J^c}^\top (\Ib - \PJb(t')) \xb \|_\infty + \lambda (Q_{t}^{2}-1),
\]
and then, exploiting the bound~\eqref{eq:Cor3Bound2},  \emph{strictly} upper bounded by $\lambda(2-Q_{t}^{2}) + \lambda (Q_{t}^{2}-1) = \lambda.$
Putting together the pieces with $\sign( \hat{\alphab}(t') ) = \sign(\alphabo )$, Lemma~\ref{lem:uniquelasso} leads to the desired conclusion.

\end{proof}

We can now proceed to the proof of Proposition~\ref{prop:exact_recovery}. Since $\|\db^{j}(t')\|_{2}=1$ for all $j$, we have
\begin{eqnarray}
\label{eq:Prop2Bound1}
\|[\Db(t')]_{\J}^{\top} (\xb-\Db(t') \alphabo)\|_{\infty} 
&\leq &
\|\xb-\Db(t') \alphabo\|_{2}\\
\label{eq:Prop2Bound2}
\| [\Db(t')]_{\J^c}^\top (\Ib - \PJb(t')) \xb \|_\infty 
&\leq&  \|(\Ib - \PJb(t')) \xb  \|_2
\end{eqnarray}
Using Lemma~\ref{lem:l2norm_signal}, provided that
\[
\tau = \frac{\lambda^{2} (2-Q_{t}^{2})^{2}}{5 \cdot (t'^{2} \cdot \sigma_{\alpha}^{2} + m \cdot \sigma^{2})} \geq  1
\]
we have
\begin{eqnarray*}
\Pr\left( \|\xb-\Db(t') \alphabo \|_2 \geq \lambda (2-Q_{t}^{2})\right)
& = &
\Pr\left( \|\xb-\Db(t') \alphabo \|_2^2 \geq 5 (t'^{2} \cdot \sigma_{\alpha}^2 +m \cdot \sigma^{2})\tau \right) \leq \exp(-\tau)\\
\Pr\left( \|(\Ib-\PJb(t'))\xb \|_2 \geq \lambda (2-Q_{t}^{2})\right)
& = &
\Pr\left( \|(\Ib-\PJb(t'))\xb \|_2^2 \geq 5 (t'^{2} \cdot \sigma_{\alpha}^2 +m \cdot \sigma^{2})\tau \right) \leq \exp(-\tau)
\end{eqnarray*}
With a union bound, we conclude that $\|\xb-\Db(t') \alphabo \|_2 < \lambda (2-Q_{t}^{2})$ and $\|(\Ib-\PJb(t'))\xb \|_2 \geq \lambda (2-Q_{t}^{2})$, except with probability at most
\[
\Pr\left(\Ecal_{\text{coincide}}^{c}(t')\right)  
\leq 
2 \cdot \exp(-\tau) = 2 \cdot \exp\left(- \frac{\lambda^{2} (2-Q_{t}^{2})^{2}}{5 \cdot (t'^{2} \cdot \sigma_{\alpha}^{2} + m \cdot \sigma^{2})}\right).
\]

\section{Proof of Proposition~\ref{prop:simplified_exact_recovery}}

We now consider the proof of Proposition~\ref{prop:simplified_exact_recovery} whose structure is identical to that of Proposition~\ref{prop:exact_recovery}.
We recall that we are in noiseless setting, i.e., $\sigma=0$, and we assume that the coefficients of $\alphabo$ are almost surely bounded by $\upperalpha$.

In the light of Lemma~\ref{lem:orthoprojcomplement}, let us first observe that almost surely
$$
\|[\Db(t')]_{\J}^{\top} (\xb-\Db(t') \alphabo)\|_{\infty} \leq
\|\xb-\Db(t') \alphabo\|_{2} \leq \| [\Dbo-\Db(t')]_\J [\alphabo]_\J \|_{2} \leq t \cdot \| [\alphabo]_\J \|_{2} \leq \sqrt{k} \upperalpha t.
$$ 
Similarly, it follows
$$
\| [\Db(t')]_{\J^c}^\top (\Ib - \PJb(t')) \xb \|_\infty 
\leq  \|(\Ib - \PJb(t')) \xb  \|_2 \leq \sqrt{k} \upperalpha t.
$$
Now, we can apply Corollary~\ref{cor:robustlassomin} provided that $\sqrt{k} \upperalpha t \leq \lambda (2-Q_{t}^2)$, 
as required by Proposition~\ref{prop:simplified_exact_recovery}. This leads to the desired conclusion. 
\section{Proof of Proposition~\ref{prop:control_rn}}\label{sec:proof_control_residual}
Exploiting Proposition~\ref{prop:exact_recovery} we have
\begin{equation}
\label{eq:DefKappa}
\max_{i \in \SET{n}}\Pr([\Ecal^{i}_\mathrm{coincide}(t) \cup \Ecal^{i}_\mathrm{coincide}(0)]^{c}) 
\leq 
4 
\exp\left(-\frac{[\lambda (2-Q_{t}^{2})]^{2}}{5 (t^{2} \sigma_{\alpha}^{2} + m\sigma^{2})}\right)
\defin \kappa.
\end{equation}
The assumption~\eqref{eq:AssumptionKT2} is equivalent to
\[
\frac{3}{2-Q_{t}^{2}} \cdot t \cdot \sigma_{\alpha}  < \frac{4}{9} \loweralpha
\]
hence there exists indeed $\sigma > 0$ and $\lambda$ satisfying the assumption~\eqref{eq:BoundsLambda}. Moreover, since $5 \log 4 \approx  6.93 \leq 9$, the assumption~\eqref{eq:BoundsLambda} implies that
\[
\frac{\gamma^{2}}{\log4} = \frac{\lambda^{2} (2-Q_{t}^{2})^{2}}{5 \log 4 \cdot (t^{2}\sigma_{\alpha}^{2} + m\sigma^{2})} \geq 1,
\] 
hence $\gamma^{2} \geq \log 4$ and $\kappa = 4 \cdot e^{-\gamma^{2}} \leq 1$. Therefore, we can exploit Lemma~\ref{lem:suprxi} and Corollary~\ref{cor:concentration_subgaussian}. Given~\eqref{eq:residual_lowerbound_fn}, with 
\[
A_{r} \defin\left(t^{2} \cdot \sigma_{\alpha}^{2} +2m \cdot \sigma^{2}+ 2\lambda k \cdot \sigma_{\alpha}\right) \cdot  \frac{5(1+\log 2)}{2}
\]
we have, except with probability at most $\exp(-n\kappa) = \exp(-4n e^{-\gamma^{2}})$, 
\begin{eqnarray*}
r_{n} 
&\leq&
10 A_{r} \cdot (3-\log \kappa)  \cdot \kappa  
=
A \cdot 10 \cdot (3-\log 4 + \gamma^{2})  \cdot 4 e^{-\gamma^{2}} \\ 
&\leq&
\left(t^{2} \cdot \sigma_{\alpha}^{2} + 2m \cdot \sigma^{2} + 2\lambda k\sigma_{\alpha}\right)
\cdot 10(1+\log 2) \cdot 10 \cdot \frac{3}{\log 4} \cdot \gamma^{2} \cdot e^{-\gamma^{2}}\\
&\leq&
\left(t^{2} \cdot \sigma_{\alpha}^{2} + 2m \cdot \sigma^{2} + 2\lambda k\sigma_{\alpha}\right)
\cdot 367 \cdot \gamma^{2} \cdot e^{-\gamma^{2}}.
\end{eqnarray*}
\section{Technical lemmas}
The final section of this appendix gathers  technical lemmas required by the main results of the paper.
\subsection{Control on the differences of operators}

We will now establish several lemmata regarding the difference of operators that appear in the paper.

The following result will exploit Taylor formula with remainder, based on  simple matrix and vector derivative computations of $\Db(\Wb,\vb,t)$; we refer the interested reader to~\citet{Magnus1988} for details about such manipulations.
For convenience, let us define 
\begin{eqnarray}
\Cb(t) &\defin& \Diag(\cos(\vb_{j}t))\\
\Sb(t) &\defin& \Diag(\sin(\vb_{j}t))\\
\Vb    & \defin& \Diag(\vb_{j})\\
\RJb(t) &\defin& \Db_{\J}(t) \ThetaJb(t) [\nabla_t\Db(t)]_{\J}^\top.
\end{eqnarray}
and denote the symmetric part of a square matrix $\Mb$ by $\sym(\Mb) \defin \frac{1}{2}( \Mb + \Mb^\top)$.

\begin{lemma}%
\label{lem:derivative}
\begin{eqnarray}
\label{eq:derivative_0_t}
\nabla_{t} \Db(t) &=& (-\Dbo \Sb(t) + \Wb \Cb(t)) \Vb\\ 
\|[\nabla_{t} \Db(t)]_{\J}\|_{\fro} & = & \|\vb_{\J}\|_{2}\\
\label{eq:derivative_1_t}
\nabla_t \PJb(t) &=& 2\sym\big( \RJb(t) (\Ib - \PJb(t)) \big)\\
\label{eq:derivative_2_t}
\nabla_t [\ThetaJb(t) \Db_{\J}^\top(t)] &=& 
\ThetaJb(t) ( [\nabla_t\Db(t)]_{\J}^\top (\Ib - \PJb(t)) - [\Db(t)]_{\J}^\top [\RJb(t)]^\top)\\
\label{eq:derivative_3_t}
\nabla_t [\ThetaJb(t)] &=& 
-2\sym\Big(  \ThetaJb(t)      [\nabla_t\Db(t)]_{\J}^\top \Db_{\J}(t)  \ThetaJb(t)  \Big). 
\end{eqnarray}
\end{lemma}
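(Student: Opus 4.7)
The plan is to derive each identity by direct matrix calculus, using only the parametric definition $\Db(t) = \Dbo \Cb(t) + \Wb \Sb(t)$ and the standard formula for differentiating a matrix inverse. I would treat the five identities in the order they are stated, since each one feeds into the next.

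First, equation~(\ref{eq:derivative_0_t}) is immediate: differentiating $\Db(t)$ componentwise gives $\nabla_t \Cb(t) = -\Sb(t)\Vb$ and $\nabla_t \Sb(t) = \Cb(t)\Vb$, hence $\nabla_t \Db(t) = (-\Dbo \Sb(t) + \Wb \Cb(t))\Vb$. For the Frobenius norm, the $j$-th column of $[\nabla_t \Db(t)]_{\J}$ is $v_j\big(-\sin(v_j t)\,\dbo^j + \cos(v_j t)\,\wb^j\big)$. By the very definition of $\Wcal_{\Dbo}$ we have $\|\dbo^j\|_2 = \|\wb^j\|_2 = 1$ and $[\dbo^j]^\top\wb^j = 0$, so that column has squared norm $v_j^2$, and summing over $j \in \J$ yields $\|\vb_{\J}\|_2^2$.

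For the three remaining identities, the central building block is the derivative of $\ThetaJb(t) = (\Db_\J^\top(t)\Db_\J(t))^{-1}$. Using the chain rule for the matrix inverse,
\begin{equation*}
\nabla_t \ThetaJb = -\ThetaJb\cdot \nabla_t(\Db_\J^\top \Db_\J)\cdot \ThetaJb
= -\ThetaJb\big([\nabla_t\Db]_\J^\top \Db_\J + \Db_\J^\top[\nabla_t\Db]_\J\big)\ThetaJb.
\end{equation*}
Since $\ThetaJb$ is symmetric, the two terms in parentheses are transposes of one another, which gives~(\ref{eq:derivative_3_t}) by collecting them into a symmetric part. For~(\ref{eq:derivative_2_t}), I would apply the product rule $\nabla_t(\ThetaJb \Db_\J^\top) = (\nabla_t\ThetaJb)\Db_\J^\top + \ThetaJb [\nabla_t\Db]_\J^\top$, substitute the formula above for $\nabla_t \ThetaJb$, and then use $\Db_\J \ThetaJb \Db_\J^\top = \PJb$ together with the definition $\RJb = \Db_\J \ThetaJb [\nabla_t\Db]_\J^\top$ to rewrite one of the resulting terms as $\ThetaJb\Db_\J^\top \RJb^\top$, producing the claimed expression $\ThetaJb\big([\nabla_t\Db]_\J^\top(\Ib - \PJb) - \Db_\J^\top\RJb^\top\big)$.

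Finally, for~(\ref{eq:derivative_1_t}), I would expand $\nabla_t \PJb = (\nabla_t \Db_\J)\ThetaJb \Db_\J^\top + \Db_\J (\nabla_t\ThetaJb)\Db_\J^\top + \Db_\J\ThetaJb(\nabla_t \Db_\J)^\top$ and again substitute $\nabla_t \ThetaJb$. The four resulting terms rearrange as $(\Ib - \PJb)[\nabla_t\Db]_\J\ThetaJb\Db_\J^\top + \Db_\J\ThetaJb[\nabla_t\Db]_\J^\top(\Ib - \PJb) = (\Ib-\PJb)\RJb^\top + \RJb(\Ib-\PJb)$, which, using that $\Ib-\PJb$ is symmetric, is exactly $2\sym(\RJb(\Ib-\PJb))$. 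There is no real obstacle here: the whole lemma is a matrix-calculus computation, and the only point demanding care is keeping track of transposes so as to identify the symmetric-part structure at the end.
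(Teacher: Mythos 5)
Your proof is correct and is precisely the direct matrix-calculus computation the paper has in mind: the paper states Lemma~\ref{lem:derivative} without a written proof, merely citing Magnus (1988) for the routine manipulations, and your derivation---differentiate the parametrization, apply $\nabla_t(\Mb^{-1}) = -\Mb^{-1}(\nabla_t\Mb)\Mb^{-1}$ to obtain (\ref{eq:derivative_3_t}), then substitute into the product rule for (\ref{eq:derivative_2_t}) and (\ref{eq:derivative_1_t}), collecting terms via $\PJb = \Db_\J\ThetaJb\Db_\J^\top$ and the definition of $\RJb$---fills that in exactly. All identities and sign conventions check out.
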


\begin{lemma}
\label{lem:diff_norm_oper}
Assume $t < \sqrt{1-\delta_{k}(\Dbo)}$, then for any $\Wb \in \Wcal_{\Dbo}$, $\vb \in \Scal^{p}$ and $\J$ with $|\J| \leq k$ we have
\begin{eqnarray}
\label{eq:norm_deltaproj}
\triple \PJb(t)-\PJb(0) \triple_2
\leq 
\| \PJb(t)-\PJb(0) \|_\fro 
& \leq &
 2t \cdot C_{t} \cdot \|\vb_{\J}\|_{2},\\
\label{eq:norm_deltainvdict}
\triple \ThetaJb(t) [\Db]_{\J}^\top(t) - \ThetaJb(0) [\Dbo]_{\J}^\top\triple_2 
\leq
\| \ThetaJb(t) [\Db]_{\J}^\top(t) - \ThetaJb(0) [\Dbo]_{\J}^\top\|_{\fro}
&\leq& 2t \cdot C_{t}^{2} \cdot \|\vb_{J}\|_{2},\\
\label{eq:norm_deltainv}
\triple \ThetaJb(t)-\ThetaJb(0) \triple_2
\leq
\| \ThetaJb(t)-\ThetaJb(0) \|_2
& \leq &
2t \cdot C_{t}^{3} \cdot \|\vb_{\J}\|_{2}.
\end{eqnarray}
\end{lemma}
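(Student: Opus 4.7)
\textbf{Proof plan for Lemma~\ref{lem:diff_norm_oper}.} The approach is a uniform derivative bound followed by integration: for any smooth matrix-valued function $\Mb(s)$, write $\Mb(t)-\Mb(0)=\int_0^t \nabla_s\Mb(s)\,ds$, take Frobenius norms under the integral, and bound the integrand uniformly in $s\in[0,t]$. The key inputs are (i) the derivative identities~\eqref{eq:derivative_1_t}--\eqref{eq:derivative_3_t} of Lemma~\ref{lem:derivative}, (ii) the RIP-based operator-norm estimates $\triple\Db_{\J}(s)\triple_2\leq C_s$, $\triple\ThetaJb(s)\triple_2\leq C_s^2$, $\triple\Db_{\J}(s)\ThetaJb(s)\triple_2\leq C_s$ of Lemma~\ref{lem:RIPBounds}, and (iii) the monotonicity $s\mapsto C_s$, so that $C_s\leq C_t$ for all $s\in[0,t]$ (valid since $t<\sqrt{1-\delta_k(\Dbo)}$). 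The Frobenius-norm estimate $\|[\nabla_s\Db(s)]_{\J}\|_\fro=\|\vb_{\J}\|_2$ from~\eqref{eq:derivative_0_t} is the only quantity through which the vector $\vb$ enters.

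For~\eqref{eq:norm_deltaproj}, using~\eqref{eq:derivative_1_t} and the submultiplicativity $\|\Ab\Bb\|_\fro\leq\triple\Ab\triple_2\|\Bb\|_\fro$ together with $\|\sym(\cdot)\|_\fro\leq\|\cdot\|_\fro$ and $\triple\Ib-\PJb(s)\triple_2\leq 1$, I would bound
$$\|\nabla_s\PJb(s)\|_\fro\leq 2\|\RJb(s)\|_\fro\leq 2\,\triple\Db_{\J}(s)\ThetaJb(s)\triple_2\cdot\|[\nabla_s\Db(s)]_{\J}\|_\fro\leq 2C_s\|\vb_{\J}\|_2,$$
and then integrate. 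For~\eqref{eq:norm_deltainvdict}, I would apply the same idea to the two summands of~\eqref{eq:derivative_2_t}: the first term is controlled by $\triple\ThetaJb(s)\triple_2\cdot\|[\nabla_s\Db(s)]_{\J}\|_\fro\leq C_s^2\|\vb_{\J}\|_2$, and the second, rewritten as $[\ThetaJb(s)\Db_{\J}^\top(s)]\,[\RJb(s)]^\top$, is bounded by $\triple\ThetaJb(s)\Db_{\J}^\top(s)\triple_2\cdot\|\RJb(s)\|_\fro\leq C_s\cdot C_s\|\vb_{\J}\|_2$, giving the combined bound $2C_s^2\|\vb_{\J}\|_2$ on the integrand. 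For~\eqref{eq:norm_deltainv}, I would use~\eqref{eq:derivative_3_t} and the split $\triple\ThetaJb(s)\triple_2\cdot\|[\nabla_s\Db(s)]_{\J}\|_\fro\cdot\triple\Db_{\J}(s)\ThetaJb(s)\triple_2\leq C_s^2\cdot\|\vb_{\J}\|_2\cdot C_s=C_s^3\|\vb_{\J}\|_2$ to get $\|\nabla_s\ThetaJb(s)\|_\fro\leq 2C_s^3\|\vb_{\J}\|_2$.

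In each case, the monotonicity $C_s\leq C_t$ on $[0,t]$ converts the integral $\int_0^t 2C_s^a\|\vb_{\J}\|_2\,ds$ into the desired bound $2tC_t^a\|\vb_{\J}\|_2$, and the spectral-norm bound is obtained for free from the standard inequality $\triple\cdot\triple_2\leq\|\cdot\|_\fro$. There is no real obstacle beyond careful bookkeeping of which factor gets absorbed via which sub-multiplicativity inequality: the single non-obvious move is to regroup $\ThetaJb(s)\Db_{\J}^\top(s)$ (rather than $\ThetaJb(s)$ alone) in the second term of~\eqref{eq:derivative_2_t} and in~\eqref{eq:derivative_3_t}, so that the bound~\eqref{eq:BoundDThetaJ} is invoked instead of $\triple\ThetaJb(s)\triple_2\cdot\triple\Db_{\J}(s)\triple_2\leq C_s^3$, which would yield the suboptimal exponent $C_t^4$ in~\eqref{eq:norm_deltainvdict} and $C_t^5$ in~\eqref{eq:norm_deltainv}.
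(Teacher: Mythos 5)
Your proposal is correct and essentially identical to the paper's argument: the paper invokes a Taylor/mean-value formula with remainder $\Mb(t)-\Mb(0)=t\,\nabla_s\Mb(s)|_{s=t'}$ for some $t'\in[0,t]$ rather than the integral form, but this is only a cosmetic difference, and the key regrouping of $\Db_{\J}\ThetaJb$ (to use~\eqref{eq:BoundDThetaJ} rather than the naive $C_t^3$ factorization) is exactly the move the paper makes. The paper's Lemma~\ref{lem:RIPBounds} already states its bounds uniformly in $t'\in[0,t]$ in terms of $C_t$, which subsumes the monotonicity observation you invoke.
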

\begin{lemma}\label{lem:BoundDeltaPJThetaJ}
Assume that $t < \sqrt{1-\delta_{k}(\Dbo)}$. Denote $\Pb_{\J,1} = \PJb(\Wb_{1},\vb_{1},t)$ and $\Pb_{\J,2} = \PJb(\Wb_{2},\vb_{2},t)$ and similarly for the other considered quantities. 
For any $\Wb_{1},\Wb_{2} \in \Wcal_{\Dbo}$, $\vb_{1},\vb_{2} \in \Scal_{+}^{p}$, and $\J$ with $|\J| \leq k$ we have 
\begin{eqnarray}
\triple \Db_{\J,1}-\Db_{\J,2} \triple_{2} \leq \| \Db_{\J,1}-\Db_{\J,2} \|_{\fro}
&\leq& 2t \cdot C_t \cdot d\big((\Wb_{1},\vb_{1}),(\Wb_{2},\vb_{2})\big)\\
\triple \Pb_{\J,1}-\Pb_{\J,2} \triple_{2} \leq \| \Pb_{\J,1}-\Pb_{\J,2} \|_{\fro}
&\leq& 5t \cdot C_{t} \cdot d\big((\Wb_{1},\vb_{1}),(\Wb_{2},\vb_{2})\big)\\
\triple \Thetab_{\J,1}[\Db_{1}]_{\J}^{\top}-\Thetab_{\J,2}[\Db_{2}]_{\J}^{\top}\triple_{2} 
\leq \| \Thetab_{\J,1}[\Db_{1}]_{\J}^{\top}-\Thetab_{\J,2}[\Db_{2}]_{\J}^{\top}\|_{\fro}
&\leq& 5t \cdot C_{t} \cdot d\big((\Wb_{1},\vb_{1}),(\Wb_{2},\vb_{2})\big)\\
\triple \Thetab_{\J,1}-\Thetab_{\J,2}\triple_{2} \leq \|\Thetab_{\J,1}-\Thetab_{\J,2}\|_{\fro}
&\leq& 5t \cdot C_{t}^{3} \cdot d\big((\Wb_{1},\vb_{1}),(\Wb_{2},\vb_{2})\big).
\end{eqnarray}
\end{lemma}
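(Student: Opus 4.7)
The plan is to establish the four inequalities in sequence, each building on the previous ones via the resolvent identity for the inverse Gram matrix $\Thetab_{\J}$ and the spectral bounds from Lemma~\ref{lem:RIPBounds}. The first inequality is the base case; the other three are perturbation bounds that inherit the $C_{t}$ dependence from it.

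For the Frobenius bound on $\Db_{\J,1} - \Db_{\J,2}$, I would use the closed form $\db^{j}(\Wb,\vb,t) = \dbo^{j} \cos(\vb_{j} t) + \wb^{j} \sin(\vb_{j} t)$ to split
\[
\db^{j}_{1} - \db^{j}_{2} = \dbo^{j}\bigl[\cos(\vb_{1,j}t)-\cos(\vb_{2,j}t)\bigr] + (\wb_{1}^{j}-\wb_{2}^{j})\sin(\vb_{1,j}t) + \wb_{2}^{j}\bigl[\sin(\vb_{1,j}t)-\sin(\vb_{2,j}t)\bigr],
\]
apply the elementary estimates $|\cos u-\cos v|,\, |\sin u-\sin v| \leq |u-v|$ together with $|\sin(\vb_{1,j}t)| \leq 1$, then sum the resulting columnwise estimates over $j \in \J$ using the definition of the metric~$d$. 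The factor $C_{t}$ on the right-hand side is then absorbed via the trivial inequality $1 \leq C_{t}$ that follows from the hypothesis $t < \sqrt{1-\delta_{k}(\Dbo)}$.

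For the remaining three bounds, the unifying tool is the resolvent identity
\[
\Thetab_{\J,1} - \Thetab_{\J,2} = \Thetab_{\J,1}\bigl(\Db_{\J,2}^{\top}\Db_{\J,2} - \Db_{\J,1}^{\top}\Db_{\J,1}\bigr)\Thetab_{\J,2},
\]
combined with the telescoping split
\[
\Db_{\J,2}^{\top}\Db_{\J,2} - \Db_{\J,1}^{\top}\Db_{\J,1} = (\Db_{\J,2}-\Db_{\J,1})^{\top}\Db_{\J,2} + \Db_{\J,1}^{\top}(\Db_{\J,2}-\Db_{\J,1}).
\]
For $\Pb_{\J,1} - \Pb_{\J,2} = \Db_{\J,1}\Thetab_{\J,1}\Db_{\J,1}^{\top} - \Db_{\J,2}\Thetab_{\J,2}\Db_{\J,2}^{\top}$, I would decompose into three pieces by varying one factor at a time, invoke the resolvent identity on the middle piece, and close using the spectral bounds $\triple \Db_{\J}(t)\triple_{2} \leq C_{t}$ and $\triple \Thetab_{\J}(t)\triple_{2} \leq C_{t}^{2}$ from Lemma~\ref{lem:RIPBounds} together with the first inequality of the present lemma. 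A similar two-term split handles $\Thetab_{\J,1}\Db_{\J,1}^{\top} - \Thetab_{\J,2}\Db_{\J,2}^{\top}$, and a direct application of the resolvent identity handles $\Thetab_{\J,1}-\Thetab_{\J,2}$ at the cost of one extra factor $\triple \Thetab_{\J}(t)\triple_{2} \leq C_{t}^{2}$, which produces the $C_{t}^{3}$ appearing in the last line.

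The main obstacle is bookkeeping: tracking the constants and powers of $C_{t}$ so that the stated numerical factors $2$ and $5$ emerge cleanly from the various triangle-inequality sums. A mild subtlety is the choice of norm at each intermediate step --- the resolvent manipulations require spectral bounds, so that sub-multiplicativity in Lemma~\ref{lem:RIPBounds} applies cleanly, while the final claims are formulated in Frobenius norm, so one invokes $\triple \cdot \triple_{2} \leq \|\cdot\|_{\fro}$ only at the outermost layer.
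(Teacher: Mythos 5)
Your overall strategy — bound the column-wise difference directly, then propagate through algebraic identities — is reasonable in spirit, but two of the steps as sketched do not actually produce the stated bounds, and the paper's own proof is quite different.

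First, the base estimate. You write that you would apply $|\cos u - \cos v|, |\sin u - \sin v| \le |u-v|$ together with $|\sin(\vb_{1,j}t)| \le 1$ for the middle term $(\wb_1^{j}-\wb_2^{j})\sin(\vb_{1,j}t)$. But the crude bound $|\sin(\vb_{1,j}t)|\le 1$ destroys the factor of $t$: the middle term would then only be controlled by $\|\wb_1^{j}-\wb_2^{j}\|_2 \le d\big((\Wb_1,\vb_1),(\Wb_2,\vb_2)\big)$, with no $t$ in front, so after summing over $j\in\J$ you land on something of order $\sqrt{k}\,d$, not $2tC_t\,d$. You need the sharper estimate $|\sin(\vb_{1,j}t)| \le \vb_{1,j}\,t$ (valid since $\vb_1 \in \Scal_+^p$) so that the factor $t$ survives and the sum $\sum_j \vb_{1,j}^2\|\wb_1^{j}-\wb_2^{j}\|_2^2 \le d^2\|\vb_1\|_2^2 = d^2$ closes. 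In fact the paper computes $\|\db_1^{j}-\db_2^{j}\|_2^2$ exactly by using the mutual orthogonality of $\dbo^{j}$ and the $\wb$'s, which avoids losing constants.

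Second, the resolvent/three-term telescope for $\Pb_{\J,1}-\Pb_{\J,2}$ does not yield the claimed power of $C_t$. Writing $\Pb_1-\Pb_2 = (\Db_{\J,1}-\Db_{\J,2})\Thetab_1\Db_{\J,1}^\top + \Db_{\J,2}(\Thetab_1-\Thetab_2)\Db_{\J,1}^\top + \Db_{\J,2}\Thetab_2(\Db_{\J,1}-\Db_{\J,2})^\top$ and applying the resolvent identity to the middle piece produces spectral factors $\triple\Db_{\J}\triple_2\le C_t$ and $\triple\Thetab_{\J}\triple_2\le C_t^2$ that accumulate to $C_t^5$ in the worst piece, which is strictly larger than the stated $5tC_t\,d$ since $C_t\ge 1$. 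The small power of $C_t$ in the statement comes from a cancellation the telescope does not see: $\Db_{\J}^{\top}(\Ib-\Pb_{\J}) = 0$. An algebraic route that exploits this would be the projector identity $\Pb_1-\Pb_2 = \Pb_1(\Ib-\Pb_2) - (\Ib-\Pb_1)\Pb_2$ together with $\Db_{\J,2}^\top\Pb_2 = \Db_{\J,2}^\top$, so that $\Pb_1(\Ib-\Pb_2) = \Db_{\J,1}\Thetab_1(\Db_{\J,1}-\Db_{\J,2})^\top(\Ib-\Pb_2)$ and $\triple\Db_{\J,1}\Thetab_1\triple_2\le C_t$, $\triple\Ib-\Pb_2\triple_2\le 1$ close the bound with a single $C_t$. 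As written, your plan lacks this ingredient and would not produce the stated constants.

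For comparison, the paper does something else entirely: it uses Lemma~\ref{lem:paramonto} to reparameterize $\Db_2 = \Db(\Db_1,\Wb,\vb,t')$ with $t' \le \frac{\pi}{\sqrt{2}}\,\varepsilon\,t$, shows that every dictionary on the geodesic curve $\tau\mapsto\Db(\Db_1,\Wb,\vb,\tau)$, $0\le\tau\le t'$, still satisfies the RIP bounds with constant $C_t$ (via a second application of Lemma~\ref{lem:paramonto}), and then literally re-runs the Taylor/mean-value argument of Lemma~\ref{lem:diff_norm_oper} along this curve, replacing the factor $2t$ by $2t'\le 5\,\varepsilon\,t$. The cancellation I mentioned above is already encoded in the derivative formulas of Lemma~\ref{lem:derivative}, which is why the powers of $C_t$ come out right. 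That reduction to the already-established one-parameter Lipschitz estimate is the key idea your proposal does not contain.
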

\begin{proof}[Proof of Lemma~\ref{lem:diff_norm_oper}-Equation~\eqref{eq:norm_deltaproj}]
We apply a Taylor formula with remainder~\citep[e.g., Theorem 14.4 in][]{Dym2007} based on Lemma~\ref{lem:derivative} (Equation~\eqref{eq:derivative_1_t}):
for any $\Ub \in \Real^{m\times m}$ there exists $0 \leq t' = t'(\Ub) \leq t$ such that
\[
\trace\Big( \Ub \cdot (\PJb(t) - \PJb(0) ) \Big)
= 2t \cdot \trace \Big( \Ub \cdot \sym\big( \RJb(t') (\Ib - \PJb(t')) \big) \Big) 
\leq 2t \cdot \| \RJb(t') (\Ib - \PJb(t'))  \|_\fro  \cdot  \|\Ub\|_\fro.
\]
Given that $\| [\nabla \Db(t')]_{\J} \|_\fro = \|\vb_{\J}\|_{2}$, we have using the bound~\eqref{eq:BoundDThetaJ}
\begin{equation}
\label{eq:RJBound}
\|\RJb(t')\|_{\fro} 
\leq \triple \Db_{\J}(t')\ThetaJb(t') \triple_2 \cdot \| [\nabla \Db(t')]_{\J} \|_\fro \leq C_{t} \cdot \|\vb_{\J}\|_{2},
\end{equation}
hence the upper bound
\begin{eqnarray*}
\trace\Big( \Ub \cdot (\PJb(t) - \PJb(0) ) \Big)
 & \leq &
2t  \cdot \| \RJb(t')\|_\fro  \cdot  \|\Ub\|_\fro 
\leq 2t \cdot C_{t} \cdot \|\vb_{\J}\|_{2} \cdot  \|\Ub\|_\fro.
\end{eqnarray*}
We conclude using the fact that $\triple \PJb(t) - \PJb(0)  \triple_{2} \leq \| \PJb(t) - \PJb(0)  \|_\fro =\max_{ \|\Ub\|_\fro \leq 1} \trace( \Ub^\top (\PJb(t) - \PJb(0) ) )$,
\end{proof}

\begin{proof}[Proof of Lemma~\ref{lem:diff_norm_oper}-Equation~\eqref{eq:norm_deltainvdict}]
Again, we apply a Taylor formula with remainder and
Lemma~\ref{lem:derivative} (Equation~\eqref{eq:derivative_2_t}): for any $\Ub \in \Real^{m \times p}$, there exists some $0 \leq t' \leq t$ such that
\begin{eqnarray*}
\trace \left(\Ub (\ThetaJb(t) \Db_{\J}^\top(t) - \ThetaJb(0) [\Dbo]_{\J}^\top )\right) 
&=&
 t \cdot
\trace \left( \Ub
\Big[ \ThetaJb(t')
\big( [\nabla_t\Db(t')]_{\J}^\top (\Ib - \PJb(t')) - [\Db(t')]_{\J}^\top [\RJb(t')]^\top \big)  
\Big] \right)\\
&\leq&
t  \cdot \| \ThetaJb(t')
\big( [\nabla_t\Db(t')]_{\J}^\top (\Ib - \PJb(t')) - [\Db(t')]_{\J}^\top [\RJb(t')]^\top \big) 
\|_{\fro} \cdot \|\Ub\|_{\fro}
\end{eqnarray*}
Now, using the bounds~\eqref{eq:BoundThetaJ},~\eqref{eq:BoundDThetaJ} and~\eqref{eq:RJBound} we have
\begin{eqnarray*}
\| \ThetaJb(t')
\big( [\nabla_t\Db(t')]_{\J}^\top (\Ib - \PJb(t'))\|_{\fro}
&\leq &
\triple \ThetaJb(t')\triple_{2} \cdot 
\| [\nabla_t\Db(t')]_{\J}\|_{\fro} 
\leq C_{t}^{2} \cdot \|\vb_{\J}\|_{2},\\
\| \ThetaJb(t')[\Db(t')]_{\J}^\top [\RJb(t')]^\top\|_{\fro}
& \leq &
\triple\ThetaJb(t')[\Db(t')]_{\J}^\top\triple_{2} \cdot \| \RJb(t')\|_{\fro}
\leq
C_{t} \cdot (C_{t} \cdot \|\vb_{\J}\|_{2}) \leq C_{t}^{2 }\cdot \|\vb_{\J}\|_{2}
\end{eqnarray*}
and we can conclude.
\end{proof}

\begin{proof}[Proof of Lemma~\ref{lem:diff_norm_oper}-Equation~\eqref{eq:norm_deltainv}]
We follow the same line, using the intermediate result from Lemma~\ref{lem:derivative} (Equation~\eqref{eq:derivative_3_t}).  For any $\Ub \in \Real^{p\times p}$ there is some $0 \leq t' = t'(\Ub) \leq t$ such that
\begin{eqnarray*}
\left|\trace\left(\Ub \cdot (\ThetaJb(t) - \ThetaJb(0) )\right)\right|
&=& \left|2t \cdot \trace\left(\Ub \cdot \sym\big( \ThetaJb(t') [\nabla_t\Db(t')]_{\J}^\top \Db_{\J}(t')  \ThetaJb(t')\big)  \right)\right| \\
&\leq& 2t \cdot \| \ThetaJb(t') [\nabla_t\Db(t')]_{\J}^\top \Db_{\J}(t')  \ThetaJb(t') \|_{\fro} \cdot
\|\Ub\|_\fro.
\end{eqnarray*}
Since $\| [\nabla \Db(t')]_{\J} \|_\fro = \|\vb_{\J}\|_{2}$, using~\eqref{eq:BoundDThetaJ} and~\eqref{eq:BoundThetaJ} we obtain the upper bound
\begin{eqnarray*}
2t\cdot \|   \ThetaJb(t')      [\nabla_t\Db(t')]_{\J}^\top \Db_{\J}(t')  \ThetaJb(t')   \|_{\fro} 
&\leq & 
2t \cdot \triple \Db_{\J}(t')  \ThetaJb(t')   \triple_2
\cdot \triple \ThetaJb(t')  \triple_2
\cdot \|  [\nabla_t\Db(t')]_{\J}  \|_{\fro} \\
&\leq & 
2t  \cdot C_{t} \cdot C_{t}^{2} \cdot \|\vb_{\J}\|_{2}.
\end{eqnarray*}
\end{proof}
\begin{proof}[Proof of Lemma~\ref{lem:BoundDeltaPJThetaJ}]
Since
\(
d\left((\Wb_{1},\vb_{1}),(\Wb_{2},\vb_{2})\right) = \max [\max_{j} \|\wb_{1}^{j}-\wb_{2}^{j}\|_{2}, \|\vb_{1}-\vb_{2}\|_{2}] \leq \varepsilon,
\)
we can bound the difference between the columns of $\Db_{i} = \Db(\Dbo,\Wb_{i},\vb_{i},t)$, $i=1,2$:
\begin{eqnarray*}
\db_{1}^{j}-\db_{2}^{j}
&=&
(\cos (\vb_{1}^jt) -\cos (\vb_{2}^jt)) \cdot \dbo^{j} + \sin (\vb_{1}^jt) \cdot \wb_{1}^{j} - \sin (\vb_{2}^jt) \cdot \wb_{2}^{j}\\
\|\db_{1}^{j}-\db_{2}^{j}\|_{2}^{2}
&=&
(\cos (\vb_{1}^jt) -\cos (\vb_{2}^jt))^{2} + \|\sin (\vb_{1}^jt) \cdot \wb_{1}^{j} - \sin (\vb_{2}^jt) \cdot \wb_{2}^{j}\|_{2}^{2}\\
&=&
\cos^{2} (\vb_{1}^jt) + \cos^{2} (\vb_{2}^jt) -2 \cos (\vb_{1}^jt) \cos (\vb_{2}^jt) \\
&& + \sin^{2} (\vb_{1}^jt) + \sin^{2} (\vb_{2}^jt) -2 \sin (\vb_{1}^jt) \sin (\vb_{2}^jt) [\wb_{1}^{j}]^{\top}\wb_{2}^{j}\\
&=&
2-2 \cos (\vb_{1}^jt) \cos (\vb_{2}^jt)-[2- \|\wb_{1}^j-\wb_{2}^{j}\|_{2}^{2}] \sin (\vb_{1}^jt) \sin (\vb_{2}^jt)\\
&=&
\|\wb_{1}^j-\wb_{2}^{j}\|_{2}^{2} \cdot \sin (\vb_{1}^jt) \sin (\vb_{2}^jt) + 4\sin^{2}\frac{(\vb_{1}^j-\vb_{2}^j)t}{2} 
\leq 
\left(\varepsilon^{2} \vb_{1}^j \vb_{2}^j + (\vb_{1}^j-\vb_{2}^j)^{2}\right)t^{2}
\end{eqnarray*}
As a result we obtain
\[
\|\Db_{1}-\Db_{2}\|_{\fro}^{2} = \sum_{j=1}^{p} \|\db_{1}^{j}-\db_{2}^{j}\|_{2}^{2} \leq \left(\varepsilon^{2} \vb_{1}^{\top} \vb_{2} + \|\vb_{1}-\vb_{2}\|_{2}^{2}\right) t^{2} \leq 2 \varepsilon^{2} t^{2}.
\]
Exploiting Lemma~\ref{lem:paramonto}, we can write $\Db_{2} = \Db(\Db_{1},\Wb,\vb,t')$ with $t' \leq \frac{\pi}{2} \|\Db_{1}-\Db_{2}\|_{\fro} \leq \frac{\pi}{\sqrt{2}} \varepsilon t$. 

Now consider $\Db(\tau) \defin  \Db(\Db_{1},\Wb,\vb,\tau)$ with $0 \leq \tau \leq t'$ and $\db^{j}(\tau)$ its columns. Noticing that $\tau \mapsto \db^{j}(\tau)$ is a geodesic on the unit sphere that joins $\db^{j}(0) = \db^{j}_{1}$ to $\db^{j}(t') = \db^{j}_{2}$, 
we obtain
\[
\|\db^{j}(\tau) -\dbo^{j}\|_{2} \leq \max(\|\db^{j}_{1} -\dbo^{j}\|_{2},\|\db^{j}_{2} -\dbo^{j}\|_{2}) = 2 \sin \left(\frac{t\vb_{j}}{2}\right).
\]
Hence, exploiting Lemma~\ref{lem:paramonto} again, we can also write $\Db(\tau) = \Db(\Dbo,\Wb',\vb',\tau')$, with $\tau' \leq t$. This implies that for every dictionary on the curve $\tau \mapsto \Db(\tau)$, $0 \leq \tau \leq t'$, the bounds of Lemma~\ref{lem:RIPBounds} with the constant $C_{t}$ hold true. We can therefore repeat the Taylor argument of the proof of Lemma~\ref{lem:diff_norm_oper}, noticing that since the considered end point is at $t' \leq \frac{\pi}{\sqrt{2}}\varepsilon t$ instead of $t$, the factor $2t$ in the resulting bounds is replaced by $2t' \leq \pi\sqrt{2} \varepsilon t \leq 5 \varepsilon t$.
\end{proof}
\subsection{Control of  norms}

In this section, we first recall some known concentration results.
\begin{lemma}[From~\citet{Hsu2011}]\label{lem:onesided_quadratictail}
Let us consider $\zb \in \Real^m$ a random vector of independent sub-Gaussian variables with parameters
upper bounded by $\sigma>0$. Let $\Ab \in \Real^{m\times p}$ be a fixed matrix. For all $\tau>0$, it holds
\[
\Pr\Big(
\|\Ab\zb\|_2^2 > \sigma^2( \|\Ab\|_\fro^2 +2\sqrt{ \trace[(\Ab^\top\!\Ab)^2] \tau} + 2\triple \Ab^\top\!\Ab \triple_2 \tau )
\Big) \leq \exp(-\tau).
\]
In particular, for any $\tau \geq 1$, we have 
\[
\Pr\Big(
\|\Ab\zb\|_2^2 > 5\sigma^2 \|\Ab\|_\fro^2 \tau 
\Big) \leq \exp(-\tau).
\]
\end{lemma}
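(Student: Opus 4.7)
The plan is to treat the two displayed inequalities separately: the first is a direct invocation of a known concentration result, while the second follows by elementary algebraic simplification together with the restriction $\tau\geq 1$.

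For the first inequality, I would simply cite Theorem~2.1 of~\citet{Hsu2011}, which states precisely that for a random vector $\zb\in\Real^{m}$ with independent sub-Gaussian entries of parameter at most $\sigma$, and for any fixed $\Ab$, the quadratic form $\|\Ab\zb\|_{2}^{2}$ satisfies the displayed sub-exponential tail bound. Nothing needs to be reproved here; this is essentially the Hanson--Wright inequality specialized to the sub-Gaussian case, and its proof proceeds via a bound on the moment generating function of $\|\Ab\zb\|_{2}^{2}-\Exp\|\Ab\zb\|_{2}^{2}$ followed by a standard Chernoff argument. I would note that $\Exp\|\Ab\zb\|_{2}^{2}\leq \sigma^{2}\|\Ab\|_{\fro}^{2}$ in the sub-Gaussian case, which explains the leading Frobenius term.

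For the simplified form with $\tau\geq 1$, the key observation is that the three terms in the general bound can all be controlled by a constant multiple of $\|\Ab\|_{\fro}^{2}\tau$. Denoting by $\sigma_{i}$ the singular values of $\Ab$, one has $\triple \Ab^{\top}\Ab\triple_{2}=\max_{i}\sigma_{i}^{2}\leq\sum_{i}\sigma_{i}^{2}=\|\Ab\|_{\fro}^{2}$, and likewise $\trace[(\Ab^{\top}\Ab)^{2}]=\sum_{i}\sigma_{i}^{4}\leq(\sum_{i}\sigma_{i}^{2})^{2}=\|\Ab\|_{\fro}^{4}$, so that $\sqrt{\trace[(\Ab^{\top}\Ab)^{2}]}\leq\|\Ab\|_{\fro}^{2}$. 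Substituting these into the general bound yields
\[
\|\Ab\|_{\fro}^{2}+2\sqrt{\trace[(\Ab^{\top}\Ab)^{2}]\,\tau}+2\triple\Ab^{\top}\Ab\triple_{2}\tau
\leq \|\Ab\|_{\fro}^{2}\bigl(1+2\sqrt{\tau}+2\tau\bigr).
\]

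It remains to observe that for $\tau\geq 1$ we have $\sqrt{\tau}\leq\tau$ and $1\leq\tau$, hence $1+2\sqrt{\tau}+2\tau\leq 5\tau$, giving the advertised bound $5\sigma^{2}\|\Ab\|_{\fro}^{2}\tau$. There is no real obstacle in this proof: the first part is entirely off-the-shelf, and the second part is a short chain of inequalities. The only minor subtlety is remembering that the restriction $\tau\geq 1$ is exactly what is needed to absorb the constant and the $\sqrt{\tau}$ term into a single multiple of $\tau$, which is why the simplified statement is phrased with this lower bound on $\tau$.
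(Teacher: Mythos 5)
Your proposal is correct and matches the approach the paper implicitly takes: the first bound is a direct citation of Hsu, Kakade, and Zhang (2011, Theorem~2.1), and the second follows from the elementary inequalities $\triple\Ab^{\top}\Ab\triple_{2}\leq\|\Ab\|_{\fro}^{2}$, $\sqrt{\trace[(\Ab^{\top}\Ab)^{2}]}\leq\|\Ab\|_{\fro}^{2}$, and $1+2\sqrt{\tau}+2\tau\leq 5\tau$ for $\tau\geq 1$. The paper provides no further proof beyond the citation, so your derivation of the ``in particular'' bound is exactly what is needed.
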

\begin{lemma}[Bernstein's Inequality]\label{lem:bernstein}
Let $\{z_j\}_{j\in\SET{n}}$ be a collection of independent, zero-mean random variables. 
If there exist $M, \varsigma \in \Real_+$ such that for any integer $k \geq 2$ and any $j\in\SET{n}$,
it holds
\[
\Exp[ |z_j|^k ] \leq \frac{k!}{2} M^{k-2} \varsigma^2, 
\]
then we have for any $\tau \geq 0$,
\[
\Pr\Big( \sum_{j=1}^n z_j > \tau \Big)\leq 
\exp\Big( -\frac{\tau^2}{2 n \varsigma^2 + 2 M \tau} \Big).
\]
In particular, for any $\tau \leq \frac{\varsigma \sqrt{n}}{2M}$, we have
\[
\Pr\Big( \frac{1}{n}\sum_{j=1}^n z_j > 2\varsigma\frac{\tau}{\sqrt{n}} \Big)\leq 
\exp\big( -\tau^2 \big).
\]
\end{lemma}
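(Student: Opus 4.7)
The plan is to apply the standard Chernoff--Markov bound and then control each moment-generating function via the Bernstein-type moment assumption. First, for any $\lambda>0$ with $\lambda M < 1$, Markov's inequality applied to $\exp(\lambda\,\cdot)$ together with the independence of the $z_j$'s yields
\[
\Pr\Big(\sum_{j=1}^n z_j > \tau\Big) \leq e^{-\lambda \tau} \prod_{j=1}^n \Exp\big[e^{\lambda z_j}\big],
\]
so the task reduces to bounding each individual $\Exp[e^{\lambda z_j}]$.

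Next, I would expand the exponential series and use $\Exp[z_j]=0$ to kill the linear term; the hypothesis $\Exp[|z_j|^k] \leq \frac{k!}{2}M^{k-2}\varsigma^2$ then turns the tail of the series into a geometric sum,
\[
\Exp\big[e^{\lambda z_j}\big] \leq 1 + \frac{\varsigma^2}{2}\sum_{k\geq 2} \lambda^k M^{k-2} = 1 + \frac{\lambda^2 \varsigma^2}{2(1-\lambda M)}.
\]
Using $1+x \leq e^{x}$ and taking the product over $j$, this produces
\[
\Pr\Big(\sum_{j=1}^n z_j > \tau\Big) \leq \exp\!\Big(-\lambda \tau + \frac{n\lambda^2 \varsigma^2}{2(1-\lambda M)}\Big).
\]

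The last step is to optimize in $\lambda$. The choice $\lambda^\star = \tau/(n\varsigma^2 + M\tau)$ manifestly satisfies $\lambda^\star M < 1$, and substituting it in cancels cleanly to give the exponent $-\tau^2/[2(n\varsigma^2 + M\tau)]$, establishing the first inequality. For the ``in particular'' part, I would apply the first inequality with $\tau$ replaced by $2\varsigma \tau\sqrt{n}$: the hypothesis $\tau \leq \varsigma \sqrt{n}/(2M)$ forces $4M\varsigma\tau\sqrt{n} \leq 2n\varsigma^2$, so the denominator $2n\varsigma^2+2M(2\varsigma\tau\sqrt{n})$ is at most $4n\varsigma^2$, which yields the advertised $\exp(-\tau^2)$ tail. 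The argument is entirely classical; the only bookkeeping required is verifying that $\lambda^\star$ lies in the admissible range $(0,1/M)$ and performing the simplification in the ``in particular'' step, so no substantive obstacle is expected.
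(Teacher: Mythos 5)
Your proof is correct and is the standard Chernoff/MGF route to Bernstein's inequality: expand the moment generating function, bound the $k\geq 2$ terms by the moment hypothesis to get a geometric series, and optimize the resulting exponent in $\lambda$. The paper does not spell out a proof at all but simply cites Lemma~4.1.9 of de~la~Pe\~na and Gin\'e (with $\varsigma^2$ in place of the true variance), which rests on exactly the same classical argument, so you have essentially reproduced the intended derivation; your check that $\lambda^\star M<1$ and the simplification in the ``in particular'' step are both correct.
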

\begin{proof}
The displayed result is a straightforward adaptation of Lemma 4.1.9 in \citet{Del1999}, where
we use the term $\varsigma^2$ in lieu of the true variance.
\end{proof}
\begin{lemma}[Control of the $\ell_2$-norm of a signal and its coefficients]\label{lem:l2norm_signal}
Let $\xb$ be a signal following our generative model, and $\alphabo$ be its coefficients.
For any $\tau \geq 1$ and $\Db = \Db(\Wb,\vb,t)$, we have
\begin{eqnarray}
\Pr\left( \|\xb-\Db \alphabo \|_2^2+\|\varepsilonb\|_{2}^{2} > 5 (t^{2}  \sigma_{\alpha}^2 +2m  \sigma^{2})\tau \right) &\leq& \exp(-\tau)\\
\Pr\left( \|\xb-\Db \alphabo \|_2^2 > 5 (t^{2}  \sigma_{\alpha}^2 +m  \sigma^{2})\tau \right) &\leq& \exp(-\tau)\\
\Pr\left( \|(\Ib-\PJb(t))\xb \|_2^2 > 5 (t^{2}  \sigma_{\alpha}^2 +(m-k)  \sigma^{2})\tau \right) &\leq& \exp(-\tau)\\
\Pr\left( \|\alphabo\|_2^2 > 5 k\sigma_{\alpha}^2 \tau \right) &\leq& \exp(-\tau)\\
\Pr\left( \|\xb\|_2^2 > 5(k\sigma_{\alpha}^2 +m\sigma^2 ) \tau \right) &\leq& \exp(-\tau)
\end{eqnarray}
\end{lemma}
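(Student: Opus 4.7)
The plan is to reduce each of the five displayed inequalities to a direct application of Lemma~\ref{lem:onesided_quadratictail} with $\tau \geq 1$, by expressing the relevant random quantity as $\|\Ab\zb\|_2^2$ for some deterministic matrix $\Ab$ and some random vector $\zb$ of independent sub-Gaussian entries with parameter at most $1$. The one-sided tail inequality then reads $\Pr(\|\Ab\zb\|_2^2 > 5\|\Ab\|_\fro^2\tau) \leq e^{-\tau}$, so each bound follows from an elementary Frobenius-norm estimate of the corresponding $\Ab$.

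First, I would condition on the support $\J$, which is harmless because the right-hand sides of the five inequalities are $\J$-free. Setting $\tilde{\alphab}\defin [\alphabo]_\J/\sigma_{\alpha} \in \Real^k$ and, when $\sigma > 0$, $\tilde{\varepsilonb}\defin \varepsilonb/\sigma \in \Real^m$, the stacked vector $\zb \defin (\tilde{\alphab};\tilde{\varepsilonb})$ has independent sub-Gaussian entries with parameter at most $1$. For the second bound I would write $\xb - \Db(t)\alphabo = ([\Dbo]_{\J}-[\Db(t)]_{\J})[\alphabo]_{\J} + \varepsilonb$, identify $\Ab \defin [\sigma_{\alpha}([\Dbo]_{\J}-[\Db(t)]_{\J}),\ \sigma\Ib]$, and invoke Lemma~\ref{lem:orthoprojcomplement}, Equation~(\ref{eq:BoundDeltaDJ}), to conclude $\|\Ab\|_\fro^2 \leq \sigma_{\alpha}^2 t^2 + m\sigma^2$. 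The first bound only requires a simple vertical stacking to account for the duplicated $\varepsilonb$:
\[
\Ab \defin \begin{pmatrix} \sigma_{\alpha}([\Dbo]_{\J}-[\Db(t)]_{\J}) & \sigma\Ib \\ \zerob & \sigma\Ib \end{pmatrix},
\qquad \|\Ab\|_\fro^2 \leq \sigma_{\alpha}^2 t^2 + 2m\sigma^2.
\]

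The third bound exploits $(\Ib-\PJb(t))[\Db(t)]_{\J}=\zerob$, which yields the identity $(\Ib-\PJb(t))\xb = (\Ib-\PJb(t))([\Dbo]_{\J}-[\Db(t)]_{\J})[\alphabo]_{\J} + (\Ib-\PJb(t))\varepsilonb$; taking $\Ab \defin [\sigma_{\alpha}(\Ib-\PJb(t))([\Dbo]_{\J}-[\Db(t)]_{\J}),\ \sigma(\Ib-\PJb(t))]$ and using that orthogonal projectors contract the Frobenius norm, together with $\|\Ib-\PJb(t)\|_{\fro}^2 = m - k$, gives $\|\Ab\|_\fro^2 \leq \sigma_{\alpha}^2 t^2 + (m-k)\sigma^2$. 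The fourth and fifth bounds are immediate: taking $\Ab = \sigma_{\alpha}\Ib_k$ acting on $\tilde{\alphab}$ yields $\|\Ab\|_\fro^2 = k\sigma_{\alpha}^2$, while taking $\Ab = [\sigma_{\alpha}[\Dbo]_{\J},\ \sigma\Ib]$ acting on $\zb$ yields $\|\Ab\|_\fro^2 = k\sigma_{\alpha}^2 + m\sigma^2$, since the columns of $\Dbo$ are unit-norm.

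There is no serious obstacle in this plan; the only bookkeeping subtlety is absorbing the two distinct sub-Gaussian scales $\sigma_{\alpha}$ and $\sigma$ into the matrix $\Ab$ so that Lemma~\ref{lem:onesided_quadratictail} can be applied with a single unit parameter. The noiseless case $\sigma = 0$ is handled by simply omitting the noise block from $\Ab$ and $\zb$.
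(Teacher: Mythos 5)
Your proposal is correct and follows essentially the same route as the paper: condition on $\J$, absorb the two sub-Gaussian scales $\sigma_{\alpha}$ and $\sigma$ into a block matrix $\Ab$ so that the random input $\zb$ has independent sub-Gaussian entries with unit parameter, apply Lemma~\ref{lem:onesided_quadratictail}, and bound $\|\Ab\|_\fro^2$ using Lemma~\ref{lem:orthoprojcomplement} and $\|\Ib-\PJb(t)\|_\fro^2 = m-k$. The paper only spells out the first bound and says the rest are analogous; you have filled in those details correctly (for the third bound one could also cite Equation~\eqref{eq:IdMinusPJDbo} directly, but your contraction argument is equivalent).
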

\begin{proof}
We prove the result for $\|\xb-\Db \alphabo \|_2^2+\|\varepsilonb\|_{2}^{2}$. The same technique applies to the other quantities. We recall that $\xb-\Db\alphabo=[\Dbo-\Db]_\J [\alphabo]_\J+\varepsilonb$, and that the considered norm can be expressed as follows 
\[
\|\xb-\Db \alphabo \|_2^2+\|\varepsilonb\|_{2}^{2}
=
\left\|
\left[
\begin{array}{cc}
\sigma_{\alpha} [\Dbo-\Db]_\J & \sigma\Ib\\
\mathbf{0} & \sigma\Ib
\end{array}
\right]  \binom{\frac{1}{\sigma_{\alpha}}[\alphabo]_\J}{\frac{1}{\sigma}\varepsilonb} 
\right\|_2.
\]
The result is a direct application of Lemma~\ref{lem:onesided_quadratictail} conditioned to the draw of $\J$, using Lemma~\ref{lem:orthoprojcomplement} to control
\[
\left\|
\left[
\begin{array}{cc}
\sigma_{\alpha} [\Dbo-\Db]_\J & \sigma\Ib\\
\mathbf{0} & \sigma\Ib
\end{array}
\right]  
\right\|_\fro^{2}
= \|[\Dbo-\Db]_\J\|_{\fro}^{2} \cdot \sigma_{\alpha}^{2}  + 2m  \sigma^{2} 
\leq t^{2}  \sigma_{\alpha}^{2} + 2m  \sigma^{2}.
\]
The bound being independent of $\J$, the result is also true without conditioning. Note that to control the behaviour of $\|(\Ib-\PJb(t))\xb \|_2^2$ we use the fact that since $\PJb(t)$ is an orthogonal projector on a subspace of dimension $k$, we have $\|\Ib-\PJb(t)\|_{\fro}^{2} = m-k$.
\end{proof}
\begin{lemma}\label{lem:suprxi}
Let $\xb$ and $\alphabo$ be drawn according to our signal model. Define
\begin{eqnarray*}
y  &=& \sup_{\Wb,\vb} \Lcal_{\xb}(\Db(\Wb,\vb,t),\alphabo)\\
y' &=& \sup_{\Wb,\vb} \left\{\Lcal_{\xb}(\Db(\Wb,\vb,t),\alphabo) + \Lcal_{\xb}(\Dbo,\alphabo)\right\}
\end{eqnarray*}
For any $\tau \geq 1$ we have
\begin{eqnarray}
\Pr(y \geq A_{\Lcal}(t) \cdot \tau) &\leq& e^{-\tau}\\
\Pr(y' \geq A_{r}(t) \cdot \tau) &\leq& e^{-\tau}
\end{eqnarray}
where
\begin{eqnarray*}
A_{\Lcal}(t) &\defin& \frac{5(1+\log 2)}{2} \cdot \left(t^{2}  \sigma_{\alpha}^{2} +m  \sigma^{2}+ \lambda k  \sigma_{\alpha}\right)\\
A_{r}(t) &\defin& \frac{5(1+\log 2)}{2} \cdot \left(t^{2}  \sigma_{\alpha}^{2} +2m  \sigma^{2}+ 2\lambda k  \sigma_{\alpha}\right).
\end{eqnarray*}
\end{lemma}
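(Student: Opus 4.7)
I will reduce both $y$ and $y'$ to deterministic functions of $\|\alphabo\|_{2}^{2}$, $\|\varepsilonb\|_{2}^{2}$ and $\|\alphabo\|_{1}$ that no longer depend on $(\Wb,\vb)$, then concentrate each piece independently via the sub-Gaussian tools already established in the appendix.

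\emph{Step 1: removing the supremum deterministically.} Write
\[
\xb - \Db(\Wb,\vb,t)\alphabo \;=\; [\Dbo-\Db(\Wb,\vb,t)]_{\J}[\alphabo]_{\J} + \varepsilonb,
\]
and use Lemma~\ref{lem:orthoprojcomplement}, together with $\|\vb_{\J}\|_{2}\le \|\vb\|_{2}=1$, to obtain the uniform Frobenius (and hence operator norm) bound $\|[\Dbo-\Db(\Wb,\vb,t)]_{\J}\|_{\fro} \le t$. Triangle inequality plus $(a+b)^{2}\le 2a^{2}+2b^{2}$ then give
\[
\sup_{\Wb,\vb}\|\xb-\Db\alphabo\|_{2}^{2} \;\le\; 2t^{2}\|\alphabo\|_{2}^{2} + 2\|\varepsilonb\|_{2}^{2},
\]
so $y \le t^{2}\|\alphabo\|_{2}^{2} + \|\varepsilonb\|_{2}^{2} + \lambda\|\alphabo\|_{1}$. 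Since $\Lcal_{\xb}(\Dbo,\alphabo) = \tfrac{1}{2}\|\varepsilonb\|_{2}^{2} + \lambda\|\alphabo\|_{1}$, we also get $y' \le t^{2}\|\alphabo\|_{2}^{2} + 2\|\varepsilonb\|_{2}^{2} + 2\lambda\|\alphabo\|_{1}$, matching the noise and $\ell_{1}$ prefactors of $A_{r}$ and $A_{\Lcal}$.

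\emph{Step 2: piecewise concentration.} For $\tau \ge 1$, Lemma~\ref{lem:l2norm_signal} directly gives $\Pr(\|\alphabo\|_{2}^{2} > 5k\sigma_{\alpha}^{2}\tau) \le e^{-\tau}$ and $\Pr(\|\varepsilonb\|_{2}^{2} > 5m\sigma^{2}\tau) \le e^{-\tau}$. For the $\ell_{1}$ term, I use that $|[\alphabo]_{j}|$ is sub-Gaussian with parameter $\sigma_{\alpha}$ through the elementary bound $\Exp\exp(s|[\alphabo]_{j}|) \le 2\exp(s^{2}\sigma_{\alpha}^{2}/2)$; a Chernoff argument on $\|\alphabo\|_{1}=\sum_{j\in\J}|[\alphabo]_{j}|$ then yields $\Pr(\|\alphabo\|_{1} > C\,k\sigma_{\alpha}\tau) \le e^{-\tau}$, and the factor $2^{k}$ produced by the Chernoff computation for $|[\alphabo]_j|$ is precisely what manifests as the $(1+\log 2)$ constant in $A_{\Lcal}$ and $A_{r}$.

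\emph{Step 3: combining into a single-exponential tail.} Rather than taking a naive union bound (which would worsen the prefactor by a constant multiplicative factor on $e^{-\tau}$), I would estimate $\Exp\exp(s\,y)$ directly. Using the independence of $\alphabo$ and $\varepsilonb$, the MGF factorizes as
\[
\Exp\exp(s y) \;\le\; \Exp\exp\bigl(s\,(t^{2}\|\alphabo\|_{2}^{2}+\lambda\|\alphabo\|_{1})\bigr)\,\cdot\,\Exp\exp(s\|\varepsilonb\|_{2}^{2}),
\]
and each factor admits a closed-form sub-exponential MGF bound. Chernoff-optimizing $s$ then delivers an $e^{-\tau}$ tail with the advertised prefactor $\tfrac{5(1+\log 2)}{2}(t^{2}\sigma_{\alpha}^{2}+m\sigma^{2}+\lambda k\sigma_{\alpha})$. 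The argument for $y'$ is identical up to replacing $\|\varepsilonb\|_{2}^{2}+\lambda\|\alphabo\|_{1}$ by $2\|\varepsilonb\|_{2}^{2}+2\lambda\|\alphabo\|_{1}$.

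\emph{Main obstacle.} The delicate step is the joint MGF control on the $\alphabo$-dependent quadratic-plus-linear functional $t^{2}\|\alphabo\|_{2}^{2}+\lambda\|\alphabo\|_{1}$: the two terms are driven by the \emph{same} random vector, so one cannot simply multiply their MGFs. I would split the $s$-budget between a Hanson--Wright-type bound on the quadratic part (as in Lemma~\ref{lem:onesided_quadratictail}) and the above $\ell_{1}$ MGF, tuning the split to recover exactly the stated constants. The $\varepsilonb$-part is comparatively free thanks to the $\alphabo\perp\varepsilonb$ factorization, so the whole difficulty is concentrated in this quadratic--linear decoupling.
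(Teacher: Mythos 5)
Your proposal diverges from the paper's proof in ways that create both an unnecessary complication and a genuine gap.

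\textbf{Where the approaches differ.} The paper's proof is short and elementary: it replaces $\|\alphabo\|_1$ by $\sqrt{k}\|\alphabo\|_2$ (Cauchy--Schwarz), keeps $\|\xb-\Db\alphabo\|_2^2$ as a single quantity, concentrates the two scalars $\|\xb-\Db\alphabo\|_2^2$ and $\|\alphabo\|_2^2$ via Lemma~\ref{lem:l2norm_signal}, and then takes a \emph{union bound over two events}. The factor of $2$ from the union bound is absorbed by the shift $\tau' = (1+\log 2)\tau$: for $\tau \geq 1$ one has $\tau' \geq \tau + \log 2$, hence $2e^{-\tau'} \leq e^{-\tau}$. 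That absorption is the \emph{entire} origin of the constant $(1+\log 2)$ in $A_{\Lcal}$ and $A_{r}$. Your Step 3 explicitly rejects this union bound as ``worsening the prefactor'' and reaches instead for a joint MGF argument; you then acknowledge in your ``main obstacle'' paragraph that the quadratic--linear coupling in $\alphabo$ is unresolved. The paper shows this coupling never arises because it never attempts a joint MGF.

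\textbf{Factual error.} Your claim that the $(1+\log 2)$ constant ``is precisely'' the manifestation of a $2^{k}$ factor from a Chernoff computation on $\|\alphabo\|_1$ is wrong. A $2^k$ factor in a Chernoff bound contributes an additive $k\log 2$ inside the exponent (i.e.\ a $\sqrt{k}$-scale contribution to the deviation), not a fixed multiplicative constant independent of $k$.

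\textbf{Quantitative gap in Step 1.} You bound $\sup_{\Wb,\vb}\|\xb-\Db\alphabo\|_2^2 \leq 2t^2\|\alphabo\|_2^2 + 2\|\varepsilonb\|_2^2$ and then concentrate $\|\alphabo\|_2^2$ and $\|\varepsilonb\|_2^2$ separately. Since $\|\alphabo\|_2^2$ concentrates at the scale $k\sigma_\alpha^2\tau$, this decomposition produces a leading term of order $k\,t^2\sigma_\alpha^2\tau$, a factor $k$ larger than the $t^2\sigma_\alpha^2$ term appearing in $A_{\Lcal}(t)$. The paper avoids this by applying Lemma~\ref{lem:l2norm_signal} to the residual $\xb-\Db\alphabo = [\Dbo-\Db]_\J[\alphabo]_\J + \varepsilonb$ \emph{as a whole}: the Hsu--Kakade--Zhang bound of Lemma~\ref{lem:onesided_quadratictail} then sees the Frobenius norm $\sigma_\alpha^2\|[\Dbo-\Db]_\J\|_\fro^2 + m\sigma^2 \leq t^2\sigma_\alpha^2 + m\sigma^2$, with no $k$. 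Splitting the residual into its two pieces and concentrating them independently is lossy, and your MGF route cannot recover the stated constants from that starting point.

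In short: the argument you propose is incomplete (the decoupling step is left as a plan), relies on an incorrect explanation of where $(1+\log 2)$ comes from, and your Step-1 decomposition forfeits a factor $k$ that the paper's joint treatment of $\|\xb-\Db\alphabo\|_2^2$ preserves. The fix is to do what the paper does: avoid the MGF machinery entirely, control $\|\xb-\Db\alphabo\|_2^2$ and $\|\alphabo\|_2^2$ with the existing tail bounds, and let the $(1+\log 2)$ factor soak up the $2$ from the union bound.
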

\begin{proof}
Using Lemma~\ref{lem:orthoprojcomplement} we have, for $\Db = \Db(\Wb,\vb,t)$, uniformly over $\Wb,\vb$:
\begin{eqnarray*}
\Lcal_{\xb}(\Db,\alphabo)
&=&
\frac{1}{2} \|\xb-\Db \alphabo\|_{2}^{2} + \lambda \|\alphabo\|_{1}
\leq
\frac{1}{2}  \|\xb-\Db \alphabo\|_{2}^{2} + \lambda \sqrt{k} \|\alphabo\|_{2}\\
\Lcal_{\xb}(\Db,\alphabo) + \Lcal_{\xb}(\Dbo,\alphabo)
&\leq&
\frac{1}{2}  \|\xb-\Db \alphabo\|_{2}^{2} + \frac{1}{2} \|\varepsilonb\|_{2}^{2} + 2\lambda \sqrt{k} \|\alphabo\|_{2}.
\end{eqnarray*}
Fix any $\tau \geq 1$ and define $\tau' = (1+\log 2)\tau \geq \tau + \log2$. If $\|\xb-\Db \alphabo\|_{2}^{2} \leq 5(t^{2}\sigma_{\alpha}^{2}+m\sigma^{2})\tau'$ and $\|\alphabo\|_{2}^{2} \leq 5k\sigma_{\alpha}^{2}\tau'$, then $\|\alphabo\|_{2} \leq \sqrt{5k}\sigma_{\alpha}\sqrt{\tau'} \leq \sqrt{5k}\sigma_{\alpha}\tau'$, hence we have
\[
y \leq 
\left(\frac{1}{2}
\left(
5 t^{2}  \sigma_{\alpha}^{2} + 5m  \sigma^{2}
\right)+ \lambda \sqrt{k}\sqrt{5k\sigma_{\alpha}^{2}}\right)\tau'
\leq 
\frac{5}{2}
\left(
t^{2}  \sigma_{\alpha}^{2} + m  \sigma^{2}
+ \lambda k\sigma_{\alpha}\right)\tau'
 = A_{\Lcal}(t) \cdot \tau.
\]
Lemma~\ref{lem:onesided_quadratictail} and a union bound yield
\(
\Pr(y\geq A_{\Lcal}(t) \cdot \tau) \leq
2 \exp(-\tau') \leq \exp(-\tau).
\)
The proof for $y'$ is similar.
\end{proof}
\begin{lemma}\label{lem:truncated_moments}
Let $y$ be a random variable satisfying for any $\tau \geq 1$
\begin{equation}
\label{eq:DefExponentialDecay}
\Pr\left( |y| > A \tau \right) \leq \exp(-\tau).
\end{equation}
for some positive constant $A>0$.
Consider an event $\Ecal$ defined on the same probability space as that of $y$.
For any $u \geq 1$, any integer $q \geq 1$, and $0<p\leq 1$, we have
\begin{eqnarray}
\label{eq:qExpectation}
\Exp\Big[ \indicator{\Ecal} |y|^{pq} \Big]
&\leq& q! \Big[ A^p u \Big]^q  
\Big[\Pr(\Ecal)+\exp(3-u) \Big]\\
\label{eq:qMomentDeviation}
\Exp\Big[ \Big| \indicator{\Ecal} |y|^p  - \Exp\big[\indicator{\Ecal} |y|^p\big]  \Big|^q  \Big]
&\leq& q! \Big[ 2A^p u \Big]^q  
\Big[\Pr(\Ecal)+\exp(3-u) \Big].
\end{eqnarray}
\end{lemma}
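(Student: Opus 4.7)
The proof has a natural two-step structure, corresponding to the two inequalities. For~\eqref{eq:qExpectation}, my plan is to first reduce to the case $p=1$ by setting $Z \defin |y|^p$. For any $s \geq 1$, since $p\leq 1$ gives $s^{1/p} \geq s \geq 1$, the hypothesis yields $\Pr(Z > A^p s) = \Pr(|y| > A s^{1/p}) \leq e^{-s^{1/p}} \leq e^{-s}$. Thus $Z$ satisfies the same exponential tail as $|y|$ with $A$ replaced by $B \defin A^p$, and $\Exp[\indicator{\Ecal}|y|^{pq}] = \Exp[\indicator{\Ecal} Z^q]$, so it suffices to prove the integer-exponent version of the inequality for $Z$.

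Once reduced, I would split the expectation at the threshold $Bu$:
\[
\Exp[\indicator{\Ecal} Z^q] \leq (Bu)^q \Pr(\Ecal) + \Exp[\indicator{\{Z > Bu\}} Z^q].
\]
The first term contributes cleanly to the $\Pr(\Ecal)$ part of the bound. For the tail term, I would apply the standard layer-cake identity $\Exp[Z^q \indicator{\{Z>a\}}] = a^q\Pr(Z>a) + q\int_a^\infty r^{q-1}\Pr(Z>r)\,dr$ with $a = Bu$, substitute $r = Bs$, and use the tail bound to get
\[
\Exp[\indicator{\{Z > Bu\}} Z^q] \leq (Bu)^q e^{-u} + q B^q \int_u^\infty s^{q-1} e^{-s}\,ds.
\]
The integral is the upper incomplete gamma function, which for integer $q$ admits the closed form $\Gamma(q,u) = (q-1)!\, e^{-u}\sum_{k=0}^{q-1} u^k/k!$. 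The main (routine) calculation is then to absorb this into $q!(Bu)^q e^{3-u}$; since $u \geq 1$, one has $\sum_{k=0}^{q-1} u^k/k! \leq u^{q-1} \sum_{k=0}^{q-1} 1/k! \leq e \cdot u^q$, and combining with the $(Bu)^q e^{-u}$ term gives a factor $1 + e \leq e^3$, which provides the stated bound.

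For~\eqref{eq:qMomentDeviation}, the plan is to apply the triangle inequality in $L^q$ with $X \defin \indicator{\Ecal}|y|^p$: since $\Exp X$ is a constant and $|\Exp X| \leq \Exp|X| \leq \|X\|_q$ by Jensen (valid for $q \geq 1$), one obtains $\|X - \Exp X\|_q \leq 2\|X\|_q$, hence $\Exp[|X - \Exp X|^q] \leq 2^q \Exp[\indicator{\Ecal}|y|^{pq}]$. Plugging in the already-proved~\eqref{eq:qExpectation} produces the factor $(2A^p u)^q$ exactly as stated.

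I do not foresee a genuine obstacle: the only delicate ingredient is the sub-exponential reduction $\Pr(Z > A^p s)\leq e^{-s}$, which relies crucially on the constraint $p\leq 1$ (otherwise $s^{1/p} < s$ and the tail would be weaker). Everything else is layer-cake integration, the incomplete gamma identity for integer $q$, and a triangle inequality; the numerical slack ($1+e \leq e^3$) is comfortable, so no tight constant manipulation is required.
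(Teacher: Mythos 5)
Your proof is correct and follows essentially the same route as the paper: split at the threshold $A^p u$, bound the tail contribution via the upper incomplete gamma function $\Gamma(\cdot,u)$ and its closed form for integer argument, absorb the numerical slack into $e^{3-u}$, and obtain the centered bound \eqref{eq:qMomentDeviation} from \eqref{eq:qExpectation} via the $L^q$ triangle inequality plus Jensen, yielding the factor $2^q$. The only cosmetic differences are that you first reduce to $p=1$ via the sub-exponential tail (exploiting $p\le 1$ so that $s^{1/p}\ge s$) and then apply the continuous layer-cake identity directly, whereas the paper keeps $p$ throughout and discretizes into integer annuli $|y|/A\in[l,l+1)$ before comparing the resulting sum to a continuous integral; both land on the same incomplete-gamma bound and the same constants.
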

\begin{proof}
To begin with, let us notice that by invoking twice the triangle inequality, we have 
\[
\left(\Exp\Big\{ \Big| \indicator{\Ecal} |y|^p  - \Exp\big\{\indicator{\Ecal} |y|^p\big\}  \Big|^q  \Big\}\right)^{1/q}
\leq 
\left(\Exp\big\{\indicator{\Ecal} |y|^{pq}\big\}\right)^{1/q} 
+ 
\left(\Exp\big\{ (\Exp\big\{\indicator{\Ecal} |y|^p\big\})^q \big\}\right)^{1/q}, 
\]
so that by using Jensen's inequality, we obtain
\[
\Exp\Big\{ \Big| \indicator{\Ecal} |y|^p  - \Exp\big[\indicator{\Ecal} |y|^p\big]  \Big|^q  \Big\}
\leq
2^q \Exp\big[\indicator{\Ecal} |y|^{pq}\big],
\]
thus proving~\eqref{eq:qMomentDeviation} provided that~\eqref{eq:qExpectation} holds. We now focus on these raw moments.
Let fix some $u \geq 1$.
We introduce the event
\[
\mathcal{K}\defin
\Big\{ \omega;\ \frac{|y(\omega)|}{A} \leq  u \Big\},
\]
and define $l_u$ as the largest integer such that $u \in [l_u,l_u+1)$.
We can then ``discretize'' the event $\mathcal{K}^c$ as
\[
\mathcal{K}^c \subseteq \bigcup_{l=l_u}^\infty \mathcal{K}^c_l,\quad \text{with}\ 
\mathcal{K}^c_l = 
\Big\{ \omega;\ 
\frac{|y(\omega)|}{A} \in [l,l+1)\Big\}.
\]
We have
\begin{eqnarray*}
 \Exp\{ \indicator{\Ecal} |y|^{pq}\} &=& 
\Exp\{ \indicator{\Ecal \cap \Kcal} |y|^{pq}\} + \Exp\{ \indicator{\Ecal \cap \Kcal^c} |y|^{pq}\}
\leq  \big(Au\big)^{pq} \cdot \Pr(\Ecal)+ 
\sum_{l=l_u}^\infty\Exp\{ \indicator{\Ecal \cap \Kcal^c_l} |y|^{pq}\}\\
&\leq&
A^{pq}
\cdot \Big[  u^{pq} \cdot \Pr(\Ecal) + 
\sum_{l=l_u}^\infty (l+1)^{pq} \cdot \Exp\{ \indicator{\Ecal \cap \mathcal{K}^c_l}\} \Big]\\
&\leq& 
A^{pq}
\cdot \Big[u^{q} \cdot \Pr(\Ecal) + 
\sum_{l=l_u}^\infty (l+1)^{pq} \cdot \Exp[ 
\indicator{\{ \omega;\
|y(\omega)| \geq A l \}} ] \Big]
\end{eqnarray*}
where in the last line we used $u^{p} \leq u$ since $u\geq 1$ and $p \leq 1$. Using the hypothesis~\eqref{eq:DefExponentialDecay}, we continue
\[
 \Exp\{ \indicator{\Ecal} |y|^{pq}\} \leq 
A^{pq} 
\cdot \Big[ u^{q} \cdot \Pr(\Ecal) +
\sum_{l=l_u}^\infty (l+1)^{pq} \exp(-l) \Big].
\]
Upper bounding the discrete sum by a continuous integral, 
we recognize here the incomplete Gamma function~\citep{Gautschi1998},
\begin{eqnarray*}
\sum_{l=l_u}^\infty (l+1)^{pq} e^{-l}
&=&
\sum_{l=l_u}^\infty \int_{l}^{l+1}(l+1)^{pq} e^{-l} dt
\leq
\sum_{l=l_u}^\infty \int_{l}^{l+1}(t+1)^{pq} e^{-(t+1)+t+1-l} dt\\
&\leq&
e^{2}
\sum_{l=l_u}^\infty \int_{l}^{l+1}(t+1)^{pq} e^{-(t+1)} dt
=
e^{2}
\int_{l_{u}}^{\infty}(t+1)^{pq} e^{-(t+1)} dt\\
&= &
e^{2}
\int_{l_{u}+1}^{\infty}t^{pq} e^{-t} dt
\leq 
e^{2}
\int_{u}^{\infty}t^{q} e^{-t} dt
=e^{2} \Gamma\left(q+1,u\right)
\end{eqnarray*}
where again we used $t^{pq}\leq t^{q}$ for $t \geq 1$. A standard formula~\citep[see equation (1.3) in][]{Gautschi1998} leads to, for $u \geq 1$, 
\[
\Gamma(q+1,u)=q! \exp(-u) \sum_{j=0}^q \frac{u^j}{j!} \leq e\, q! \exp(-u) u^q. 
\]
Putting all the pieces together we thus reach the advertised conclusion.
\end{proof}
\begin{corollary}\label{cor:concentration_subgaussian}
Consider $n$ independent draws $\{ y^i \}_{ i\in\SET{n} }$ satisfying the hypothesis~\eqref{eq:DefExponentialDecay}. Consider also $n$ independent events $\{ \Ecal^i \}_{ i\in\SET{n} }$ defined on the same probability space, with $\max_{i\in\SET{n}}\Pr(\Ecal^i) \leq \kappa \leq 1$.
Then, for any $0<p\leq1$ and $0\leq \tau \leq \sqrt{n\kappa}$,  we have
\begin{eqnarray}
\label{eq:concentration_subgaussian_expectation}
\Exp\{ \indicator{ \Ecal^i }|y^{i}|^p\}
\leq 
2A^{p} \cdot (3-\log \kappa) \cdot \kappa&&\\
\label{eq:concentration_subgaussian_deviation}
\Pr\left(\Big|\frac{1}{n} \sum_{i=1}^{n} \left( \indicator{ \Ecal^i }|y^{i}|^{p} -\Exp\{ \indicator{ \Ecal^i }|y^{i}|^p\}\right)\Big|
\geq 
8 A^{p} \cdot (3-\log \kappa) \cdot \sqrt{\kappa} \cdot \frac{\tau}{\sqrt{n}}
\right) 
& \leq & \exp(-\tau^{2})
\end{eqnarray}
\end{corollary}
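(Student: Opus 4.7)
The plan is to combine Lemma~\ref{lem:truncated_moments} (applied to each $(y^{i},\Ecal^{i})$) with Bernstein's inequality (Lemma~\ref{lem:bernstein}), making a single judicious choice of the free parameter~$u$ that converts both $\Pr(\Ecal^{i})$ and the tail term $\exp(3-u)$ into $O(\kappa)$ simultaneously.

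The concrete choice is $u \defin 3 - \log \kappa$. Since $0 < \kappa \leq 1$, we have $u \geq 3 \geq 1$, so Lemma~\ref{lem:truncated_moments} is applicable, and by construction $\Pr(\Ecal^{i}) + \exp(3-u) \leq \kappa + \kappa = 2\kappa$. Plugging $q=1$ into~\eqref{eq:qExpectation} gives
\[
\Exp\{\indicator{\Ecal^{i}} |y^{i}|^{p}\} \leq A^{p} (3-\log \kappa) \cdot 2\kappa,
\]
which is~\eqref{eq:concentration_subgaussian_expectation}.

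For~\eqref{eq:concentration_subgaussian_deviation}, I would set $z^{i} \defin \indicator{\Ecal^{i}}|y^{i}|^{p} - \Exp\{\indicator{\Ecal^{i}}|y^{i}|^{p}\}$. The $z^{i}$ are independent and zero-mean, and~\eqref{eq:qMomentDeviation} with the same $u$ yields, for every integer $q\geq 1$,
\[
\Exp\{|z^{i}|^{q}\} \leq q!\cdot 2\kappa \cdot \bigl[2 A^{p} (3-\log\kappa)\bigr]^{q}.
\]
Matching the Bernstein moment hypothesis $\Exp\{|z^{i}|^{q}\} \leq \tfrac{q!}{2} M^{q-2}\varsigma^{2}$ with $M \defin 2 A^{p}(3-\log\kappa)$ and $\varsigma^{2} \defin 4\kappa M^{2}$ works (so that $\varsigma = 4 A^{p}(3-\log\kappa)\sqrt{\kappa}$). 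The admissible range in Lemma~\ref{lem:bernstein} is $\tau \leq \varsigma\sqrt{n}/(2M) = \sqrt{n\kappa}$, which is exactly the hypothesis of the corollary, and the resulting bound $2\varsigma\tau/\sqrt{n} = 8 A^{p}(3-\log \kappa)\sqrt{\kappa}\cdot \tau/\sqrt{n}$ matches the statement. A union bound (applying Bernstein once to $z^{i}$ and once to $-z^{i}$) handles the absolute value; the spurious factor of $2$ in front of $\exp(-\tau^{2})$ can be absorbed by a harmless tightening of the universal constant in the definition of $u$, e.g.\ replacing $3$ by $3+\log 2$.

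There is no real obstacle here: the only non-mechanical step is the choice of $u$, which has to achieve simultaneously $\exp(3-u) = \Theta(\kappa)$ (so that the two contributions to the bracket $[\Pr(\Ecal^{i}) + \exp(3-u)]$ are balanced) and $u = \Theta(3 - \log \kappa)$ (so that the factors $M$ and $\varsigma$ have the advertised logarithmic dependence on $\kappa$). Once that balance is recognized, the rest is bookkeeping: verifying that $u \geq 1$, identifying $M$ and $\varsigma$, and checking that $\sqrt{n\kappa}$ coincides with the Bernstein radius $\varsigma\sqrt{n}/(2M)$.
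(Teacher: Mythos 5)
Your proof follows exactly the same route as the paper's: the same choice $u = 3-\log\kappa$ to balance $\Pr(\Ecal^{i})$ against $e^{3-u}$, the same application of Lemma~\ref{lem:truncated_moments} (with $q=1$ for the expectation bound and $q\ge 2$ for the moment bound feeding Bernstein), the same identification $M = 2A^{p}u$ and $\varsigma = 2M\sqrt{\kappa}$, and the Bernstein radius $\varsigma\sqrt{n}/(2M) = \sqrt{n\kappa}$ emerging the same way.

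One remark on your final aside. You correctly note that the paper's Lemma~\ref{lem:bernstein} is one-sided and a union bound over $\pm z^{i}$ nominally produces $2\exp(-\tau^{2})$; the paper's own proof silently drops this factor, so the issue is shared. Your proposed repair, however, does not work: replacing $3$ by $3+\log 2$ in the definition of $u$ only shrinks the bracket $[\Pr(\Ecal^{i}) + e^{3-u}]$ from $2\kappa$ to $\tfrac{3}{2}\kappa$, i.e.\ it changes $M$ and $\varsigma$, but leaves the $\exp(-\tau^{2})$ tail untouched. To genuinely remove the factor of $2$ one must act on the exponent, e.g.\ reparametrize $\tau^{2}\mapsto \tau^{2}+\log 2$ (and enlarge the numerical prefactor $8$ accordingly), or simply state the corollary with the factor $2\exp(-\tau^{2})$, which is what the downstream Lemma~\ref{lem:concentrationmain} in fact reports after its own additional union bound.
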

\begin{proof}
Applying Lemma~\ref{lem:truncated_moments}-Equation~\eqref{eq:qExpectation} with $u=3-\log \kappa$ for $q=1$ we obtain~\eqref{eq:concentration_subgaussian_expectation} where we used that $\Pr(\Ecal^{i})+e^{3-u} \leq \kappa + e^{3-u} = 2 \kappa$.
Similarly, applying Lemma~\ref{lem:truncated_moments}-Equation~\eqref{eq:qMomentDeviation} for $q \geq 2$, we can apply Lemma~\ref{lem:bernstein} with $z_{i} = \indicator{ \Ecal^i }|y^{i}|^{p} -\Exp\{ \indicator{ \Ecal^i }|y^{i}|^p\}$, $M = 2A^p u$ and $\varsigma = \sqrt{2}M\sqrt{\kappa+e^{3-u}} = 2M\sqrt{\kappa}= 4A^p(3-\log \kappa)\sqrt{\kappa}$. This shows that for $0 \leq \tau \leq \frac{\sqrt{n}}{2}\frac{\varsigma}{M} = \sqrt{n\kappa}$ we have~\eqref{eq:concentration_subgaussian_deviation}.
\end{proof}

\end{document}